\documentclass[10pt]{article}
\usepackage{balance}
\usepackage[top=2.5cm, bottom=2.9cm, left=2.5cm,
right=2.5cm]{geometry}
\usepackage{dblfloatfix}
\usepackage[export]{adjustbox} 
\usepackage{microtype}
\usepackage{graphicx} 
\usepackage{pgfplots}
\pgfplotsset{compat=1.18}
\usepackage{tikz}
\usetikzlibrary{calc}
\usepgfplotslibrary{groupplots}

\usepackage{subcaption}
\usepackage{booktabs} 
\usepackage{times}  
\usepackage{helvet}  
\usepackage{courier}  
\usepackage[hyphens]{url}  
\usepackage{graphicx} 
\usepackage{natbib}  
\usepackage{caption} 
\usepackage{algorithm}
\usepackage{algorithmic}
\usepackage{bm}
\usepackage{enumitem,amsthm,amsmath,amsfonts,amssymb}
\usepackage{indentfirst}
\usepackage{float}
\usepackage{mathtools}
\usepackage{algorithmic}
\usepackage{algorithm}
\usepackage{xcolor}
\usepackage{graphicx}
\usepackage{subcaption}
\usepackage{tikz}
\usepackage{multirow}
\usepackage{multicol}
\usepackage{soul}
\usepackage{mathrsfs, fancyhdr}
\usepackage{booktabs}

\newcommand{\Z}{\mathbb{Z}}

\newcommand{\N}{\mathbb{N}}

\newcommand{\D}{\mathcal{D}}

\newcommand{\bA}{\mathbf{A}}
\newcommand{\E}{\mathbb{E}}

\newcommand{\R}{\mathbb{R}}
\newcommand{\F}{\mathcal{F}}

\newtheorem{cor}{Corollary}

\newtheorem{prop}{Proposition}
\newtheorem{rem}{Remark}

\def\bw{\mathbf{w}}
\def\O{\mathcal{O}}

\def\X{\mathcal{X}}
\def\Y{\mathcal{Y}}
\def\W{\mathcal{W}}

\def\Z{\mathcal{Z}}
\def\N{\mathcal{N}}
\def\L{\mathcal{L}}
\def\bx{\mathbf{x}}
\def\bz{\mathbf{z}}

\def\S{\mathcal{S}}
\def\bw{\mathbf{w}}
\def\bW{\mathbf{W}}
\def\bu{\mathbf{u}}

\def\A{\mathcal{A}}

\def\ba{\mathbf{a}}

\def\bu{\mathbf{u}}

\def\0{\mathbf{0}}

\def\bX{\mathbf{X}}

\def\by{\mathbf{y}}

\newcommand{\bSigma}{\mathbf{\Sigma}}
 
\newtheorem{theorem}{Theorem}
\newtheorem{lemma}[theorem]{Lemma}
\newtheorem{proposition}[theorem]{Proposition}
\newtheorem{corollary}[theorem]{Corollary}
\theoremstyle{definition}
\newtheorem{definition}{Definition}

\newtheorem{assumption}{Assumption}
\newtheorem{example}{Example}
\theoremstyle{definition}

\newtheorem{remark}{Remark}

\def\begeqn{\begin{equation}}
\def\endeqn{\end{equation}}
\def\begth{\begin{theorem}}
\def\endth{\end{theorem}}
\def\begprop{\begin{proposition}}
\def\endprop{\end{proposition}}
\def\begcor{\begin{corollary}}
\def\endcor{\end{corollary}}
\def\begdef{\begin{definition}}
\def\enddef{\end{definition}}
\def\beglemm{\begin{lemma}}
\def\endlemm{\end{lemma}}
\def\begexm{\begin{example}}
\def\endexm{\end{example}}
\def\begrem{\begin{remark}}
\def\endrem{\end{remark}}

\def\begdef{\begin{definition}}
\def\enddef{\end{definition}}

\def\bw{\mathbf{w}}
\def\bz{z}

\def\O{\mathcal{O}}

\def\A{\mathcal{A}}
\def\Z{\mathcal{Z}}
\def\R{\mathbb{R}}

\def\X{\mathcal{X}}
\def\Y{\mathcal{Y}}
\def\Z{\mathcal{Z}}
\def\W{\mathcal{W}}

\def\bD{\mathbf{D}}
\def\bV{\mathbf{V}}
\def\bfI{\mathbf{I}}

\def\ebb{\mathbb{E}}

\def\bK{\mathbf{K}}
\def\bG{\mathbf{G}}
\def\bg{\mathbf{g}}
\def\H{\mathcal{H}}

\def\bS{\mathbf{S}}

\def\bL{\mathbf{L}}
\def\tildeW{\widetilde{\bW}}
 \def\tildeo{\tilde{o}}
\def\tildeD{\widetilde{\bD}}
\def\tildeV{\widetilde{\bV}}
\usepackage{hyperref}
\usepackage{bm}
\usepackage{enumitem,amsthm,amsmath,amsfonts,amssymb}
\usepackage{indentfirst}
\usepackage{float}
\usepackage{natbib}
\usepackage{mathtools}
\usepackage{algorithmic}
\usepackage{algorithm}
\usepackage{xcolor}
\usepackage{graphicx}
\usepackage{subcaption}
\usepackage{tikz}
\usepackage{multirow}
\usepackage{multicol}
\usepackage{hyperref}
\usepackage{soul}
\usepackage{mathrsfs, fancyhdr}
\usepackage{amscd}
\usepackage{booktabs}

\def\bw{\mathbf{w}}
\def\O{\mathcal{O}}

\def\X{\mathcal{X}}
\def\Y{\mathcal{Y}}
\def\W{\mathcal{W}}

\def\bA{\mathbf{A}}
\def\Z{\mathcal{Z}}
\def\N{\mathcal{N}}
\def\L{\mathcal{L}}
\def\bx{\mathbf{x}}
\def\bz{\mathbf{z}}

\def\S{\mathcal{S}}
\def\bw{\mathbf{w}}
\def\bW{\mathbf{W}}
\def\bu{\mathbf{u}}

\def\A{\mathcal{A}}

\def\ba{\mathbf{a}}

\def\bu{\mathbf{u}}

\def\0{\mathbf{0}}

\def\bX{\mathbf{X}}

\def\by{\mathbf{y}}

\def\bG{\mathbf{G}}

\def\bw{\mathbf{w}}
\def\bz{z}

\def\O{\mathcal{O}}

\def\A{\mathcal{A}}
\def\Z{\mathcal{Z}}
\def\R{\mathbb{R}}

\def\X{\mathcal{X}}

\def\Y{\mathcal{Y}}
\def\Z{\mathcal{Z}}
\def\W{\mathcal{W}}

\def\bD{\mathbf{D}}
\def\bV{\mathbf{V}}
\def\bfI{\mathbf{I}}

\def\ebb{\mathbb{E}}

\def\blue{\textcolor{blue}}

\def\bK{\mathbf{K}}

\def\H{\mathcal{H}}

\def\bS{\mathbf{S}}

\def\bL{\mathbf{L}}

\usepackage{amsmath}
\usepackage{amssymb}
\usepackage{mathtools}
\usepackage{amsthm}
\usepackage{multirow}

\usepackage[capitalize,noabbrev]{cleveref}

\usepackage[textsize=tiny]{todonotes}

\begin{document}

\title{Minimax-Optimal Generalization Bounds for Smooth Deep Neural Networks Trained by (Stochastic) Gradient Descent\footnote{Corresponding author is Puyu Wang.}}
\author{\!Junyu Zhou$^1$\quad Puyu Wang$^{2*}$\quad Dennis Wagner$^{2}$\quad Yunwen Lei$^3$ \quad Marius Kloft$^2$ \quad Yiming Ying$^4$ \\ 
\smallskip \\
$^{1}$ Mathematical Institute for Machine Learning and Data Science, \\ Catholic University of Eichstätt-Ingolstadt\\ 
$^{2}$ Department of Computer Science, RPTU Kaiserslautern-Landau \\
$^{3}$ Department of Mathematics, The University of Hong Kong\\
$^{4}$ School of Mathematics and Statistics, The University of Sydney}

 \date{}

\maketitle

\begin{abstract}
Characterizing the optimization dynamics and statistical performance of over-parameterized deep neural networks (DNNs) remains a central challenge in understanding the remarkable success of deep learning. We establish quantitative bounds showing that kernel gradient descent in the reproducing kernel Hilbert space induced by the deterministic infinite-width neural tangent kernel approximates finite-width deep regression with smooth activations under gradient descent (GD) and stochastic gradient descent (SGD) training. The approximation gap is governed by the network width and training horizon, with an additional stochastic gradient error in the SGD case. This connection provides a general mechanism for transferring learning-theoretic guarantees from kernel methods to deep regression. As an application, under general source and effective dimension conditions, we show that both GD- and SGD-trained DNNs attain the minimax-optimal excess population risk rate, up to logarithmic factors, provided that the network width grows polynomially in the sample size. To the best of our knowledge, these are the first such guarantees for standard fully connected deep neural networks with smooth activations trained by GD and SGD. 
\end{abstract}

\bigskip

\parindent=0cm

\section{Introduction}
Over-parameterized deep neural networks (DNNs) trained by gradient descent (GD) or stochastic gradient descent (SGD) often achieve nearly zero training error while maintaining strong performance on unseen data, despite the highly nonconvex nature of their training objectives. Explaining this phenomenon requires understanding the learning process induced by gradient methods and how it shapes the statistical performance of the resulting predictors.
This has motivated a substantial body of work on the optimization and generalization of neural networks
\cite{bartlett2017spectrally,du2018gradient,du2019gradient,allen2019convergence,cao2020generalization,
taheri2025sharper,richards2021stability,zou2018stochastic}.

The neural tangent kernel (NTK), introduced by \cite{jacot2018neural}, provides a natural approach to understanding this learning process.
In the infinite-width limit, the evolution of a neural network under gradient-based training can be described by a kernel method in the reproducing kernel Hilbert space (RKHS) induced by the NTK.
This suggests that the behavior of a finite-width network may be understood through its deterministic infinite-width kernel counterpart.
A fundamental question therefore arises:
\begin{center}
\textit{Can a finite-width DNN with smooth activations trained by GD or SGD be quantitatively approximated by kernel GD induced by its deterministic infinite-width NTK?}
\end{center}

Beyond describing the learning dynamics, such a connection would provide a natural mechanism for transferring statistical guarantees from kernel methods to DNNs.
A representative example is the minimax-optimal excess population risk rate attained by kernel methods under standard source and effective dimension conditions.
It is therefore natural to ask whether smooth DNNs trained by GD or SGD can inherit this statistical performance from their kernel counterparts.

For shallow neural networks with smooth activations, minimax-optimal rates for both GD and SGD have been established under various source and capacity conditions, albeit with different network width requirements \citep{nitanda2021optimal,cao2024stochastic,nguyen2024many}.
These analyses proceed through the random RKHS induced by the finite-width NTK and require controlling its effective dimension. While this is possible in the shallow setting by exploiting the independent random feature structure, the argument does not extend directly to DNNs, whose tangent features exhibit intricate dependencies across layers.
A quantitative connection directly to the deterministic infinite-width kernel, if established, could bypass this obstacle and provide a route for transferring sharp kernel guarantees to deep networks.

For deep networks with smooth activations,
\cite{kohler2026rate,kohler2026ratesgd} obtained nearly minimax-optimal rates under different assumptions. Their estimators, however, are linear combinations of an increasing number of fixed-size fully connected subnetworks, rather than a single standard DNN. This raises a second fundamental question:
\begin{center}
\textit{Can a single fully connected DNN trained by GD or SGD attain the minimax-optimal excess population risk rate?}
\end{center}
In this work, we answer both questions affirmatively. Our main contributions are summarized below.
\begin{itemize}

\item \textbf{Kernel GD approximates finite-width deep regression under (S)GD training.}
We establish a quantitative connection between finite-width DNNs with smooth activations and kernel GD induced by the deterministic infinite-width NTK.
More precisely, after \(T\) iterations, the predictor produced by a DNN trained by (S)GD is approximated by the corresponding kernel GD predictor.
The approximation gap depends polynomially on the inverse width \(m^{-1}\) and the effective training horizon \(\eta T\), with an additional \(\widetilde\O(\eta)\) error due to stochastic gradients in the SGD case, where \(\eta\) denotes the step size. 
This connection enables the transfer of learning-theoretic guarantees from kernel methods to finite-width DNNs.

\item \textbf{Minimax-optimal generalization for DNNs.}
As a concrete application, we show that, under general source and effective dimension conditions, both GD- and SGD-trained DNNs attain the minimax-optimal excess population risk rate $\O\big(n^{-\frac{2\beta}{2\beta+\gamma}}\big)$ when $2\beta +\gamma >1$,    provided that the network width grows polynomially in the sample size $n$.  
Here, $\beta>0$ is the smoothness of the target function and $\gamma\in[0,1]$ measures the capacity of the RKHS. 
To the best of our knowledge, these are the first such guarantees for a single standard fully connected DNN with smooth activations trained by GD or SGD.

\item \textbf{Key technical advances.}
First, we prove uniform concentration of the finite-width deep NTK around its deterministic infinite-width limit, simultaneously over all layers and input pairs, with width dependence \( \mathcal O (m^{-1/2})\). This improves the \(m^{-1/6}\)-type dependence previously obtained for deep ReLU networks \cite{xu2024overparametrized} and serves as a key ingredient in establishing our quantitative connection between DNNs and infinite-width kernel GD.
Second, we establish a stability bound that controls the discrepancy between kernel GD predictors in different RKHSs through the uniform difference between their kernels.
This direct finite-width to infinite-width comparison bypasses the effective dimension analysis of the random finite-width RKHS, which is difficult to extend to DNNs because of cross-layer dependencies in their tangent features, and enables minimax guarantees for the limiting RKHS to be transferred to a single standard DNN.
\end{itemize}


\section{Related Work}
There exists a substantial body of literature dedicated to understanding the generalization ability of neural networks. We briefly review the most relevant work to ours.

\paragraph{NTK-Based Regression with Smooth Activations.}
Sharp NTK-based generalization guarantees for regression have so far been established mainly for shallow neural networks. 
For shallow networks with smooth activations, \cite{nitanda2021optimal} and \cite{nguyen2024many} removed spectral assumptions and established minimax-optimal excess population risk rates $\O\big(n^{-\frac{2\beta}{2\beta+\gamma}}\big)$ under general source and effective dimension conditions for one-pass SGD and GD, respectively.
\cite{cao2024stochastic} further reduced the required network width in the capacity-independent setting.
These smooth network analyses proceed through the random RKHS induced by the finite-width NTK and require controlling its effective dimension.
However, these analyses do not extend directly to DNNs due to the intricate cross-layer dependencies in their tangent features.

\paragraph{Uniform Convergence Approach.}
Another major line of work studies neural network generalization through uniform convergence. This approach controls the capacity of the hypothesis class using complexity measures such as Rademacher complexity and covering numbers, leading to algorithm-independent generalization bounds \cite{bartlett2017spectrally,frei2023random,golowich2018size, neyshabur2015norm,parhi2022near}.
Building on this framework, recent works established generalization guarantees for shallow and deep neural networks trained by GD \citep{braun2024convergence,drews2023analysis,drews2024universal, kohler2025rate}.
In particular, \cite{kohler2026rate,kohler2026ratesgd} obtained the nearly minimax-optimal rate $\mathcal{O}(n^{-\frac{2p}{2p+d}+\epsilon})$ for any $\epsilon > 0$, under the assumption that the target function is \((p,C)\)-smooth. Their estimators are a linear combination of \(K_n\) deep networks with smooth activations, where \(K_n\) depends on \(n\). In contrast, we analyze a single standard fully connected DNN.

\paragraph{Algorithmic Stability Approach.}
Algorithmic stability has recently emerged as a useful tool for studying the generalization of neural networks trained by gradient methods.
For regression, \cite{richards2021stability,lei2022stability, wang2025generalization} established excess risk bounds of order \(\mathcal O(n^{-1/2})\) for shallow networks with smooth activations trained by GD or SGD.
Their analyses exploit the smoothness and weak convexity of the empirical objective, with the weak-convexity parameter decreasing as the network width grows. More recently, \cite{taheri2025sharper} established stability-based generalization guarantees for deep neural networks with smooth activations in classification settings.

\section{Problem Settings}
We begin by introducing the problem setup. 
Let $\mathcal X$ and $\mathcal Y$ denote the input and output spaces, respectively, and let  $\Z=\X\times \Y$ be the sample space. 
Throughout the paper, we take $\mathcal X = \mathbb S^{d-1}$, the unit sphere in $\mathbb R^d$, and $\mathcal Y = [-1,1]$.
Let $\rho$ be an unknown probability distribution on $\mathcal Z$, and $S=\{z_i=(\bx_i,y_i)\}_{i=1}^n$ be a training dataset drawn independently from $\rho$.
Our goal is to learn a predictor $f:\X \rightarrow \R$ based on $S$. The population  risk  of $f$ is defined by $\L(f):=\frac{1}{2}\,\E_{(\bx,y)\sim \rho} [(y -f(\bx))^2 ],$ 
and the corresponding empirical risk is  $\L_S(f):=\frac{1}{2n} \sum_{i=1}^n ( y_i-f(\bx_i)  )^2.$ 
 
The generalization performance of a predictor $f$  is measured  by the \textit{excess population risk}
$\varepsilon_{risk} (f )=\L (f ) - \L(f_\rho),$ where $f_\rho:\X \rightarrow \R$ is a global minimizer of $\mathcal{L}(\cdot)$ over all measurable functions.
For least-squares regression, the regression function satisfies $f_\rho(\bx) = \ebb[y|\bx]$. Moreover, the excess population risk admits the representation
\citep{cucker2007learning}  
\[\varepsilon_{risk} (f ) = \frac{1}{2} \|  f - f_{\rho} \|_\rho^2,\]
where $\|f \|_\rho:=(\int_\X |f(\bx)|^2 d \rho_{\bx} (\bx))^{1/2}$ and $\rho_{\mathbf{x}}$ denotes the marginal distribution of $\rho$ on $\mathcal X$.
  

\subsection{Analyzed Model}
We focus on predictors \(f_{\bW}\) represented by a DNN and parameterized by \(\bW\in\W\). Specifically, we consider an \(L\)-layer fully connected neural network with width \(m\) in every hidden layer:
\begin{align}
\label{eq:NN}
f_{\bW}(\bx)
=
\ba^\top
\sqrt{\frac{c_\sigma}{m}}\,
\sigma\Big(
\bW^L
\cdots
\sqrt{\frac{c_\sigma}{m}}\,
\sigma\left(\bW^1\bx\right)
\Big),
\end{align}
where $\bx\in\X$, the activation function \(\sigma\) is applied componentwise and  $c_\sigma = ( \E_{Z\sim\mathcal N(0,1)}  [\sigma^2(Z) ])^{-1}$ is a normalization constant.
The trainable hidden-layer weights are collected as
\[\bW=\left(\bW^1,\ldots,\bW^L\right)\in\W:=\R^{m\times d}\times (\R^{m\times m} )^{L-1},\]
where $\bW^1\in\R^{m\times d}, \bW^\ell\in\R^{m\times m}$ for $\ell=2,\ldots,L$. Let $(\bw^\ell_r)^\top$ denote the $r$-th row of $\bW^\ell$ for $\ell\in[L]:=\{1,\ldots,L\}$.
The fixed output-layer weights are denoted by $\ba = (a_1,\ldots,a_m)^\top \in\R^m.$  For  simplicity, we assume $m$ is even and use the notations $\L(\bW)=\L(f_\bW)$, $\L_S(\bW)=\L_S(f_\bW)$, and define the loss $ l(\bW;z)=\frac{1}{2}(y-f_\bW(\bx))^2$.  

We introduce the following standard assumption on the activation function. 
\begin{assumption}[Activation function]\label{ass:activation}
The activation function \(\sigma:\R\to\R\) is twice continuously differentiable and satisfies  $|\sigma(a)|\le B_{\sigma}$, $|\sigma'(a)|\le B_{\sigma'}$ and $|\sigma''(a)|\le B_{\sigma''}$, where $B_{\sigma}, B_{\sigma'},B_{\sigma''}>0 .$
\end{assumption}
Common smooth activations, including the sigmoid and hyperbolic tangent activations, satisfy this assumption.  
Throughout the paper, \(C_\sigma\geq 1\) denotes a constant depending only on \(c_\sigma\), \(B_\sigma\), \(B_{\sigma'}\), and \(B_{\sigma''}\), while \(C\geq 1\) denotes an absolute constant.
Their values may vary from line to line.

\subsection{Learning Algorithms}\label{subsec:learning}
We study GD and SGD for minimizing the empirical risk.
For both algorithms, we adopt the commonly used symmetric initialization \citep{nitanda2021optimal,damian2022neural,
nguyen2024many,xu2024overparametrized}.

For the first \(L-1\) hidden layers, the weights are initialized independently as $\bw_r^1(0)  \overset{\text{i.i.d.}}{\sim} \mathcal{N}(0, \mathbf{I}_d) $ and $  \bw_r^\ell(0)  \overset{\text{i.i.d.}}{\sim} \mathcal{N}(0,   \mathbf{I}_m) $ for all $r\in[m]$ and $\ell\in\{2,\ldots,L-1\}$. 
The weights of the last hidden layer and the output layer are initialized symmetrically as follows:
\begin{align}
      &  \bw_r^L (0)\! \overset{\text{i.i.d.}}{ \sim }\! \mathcal{N} (0, \mathbf{I}_m), r\!\in\!\big\{1,\!\ldots\!, \!\frac{m}{2} \big\} \textit{ and }
     \bw_{r\!+\!\frac{m}{2}}^L (0)\!  =\!\bw_r^L (0) \nonumber\\
     &\quad\, a_r \overset{\text{i.i.d.}}{\sim} \mathcal{N}(0, 1), r\in\big\{1,\ldots, \frac{m}{2}\big\} \textit{ and } a_{r+\frac{m}{2}}=-a_r.\nonumber
    \end{align}
All Gaussian draws in the above construction are mutually independent. 
This initialization ensures that $f_{\mathbf W(0)}(\mathbf x)=0$ for all $\mathbf x\in\mathcal X$, which simplifies the comparison with the corresponding kernel dynamics.
 
During training, we fix $\ba$ and only train $\bW$. For a differentiable function \(F:\W\to\R\), we denote its gradient with respect to all hidden-layer weights by \(\nabla F(\bW)\), and its gradient with respect to \(\bW^\ell\) by \(\nabla_\ell F(\bW)\).
The definitions of GD and SGD are given as follows. 
\begin{definition}[Gradient Descent]\label{GD}
    Let $\bW(0)\in\W$ be an initialization and $\eta >0$ be the step size. 
    GD updates $\{\bW(k): k\in \mathbb N\}$ by
\begin{align}\label{eq:update-GD}
    \bW (k+1)&=\bW (k) -\eta  \nabla  \L_S(\bW(k)) .
\end{align}  
\end{definition}
\begin{definition}[Stochastic Gradient Descent]\label{def:SGD}
Let $\bW(0)\in\W$ be an initialization and $\eta >0$ be the step size.  
SGD updates $\{\bW(k): k\in \mathbb N\}$ by
\begin{equation}\label{eq:sgd-update}
    \bW(k+1)=\bW(k)-\eta \nabla  l(\bW(k);z_{i_k}), 
\end{equation} 
  where $\{i_k\}_{k\geq 0}$ are i.i.d. and uniformly distributed over $[n]$.
\end{definition}

\subsection{Kernel Regression and Regularity Conditions}
\label{subsec:kernel-setting}

Given the \(L\)-layer neural network \(f_{\bW}\) in \eqref{eq:NN} with the symmetric initialization $\bW(0)$ defined in Section~\ref{subsec:learning}, we define its  NTK at initialization by \[K^{m}(\bx,\bx')
=
\left\langle
\nabla f_{\bW(0)}(\bx),
\nabla f_{\bW(0)}(\bx')
\right\rangle, \] 
where \(\langle\cdot,\cdot\rangle\) denotes the Frobenius inner product.
By construction, \(K^m\) is positive semidefinite and depends on the random initialization.
Let $K:\X\times\X\to\R$ denote the deterministic
infinite-width limit of $K^{m}$.
The explicit recursive construction of the infinite-width kernel
$K$ is given in Appendix~\ref{appde:problem}.
Let \(\H_K\) be the RKHS associated with \(K\).
For any $\bx, \bx' \in \X$, we define the function  $K_\bx\in \H_K$ by $K_\bx(\bx') = K(\bx,\bx')$. 
Viewing the empirical risk $\L_S(\cdot)$ as a functional on $\H_K$,
we define GD in $\H_K$, initialized at $g_0^{\text{GD}}=0$, by
\begin{align}\label{eq:kernel_GD_HK}
    &g_{k+1}^{\text{GD}} =  g_k^{\text{GD}} -  \frac{\eta}{n}\sum\nolimits_{i=1}^n(g_k^{\text{GD}}(\bx_i)  -  y_i) K_{\bx_i}. 
\end{align}
Let $\bS:\H_K\hookrightarrow \mathcal L_{\rho_\bx}^2$ denote the canonical inclusion operator, where $\mathcal L_{\rho_\bx}^2
:=
\left\{
f:\X\to\R:\|f\|_\rho<\infty
\right\}$. Thus, \(\bS g\) denotes \(g\in\H_K\) viewed as an element of
\(\mathcal L_{\rho_\bx}^2\).

Define the integral operator $\bL:
\mathcal L_{\rho_\bx}^2
\to
\mathcal L_{\rho_\bx}^2$ by 
$ \bL f 
=
\int_\X
K(\cdot,\bx)f(\bx)
\,d\rho_\bx(\bx).$ 
Let
\(\{(\lambda_i,\Phi_i)\}_{i\geq1}\) denote its nonzero eigenpairs,
ordered such that $\lambda_1\geq\lambda_2\geq\cdots>0,$ 
where \(\{\Phi_i\}_{i\geq1}\) is orthonormal in
\(\mathcal L_{\rho_\bx}^2\).
For \(s\in\R\), define $\bL^s f
=
\sum_{i=1}^\infty
\lambda_i^s
\langle f,\Phi_i\rangle_\rho
\Phi_i.$  

We impose the following standard assumptions on the capacity of
\(\H_K\) and the regularity of the target regression function
\cite{nitanda2021optimal,nguyen2024many}.

\begin{assumption}[Effective dimension]
\label{ass:effec_dim}
For some $\gamma\!\in\![0,1]$ and $c_\gamma\!\ge\!1$, there holds 
     $ tr\big( \bL(\bL+\lambda\mathbf{I})^{-1} \big)=\sum_{i=1}^\infty\frac{\lambda_i}{\lambda_i+\lambda}\le c_\gamma\lambda^{-\gamma}  \text{ for all }  \lambda>0. $
\end{assumption}

\begin{assumption}[Source condition]
\label{ass:frho_smth}
There exist $\beta > 0$ and $B>0$, such that $\|\bL^{-\beta}f_{\rho}\|_\rho\le B$. 
\end{assumption}

The parameter \(\gamma\) characterizes the capacity of the RKHS.    
Since \(K\) is uniformly bounded, Assumption~\ref{ass:effec_dim} always holds with \(\gamma=1\), corresponding to the capacity-independent setting.
Assumption~\ref{ass:frho_smth} characterizes the smoothness (or regularity) of $f_\rho$. A larger value of $\beta$ indicates greater smoothness of $f_\rho$, making the assumption correspondingly more stringent. 
When \(\beta\geq1/2\), the source condition implies
\(f_\rho\in\H_K\).

\section{Overview of Our Results}
In this section, we present simplified versions of our main results that highlight the key contributions.
The corresponding formal statements are provided in
Section~\ref{sec:results}.
For clarity, we suppress logarithmic factors.
We write \(M\lesssim M'\) if \(M\leq cM'\) for some absolute constant \(c>0\), \(M\asymp M'\) if both \(M\lesssim M'\) and \(M'\lesssim M\), and use $a\vee b = \max\{a,b\}$. 

\subsection{Kernel Regression Approximates Deep Regression
under (S)GD Training}
Our central result establishes a \textit{quantitative} connection between deep regression and kernel regression.
Specifically, we compare the prediction dynamics of a DNN with the kernel GD dynamics in the RKHS induced by its infinite-width NTK.
For GD, the discrepancy is controlled by the network width and training horizon, while SGD incurs an additional \(\O(\eta)\) error due to the stochastic gradient noise.

The following informal theorem provides a unified summary of the GD and SGD approximation results.
\begin{theorem}[Informal: DNN–kernel approximation]
\label{thm:informal-kernel-approximation}
Consider an \(L\)-layer fully connected neural network of width \(m\). Suppose Assumption~\ref{ass:activation} holds, and $m \ge C_\sigma^L \mathrm{poly}(d,\eta T)$. 
Under appropriate step size conditions, the following statements hold.
\begin{enumerate}[
  label=(\arabic*),
  labelindent=0pt,
  leftmargin=*,
  align=left
]
    \item
    Let \(f^{\text{GD}}_{\bW(T)}\) be the predictor produced by GD and let \(g_T^{\text{GD}}\) be the  infinite-width kernel GD iterate defined by \eqref{eq:kernel_GD_HK}. Then, with high probability over the initialization and the draw of the training sample,
    \[
        \bigl\|
            f^{\text{GD}}_{\bW(T)}
            -
            \bS g_T^{\text{GD}}
        \bigr\|_\rho^2
       \lesssim
            \frac{C_\sigma^L d(\eta T)^4}{m} .
    \] 
    \item
    Let \(f^{\text{SGD}}_{\bW(T)}\) be the predictor produced by SGD. Then, with high probability over the initialization and the draw of the training sample,
    \[
        \E
        \big[
            \bigl\|
                f^{\text{SGD}}_{\bW(T)}
                -
                \bS g_T^{\text{GD}}
            \bigr\|_\rho^2
        \big]
        \lesssim 
            \frac{C_\sigma^L d(\eta T)^5}{m}
            +
            \eta ,
    \]
    where the expectation is taken over the algorithmic randomness of SGD. 
\end{enumerate}
\end{theorem}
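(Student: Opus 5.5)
The plan is to pass through an intermediate, random finite-width kernel predictor and split the error into three pieces. Let \(h_k\) be kernel GD in the RKHS of the finite-width NTK \(K^m\) at initialization, started from \(h_0=0\) and using the same recursion \eqref{eq:kernel_GD_HK} with \(K\) replaced by \(K^m\). Since the symmetric initialization gives \(f_{\bW(0)}\equiv 0\), we can compare
\[
f_{\bW(T)}-\bS g_T^{\text{GD}} = \bigl(f_{\bW(T)}-h_T\bigr)+\bigl(h_T-g_T^{\text{GD}}\bigr).
\]
We bound both differences uniformly over \(\bx\in\X\), which controls the \(\|\cdot\|_\rho\) norm.

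\textbf{Step 1 (lazy training).} Track the network through its Taylor expansion around \(\bW(0)\). A GD step changes the predictor by
\[
f_{\bW(k+1)}(\bx)-f_{\bW(k)}(\bx) = -\frac{\eta}{n}\sum_i \bigl(f_{\bW(k)}(\bx_i)-y_i\bigr)K^m_k(\bx,\bx_i) + \O\bigl(\eta^2\|\nabla\L_S\|^2\,\|\nabla^2 f\|\bigr),
\]
where \(K^m_k\) is the NTK evaluated at \(\bW(k)\).

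Under Assumption~\ref{ass:activation}, one proves by induction over layers that three things hold with high probability whenever \(\|\bW-\bW(0)\|\le R\). First, the hidden features satisfy \(\|\alpha^\ell\|\lesssim C_\sigma^{\ell}\sqrt m\). Second, the per-layer spectral norms are \(\|\bW^\ell(0)\|\lesssim\sqrt m\). Third, the Hessian of \(f_\bW(\bx)\) has operator norm \(\lesssim C_\sigma^L (1+R)^{c}/\sqrt m\). Together these give \(|K^m_k-K^m|\lesssim C_\sigma^L R/\sqrt m\).

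The empirical loss is nonincreasing for small \(\eta\) and is bounded by \(1/2\). Hence the gradient norm is \(\O(C_\sigma^L)\), and \(R\lesssim \eta T\cdot C_\sigma^L\) stays bounded relative to \(\sqrt m\). Writing \(u_k\) for the vector of training predictions, the difference \(e_k=f_{\bW(k)}-h_k\) then obeys a linear recursion. Its forcing term has size \(\eta(\eta T)/\sqrt m\) per step, and the propagator \(I-\eta K^m_S/n\) is a contraction. Summing over \(T\) steps gives \(\sup_\bx|e_T|\lesssim C_\sigma^L(\eta T)^2/\sqrt m\); squaring yields the \((\eta T)^4/m\) rate.

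\textbf{Step 2 (kernel stability).} Compare kernel GD under \(K^m\) and under \(K\) for the same data. Unrolling both recursions gives
\[
h_{k+1}-g_{k+1} = (h_k-g_k) - \frac{\eta}{n}\sum_i\Bigl[(h_k-g_k)(\bx_i)\,K^m_{\bx_i} + (g_k(\bx_i)-y_i)\bigl(K^m_{\bx_i}-K_{\bx_i}\bigr)\Bigr].
\]
Since \(|g_k(\bx_i)-y_i|\) is bounded by monotonicity of the empirical risk, the error in sup-norm grows at most like \(\eta T\,\sup|K^m-K|\) times a factor of order \(\eta T\,\sup K^m\). It therefore suffices to prove \(\sup_{\bx,\bx'}|K^m(\bx,\bx')-K(\bx,\bx')|\lesssim C_\sigma^L\sqrt{d}/\sqrt m\) with high probability. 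I would prove this layer by layer. Conditional on layer \(\ell-1\), the empirical covariance \(\frac{c_\sigma}{m}\sum_r\sigma(\bw_r^\top\alpha)\sigma(\bw_r^\top\alpha')\) concentrates at rate \(m^{-1/2}\) by Hoeffding/Bernstein for bounded activations. The Lipschitz dependence of the Gaussian expectation on the input Gram matrix then propagates these errors with a \(C_\sigma\) factor per layer. Uniformity over \(\bx,\bx'\in\mathbb S^{d-1}\) comes from an \(\epsilon\)-net together with Lipschitz continuity in \(\bx\), which is the source of the factor \(d\) (up to logs). This uniform \(m^{-1/2}\) concentration is the step I expect to be the main obstacle, because the errors compound across layers and the net argument must hold simultaneously for every layer.

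\textbf{Step 3 (SGD).} Introduce a "mean" recursion: GD on the network with the same sampled indices replaced by their expectation. The SGD update equals the full-gradient update plus a martingale-difference noise \(\xi_k\), with \(\E\xi_k=0\) and \(\|\xi_k\|\) bounded. Running the Step 1 linearization along the SGD path gives an extra term \(\eta\sum_k P_{T-k}\xi_k\), where \(P_{T-k}\) is the contracting propagator. Its second moment is \(\eta^2\sum_k\E\|P\xi_k\|^2\). Using that \(\|P_j\xi\|\) is summable through the spectral decay of \(I-\eta K_S/n\), bounded by \(\sum_j(1-\eta\lambda)^{2j}\lambda\lesssim 1/\eta\), this is \(\O(\eta)\).

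The radius bound now holds only in expectation or with high probability. Because there is no monotone loss along the SGD path, controlling it costs one extra factor of \(\eta T\), which gives the \((\eta T)^5/m\) term. Combining this with Step 2 via \((a+b)^2\le 2a^2+2b^2\) finishes both parts of Theorem~\ref{thm:informal-kernel-approximation}.
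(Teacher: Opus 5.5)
The overall architecture matches the paper's: network, then kernel GD with $K^m$, then kernel GD with $K$, closed by kernel stability and uniform NTK concentration. You merge the paper's first two terms by using a one-step Taylor expansion with the moving kernel $K^m_k$ instead of the global linearization $f^{\text{lin}}_{\bW(k)}$, and your Step 2 is essentially Lemma~\ref{lem:two_kernel_GD}. The gap is in Step 1, which as written does not give $(\eta T)^4/m$.

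\textbf{The radius bound is too weak.} Summing gradient norms gives only $R\lesssim C_\sigma^L\eta T$. With the Hessian bound you state (Lemma~\ref{lem:Hessian} gives $C_\sigma^L(R+1)\sqrt{\log(m/\delta)/m}$), the drift is $\sup|K^m_k-K^m|\lesssim C_\sigma^L R(1+R)/\sqrt m\sim(\eta T)^2/\sqrt m$, not $R/\sqrt m$: the factor $(1+R)^c$ was dropped. This alone pushes the squared error to $(\eta T)^6/m$. What is missing is $\|\bW(k)-\bW(0)\|_2^2\le 4\eta k$, i.e.\ $R\asymp\sqrt{\eta T}$. The paper derives it in Lemmas~\ref{lem:almost-convex} and~\ref{lem:wk-w0} from local almost-convexity, $l(\bW(0);z)\ge l(\bW;z)+\langle\nabla l(\bW;z),\bW(0)-\bW\rangle-|f_{\bW}(\bx)-y|\,\epsilon$. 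This is inserted into the expansion of $\|\bW(k+1)-\bW(0)\|_2^2$ and summed using $\L_S(\bW(0))\le 1/2$ and $\eta\le C_\sigma^{-L}$. The argument is pathwise and carries over to SGD (Lemma~\ref{lem:wk-wo_sgd}). So the SGD radius is not the source of the extra factor; lack of monotonicity only weakens the residual bound to $|f_{\bW(k)}(\bx)-y|\le C_\sigma^L\sqrt{\eta k}+1$.

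\textbf{Propagation off the sample is not a contraction.} The map $A\colon e\mapsto e-\frac{\eta}{n}\sum_i e(\bx_i)K^m_{\bx_i}$ contracts only the vector of sample values, or the $\H_m$-norm. But $e_k=f_{\bW(k)}-h_k$ and your forcing terms (the Taylor remainder and $(K^m_k-K^m)(\cdot,\bx_i)$) do not lie in $\H_m$. Passing from the sample to all of $\X$ costs the factor $1+\eta T\|K^m\|_\infty$ that you correctly keep in Step 2. So even with the right radius your sup-norm bound is $(\eta T)^3/\sqrt m$.

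The paper reaches $(\eta T)^2/\sqrt m$ only in $\|\cdot\|_\rho$, in two moves:
\begin{enumerate}
\item It inserts $f^{\text{lin}}_{\bW(k)}$, so the propagated error and its forcing $\langle\Delta(k),\nabla f_{\bW(0)}(\cdot)\rangle$ lie in $\H_m$.
\item It bounds $\|\bS_mF_k\|_\rho\le 2\|\widehat{\bSigma}_m^{1/2}F_k\|_{\H_m}+2\sqrt\lambda\|F_k\|_{\H_m}$ with $\lambda=(\eta T)^{-1}$, using concentration of $\widehat{\bSigma}_m$ around $\bSigma_m$. Spectral calculus then gives $\sum_s\|(\eta\widehat{\bSigma}_m)^{1/2}(\mathbf{I}-\eta\widehat{\bSigma}_m)^s\|_{op}\lesssim\sqrt T$ instead of $T$.
\end{enumerate}
This is where the sample draw and the condition $\eta T\lesssim n/\log(n/\delta)$ enter; your Step 1 uses neither. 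With both repairs your per-step route also closes: for $j\le T$, $A^j\phi-\phi\in\H_m$ has $\rho$-norm $O(\|\phi\|_\infty)$ by the same device, and the per-step forcing is $O(C_\sigma^L\eta^2T/\sqrt m)$.

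\textbf{SGD.} The paper's smoothing is unavailable along the random product $\prod_a(\mathbf{I}-\eta K^m_{\bx_{i_a}}\otimes K^m_{\bx_{i_a}})$; this is where it loses the extra $\eta T$ (Lemma~\ref{prop:flin-kernel_sgd}). Your $O(\eta)$ noise bound also needs the residuals to be $O(1)$ in mean square. The pathwise bound above does not give this; it yields only $\eta\cdot\eta T$. The paper obtains $\eta\log T$ by comparing kernel SGD with kernel GD in $\H_m$, via Proposition~6 of \cite{lin2017optimal} (Lemma~\ref{lem:fmk-gmk}).
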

Theorem~\ref{thm:informal-kernel-approximation} provides a general mechanism for transferring learning-theoretic guarantees from kernel methods to deep regression.
Its explicit width-dependent bound shows that, once the width is chosen to make the network--kernel discrepancy sufficiently small, the prediction dynamics of the DNN closely track those of kernel GD.
Consequently, suitable properties of kernel methods can be transferred to deep regression, providing a \textit{systematic route} to understanding DNNs under gradient-based training.
In the next subsection, we illustrate this principle by deriving minimax-optimal generalization bounds for smooth DNNs.

\begin{table*}[t]
{\small{ 
\begin{center}
\begin{tabular}{ |c|c|c|c|c|c| } 
 \hline
 \multirow{2}{*}{\textbf{Work}}&\multirow{2}{*}{\textbf{Layer}} &\multirow{2}{*}{\textbf{Method}}  &  \multirow{2}{*}{\textbf{Setting}}  &\multirow{2}{*}{\!\textbf{Excess risk} }  &  \multirow{2}{*}{\textbf{Over-parameterization}} \\
   &&&&& \\
    \hline 
 \multirow{2}{*}{ \cite{nitanda2021optimal}}& \multirow{2}{*}{Shallow}     &\multirow{2}{*}{One-pass SGD}      &  \multirow{2}{*}{$\beta\in [1/2,1]$}&  \multirow{2}{*}{$ {\O} (  n^{-\frac{2\beta}{2\beta+\gamma}} )$} & \multirow{2}{*}{$\exp(n)$}  \\
 &&&&& \\
   \hline
 \multirow{2}{*}{\cite{nguyen2024many}}&\multirow{2}{*}{Shallow}  &\multirow{2}{*}{GD}     &  \multirow{2}{*}{$\beta> 0$ and   $2\beta+\gamma >1$}&  \multirow{2}{*}{$ {\O}(  n^{-\frac{2\beta}{2\beta+\gamma}} )$} &\multirow{2}{*}{$d^{5}n^{ \frac{(2\beta)\vee (3 - 4\beta)}{2\beta+\gamma}}$}\\
    &&&&& \\
     \hline
 \multirow{2}{*}{\cite{kohler2026rate}}&\multirow{4}{*}{Deep}  &\multirow{2}{*}{GD}     &  \multirow{4}{*}{$(p,C)$-smooth}&  \multirow{4}{*}{$ {\O}(  n^{-\frac{2p}{2p+d} + \epsilon})$} &\multirow{4}{*}{ $n^{cL(2p+d)^4}$ }\\
   &&&&& \\
   \multirow{2}{*}{\cite{kohler2026ratesgd}}& &\multirow{2}{*}{SGD}     &  &   & \\
    &&&&& \\
  \hline 
 \multirow{4}{*}{Ours} &\multirow{4}{*}{Deep}  & \multirow{2}{*}{GD} &  \multirow{4}{*}{$\beta> 0$ and   $2\beta+\gamma >1$}&  \multirow{4}{*}{$ {\O}(  n^{-\frac{2\beta}{2\beta+\gamma}}  )$} & \multirow{2}{*}{$C_\sigma^Ldn^{\frac{2\beta+4}{2\beta+\gamma}}$}  \\
 &&&&& \\
 && \multirow{2}{*}{SGD} &  & & \multirow{2}{*}{$C_\sigma^Ldn^{\frac{2\beta+5}{2\beta+\gamma}}$}  \\
&&&&& \\
  \hline 
\end{tabular}
\end{center}
}\caption{Results for GD and SGD with smooth activations in regression.
We omit logarithmic factors, and \(\epsilon>0\) is a constant.
\cite{kohler2026rate,kohler2026ratesgd} analyze linear combinations of multiple smooth deep networks.  
\label{table:summary}} } 
\end{table*}

\subsection{Minimax-Optimal Generalization Bounds for Smooth DNNs}

As a concrete application of Theorem~\ref{thm:informal-kernel-approximation},
we transfer the minimax-optimal guarantees of kernel methods to smooth DNNs trained by GD and SGD. 

\begin{theorem}[Informal: Minimax-optimal rates for DNNs]
\label{thm:informal-minimax}
Suppose Assumptions~\ref{ass:activation}--\ref{ass:frho_smth}
hold and \(2\beta+\gamma>1\).
Then, with high probability over the initialization and the draw of the training sample, the following statements hold.
\begin{enumerate}[
  label=(\arabic*),
  labelindent=0pt,
  leftmargin=*,
  align=left
]
    \item
    For GD, assume $m \gtrsim C_\sigma^L d\, n^{\frac{2\beta+4}{2\beta+\gamma}}$. Choosing $\eta \asymp C_\sigma^{-L}$ and $  T \asymp n^{\frac{1}{2\beta+\gamma}}$   yields
    \[
    \varepsilon_{\mathrm{risk}}
    \bigl(f^{\text{GD}}_{\bW(T)}\bigr)
    \lesssim
    C_\sigma^{L(\beta\vee 1)}
    n^{-\frac{2\beta}{2\beta+\gamma}}.
    \]

    \item
    For SGD, assume $m \gtrsim C_\sigma^L d\, n^{\frac{2\beta+5}{2\beta+\gamma}}$. Choosing   $ \eta \asymp   C_\sigma^{-L}
    n^{-\frac{2\beta}{2\beta+\gamma}}, $ $
    T  \asymp  n^{\frac{2\beta+1}{2\beta+\gamma}} $
    yields
    \[
    \E 
    \big[
        \varepsilon_{\mathrm{risk}}
        \bigl(f^{\text{SGD}}_{\bW(T)}\bigr)
    \big]
    \lesssim  C_\sigma^{L }   n^{-\frac{2\beta}{2\beta+\gamma}},
    \]
    where the expectation is taken over the algorithmic randomness of SGD.
\end{enumerate}
\end{theorem}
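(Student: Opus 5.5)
The argument decomposes the excess risk through the kernel GD iterate. Write $f$ for $f^{\text{GD}}_{\bW(T)}$ (respectively $f^{\text{SGD}}_{\bW(T)}$) and use
\[
\varepsilon_{\mathrm{risk}}(f)=\tfrac12\|f-f_\rho\|_\rho^2\le \|f-\bS g_T^{\text{GD}}\|_\rho^2+\|\bS g_T^{\text{GD}}-f_\rho\|_\rho^2 .
\]
The first term is the network--kernel discrepancy, which Theorem~\ref{thm:informal-kernel-approximation} controls. The second term is the excess risk of early-stopped kernel GD (Landweber iteration) in the deterministic RKHS $\H_K$. The proof therefore has two parts: a classical integral-operator bound for kernel GD, and a balancing of width against horizon.

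\textbf{Step 1: kernel GD rate.} Since $\sup_{\bx}K(\bx,\bx)\le C_\sigma^L$ (this follows from the recursive definition of $K$), the step size $\eta\asymp C_\sigma^{-L}$ makes the Landweber iteration stable. I will use the spectral-filter representation $g_T=\phi_T(\bL_S)\bS_S^*\by$ with filter $\phi_T(u)=\sum_{k<T}\eta(1-\eta u)^k$ and regularization parameter $\lambda=(\eta T)^{-1}$. The error then splits into three pieces:
\begin{itemize}
\item a bias term of order $\lambda^{2\beta}B^2$, using Assumption~\ref{ass:frho_smth} and the filter qualification;
\item a variance term of order $\mathcal N(\lambda)/n\lesssim c_\gamma\lambda^{-\gamma}/n$, using Assumption~\ref{ass:effec_dim};
\item an operator-concentration term, handled with Bernstein-type bounds on $\|(\bL+\lambda)^{-1/2}(\bL-\bL_S)(\bL+\lambda)^{-1/2}\|$.
\end{itemize}
Handling $\beta<1/2$, where $f_\rho\notin\H_K$, needs the condition $2\beta+\gamma>1$. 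With that condition the standard arguments (e.g.\ those of Lin--Cevher style) yield
\[
\|\bS g_T^{\text{GD}}-f_\rho\|_\rho^2\lesssim (\eta T)^{-2\beta}+\frac{(\eta T)^{\gamma}}{n}
\]
up to logarithmic factors, with high probability. The factor $C_\sigma^{L(\beta\vee1)}$ in the final bound comes from $\eta T= T/C_\sigma^L$ entering $(\eta T)^{-2\beta}$, together with the kernel bound appearing in the variance.

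\textbf{Step 2: GD.} Set $\eta T\asymp n^{1/(2\beta+\gamma)}$ (up to the $C_\sigma^L$ factors). Then both terms of Step 1 are $\lesssim n^{-2\beta/(2\beta+\gamma)}$. Theorem~\ref{thm:informal-kernel-approximation}(1) bounds the discrepancy by $C_\sigma^L d(\eta T)^4/m\lesssim C_\sigma^L d\,n^{4/(2\beta+\gamma)}/m$. This is at most $n^{-2\beta/(2\beta+\gamma)}$ exactly when $m\gtrsim C_\sigma^L d\,n^{(2\beta+4)/(2\beta+\gamma)}$, which is the stated width condition.

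\textbf{Step 3: SGD.} We still need $\eta T\asymp n^{1/(2\beta+\gamma)}$, and now the additive error $\eta$ in Theorem~\ref{thm:informal-kernel-approximation}(2) must also be at most $n^{-2\beta/(2\beta+\gamma)}$. This forces $\eta\asymp C_\sigma^{-L}n^{-2\beta/(2\beta+\gamma)}$ and hence $T\asymp n^{(2\beta+1)/(2\beta+\gamma)}$. The discrepancy term is then $C_\sigma^L d(\eta T)^5/m$, and requiring it to be $\lesssim n^{-2\beta/(2\beta+\gamma)}$ gives $m\gtrsim C_\sigma^L d\,n^{(2\beta+5)/(2\beta+\gamma)}$. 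I take expectations over the SGD indices only in the first term; the kernel term is deterministic given $S$.

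\textbf{Failure events and main obstacle.} All high-probability statements (initialization, sample, and the kernel concentration) are combined by a union bound with confidence $\delta$, at the cost of $\log(1/\delta)$ factors. I expect Step 1 in the hard regime $\beta<1/2$ to be the main obstacle, since there the bias and concentration terms must be controlled in $\mathcal L^2_{\rho_\bx}$ without $f_\rho\in\H_K$. I would first try the standard technique: bound $\|\bL_\lambda^{1/2}\bL_{S,\lambda}^{-1/2}\|\le 2$ for $\lambda\gtrsim n^{-1}\log n$, and use an $L^\infty$-embedding argument that relies on $2\beta+\gamma>1$ to control the empirical residual.
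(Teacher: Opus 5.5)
Your proposal is correct and follows the paper's proof. The paper uses the same decomposition through $\bS g_T^{\text{GD}}$. It bounds the kernel-GD term by citing Lin--Cevher (their Theorem~5 and Proposition~2, via a population iteration), which gives $(\eta T)^{-2\beta}+\bigl(\eta T/n^2+(\eta T)^{\gamma\vee(1-2\beta)}/n\bigr)\log^4(T/\delta)$; this reduces to your bound because $2\beta+\gamma>1$, $\eta T\ge 1$ and $\eta T\le n$. It then balances width against $\eta T\asymp n^{1/(2\beta+\gamma)}$ for GD and SGD exactly as you do.

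In the regime $\beta<1/2$ no $L^\infty$-embedding argument is needed. The condition $2\beta+\gamma>1$ serves to absorb the $(\eta T)^{1-2\beta}/n$ term. It also makes your choice of $\eta T$ compatible with the requirement $\eta T\lesssim n/\log(n/\delta)$, which both the operator-concentration lemma and the network--kernel approximation theorems impose.
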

Under the same source and effective dimension conditions, kernel GD and SGD are known to attain the minimax-optimal rate \(\O(n^{-\frac{2\beta}{2\beta+\gamma}})\) \cite{lin2017optimal}.
Our theorem establishes that, under an explicit polynomial width requirement, finite-width DNNs trained by GD or SGD also attain this minimax-optimal rate.
This extends the classical minimax-optimal theory of kernel methods to finite-width deep regression trained by gradient methods.

\paragraph{Why the minimax result is nontrivial.}
Existing minimax analyses for shallow networks with smooth activations typically proceed through the random RKHS induced by the finite-width NTK \cite{nitanda2021optimal,nguyen2024many}.
Extending this approach to DNNs would require controlling the effective dimension of the random finite-width NTK of a deep network, whose tangent features exhibit intricate dependencies across layers.

Our network--kernel connection bypasses this obstacle by directly comparing the trained DNN with kernel GD induced by the deterministic infinite-width NTK.
It thereby transfers known minimax guarantees for the limiting RKHS to deep networks without characterizing the effective dimension of the random finite-width RKHS.

\subsection{Comparison with Prior Work}\label{sec:comp}
In this subsection, we highlight the key differences between our analysis and existing theory. 
\paragraph{Network--kernel connection.}
Existing NTK-based analyses of neural regression with smooth activations have primarily focused on shallow networks \cite{nitanda2021optimal,nguyen2024many,cao2024stochastic}.
To the best of our knowledge, no prior work has established a quantitative approximation of a finite-width smooth DNN trained by GD or SGD through kernel GD induced by its deterministic infinite-width NTK.
Our result establishes such a connection, with an explicit approximation error depending on the network width and training horizon.

\paragraph{Minimax-optimal generalization.}
The closest minimax-optimal results for smooth neural networks are summarized in Table~\ref{table:summary}.
For shallow networks, \cite{nguyen2024many} obtained the minimax-optimal rate \(\O(n^{-\frac{2\beta}{2\beta+\gamma}})\) for GD under general source and effective dimension conditions with \(\beta>0\) and \(2\beta+\gamma>1\). For SGD, \cite{nitanda2021optimal} established the same optimal rate  for averaged one-pass SGD when \(\beta\in[1/2,1]\), under an exponential width requirement.
\cite{cao2024stochastic} reduced the required width to a polynomial order in the capacity-independent setting. However, these analyses do not extend directly to DNNs.  

For deep networks with smooth activations, \cite{kohler2026rate,kohler2026ratesgd} established nearly minimax-optimal rates \(\O(n^{-\frac{2p}{2p+d}+\epsilon})\) for   fixed \(\epsilon>0\) under the classical \((p,C)\)-smoothness assumption.
Their estimators are linear combinations of \(K_n\) fixed-size fully connected subnetworks, with \(K_n\) growing polynomially in \(n\).
In contrast, we analyze a single standard fully connected DNN and establish the   minimax-optimal rate under general source and effective dimension conditions, with an explicit polynomial width requirement.

\section{Main Results}\label{sec:results}
In this section, we establish a quantitative DNN--kernel approximation framework.
As an application, we derive excess risk bounds and minimax-optimal rates for DNNs.

\subsection{A Kernel-to-Network Transfer Framework}
\subsubsection{Gradient Descent}
Let $\{\bW(k)\}_{k=1}^T$ denote the sequence of iterations generated by GD. 
The following theorem directly compares the predictor produced by GD
with its infinite-width kernel counterpart.

\begin{theorem}[DNN--kernel approximation for GD]\label{pro:f-g_T}
Let $\delta\in(0,1)$.
Suppose Assumption~\ref{ass:activation} holds.
Assume that $\eta \le C_\sigma^{-L}$, $1 \le \eta T \le n\big(C_\sigma^L\log(n/\delta)\big)^{-1}$ and $m \gtrsim C_\sigma^L d(\eta T)^4\log^2({m}/{\delta})$.
    With probability at least $1-L^2\exp(-C_\sigma \log^2(m)) - \delta$ over initialization $(\ba,\bW(0))$ and the draw of the training sample, it holds that
    \[\|f_{\bW(T)}^{\text{GD}} - \bS g^{\text{GD}}_T\|_\rho^2 \lesssim  \frac{C_\sigma^L d(\eta T)^4\log^2(m)}{m}.\]
\end{theorem}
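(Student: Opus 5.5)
We split the error into three pieces. First the network is compared with its linearization at initialization along the GD path. Next, GD in the random finite-width RKHS $\H_{K^m}$ is compared with GD in $\H_K$. Finally, empirical quantities are moved to $\|\cdot\|_\rho$. Concretely, let $g^{m}_k$ denote kernel GD \eqref{eq:kernel_GD_HK} with $K$ replaced by $K^m$, started at $g_0^m=0$. We write
\[
f_{\bW(T)}-\bS g_T = \big(f_{\bW(T)}-\bS g^m_T\big)+\bS\big(g^m_T-g_T\big)
\]
and bound each term uniformly over $\bx\in\X$. A uniform bound squared dominates $\|\cdot\|_\rho^2$, so no separate generalization step is needed.

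\textbf{Step 1: lazy-training (NTK) regime.} By induction on $k\le T$, we show that $\|\bW^\ell(k)-\bW^\ell(0)\|_F\lesssim C_\sigma^L\eta T\cdot\max_k\sqrt{\L_S(\bW(k))}$. The empirical risk stays bounded, since $\L_S(\bW(0))\le 1/2$ because $f_{\bW(0)}=0$, and GD with $\eta\le C_\sigma^{-L}$ is monotone for a smooth loss. So the weights move by $O(\eta T)$ in Frobenius norm, which is $O(\eta T/\sqrt m)$ relative to the $\sqrt m$ scale of the spectral norms. On this ball we use standard Gaussian spectral-norm bounds ($\|\bW^\ell(0)\|_2\lesssim\sqrt m$, holding with probability $1-L\exp(-C\log^2 m)$) and smoothness of $\sigma$. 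Together these give, uniformly in $\bx$, a bound on how far the network gradient drifts from its value at initialization:
\[
\|\nabla f_{\bW}(\bx)-\nabla f_{\bW(0)}(\bx)\|\lesssim C_\sigma^L\,\|\bW-\bW(0)\|/\sqrt m .
\]
Write $u_k(\bx)=f_{\bW(k)}(\bx)$. A Taylor expansion of the GD update yields
\[
u_{k+1}(\bx)=u_k(\bx)-\frac{\eta}{n}\sum_i (u_k(\bx_i)-y_i)K^m(\bx,\bx_i)+e_k(\bx),
\]
where $\sup_{\bx}|e_k(\bx)|\lesssim C_\sigma^L\eta\,\eta T\log m/\sqrt m$. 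This error collects the kernel drift $K^{m}_{\bW(k)}-K^m$ and the second-order remainder.

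\textbf{Step 2: recursion for the first term.} The residual vector on the sample obeys a linear recursion with contraction $(I-\eta \mathbf K^m/n)$. The spectral radius of this factor is at most $1$ because $\eta\|\mathbf K^m\|/n\le \eta C_\sigma^L\le1$. Unrolling over $T$ steps, the sample errors accumulate additively, giving $\sup_{\bx}|f_{\bW(T)}-g^m_T|\lesssim T\cdot\sup_k\|e_k\|_\infty(1+\eta T)$. The extra factor $\eta T$ arises because the off-sample values are driven by the on-sample errors through $\eta\sum_k$. This yields a bound of order $C_\sigma^L(\eta T)^2\log m/\sqrt m$, which squares to the claimed $(\eta T)^4/m$.

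\textbf{Step 3: kernel stability.} This is the step I expect to be the main obstacle, and it is where the improved concentration enters. We need $\sup_{\bx,\bx'}|K^m(\bx,\bx')-K(\bx,\bx')|\lesssim C_\sigma^L\sqrt{d}\log m/\sqrt m$ uniformly, with probability $1-L^2\exp(-C_\sigma\log^2 m)$. We prove this layer by layer. Conditioned on the previous layers, each layer's covariance is an average of $m$ independent bounded terms, so Bernstein's inequality applies. Errors propagate through the Lipschitz dual activation maps with constant $C_\sigma$ per layer. Uniformity over the sphere follows from an $\epsilon$-net argument, which costs a factor of $d\log m$ and accounts for the $d$ in the bound. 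Given this, we compare $g^m_k$ and $g_k$ through the recursion
\[
g^m_{k+1}-g_{k+1}=(g^m_k-g_k)-\frac{\eta}{n}\sum_i\Big[(g^m_k-g_k)(\bx_i)K^m_{\bx_i}+(g_k(\bx_i)-y_i)(K^m_{\bx_i}-K_{\bx_i})\Big].
\]
The inputs we need are $|g_k(\bx_i)-y_i|\lesssim1$, which holds because the kernel GD residuals are contracted, and the same contraction argument as in Step 2. These give $\sup|g^m_T-g_T|\lesssim \eta T(1+\eta T)\sup|K^m-K|$, which is again within the stated rate. The conditions $\eta T\le n/(C_\sigma^L\log(n/\delta))$ and the extra $\delta$ in the probability would only enter if a sample-dependent bound such as $\sup_k\|g_k\|_\infty$ were needed. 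We handle that using the fact that the empirical operator concentrates at the level $\lambda\gtrsim 1/(\eta T)$.

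Combining the three steps and squaring gives the stated bound $\|f_{\bW(T)}^{\text{GD}} - \bS g^{\text{GD}}_T\|_\rho^2\lesssim C_\sigma^L d(\eta T)^4\log^2(m)/m$.
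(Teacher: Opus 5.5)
Your Step 3 is essentially the paper's argument (Lemma~\ref{lem:two_kernel_GD} plus uniform concentration of $K^m$). The gap is in Step 2: it does not give the rate you state.

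\textbf{The arithmetic in Step 2.} Take your own estimate $\sup_k\|e_k\|_\infty\lesssim C_\sigma^L\eta\cdot\eta T\log m/\sqrt m$ and plug it into $T(1+\eta T)\sup_k\|e_k\|_\infty$. You get $C_\sigma^L(\eta T)^2(1+\eta T)\log m/\sqrt m\asymp C_\sigma^L(\eta T)^3\log m/\sqrt m$, because $\eta T\ge 1$. Squaring gives $(\eta T)^6\log^2 m/m$, not $(\eta T)^4/m$. So the proposal as written proves a strictly weaker statement.

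\textbf{Where the paper gains the factor.} The lost $\eta T$ is the cost of pushing on-sample errors off-sample in $\|\cdot\|_\infty$. The paper never does this for the network-versus-finite-kernel term.
\begin{itemize}
\item It linearizes, so that $F_k=f^{\text{lin,GD}}_{\bW(k)}-g^{m,\text{GD}}_k\in\H_m$ satisfies exactly $F_{k+1}=(\bfI-\eta\widehat{\bSigma}_m)F_k+\epsilon_k$.
\item It measures $\|\bS_mF_k\|_\rho=\|\bSigma_m^{1/2}F_k\|_{\H_m}$ and replaces $\bSigma_m$ by $\widehat{\bSigma}_m$ via Lemma~\ref{lemma:<2} at $\lambda=(\eta T)^{-1}$.
\item It uses $\sum_{s<k}\|(\eta\widehat{\bSigma}_m)^{1/2}(\bfI-\eta\widehat{\bSigma}_m)^s\|_{op}\lesssim\sqrt k$.
\end{itemize}
This gives a factor $\sqrt{T/\eta}$ instead of your $T(1+\eta T)$. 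Combined with $\|\epsilon_k\|_{\H_m}\lesssim C_\sigma^L\eta(\eta T)^{3/2}\sqrt{\log(m/\delta)/m}$, it yields $(\eta T)^2\sqrt{\log(m/\delta)/m}$ (Lemma~\ref{prop:flin-gkm_gd}).

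So, contrary to your closing remark, the hypothesis $\eta T\le n(C_\sigma^L\log(n/\delta))^{-1}$ and the extra $\delta$ are exactly what buy the missing factor. The same device would also repair your non-linearized two-term route. Write $f_{\bW(k)}-g^{m,\text{GD}}_k=\sum_{s<k}e_s-r_k$, with $r_k=\frac{\eta}{n}\sum_{j<k}\sum_i(f_{\bW(j)}-g^{m,\text{GD}}_j)(\bx_i)K^m_{\bx_i}\in\H_m$. Bounding $\|\bS_m r_T\|_\rho$ through $(\widehat{\bSigma}_m+\lambda\bfI)^{1/2}$ gives $\|f_{\bW(T)}-\bS_m g^{m,\text{GD}}_T\|_\rho\lesssim T\sup_k\|e_k\|_\infty$. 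This holds in $\|\cdot\|_\rho$, not uniformly.

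\textbf{The bound on $e_k$.} Even that repaired route needs your bound on $e_k$, and that bound rests on an unproved ingredient.
\begin{itemize}
\item The only displacement bound you derive is $\|\bW(k)-\bW(0)\|_2\lesssim C_\sigma^L\eta T$. You pair it with an $R$-free Lipschitz constant $C_\sigma^L/\sqrt m$ for $\nabla f_{\bW}$.
\item Under the symmetric initialization, the backward vector $\ba^\top\prod_{k>\ell}\bG^k$ vanishes at $\bW(0)$. In a ball of radius $R$, however, it is only controlled by $C_\sigma^L R(\|\ba\|_\infty+1)$.
\item This is why Lemma~\ref{lem:Hessian} gives the Lipschitz constant $C_\sigma^L(R+1)\sqrt{\log(m/\delta)/m}$. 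With $R\asymp\eta T$, that inflates the kernel drift, and hence $e_k$, by a further factor of order $\eta T$.
\end{itemize}
The paper instead proves $\|\bW(k)-\bW(0)\|_2^2\le 4\eta k$ (Lemma~\ref{lem:wk-w0}). It does so from the local almost-convexity inequality of Lemma~\ref{lem:almost-convex}, inside the same induction that gives monotonicity of $\L_S$. With $R=2\sqrt{\eta T}$, the gradient drift is $C_\sigma^L\eta T\sqrt{\log(m/\delta)/m}$, which is what your stated bound on $e_k$ actually requires.

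\textbf{Minor point on concentration.} Your concentration sketch covers only the forward covariances. The factor $\frac{c_\sigma}{m}\sum_r a_r^2\sigma'(\cdot)\sigma'(\cdot)$ in $K^m$ involves unbounded $a_r^2$ and needs a quadratic-form bound, as in Lemma~\ref{lem:quadratic-trace}.
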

Theorem~\ref{pro:f-g_T} shows that the predictor generated by GD is quantitatively approximated by kernel GD associated with the deterministic infinite-width NTK.
For a fixed $\eta T$, the approximation error decreases with the network width, while longer training requires a correspondingly larger width.

Moreover, if kernel GD satisfies an excess population risk bound $\varepsilon_{\mathrm{risk}}(\bS g^{\text{GD}}_T)
\leq
\mathcal R(n,T),$ 
then Theorem~\ref{pro:f-g_T} yields
\[
\varepsilon_{\mathrm{risk}}
\bigl(f^{\text{GD}}_{\bW(T)}\bigr)
\lesssim
\mathcal R(n,T)
+
\frac{
C_\sigma^L d(\eta T)^4\log^2(m)
}{m}.
\]
Thus, statistical guarantees for kernel GD can be transferred to
finite-width DNNs up to an explicit network--kernel approximation
error. In the next subsection, we instantiate this framework with known
statistical guarantees for kernel GD to derive minimax-optimal rates
for finite-width DNNs.

\paragraph{Proof Sketch of GD.}
To estimate $\|f^{\text{GD}}_{\bW(T)}- \bS g_T^{\text{GD}} \|_\rho^2$, we introduce the first-order linear approximation of $f^{\text{GD}}_{\bW(k)}$ around initialization $\bW(0)$:   
\begin{align*}
    f^{\text{lin,GD}}_{\bW(k)}(\bx) = f_{\bW(0)}(\bx) + \big\langle \nabla f_{\bW(0)}(\bx), \bW(k) - \bW(0)\big\rangle_2.
\end{align*}
The superscript $\mathrm{GD}$ is used only to indicate the optimization trajectory, while the underlying network remains $f_{{\mathbf W}(k)}$.
At initialization, we simply write $f_{\mathbf W(0)}$.

Let $\H_m$ denote the RKHS with respect to the finite-width NTK $K^m(\bx,\bx')$.
Define $K^m_{\bx}\in\H_m$ by $K^m_{\bx}(\bx') = K^m(\bx,\bx')$ for all $\bx,\bx'\in\X$.
Analogous to \eqref{eq:kernel_GD_HK}, we can define the iteration of GD in $\H_m$ with $g_0^{m,\text{GD}} =  0$ as
\begin{align*}
    g_{k+1}^{m,\text{GD}} = g_k^{m,\text{GD}} - \frac{\eta}{n}\sum\nolimits_{i=1}^n(g_k^{m,\text{GD}}(\bx_i)  -  y_i) K_{\bx_i}^m.
\end{align*}
We consider the following decomposition  
\begin{align*}
     \big\| f^{\text{GD}}_{\bW(T)}& - \bS g_T^{\text{GD}}  \big\|_\rho^2 \lesssim   \big\| f^{\text{GD}}_{\bW(T)} - f^{\text{lin,GD}}_{\bW(T)}\big\|_\rho^2   + \big\|f^{\text{lin,GD}}_{\bW(T)} - \bS_mg^{m,\text{GD}}_T\big\|_\rho^2   + \big\|\bS_mg^{m,\text{GD}}_T -  \bS g_T^{\text{GD}}\big\|_\rho^2,
\end{align*}
where \(\bS_m\) is the canonical inclusion into
\(\mathcal L^2_{\rho_\bx}\).

\paragraph{(1) Control of  $ \| f^{\text{GD}}_{\bW(T)} - f^{\text{lin,GD}}_{\bW(T)}  \|_\rho^2$.}
The main technical ingredient is a uniform local Hessian bound in a neighborhood of the initialization.
Its proof requires a careful layerwise perturbation analysis of both the forward activations and the backward propagation matrices, uniformly over the input and all parameters in the neighborhood. 
The Hessian bound controls the second-order Taylor remainder and yields a local almost-convexity inequality for the empirical loss.
Using this almost-convexity together with the GD update rule, we prove
by induction that, for every \(k\leq T\),
\[
\|\bW(k)-\bW(0)\|_2^2\leq 4\eta k,
\quad
\mathcal L_S(\bW(k))
\leq
\mathcal L_S(\bW(k-1)).
\]
Thus, the GD trajectory remains inside the neighborhood where the
uniform Hessian bound applies.
A Taylor expansion around \(\bW(0)\) then gives
\[
\bigl\|
f^{\text{GD}}_{\bW(k)}-f_{\bW(k)}^{\text{lin,GD}}
\bigr\|^2_\rho\lesssim \bigl\|
f^{\text{GD}}_{\bW(k)}-f_{\bW(k)}^{\text{lin,GD}}
\bigr\|^2_\infty 
\lesssim
\frac{C_\sigma^L(\eta T)^3}{ m}.
\]
Proofs of the uniform local Hessian bound and the resulting trajectory and linearization error estimates are given in Appendices~\ref{app:Hessian} and~\ref{app:GD-term1}, respectively.

\paragraph{(2) Control of $ \|f^{\text{lin,GD}}_{\bW(T)} - \bS_mg^{m,\text{GD}}_T \|_\rho^2 $.}
The dynamics of the linearized network can be viewed as a perturbed
version of kernel GD associated with \(K^m\). 
Using the trajectory, Hessian, and linearization estimates,
we can  get 
\[\bigl\|
f_{\bW(k)}^{\text{lin,GD}}-\bS_mg_k^{m,\text{GD}}
\bigr\|^2_\rho
\lesssim
\frac{C_\sigma^L (\eta T)^4}{m} .
\]

\paragraph{(3) Control of
\(\|\bS_m g_T^{m,\text{GD}}-\bS g_T^{\text{GD}}\|_\rho^2\).}
The proof proceeds in two steps.

\emph{Step 1: Stability of kernel GD.}
We first establish that kernel GD is stable with respect to uniform
perturbations of the kernel, which may be of independent interest.
For two bounded kernels \(K^1\) and \(K^2\), let
\(\{g_k^1\}_{k\geq0}\) and \(\{g_k^2\}_{k\geq0}\) denote the
corresponding kernel GD iterates, initialized at
\(g_0^1=g_0^2=0\) and run with the same step size and training sample.
Then,
\[
\|g_T^1-g_T^2\|_\rho
\leq
\|g_T^1-g_T^2\|_\infty
\lesssim
(\eta T)^2\|K^1-K^2\|_\infty.
\]
Thus, kernel GD iterates remain close whenever their underlying kernels
are uniformly close. 

\emph{Step 2: Uniform concentration of the NTK.}
We first establish a more general layerwise concentration result under
non-symmetric Gaussian initialization.
More precisely,
\[
\|K^{m,\ell}-K^\ell\|_\infty
\lesssim
C_\sigma^L
\sqrt{\frac{d\log^2(m)}{m}} \  \text{for every }  \ell\in[L] .
\]
The bound holds simultaneously over all layers and uniformly over
\(\X\times\X\).
For the symmetric initialization used in our main analysis,
the tangent features of the first \(L-1\) layers vanish, and the same
argument applied to the paired last layer yields $\|K^m-K\|_\infty
\lesssim
C_\sigma^L
\big(\frac{d\log^2(m)}{m}\big)^{1/2}$. 
Unlike results formulated only on a fixed sample, these bounds do not
depend on the smallest eigenvalue of the empirical Gram matrix.
Moreover, for smooth activations, our layerwise uniform concentration result achieves an $m^{-1/2}$ rate up to logarithmic factors, compared with the $m^{-1/6}$-type rate for deep ReLU networks in \cite{xu2024overparametrized}.
The proof combines separate controls of the forward activations and
backward propagation terms with covering arguments.

Applying Step 1 with \(K^1=K^m\), \(K^2=K\), then using Step 2 
gives
\[
\|\bS_mg_T^{m,\text{GD}}-\bS g_T^{\text{GD}}\|_\rho^2
\lesssim
C_\sigma^{ L}(\eta T)^4
\frac{d\log^2(m)}{m}.
\]
See   Appendices~\ref{app:GD-term3} and \ref{app:NTK} for more details.

Finally, combining the estimates for the three terms in the above
decomposition yields the desired connection.

\subsubsection{Stochastic Gradient Descent}
We next establish a  connection between SGD and the kernel GD iterates.
\begin{theorem}\label{pro:connection-sgd}
    Let $\delta\in(0,1)$.
Suppose Assumption~\ref{ass:activation} holds.
Let $\bW(T)$ be the output of SGD. 
Assume that $\eta \le C_\sigma^{-L}(\log(T) + 1)\big)$, $1 \le \eta T \le n\big(C_\sigma^L\log(n/\delta)\big)^{-1}$ and $m \gtrsim C_\sigma^L d(\eta T)^5\log^2({m}/{\delta})$.
    With probability at least $1-L^2\exp(-C_\sigma \log^2(m)) - \delta$ over initialization $(\ba,\bW(0))$ and the draw of  the  training sample, it holds that 
    \begin{equation*}
        \E [ \|  f^{\text{SGD}}_{\bW(T)} - \bS g^{\text{GD}}_T   \|_\rho^2 ] \!\lesssim\!  \frac{C_\sigma^L\! d(\eta T)^5\!\log^2\!(m)}{m} +\! \eta\log(T),   
    \end{equation*}
    where the expectation is taken over the algorithmic randomness of SGD.
\end{theorem}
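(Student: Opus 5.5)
We mirror the GD argument but insert an intermediate kernel-SGD sequence in $\H_K$ to absorb the gradient noise. Define kernel SGD in $\H_m$ by $g^{m,\text{SGD}}_0=0$ and
\[
g^{m,\text{SGD}}_{k+1}=g^{m,\text{SGD}}_k-\eta\,(g^{m,\text{SGD}}_k(\bx_{i_k})-y_{i_k})K^m_{\bx_{i_k}},
\]
driven by the same indices $\{i_k\}$ as the network SGD. Define $g^{\text{SGD}}_k$ in $\H_K$ analogously. Also set $f^{\text{lin,SGD}}_{\bW(k)}=\langle\nabla f_{\bW(0)},\bW(k)-\bW(0)\rangle$. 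We then split
\[
\E\|f^{\text{SGD}}_{\bW(T)}-\bS g^{\text{GD}}_T\|_\rho^2\lesssim \E\|f^{\text{SGD}}_{\bW(T)}-f^{\text{lin,SGD}}_{\bW(T)}\|_\rho^2+\E\|f^{\text{lin,SGD}}_{\bW(T)}-\bS_m g^{m,\text{SGD}}_T\|_\rho^2+\E\|\bS_m g^{m,\text{SGD}}_T-\bS g^{\text{SGD}}_T\|_\rho^2+\E\|\bS g^{\text{SGD}}_T-\bS g^{\text{GD}}_T\|_\rho^2 .
\]

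\textbf{The first two terms (linearization).} We reuse the uniform local Hessian bound on the ball of radius $R\asymp\sqrt{\eta T}$ around $\bW(0)$, but now we need a pathwise trajectory bound. The per-sample loss $l(\cdot;z)$ is almost convex on the ball, with parameter $C_\sigma^L R/\sqrt m$, and it is smooth. A standard SGD argument, using $\|\bW(k+1)-\bW(0)\|^2$ expanded with a comparison point $\bW(0)$ (where $l\le 1/2$), gives $\|\bW(k)-\bW(0)\|_2^2\lesssim \eta k$ along every realization. This almost-sure control is what requires the step-size condition $\eta\le C_\sigma^{-L}(\log T+1)^{-1}$: we cannot use a monotone decrease of $\L_S$, so we control $\sum_k l(\bW(k);z_{i_k})$ through a telescoping bound instead. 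Given the trajectory bound, the Taylor remainder yields the first term $\lesssim C_\sigma^L(\eta T)^3/m$.

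For the second term, the difference $e_k=f^{\text{lin,SGD}}_{\bW(k)}-g^{m,\text{SGD}}_k$ obeys a perturbed linear recursion. Its perturbations have two sources: the drift of the gradient $\nabla f_{\bW(k)}-\nabla f_{\bW(0)}=O(R/\sqrt m)$, and the network-versus-linear residual. Unrolling over $T$ steps, where each step is nonexpansive up to a factor $1+\eta C$, gives $\sup$-norm error $\lesssim\sqrt{C_\sigma^L(\eta T)^5/m}$. The extra power of $\eta T$ relative to GD comes from the cruder pathwise sum.

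\textbf{The third term (kernel comparison).} This follows exactly as Step 1 of the GD sketch, pathwise for each fixed index sequence: kernel SGD is a kernel GD with a one-point empirical measure at each step. The stability argument therefore gives $\|g^{m,\text{SGD}}_T-g^{\text{SGD}}_T\|_\infty\lesssim(\eta T)^2\|K^m-K\|_\infty$. Combined with Step 2 (NTK concentration), this term is $\lesssim C_\sigma^L d(\eta T)^4\log^2(m)/m$.

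\textbf{The fourth term (main obstacle).} Here we must show $\E\|g^{\text{SGD}}_T-g^{\text{GD}}_T\|_\rho^2\lesssim\eta\log T$. Write $g^{\text{SGD}}_{k+1}-g^{\text{GD}}_{k+1}=(I-\eta T_S)(g^{\text{SGD}}_k-g^{\text{GD}}_k)+\eta\xi_k$, where $T_S$ is the empirical covariance operator and $\xi_k$ is a martingale difference. Unrolling and using orthogonality of the martingale increments gives
\[
\E\|\cdot\|_\rho^2=\eta^2\sum_k\E\|\bS(I-\eta T_S)^{T-1-k}\xi_k\|^2 .
\]
First, we need $\E\|\xi_k\|_K^2\lesssim$ const, which requires a uniform bound on $g^{\text{SGD}}_k$ residuals; we would get this from the bounded-iterate argument valid when $\eta\kappa^2\lesssim 1/\log T$. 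Second, the $\rho$-norm factor $\|\bS(I-\eta T_S)^j T_S^{1/2}\|^2\lesssim 1/(\eta j)$ must be transferred from $T_S$ to $\bL$. This costs a concentration step between $T_S$ and $\bL$, and that is where the restriction $\eta T\le n/(C_\sigma^L\log(n/\delta))$ and the failure probability $\delta$ come in. Summing $\eta^2\cdot 1/(\eta(T-k))$ over $k$ then gives $\eta\log T$. Adding the four terms proves the claim.
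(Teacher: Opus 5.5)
\textbf{Gap in your third term.} The bound $\|g^{m,\text{SGD}}_T-g^{\text{SGD}}_T\|_\infty\lesssim(\eta T)^2\|K^m-K\|_\infty$ is true, but it does not follow ``exactly as Step~1''.

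Lemma~\ref{lem:two_kernel_GD} works on the sample vectors. It uses one matrix $\bfI-\frac{\eta}{n}\widehat{\bK}$ at every iteration, and that matrix is symmetric with $\|\bfI-\frac{\eta}{n}\widehat{\bK}\|_{op}\le 1$. For kernel SGD the sample-space recursion is driven instead by $\bfI-\eta\widehat{\bK}e_{i_k}e_{i_k}^\top$, where $e_i$ are the standard basis vectors. This matrix is neither symmetric nor an $\ell_2$-contraction. For $\widehat{\bK}=\mathbf{1}\mathbf{1}^\top$ one has $\|(\bfI-\eta\widehat{\bK}e_ie_i^\top)e_i\|_2^2=(1-\eta)^2+(n-1)\eta^2>1$ as soon as $\eta n>2$. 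These matrices contract only in the $\widehat{\bK}^{-1}$-weighted norm, so their products are controlled only up to $(\lambda_{\max}(\widehat{\bK})/\lambda_{\min}(\widehat{\bK}))^{1/2}$. That is exactly the eigenvalue dependence the paper avoids. A sup-norm recursion does no better: it loses a factor $1+\eta\|K^m\|_\infty$ per step, i.e.\ $e^{C\eta T}$.

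\textbf{How to repair it.} Use an auxiliary sequence so that you contract inside $\H_m$.
\begin{itemize}
\item With $r_k=g^{\text{SGD}}_k(\bx_{i_k})-y_{i_k}$, set $u_0=0$ and $u_{k+1}=u_k-\eta r_kK^m_{\bx_{i_k}}$.
\item Then $e_k:=g^{\text{SGD}}_k-u_k$ satisfies $\|e_k\|_\infty\le\eta\|K^m-K\|_\infty\sum_{j<k}|r_j|$.
\item The difference $v_k:=g^{m,\text{SGD}}_k-u_k\in\H_m$ obeys $v_{k+1}=(\bfI-\eta K^m_{\bx_{i_k}}\otimes K^m_{\bx_{i_k}})v_k+\eta e_k(\bx_{i_k})K^m_{\bx_{i_k}}$, which is exactly nonexpansive in $\H_m$.
\item For $\eta\|K\|_\infty\le 1$ and $|y|\le1$, one has $\|g^{\text{SGD}}_{k+1}\|_{\H_K}^2\le\|g^{\text{SGD}}_k\|_{\H_K}^2-\frac{\eta}{2}r_k^2+2\eta$. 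This gives $\sum_{k<T}|r_k|\le 2T$ pathwise.
\end{itemize}
Hence $\|g^{m,\text{SGD}}_T-g^{\text{SGD}}_T\|_\infty\le\|v_T\|_\infty+\|e_T\|_\infty\lesssim(1+\|K^m\|_\infty\eta T)\,\eta T\,\|K^m-K\|_\infty$. Since $\eta T\ge1$, this is your claimed rate.

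The same care is needed in your second term. The step operator must be exactly nonexpansive, which it is in $\H_m$ because $\eta\|K^m\|_\infty\le 1$. A factor $1+\eta C$ per step would compound to $e^{C\eta T}$, not to the polynomial you state.

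\textbf{How the paper avoids this.} The paper orders the chain differently. After the two linearization terms, it compares kernel SGD $f^m_T$ with kernel GD $g^{m,\text{GD}}_T$ inside the random space $\H_m$. This is Lemma~\ref{lem:fmk-gmk}: Proposition~6 of \cite{lin2017optimal}, applied conditionally on the initialization with $\kappa^2=C_\sigma^L$, with no capacity condition needed on $\H_m$. It then swaps kernels only at the full-batch level, where Lemma~\ref{lem:two_kernel_GD} applies verbatim. Your ordering puts the SGD-versus-GD variance estimate in the deterministic $\H_K$. That is legitimate and slightly cleaner, but it is precisely what forces the extra pathwise stability lemma above.

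\textbf{Step-size attribution.} The $\log T$ step-size restriction is not what controls the trajectory. Lemma~\ref{lem:wk-wo_sgd} uses only three ingredients: $\eta\le C_\sigma^{-L}$, a one-step inequality comparing with $\bW(0)$, and the pathwise residual bound $|f_{\bW(k)}(\bx)-y|\le C_\sigma^L\sqrt{\eta k}+1$. The $\log T$ condition enters only through the SGD-versus-GD noise term, as in your fourth step.
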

Theorem~\ref{pro:connection-sgd} directly compares the predictor
generated by network SGD with the full-batch kernel GD iterate
associated with the deterministic infinite-width NTK.
The first term is the finite-width approximation error analogous to
that in the GD case, whereas the second term captures the additional discrepancy caused by stochastic gradient updates.
 
\paragraph{Proof sketch of SGD.}
The proof follows the same comparison strategy as in the GD case, with one additional term accounting for the discrepancy between stochastic and full-batch kernel dynamics.
Specifically, we introduce the linearized network predictor \(f_{\bW(T)}^{\text{lin,SGD}}\), the finite-width kernel SGD iterate \(f_T^m\), and the finite-width kernel GD iterate \(g_T^{m,\text{GD}}\).
These intermediate objects yield the decomposition
\begin{align*}
       & \ebb \big[\big\|f^{\text{SGD}}_{\bW(T)} \!-\! \bS g^{\text{GD}}_T\big\|_\rho^2\big] \!\lesssim\!  \ebb \big[ \big\|f^{\text{SGD}}_{\bW(T)} \!-\! f^{\text{lin,SGD}}_{\bW(T)}\big\|_\rho^2 \!+\! \big\|f^{\text{lin,SGD}}_{\bW(T)}  \!\!-\!  \bS_m f^m_T\big\|_\rho^2 \! +\!\big\|\bS_m\!\big(f^m_T \!-\! g^{m,\text{GD}}_T\big)\big\|_\rho^2\big] \!+\! \big\|\bS_mg^{ m,\text{GD}}_T \!\! -\! \bS g^{\text{GD}}_T\big\|_\rho^2.
    \end{align*}
The first two and the last terms are controlled by arguments analogous
to those used for GD.
The additional term $\ebb [ \|\bS_m (f^m_T \!-\! g^{m,\text{GD}}_T ) \|_\rho^2 ]$ is bounded by comparing the kernel SGD and kernel
GD recursions and exploiting the conditional mean-zero property of the
stochastic gradient noise.
Combining these estimates proves the theorem.  The complete proof is given in Appendix~\ref{app:transfer-SGD}.

\subsection{Minimax-Optimal Generalization Bounds}

We now transfer kernel GD guarantees to finite-width DNNs using our DNN--kernel approximation results.

 \begin{figure*}[t]
\centering
\includegraphics[width=0.3\textwidth]{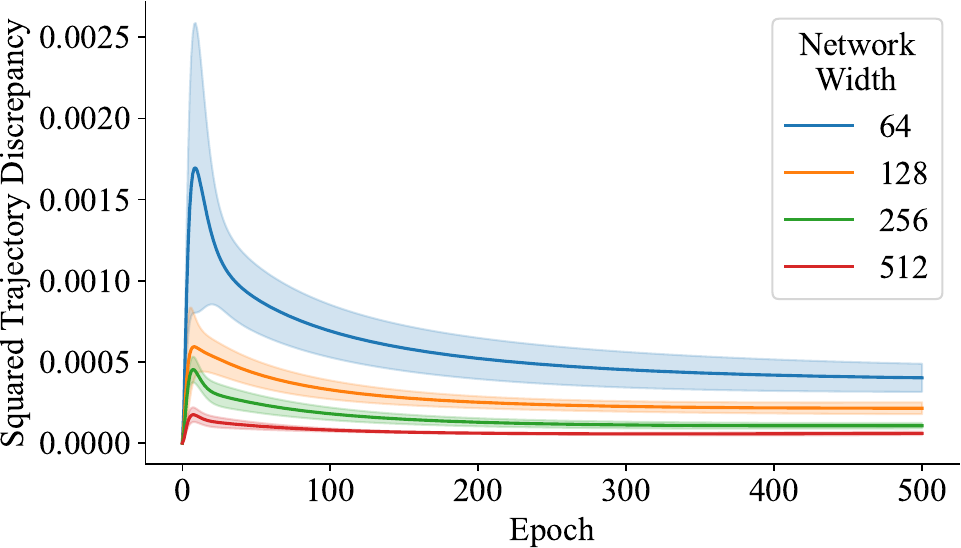} \hfill\includegraphics[width=0.3\textwidth]{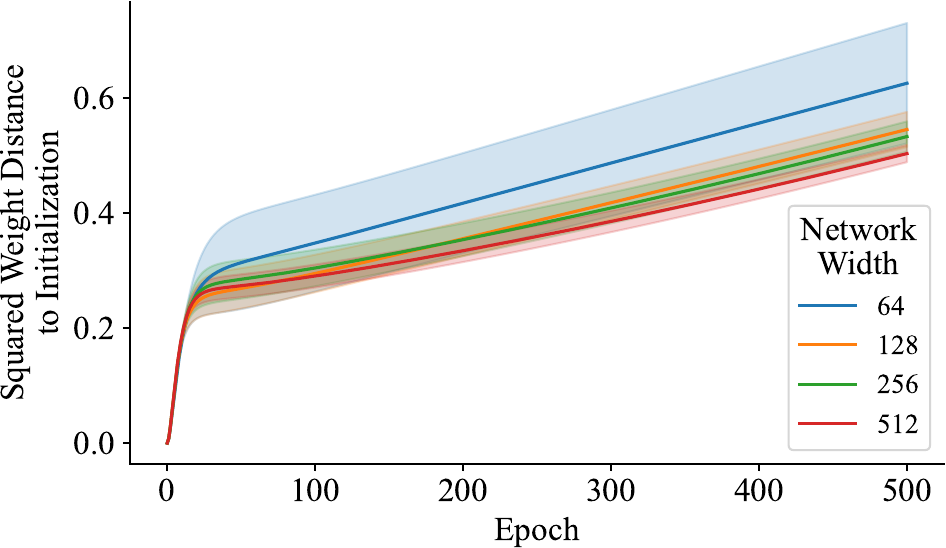}\hfill\includegraphics[width=0.3\textwidth]{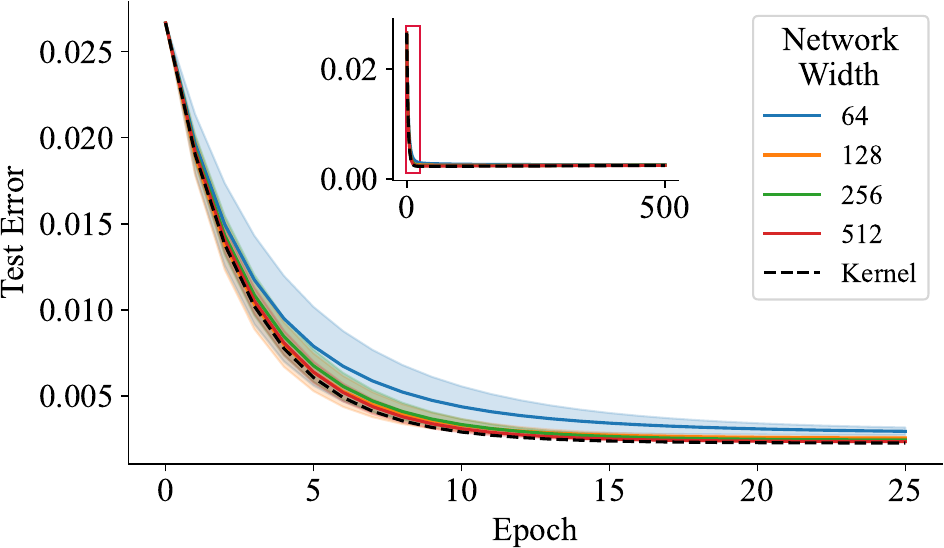}
\caption{
Squared prediction-trajectory discrepancy
\(\|f_{\mathbf W(k)}^{\mathrm{GD}}
-Sg_k^{\mathrm{GD}}\|_\rho^2\) (left),
squared distance to initialization
\(\|\mathbf W(k)-\mathbf W(0)\|_2^2\) (middle),
and test error (right) across network widths.
Results show mean and deviation over five runs.
}
\label{fig:experiments}
\end{figure*}

\paragraph{Gradient descent.}
Combining the optimality result for kernel GD from
\cite{lin2017optimal} with Theorem~\ref{pro:f-g_T}, we obtain the
following excess population risk rate.
A more general version is provided in
Appendix~\ref{app:minimax-GD}.
 
\begin{theorem}\label{cor:f-frho_gd}
    Suppose Assumptions~\ref{ass:activation}-\ref{ass:frho_smth} hold and $2\beta +\gamma>1$.
    For $\delta\in(0,1)$, assume $m\gtrsim C_\sigma^Ldn^{\frac{2\beta+4}{2\beta+\gamma}}\log^3\big(\frac{dLn}{\delta}\big)$ and $n \ge \max\{ \frac{12}{\delta}\big(\frac{18(2\beta+\gamma)}{2\beta+\gamma-1}\big)^{\frac{4\beta + 2\gamma}{2\beta + \gamma - 1}},  \eta^{-(2\beta+\gamma)}\}$. 
    Let $T = \lceil n^{\frac{1}{2\beta + \gamma}} \rceil$ and $\eta \asymp C_\sigma^{-L}$ be a constant. With probability at least $1 - \delta$ over initialization $(\ba,\bW(0))$ and the draw of  the training sample, it holds that
  \begin{center}   $\varepsilon_{risk}\big(f^{\text{GD}}_{\bW(T)}\big) \lesssim  C_\sigma^{L(\beta\vee1)}n^{-\frac{2\beta}{2\beta+\gamma}}\log^4 \big(\frac{n}{\delta} \big).$
  \end{center}
\end{theorem}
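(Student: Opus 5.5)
The plan is to split the excess risk through the kernel GD iterate. Since $\varepsilon_{risk}(f)=\frac12\|f-f_\rho\|_\rho^2$, we have
\[
\varepsilon_{risk}\big(f^{\text{GD}}_{\bW(T)}\big)\le \|f^{\text{GD}}_{\bW(T)}-\bS g_T^{\text{GD}}\|_\rho^2+\|\bS g_T^{\text{GD}}-f_\rho\|_\rho^2 .
\]
The first term is handled by Theorem~\ref{pro:f-g_T} and the second by the known optimal-rate analysis of kernel GD (early-stopped Landweber iteration) in $\H_K$ from \cite{lin2017optimal}. We then check that the parameter choices make both terms of order $n^{-\frac{2\beta}{2\beta+\gamma}}$, and take a union bound over the failure events.

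For the kernel term, we invoke the result of \cite{lin2017optimal} for full-batch kernel GD with constant step size $\eta\le 1/\sup_\bx K(\bx,\bx)$. This applies because $K$ is bounded by a constant of order $C_\sigma^L$, so $\eta\asymp C_\sigma^{-L}$ is admissible. The result uses Assumptions~\ref{ass:effec_dim}--\ref{ass:frho_smth} and $|y|\le1$. With stopping time chosen so that $\eta T\asymp n^{\frac{1}{2\beta+\gamma}}$, it gives, with probability $1-\delta/2$,
\[
\|\bS g_T^{\text{GD}}-f_\rho\|_\rho^2\lesssim (\eta T)^{-2\beta}B^2+\frac{(\eta T)^{\gamma}}{n}\log^4(n/\delta).
\]
The case $\beta<1/2$ (where $f_\rho\notin\H_K$) is covered there under $2\beta+\gamma>1$, and this is where the lower bound on $n$ in the statement enters. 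With $T=\lceil n^{1/(2\beta+\gamma)}\rceil$ and constant $\eta$, the bias term contributes $\eta^{-2\beta}n^{-\frac{2\beta}{2\beta+\gamma}}\asymp C_\sigma^{2L\beta}n^{-\frac{2\beta}{2\beta+\gamma}}$. Together with the factor $C_\sigma^L$ from the kernel bound in the variance term, this explains the $C_\sigma^{L(\beta\vee1)}$ prefactor, after absorbing constants into the generic $C_\sigma$. The condition $n\ge\eta^{-(2\beta+\gamma)}$ ensures $\eta T\ge1$.

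For the approximation term, we verify the hypotheses of Theorem~\ref{pro:f-g_T}. We have $\eta T\lesssim n^{\frac{1}{2\beta+\gamma}}\le n/(C_\sigma^L\log(n/\delta))$ for $n$ large, using $2\beta+\gamma>1$. The width condition $m\gtrsim C_\sigma^Ld(\eta T)^4\log^2(m/\delta)$ is implied by the assumed $m\gtrsim C_\sigma^Ldn^{\frac{2\beta+4}{2\beta+\gamma}}\log^3(dLn/\delta)$, since $(\eta T)^4\lesssim n^{\frac{4}{2\beta+\gamma}}$. For the logarithms, $m$ is polynomial in $n$, so $\log m\lesssim\log(dLn/\delta)$ and the conditions are consistent. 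The resulting bound $C_\sigma^Ld(\eta T)^4\log^2(m)/m$ is then at most $n^{\frac{4}{2\beta+\gamma}}\cdot n^{-\frac{2\beta+4}{2\beta+\gamma}}=n^{-\frac{2\beta}{2\beta+\gamma}}$ up to log factors.

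Finally, we must make the failure probability $L^2\exp(-C_\sigma\log^2 m)$ at most $\delta/2$. This is where the $\log^3(dLn/\delta)$ factor in the width is used, since it gives $\log^2 m\gtrsim\log(L^2/\delta)$. We also rescale $\delta$ in both results. I expect the main obstacle to be this bookkeeping rather than any new idea: matching the precise form and constants of the \cite{lin2017optimal} bound (its normalization of the kernel bound $\kappa^2$, the constraint $\eta\kappa^2\le1$, and its sample-size conditions in the regime $\beta<1/2$) with the stated hypotheses, and tracking the $C_\sigma^L$ factors so that they combine exactly into $C_\sigma^{L(\beta\vee1)}$.
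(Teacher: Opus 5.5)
You follow the paper's proof: split through $\bS g_T^{\text{GD}}$, bound the two pieces by Theorem~\ref{pro:f-g_T} and the \cite{lin2017optimal} estimate (Lemma~\ref{prop:gT-frho}; the paper combines the two in Theorem~\ref{thm:f-frho_gd}), and check the hypotheses for $\eta\asymp C_\sigma^{-L}$, $T=\lceil n^{1/(2\beta+\gamma)}\rceil$, using $(\eta T)^{1-2\beta}\le(\eta T)^{\gamma}$ to absorb the extra $\beta<1/2$ term. Two justifications in your bookkeeping are misplaced, though: the explicit lower bound on $n$ is what gives $\eta T\le n\big(C_\sigma^L\log(12n/\delta)\big)^{-1}$, which both pieces need and which has nothing specifically to do with $\beta<1/2$ (keep the factor $\eta\asymp C_\sigma^{-L}$ inside $\eta T$ so that $C_\sigma^L$ cancels and the requirement becomes $n^{\frac{2\beta+\gamma-1}{2\beta+\gamma}}\ge 18\log(12n/\delta)$; writing $\eta T\lesssim n^{1/(2\beta+\gamma)}$ instead would need a larger, $L$-dependent threshold on $n$), and $L^2\exp(-C_\sigma\log^2 m)\le\delta/2$ follows from $m\gtrsim C_\sigma^L n^{\frac{2\beta+4}{2\beta+\gamma}}$ together with $n\gtrsim\delta^{-1}$, because the $\log^3(dLn/\delta)$ factor alone only gives $\log m\gtrsim\log\log(dLn/\delta)$.
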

Theorem \ref{cor:f-frho_gd} implies that when the network width scales polynomially with \(n\) and \(d\),
GD-trained DNNs attain the minimax-optimal rate $\O(n^{-\frac{2\beta}{2\beta+\gamma}})$ up to logarithmic factors, with gradient complexity $\mathcal O (n^{1+\frac{1}{2\beta+\gamma}} ).$ 
Hence, finite-width DNNs inherit the classical statistical guarantees of kernel GD under the same source and effective dimension conditions.
A detailed comparison with existing minimax results has already been provided in Section~\ref{sec:comp} and Table~\ref{table:summary}.


\paragraph{Stochastic Gradient Descent} 
Combining Theorem~\ref{pro:connection-sgd} with the statistical
guarantees for kernel methods in
\cite{lin2017optimal}, we obtain the
following result.
A more general statement is provided in
Appendix~\ref{app:minimax-sgd}. 
\begin{theorem}\label{cor:f-frho_sgd}
    Suppose Assumptions~\ref{ass:activation}-\ref{ass:frho_smth} hold and $2\beta +\gamma>1$.
    For $\delta\in(0,1)$, assume $m\gtrsim C_\sigma^Ldn^{\frac{2\beta+5}{2\beta+\gamma}}\log^2\big(\frac{dLn}{\delta}\big)$ and $n \ge (C_\sigma^{L}\log(\frac{n}{\delta}))^{2\beta+\gamma}$.
    Let $T=\lceil n^{\frac{2\beta + 1}{2\beta + \gamma}} \rceil$ and $\eta=(C_\sigma^L \log(\frac{n}{\delta}))^{ -1} n^{ -\frac{2\beta}{ 2\beta+\gamma}} $. With probability at least $1  - \delta$ over initialization $(\ba, \bW(0))$ and the draw of  the training sample, it holds that
  \begin{center}
      $\ebb \big[\varepsilon_{risk}\big(f^{\text{SGD}}_{\bW(T)}\big)\big] \lesssim  C_\sigma^Ln^{-\frac{2\beta}{2\beta+\gamma}}\log^{2+2\beta} \big(\frac{n}{\delta} \big),$
  \end{center}
   where the expectation is taken over the algorithmic randomness of SGD.
\end{theorem}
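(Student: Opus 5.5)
Our plan is to combine Theorem~\ref{pro:connection-sgd} with a kernel-GD excess risk bound from \cite{lin2017optimal}, using the elementary inequality
\[
\varepsilon_{risk}\bigl(f^{\text{SGD}}_{\bW(T)}\bigr)=\tfrac12\bigl\|f^{\text{SGD}}_{\bW(T)}-f_\rho\bigr\|_\rho^2\le \bigl\|f^{\text{SGD}}_{\bW(T)}-\bS g_T^{\text{GD}}\bigr\|_\rho^2+\bigl\|\bS g_T^{\text{GD}}-f_\rho\bigr\|_\rho^2 .
\]
We then take expectation over the SGD indices. The second term does not depend on the algorithmic randomness.

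\textbf{Kernel term.} Under Assumptions~\ref{ass:effec_dim}--\ref{ass:frho_smth} with $2\beta+\gamma>1$, the early-stopped kernel GD result of \cite{lin2017optimal} gives a bound with probability at least $1-\delta/2$ over the sample. The bound is
\[
\|\bS g_T^{\text{GD}}-f_\rho\|_\rho^2\lesssim \kappa^{2\beta\vee 2}(\eta T)^{-2\beta}\log^2(n/\delta),
\]
valid whenever $\eta T\lesssim n^{1/(2\beta+\gamma)}$ up to logs, where $\kappa$ bounds $K$. (Since $K\le C_\sigma^L$ and $\eta\kappa\le 1$ is required, the $C_\sigma^L$ factors enter through $\eta$.) With $\eta=(C_\sigma^L\log(n/\delta))^{-1}n^{-2\beta/(2\beta+\gamma)}$ and $T=\lceil n^{(2\beta+1)/(2\beta+\gamma)}\rceil$, we get
\[
\eta T\asymp \frac{n^{1/(2\beta+\gamma)}}{C_\sigma^L\log(n/\delta)}.
\]
The kernel term is therefore $\lesssim (C_\sigma^L\log(n/\delta))^{2\beta}n^{-2\beta/(2\beta+\gamma)}$ times logs. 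This is where the $\log^{2+2\beta}$ factor comes from. The condition $n\ge (C_\sigma^L\log(n/\delta))^{2\beta+\gamma}$ ensures $\eta T\ge 1$. One needs to check that the $C_\sigma^{L}$ power collapses to the stated $C_\sigma^L$; if the lemma only gives $C_\sigma^{L(\beta\vee1)}$, the statement's constant should be read accordingly. I expect this bookkeeping, which means matching the exact form of the Lin--Rosasco bound, including its requirements on $\eta\kappa^2$, on $n$, and on the $\lambda=(\eta T)^{-1}$ regime, to be the main obstacle.

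\textbf{Network--kernel term.} We verify the hypotheses of Theorem~\ref{pro:connection-sgd}:
\begin{itemize}
\item $\eta\le C_\sigma^{-L}(\log T+1)$ holds trivially.
\item $1\le\eta T\le n(C_\sigma^L\log(n/\delta))^{-1}$ holds since $\eta T\lesssim n^{1/(2\beta+\gamma)}/(C_\sigma^L\log)$ and $2\beta+\gamma>1$.
\item $m\gtrsim C_\sigma^Ld(\eta T)^5\log^2(m/\delta)$ follows from $m\gtrsim C_\sigma^Ldn^{5/(2\beta+\gamma)}\log^2$, which is implied by the assumed width. Logs in $m$ are absorbed since $m$ is polynomial in $dLn$.
\end{itemize}
The theorem then gives, with probability at least $1-L^2e^{-C_\sigma\log^2 m}-\delta/2$,
\[
\E\bigl\|f^{\text{SGD}}_{\bW(T)}-\bS g_T^{\text{GD}}\bigr\|_\rho^2\lesssim \frac{C_\sigma^Ld(\eta T)^5\log^2 m}{m}+\eta\log T .
\]
The first term is at most $n^{5/(2\beta+\gamma)}C_\sigma^Ld\log^2/m\lesssim n^{-2\beta/(2\beta+\gamma)}$ by the width assumption $m\gtrsim C_\sigma^Ldn^{(2\beta+5)/(2\beta+\gamma)}\log^2$. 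The second term is $\eta\log T\lesssim n^{-2\beta/(2\beta+\gamma)}$, because $\log T\lesssim\log n$ cancels the $\log(n/\delta)$ in $\eta$.

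\textbf{Conclusion.} A union bound finishes the argument, with $L^2e^{-C_\sigma\log^2 m}\le\delta/2$ once $m$ exceeds a polylog in $L/\delta$, which the width condition guarantees after rescaling $\delta$. Summing the two bounds yields $\E[\varepsilon_{risk}]\lesssim C_\sigma^L n^{-2\beta/(2\beta+\gamma)}\log^{2+2\beta}(n/\delta)$.
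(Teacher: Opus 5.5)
Your proposal is essentially the paper's proof. The paper makes the same split of $\E[\varepsilon_{risk}(f^{\text{SGD}}_{\bW(T)})]$ into the network--kernel term and $\|\bS g_T^{\text{GD}}-f_\rho\|_\rho^2$ (packaged as Theorem~\ref{thm:f-frho_sgd}), bounds the first by Theorem~\ref{pro:connection-sgd} and the second by the Lin--Rosasco-based Lemma~\ref{prop:gT-frho}, then substitutes $\eta T\asymp n^{1/(2\beta+\gamma)}/(C_\sigma^L\log(n/\delta))$ and takes a union bound. Two bookkeeping corrections. First, the step-size hypothesis is really $\eta\le 1/(C_\sigma^L(\log T+1))$, as in Lemma~\ref{lem:fmk-gmk}; the main-text form is a typo. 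Your $\eta$ satisfies it because of its $\log(n/\delta)$ factor, not trivially. Second, Lemma~\ref{prop:gT-frho} puts $\log^4(T/\delta)$ on the variance terms, not your $\log^2(n/\delta)$, so the log exponent you actually get is $\max\{2\beta,4-\gamma\}$, which is at most the stated $2+2\beta$ only when $2\beta+\gamma\ge 2$. The paper's one-line verification glosses over this, just as it glosses over the $\beta$-dependent power of $C_\sigma$ you flagged.
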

Theorem~\ref{cor:f-frho_sgd} shows that SGD-trained DNNs attain the
same minimax-optimal rate $\O ( n^{-\frac{2\beta}{2\beta+\gamma}} )$ 
up to logarithmic factors, provided that the network width grows polynomially with \(n\).
The required stochastic gradient complexity is $\O (
n^{\frac{2\beta+1}{2\beta+\gamma}}
 ).$ 
Thus, the optimal statistical guarantees of kernel gradient methods
extend to finite-width DNNs trained by SGD.

\section{Experiments}
\label{sec:experiments}
In this section, we focus on GD to numerically validate the finite-width approximation
predicted by our theory, comparing network GD with the corresponding
infinite-width kernel GD across different widths.

We consider a controlled regression problem on the unit sphere.
The inputs
\(\bx\sim\mathcal U(S^4)\)
are sampled uniformly from \(S^4\), and the target function is $y
=
0.5x_1
+
0.3x_2x_3
+
0.04x_4^2
-
0.04.$ 
We train networks of the form \eqref{eq:NN} with widths $m\in\{64,128,256,512\}.$ 
The normalization constant \(c_\sigma\) is evaluated numerically, and
the deterministic infinite-width kernel \(K\) is approximated using
Gauss--Hermite quadrature.
For both network GD and kernel GD, we use the step size $\eta
\!=\!
0.2 
\lambda_{\max}
 (
n_{\mathrm{train}}^{-1}\mathbf K_{\mathrm{train}}
 )^{-1},$ 
where \(\mathbf K_{\mathrm{train}}\) is the kernel Gram matrix on  training data and \(n_{\mathrm{train}}\) is the sample size.

Figure~\ref{fig:experiments} reports three quantities over five
independent runs.
The left panel shows the discrepancy between the prediction
trajectories of network GD and infinite-width kernel GD.
After a short initial transient, the discrepancy decreases during
training and becomes consistently smaller as the network width grows.
The middle panel tracks the squared distance between the GD iterates and their initialization.
After a short initial transient, it grows approximately linearly with the number of epochs and remains smaller for wider networks, consistent with Lemma~\ref{lem:wk-w0}, which gives \(\|\mathbf W(k)-\mathbf W(0)\|_2^2\le 4\eta k\).
Finally, the right panel shows that the test errors of wider
networks increasingly approach that of kernel GD.
Overall, the experiment supports the predicted finite-width to kernel
approximation and its improvement with increasing width.

\section{Conclusion}
In this paper, we establish a key connection between the learning dynamics of DNNs and those in the RKHS by leveraging over-parameterization. Building on this connection, we demonstrate that DNNs with smooth activations trained via GD/SGD can achieve the minimax-optimal excess risk rate $\O(n^{-\frac{2\beta}{2\beta+\gamma}})$, provided the network width satisfies $m \gtrsim \text{Poly}(n, d)$. Our results show that GD and SGD with DNNs can attain generalization performance comparable to classical gradient methods in the kernel setting. 

An important direction for future work is to improve the dependence on the network depth and extend the analysis to other architectures, such as convolutional and residual networks. Our analysis is restricted to the NTK regime, where the network remains close to its linearization around initialization. Developing generalization guarantees beyond this regime, particularly in settings with substantial feature learning, remains an important open problem.  

\bibliography{learning}
 \bibliographystyle{plain}

\newpage
\appendix
\onecolumn

\begin{center}\Large \bf Appendix for ``Minimax-Optimal Generalization Bounds for Smooth Deep Neural Networks Trained by (Stochastic) Gradient Descent''\end{center}

\section{Problem Reformulation}\label{appde:problem}

We first introduce some useful notations and express the problem in a simple form. 
An $L$-layer DNN with weights $(\bW^1,\ldots,\bW^L)$ has the form
\begin{align*}
    f_\bW(\bx)=\ba^\top \sqrt{\frac{c_\sigma}{m}} \sigma\Big( \bW^L \cdots \sqrt{\frac{c_\sigma}{m}}\sigma\big( \bW^1 \bx\big)\Big).   
\end{align*}
Recall that we denote the gradient of $f_\bW$ by
\begin{align*}
    \nabla_\ell f_{\bW}(\bx) = \frac{\partial f_{\bW}(\bx)}{\partial \bW^\ell} \quad \text{for }\ell\in[L], \quad \text{and} \quad\nabla f_{\bW}(\bx) = \big(\nabla_1 f_{\bW}(\bx),\ldots,\nabla_L f_{\bW}(\bx)\big).
\end{align*}
Let $\nabla^2 f_{\bW}$ denote the Hessian of $f_\bW$ with respect to the weights $(\bW_1,\ldots,\bW_L)$.

For $\ell\in[L]$, $r\in[m]$ and $\bx\in\X$, we denote by $\bw^\ell_r(0)^\top$ the $r$-th row vector of the initialization $\bW^\ell(0)$.
Define the output of $\ell$-th layer at initialization $o^\ell_0(\bx)$  iteratively as
\begin{align}\label{eq:def_o0}
    o^\ell_0(\bx) = \sqrt{\frac{c_\sigma}{m}}\sigma\Big(\bW^\ell(0)o^{\ell-1}_0(\bx)\Big) \text{ with } o^0_0(\bx) = \bx,
\end{align}
and the diagonal matrix $\bD^\ell_0(\bx)$ as
\begin{align}\label{eq:def_D0}
    \bD^\ell_0(\bx) = \text{diag}\big(\sigma'(\bw^\ell_r(0)^\top o^{\ell-1}_0(\bx))\big)_{r=1}^m \in \R^{m\times m}.
\end{align}
For any $k,s\in\mathbb{N}$ with $k\le s$ and matrices $\bA_k,\bA_{k+1},\ldots,\bA_s$ with suitable dimensions, we denote $\prod_{h=k}^s\bA_h = \bA_s\cdots\bA_k$.
If $k > s$, we use the conventional notation $\prod_{h=k}^s\bA_h = \bfI.$

Note $f_{\bW(0)}(\bx) = \ba^\top o^L_0(\bx)$, from chain rule we know 
\begin{align*}
    \frac{\partial f_{\bW(0)}(\bx)}{\partial \bw^\ell_r(0)^\top} &= \frac{\partial \ba^\top o^L_0(\bx)}{\partial o^L_0(\bx)}\frac{\partial o^L_0(\bx)}{\partial o^{L-1}_0(\bx)}\cdots \frac{\partial o^{\ell+1}_0(\bx)}{\partial o^{\ell}_0(\bx)}\frac{\partial o^{\ell}_0(\bx)}{\partial \bw^\ell_r(0)^\top} = \ba^\top\Big[\prod_{h=\ell+1}^L\sqrt{\frac{c_\sigma}{m}}\bD^h_0(\bx)\bW^h(0)\Big]\sqrt{\frac{c_\sigma}{m}}\bD^\ell_0(\bx)\mathbf{O}^{\ell-1}_r(\bx),
\end{align*}
where $\mathbf{O}^{\ell-1}_r(\bx)$ is a matrix such that its $r$-th row is $(o^{\ell-1}_0(\bx))^\top$ and other rows are $\mathbf{0}$. 
We further define
\begin{align}\label{eq:def_V0}
    \bV^\ell_{L,0}(\bx) = \bigg(\Big[\prod_{h=\ell+1}^L\sqrt{\frac{c_\sigma}{m}}\bD^h_0(\bx)\bW^h(0)\Big]\sqrt{\frac{c_\sigma}{m}}\bD^\ell_0(\bx)\bigg)^\top.
\end{align}

Then, it holds that
\begin{align*}
    \nabla_\ell f_{\bW(0)}(\bx) = \bV^\ell_{L,0}(\bx)\ba \big(o^{\ell-1}_0(\bx)\big)^\top \text{ for all }\ell\in[L].
\end{align*}
The following lemma shows that under symmetric initialization, the gradients with respect to first $L-1$-layers vanish.
\begin{lemma}\label{lem:aDW=0}
    Under the symmetric initialization, it holds that
    \begin{align*}
        \ba^\top \bD_0^L(\bx)\bW^L(0) = 0 \quad \text{and} \quad \nabla_\ell f_{\bW(0)}(\bx) = 0 \ \text{ for all }\ \ell\in[L-1],\bx\in\X.
    \end{align*}
\end{lemma}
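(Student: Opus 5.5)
The plan is to compute directly with the paired structure of the symmetric initialization. For the first identity, write
\[
\ba^\top \bD_0^L(\bx)\bW^L(0)=\sum_{r=1}^m a_r\,\sigma'\big(\bw^L_r(0)^\top o^{L-1}_0(\bx)\big)\,\bw^L_r(0)^\top .
\]
Split the sum into the indices $r\in\{1,\ldots,m/2\}$ and their partners $r+m/2$. The symmetric initialization gives $\bw^L_{r+m/2}(0)=\bw^L_r(0)$. The first $L-1$ layers are shared, so $o^{L-1}_0(\bx)$ is the same vector for both members of a pair, and hence the diagonal entries satisfy $\sigma'(\bw^L_{r+m/2}(0)^\top o^{L-1}_0(\bx))=\sigma'(\bw^L_{r}(0)^\top o^{L-1}_0(\bx))$. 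Since $a_{r+m/2}=-a_r$, each pair contributes
\[
a_r\sigma'(\cdot)\bw^L_r(0)^\top-a_r\sigma'(\cdot)\bw^L_r(0)^\top=0,
\]
and the whole sum vanishes for every $\bx\in\X$.

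For the gradient statement, use the representation derived just before the lemma,
\[
\nabla_\ell f_{\bW(0)}(\bx)=\bV^\ell_{L,0}(\bx)\ba\,\big(o^{\ell-1}_0(\bx)\big)^\top .
\]
By \eqref{eq:def_V0} and the product convention $\prod_{h=k}^s\bA_h=\bA_s\cdots\bA_k$, we have for $\ell\le L-1$
\[
(\bV^\ell_{L,0}(\bx)\ba)^\top=\ba^\top\Big[\prod_{h=\ell+1}^L\sqrt{\tfrac{c_\sigma}{m}}\bD^h_0(\bx)\bW^h(0)\Big]\sqrt{\tfrac{c_\sigma}{m}}\bD^\ell_0(\bx).
\]
The leftmost factor in this product is the $h=L$ term, $\sqrt{c_\sigma/m}\,\bD^L_0(\bx)\bW^L(0)$. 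Since $\ell+1\le L$, this factor is genuinely present, so the row vector starts with $\sqrt{c_\sigma/m}\,\ba^\top\bD^L_0(\bx)\bW^L(0)$. By the first part this is $0$. Hence $\bV^\ell_{L,0}(\bx)\ba=0$, and therefore $\nabla_\ell f_{\bW(0)}(\bx)=0$ for all $\ell\in[L-1]$ and $\bx\in\X$.

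There is no real obstacle here. The only points needing care are to check that pairing also forces the pre-activations of the paired last-layer neurons to agree, which holds because both neurons read the same input $o^{L-1}_0(\bx)$, and to check the ordering convention for the matrix product, so that the vanishing factor $\ba^\top\bD^L_0\bW^L(0)$ is indeed the one multiplied first by $\ba^\top$. The case $\ell=L$ is excluded, because there the product is empty and $\ba^\top\bD^L_0(\bx)$ does not vanish.
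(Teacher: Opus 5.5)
Your proof is correct and matches the paper's argument. You pair index $r$ with $r+m/2$ so the last-layer sum $\ba^\top\bD^L_0(\bx)\bW^L(0)$ cancels, and then observe that for $\ell\le L-1$ the vector $\bV^\ell_{L,0}(\bx)\ba$ contains this vanishing factor on the left.
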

\begin{proof}
    Since $a_r = -a_{r+m/2}$ and $\bw^L_r(0) = \bw^L_{r+m/2}(0)$ for all $r\in[m/2]$, it holds that
    \begin{align*}
        \ba^\top \bD_0^L(\bx)\bW^L(0) &= \sum_{r=1}^m a_r\sigma'\big(\bw^L_r(0)^\top o_0^{L-1}(\bx)\big)\bw^L_r(0)^\top\\
        &= \sum_{r=1}^{m/2} a_r\sigma'\big(\bw^L_r(0)^\top o_0^{L-1}(\bx)\big)\bw^L_r(0)^\top + \sum_{r=m/2+1}^m a_r\sigma'\big(\bw^L_r(0)^\top o_0^{L-1}(\bx)\big)\bw^L_r(0)^\top\\
        &= \sum_{r=1}^{m/2} a_r\sigma'\big(\bw^L_r(0)^\top o_0^{L-1}(\bx)\big)\bw^L_r(0)^\top + \sum_{r=1}^{m/2} (-a_r)\sigma'\big(\bw^L_r(0)^\top o_0^{L-1}(\bx)\big)\bw^L_r(0)^\top = 0.
    \end{align*}
    Therefore, for any $\ell\in[L-1]$ and $\bx\in\X$, it holds that
    \begin{align*}
        &\nabla_\ell f_{\bW(0)}(\bx) = \bV^\ell_{L,0}(\bx)\ba \big(o^{\ell-1}_0(\bx)\big)^\top \\
        & = \bigg(\Big[\prod_{h=\ell+1}^{L-1}\sqrt{\frac{c_\sigma}{m}}\bD^h_0(\bx)\bW^h(0)\Big]\sqrt{\frac{c_\sigma}{m}}\bD^\ell_0(\bx)\bigg)^\top \sqrt{\frac{c_\sigma}{m}}\bW^L(0)^\top \bD^L_0(\bx)\ba\big(o_0^{\ell-1}(\bx)\big)^\top= 0,
    \end{align*}
    where the last identity used the equality $\ba^\top \bD_0^L(\bx)\bW^L(0) = 0.$
    This completes the proof of the lemma.
\end{proof}
Then, we know the finite-width NTK reduces to the inner product of the last-layer tangent feature, i.e.,
\begin{align*}
    K^m(\bx,\bx') &= \langle \nabla f_{\bW(0)}(\bx),\nabla f_{\bW(0)}(\bx')\rangle = \langle \nabla_L f_{\bW(0)}(\bx),\nabla_L f_{\bW(0)}(\bx')\rangle\\
    &= \langle o^{L-1}_0(\bx),o^{L-1}_0(\bx')\rangle \, \ba^\top \bV^L_{L,0}(\bx)^\top\bV^L_{L,0}(\bx')\ba
\end{align*}

We will prove later that with high probability, $K^{m}$ converges to a deterministic NTK $K:\X\times\X\to\R$ uniformly as the width $m\to\infty$, where $K$ is defined by 
\begin{align}\label{eq:K}
    K(\bx,\bx') = c_\sigma\ebb[\sigma(U^{L-1}(\bx))\sigma(U^{L-1}(\bx'))] q^L(\bx,\bx'),
\end{align}
where  $ \{(U^{\ell}(\bx),U^{\ell}(\bx'))\}_{\ell=1}^{L} $ are pairs of bivariate normal variables defined iteratively by $(U^{\ell}(\bx),U^{\ell}(\bx'))\sim \N(0,$ $\Sigma^{\ell-1}(\bx,\bx'))$ with  
\begin{align*}
\Sigma^{\ell-1} (\bx,\!\bx') =
    c_\sigma\left(\begin{aligned}
        &\ebb[\sigma^2(U^{\ell - 1} (\bx) ) ] && \ebb[\sigma(U^{\ell-1}(\bx) )\sigma(U^{\ell-1}(\bx') ) ]\\
         \ebb[& \sigma (U^{ \ell-1} (\bx) ) \sigma( U^{ \ell-1} (\bx') ) ] &&\ \  \ebb[\sigma^2(U^{\ell - 1}(\bx') ) ]
    \end{aligned}
     \right)\!, \end{align*}
     and
     \begin{align*}
    \Sigma^{0}(\bx,\bx') = \left(\begin{aligned}
        &1 &&\langle \bx,\bx'\rangle\\
        \langle \bx&,\bx'\rangle &&   1
    \end{aligned}
    \!\right) 
\end{align*}
and $q^\ell(\bx,\bx') =  c_\sigma\ebb[\sigma'(U^{\ell}(\bx))\sigma'(U^{\ell}(\bx'))]$, and use the notation $\ebb[\sigma(U^0(\bx))\sigma(U^0(\bx')) ] = c_\sigma^{-1}\langle \bx,\bx'\rangle $.
Note that for all $\bx,\bx'\in\X$ and $\ell\in[L]$, $\ebb[\sigma(U^\ell(\bx))\sigma(U^\ell(\bx')) ]$ and $\ebb[\sigma'(U^{\ell}(\bx))\sigma'(U^{\ell}(\bx'))]$ are deterministic, and does not involve any randomness.

We claim that
    \begin{align}\label{eq:EU=1}
        c_\sigma\ebb[\sigma^2(U^\ell(\bx))] = 1 \text{ for all }\ell\in[L-1] \text{ and }\bx\in\X.
    \end{align}
    We prove this by induction on the layer $\ell$.
    Note $U^1(\bx)\sim\N(0,1)$.
    Recall that we define $c_\sigma = (\ebb_{x\sim\N(0,1)}[\sigma^2(x)])^{-1}$, it holds that $c_\sigma\ebb[\sigma^2(U^1(\bx))] = 1$.
    Suppose this claim holds for the layer $\ell$, we will show that it also holds for the layer $\ell+1$. Indeed, the induction assumption implies that $c_\sigma\ebb[\sigma^2(U^\ell(\bx))] = 1$.
    Together with the definition of $U^{\ell+1}(\bx)$ we know $U^{\ell+1}(\bx)\sim\N\big(0,c_\sigma\ebb[\sigma^2(U^\ell(\bx)))]\big) = \N(0, 1)$.
    Then, $c_\sigma\ebb[\sigma^2(U^{\ell+1}(\bx))] = c_\sigma\ebb_{Z\sim\N(0,1)}[\sigma^2(Z)] = 1$.
    This proves \eqref{eq:EU=1}.

Now, we can show that the NTK $K$ is uniformly bounded.
    Indeed, for any $\bx,\bx'\in\X$, we know
    \begin{align}\label{eq:bound_K}
        \big|K(\bx,\bx')\big| &=  \Big|c_\sigma\ebb[\sigma(U^{L-1}(\bx))\sigma(U^{L-1}(\bx'))] c_\sigma\ebb[\sigma'(U^{L}(\bx))\sigma'(U^{L}(\bx'))]\Big|\nonumber\\
        &\le c_\sigma\Big[\sqrt{c_\sigma\ebb[\sigma^2(U^{L-1}(\bx))]} \sqrt{c_\sigma \ebb[\sigma^2(U^{L-1}(\bx'))]}  \big|\ebb[\sigma'(U^{L}(\bx))\sigma'(U^{L}(\bx'))]\big|\Big]\nonumber\\
        &=c_\sigma\big|\ebb[\sigma'(U^{L}(\bx))\sigma'(U^{L}(\bx'))]\big| \le c_\sigma B_{\sigma'}^2 \le C_\sigma,
    \end{align}
    where in the first inequality we have used Cauchy-Schwarz inequality, in the second identity we have used \eqref{eq:EU=1}.

    Let $\|\cdot\|_2$ denote the Euclidean norm for vectors and the Frobenius norm for matrices.
    From Theorem 4.21 in \cite{steinwart2008support}, we know $K^m$ uniquely defines a RKHS $\H_m$ given by \[\H_m = \{f: \X\to\R\ :\ \exists \ \bW \in \W \text{ such that } f(\bx) = \langle \bW, \nabla f_{\bW(0)}(\bx) \rangle \},\] 
    with norm defined, for any $f\in \H_m$,  by
    \[\|f\|_{\H_m} =   \inf  \{ \|\bW\|_2  :   \bW  \in  \W  \\ \text{ with } f(\bx)  =  \langle \bW,  \nabla f_{\bW(0)} (\bx) \rangle   \}.  \]

\section{Proofs of Gradient Descent}\label{app:GD}
In this section, we prove the main results for GD.

\subsection{Proofs of Uniform Local Hessian Bounds around Initialization}\label{app:Hessian}
In this subsection, we prove the uniform local Hessian bound by analyzing layerwise perturbations of the
forward activations and backward propagation matrices.
Let $\bD^\ell(\bx), o^\ell(\bx), \bV^\ell_{L}$ and $\widetilde{\bD}^\ell(\bx), \tilde{o}^\ell(\bx), \widetilde{\bV}^\ell_{L}$ be defined as \eqref{eq:def_D0}, \eqref{eq:def_o0} and \eqref{eq:def_V0} with $\bW^\ell(0)$ replaced by $\bW^\ell$ and $\widetilde{\bW}^\ell$, respectively.

The following useful lemma shows that the initial weights $\bW^\ell(0)$ are bounded by $\O(\sqrt{m})$ with high probability.  
\begin{lemma}[Theorem 4.4.5 in \cite{vershynin2018high}]\label{lemma:oprt_norm}
    Suppose $m\gtrsim d$.
    With probability at least $1-(L+1)\exp(-cm)$ over initialization $(\ba,\bW(0))$, there exists an absolute constant $c_0>1$ such that for any $\ell\in[L]$, it holds that
\begin{align}\label{eq:optr_nrm_W}
    \|\bW^\ell(0)\|_{op} \le c_0\sqrt{m} \ \text{ and } \ \|\ba\|_2 \le c_0\sqrt{m} 
\end{align}
\end{lemma}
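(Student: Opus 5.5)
The plan is to apply the standard operator-norm bound for Gaussian random matrices, Theorem 4.4.5 in \cite{vershynin2018high}, separately to each weight matrix and to $\ba$, and then take a union bound over the $L+1$ objects. That theorem says the following: if $\mathbf{A}\in\R^{N\times n}$ has independent mean-zero sub-Gaussian entries with $\psi_2$-norm at most $K$, then for every $t\ge 0$, with probability at least $1-2\exp(-t^2)$,
\[
\|\mathbf{A}\|_{op}\le CK(\sqrt{N}+\sqrt{n}+t).
\]
For standard Gaussians, $K$ is an absolute constant.

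\textbf{Layers $\ell\le L-1$.}
\begin{itemize}
\item For $\ell=1$, $\bW^1(0)\in\R^{m\times d}$ has i.i.d.\ $\N(0,1)$ entries. Taking $t=\sqrt{m}$ and using $d\lesssim m$ gives $\|\bW^1(0)\|_{op}\le C(2\sqrt m+\sqrt d)\le c_0\sqrt m$, with failure probability $2\exp(-m)$.
\item For $2\le\ell\le L-1$, $\bW^\ell(0)\in\R^{m\times m}$ has i.i.d.\ Gaussian entries, and the same choice $t=\sqrt m$ gives $\|\bW^\ell(0)\|_{op}\le 3C\sqrt m$.
\end{itemize}

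\textbf{Last layer.} Here the entries are not independent, because of the symmetric initialization. Write $\bW^L(0)=\begin{pmatrix}\mathbf{G}\\ \mathbf{G}\end{pmatrix}$, where $\mathbf{G}\in\R^{(m/2)\times m}$ has i.i.d.\ $\N(0,1)$ entries. Then
\[
\|\bW^L(0)\bu\|_2^2=2\|\mathbf{G}\bu\|_2^2,
\]
so $\|\bW^L(0)\|_{op}=\sqrt2\,\|\mathbf{G}\|_{op}$. Applying the theorem to $\mathbf{G}$ with $t=\sqrt m$ bounds this by $\sqrt2\, C(\sqrt{m/2}+2\sqrt m)\le c_0\sqrt m$.

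\textbf{Output weights.} The vector $\ba$ satisfies $\|\ba\|_2^2=2\sum_{r\le m/2}a_r^2$, which is twice a $\chi^2_{m/2}$ variable. Either the vector case of the theorem (an $(m/2)\times1$ matrix) or standard $\chi^2$ concentration (Laurent--Massart) gives $\|\ba\|_2\le c_0\sqrt m$ with probability at least $1-\exp(-cm)$.

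\textbf{Union bound.} Union-bounding over the $L$ weight matrices and $\ba$ gives total failure probability at most $2(L+1)\exp(-m)$. After adjusting the absolute constant $c$ to absorb the factor $2$ (possible for $m$ larger than an absolute constant, which the condition $m\gtrsim d$ can be taken to include), this is at most $(L+1)\exp(-cm)$. Enlarging $c_0$ to be at least the largest of the constants above, and larger than $1$, finishes the argument.

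The only step needing care is the duplicated last layer and output vector, since the theorem requires independent entries. Reducing to the independent half, as above, handles it; beyond that the proof is routine bookkeeping of constants.
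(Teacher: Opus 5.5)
Your proposal is correct and follows the same route as the paper, which gives no argument beyond citing Theorem 4.4.5 of Vershynin for each Gaussian matrix and for $\ba$, combined with a union bound over the $L+1$ objects. Your reduction of the duplicated last layer and the antisymmetric output vector to their independent halves, via $\|\bW^L(0)\|_{op}=\sqrt{2}\,\|\mathbf{G}\|_{op}$ and $\|\ba\|_2^2=2\sum_{r\le m/2}a_r^2$, correctly fills in a detail the bare citation leaves implicit, since that theorem requires independent entries.
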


\begin{lemma}\label{lem:tildeo-o}
    Let $R>0.$
    Assume $m\gtrsim \max\{R^2,d\}$.
    With probability at least $1-CL\exp(-cm)$ over the initialization $\bW(0)$, it holds for any $\bx\in\X$ ,$\ell\in[L]$, and any parameters $\tildeW=(\tildeW^1,\ldots,\tildeW^L)$ and $\bW=(\bW^1,\ldots,\bW^L)$ satisfying $\max_{\ell\in[L]}\{\|\bW^\ell - \bW^\ell(0)\|_{op}, \|\tildeW^\ell - \bW^\ell(0)\|_{op}\} \le R$ that
    \begin{align*}
        \|\tildeo^\ell(\bx) - o^\ell(\bx)\|_2 \le   \frac{C_\sigma^\ell\sum_{k=1}^\ell\|\tildeW^k - \bW^k\|_{op}}{\sqrt{m}}.
    \end{align*}
\end{lemma}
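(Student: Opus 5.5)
We argue by induction on the layer index $\ell$, working on the event of Lemma~\ref{lemma:oprt_norm}, on which $\|\bW^k(0)\|_{op}\le c_0\sqrt m$ for every $k\in[L]$. On this event, any $\bW^k$ with $\|\bW^k-\bW^k(0)\|_{op}\le R$ satisfies
\[
\|\bW^k\|_{op}\le c_0\sqrt m+R\le 2c_0\sqrt m,
\]
where the last inequality uses $m\gtrsim R^2$. The same bound holds for $\tildeW^k$. Since this event has probability at least $1-(L+1)\exp(-cm)$, it fits within the claimed failure probability $CL\exp(-cm)$. Everything after this point is deterministic, so the bound holds simultaneously for all $\bx\in\X$ and all admissible $\bW,\tildeW$, and no covering argument is needed.

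\emph{Base case $\ell=0$.} Here $\tildeo^0(\bx)=o^0(\bx)=\bx$, so the difference vanishes. We also record the a priori norm bound that the inductive step will need. Because $|\sigma|\le B_\sigma$,
\[
\|o^{k}(\bx)\|_2\le \sqrt{\tfrac{c_\sigma}{m}}\cdot\sqrt m\, B_\sigma=\sqrt{c_\sigma}B_\sigma\le C_\sigma \quad\text{for all } k\ge1,
\]
and $\|o^0(\bx)\|_2=\|\bx\|_2=1$.

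\emph{Inductive step.} Since $\sigma$ is applied componentwise and is $B_{\sigma'}$-Lipschitz,
\[
\|\tildeo^\ell-o^\ell\|_2\le \sqrt{\tfrac{c_\sigma}{m}}\,B_{\sigma'}\,\|\tildeW^\ell\tildeo^{\ell-1}-\bW^\ell o^{\ell-1}\|_2 .
\]
We split the argument of the norm as
\[
\tildeW^\ell\tildeo^{\ell-1}-\bW^\ell o^{\ell-1}=\tildeW^\ell(\tildeo^{\ell-1}-o^{\ell-1})+(\tildeW^\ell-\bW^\ell)\,o^{\ell-1}.
\]
For the first term, $\|\tildeW^\ell\|_{op}\le 2c_0\sqrt m$, so it contributes at most $2c_0\sqrt{c_\sigma}B_{\sigma'}\|\tildeo^{\ell-1}-o^{\ell-1}\|_2$. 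The factor $\sqrt m$ cancels against $\sqrt{1/m}$, which keeps the induction $m$-free. For the second term, the norm bound $\|o^{\ell-1}\|_2\le C_\sigma$ gives a contribution of at most $\sqrt{c_\sigma}B_{\sigma'}C_\sigma\|\tildeW^\ell-\bW^\ell\|_{op}/\sqrt m$.

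Combining the two terms with the inductive hypothesis,
\[
\|\tildeo^\ell-o^\ell\|_2\le C_\sigma'\Big(\frac{C_\sigma^{\ell-1}\sum_{k<\ell}\|\tildeW^k-\bW^k\|_{op}}{\sqrt m}+\frac{\|\tildeW^\ell-\bW^\ell\|_{op}}{\sqrt m}\Big).
\]
Choosing the constant $C_\sigma\ge \max\{C_\sigma',1\}$, where $C_\sigma'$ absorbs $2c_0\sqrt{c_\sigma}B_{\sigma'}$ and $\sqrt{c_\sigma}B_{\sigma'}C_\sigma$, the right-hand side is at most $C_\sigma^\ell\sum_{k\le\ell}\|\tildeW^k-\bW^k\|_{op}/\sqrt m$. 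This closes the induction.

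\emph{Main subtlety.} The only delicate point is the first layer, where $\bW^1\in\R^{m\times d}$. Its operator-norm bound $c_0\sqrt m$ requires $m\gtrsim d$, which is covered by the assumption $m\gtrsim\max\{R^2,d\}$. Beyond that, one only needs to keep the constants bookkept so that the growth is exactly $C_\sigma^\ell$. No genuine probabilistic difficulty arises, since all the randomness enters through the single high-probability event on the initialization norms.
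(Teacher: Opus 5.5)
Your proposal is correct and follows essentially the paper's proof: on the operator-norm event of Lemma~\ref{lemma:oprt_norm} you make the same split $\tildeW^\ell(\tildeo^{\ell-1}-o^{\ell-1})+(\tildeW^\ell-\bW^\ell)o^{\ell-1}$. You then use the Lipschitzness of $\sigma$, the bound $\|\tildeW^\ell\|_{op}\le R+c_0\sqrt m\lesssim\sqrt m$, and the deterministic bound on $\|o^{\ell-1}(\bx)\|_2$, and unroll the resulting one-step recursion, exactly as the paper does. One cosmetic point: you use $C_\sigma$ both for the a priori bound on $\|o^{\ell-1}\|_2$ and for the final constant, which is harmless under the paper's line-to-line convention but reads more cleanly with distinct names.
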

\begin{proof}
    For $\ell\in[L]$ and $\bx\in\X$, we have
    \begin{align*}
        &\|\tildeo^\ell(\bx) - o^\ell(\bx)\|_2  = \sqrt{\frac{c_\sigma}{m}}\big\|\sigma\big(\tildeW^\ell\tildeo^{\ell-1}(\bx)\big) - \sigma\big(\bW^\ell o^{\ell-1}(\bx)\big)\big\|_2 \le \sqrt{\frac{c_\sigma}{m}}B_{\sigma'}\big\|\tildeW^\ell\tildeo^{\ell-1}(\bx) - \bW^\ell o^{\ell-1}(\bx)\|_2\\
        &= \sqrt{\frac{c_\sigma}{m}}B_{\sigma'} \big\|\tildeW^\ell \big(\tildeo^{\ell-1}(\bx) - o^{\ell-1}(\bx)\big) + \big(\tildeW^\ell - \bW^\ell\big)o^{\ell-1}(\bx) \big\|_2 \\
        &\le \sqrt{\frac{c_\sigma}{m}}B_{\sigma'} \big((\|\tildeW^\ell - \bW^\ell(0)\|_{op} + \|\bW^\ell(0)\|_{op})\|\tildeo^{\ell-1}(\bx) - o^{\ell-1}(\bx)\|_2 + \|\tildeW^\ell - \bW^\ell\|_{op} \|o^{\ell-1}(\bx)\|_2\big)\\
        &\le  \sqrt{\frac{c_\sigma}{m}}B_{\sigma'} \big((R + c_0\sqrt{m})\|\tildeo^{\ell-1}(\bx) - o^{\ell-1}(\bx)\|_2 + \sqrt{c_\sigma}B_\sigma\|\tildeW^\ell - \bW^\ell\|_{op}\big)\\
        &= \sqrt{c_\sigma}B_\sigma\big(c_0+\frac{R}{\sqrt{m}}\big)\|\tildeo^{\ell-1}(\bx) - o^{\ell-1}(\bx)\|_2 + c_\sigma B_\sigma B_{\sigma'}\frac{\|\tildeW^\ell - \bW^\ell\|_{op}}{\sqrt{m}}\\
        &\le \sqrt{c_\sigma}B_\sigma C\|\tildeo^{\ell-1}(\bx) - o^{\ell-1}(\bx)\|_2 + c_\sigma B_\sigma B_{\sigma'}\frac{\|\tildeW^\ell - \bW^\ell\|_{op}}{\sqrt{m}},
    \end{align*}
    where the first inequality used the boundedness $\sup_{a\in\R}|\sigma'(a)|\le B_{\sigma'}$, the second-to-last inequality used Lemma~\ref{lemma:oprt_norm}, and the assumption $\|\tildeW^\ell - \bW^\ell(0)\|_{op} \le R$ for all $\ell\in[L]$, and the bound $\sup_{\bx\in\X}\|o^\ell(\bx)\|_2 \le \sqrt{c_\sigma} B_\sigma$, which follows from $\sup_{a\in\R}|\sigma(a)| \le B_\sigma$ in Assumption \ref{ass:activation}, and the last inequality used the assumption $m \gtrsim R^2.$

    Applying the above inequality iteratively over $\ell$, it holds for any $\bx\in\X$ that
    \begin{align*}
        \|\tildeo^\ell(\bx) - o^\ell(\bx)\|_2 \le \sum_{k=1}^\ell c_\sigma B_\sigma B_{\sigma'}\big(\sqrt{c_\sigma}B_\sigma C\big)^{\ell-k} \frac{\|\tildeW^k - \bW^k\|_{op}}{\sqrt{m}}\le  C_\sigma^\ell \sum_{k=1}^\ell \frac{\|\tildeW^k - \bW^k\|_{op}}{\sqrt{m}}.
    \end{align*}
    This completes the proof of the lemma.
\end{proof}

\begin{lemma}\label{lem:tildeD-D}
    Suppose that we are under the same conditions as in Lemma~\ref{lem:tildeo-o}.
    Then, it holds that
    \begin{align*}
        \|\tildeD^\ell(\bx) - \bD^\ell(\bx)\|_{2} \le C_\sigma^\ell \sum_{k=1}^{\ell} \|\tildeW^k - \bW^k\|_{op}.
    \end{align*}
\end{lemma}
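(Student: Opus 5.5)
The plan is to reduce the bound on $\tildeD^\ell(\bx)-\bD^\ell(\bx)$ to the bound on the pre-activations and then apply Lemma~\ref{lem:tildeo-o}. Both matrices are diagonal, with $r$-th entries $\sigma'(\tilde\bw^{\ell\top}_r\tildeo^{\ell-1}(\bx))$ and $\sigma'(\bw^{\ell\top}_r o^{\ell-1}(\bx))$. Since $|\sigma''|\le B_{\sigma''}$, the mean value theorem applied entrywise gives
\begin{align*}
\|\tildeD^\ell(\bx)-\bD^\ell(\bx)\|_2 \le B_{\sigma''}\,\big\|\tildeW^\ell\tildeo^{\ell-1}(\bx)-\bW^\ell o^{\ell-1}(\bx)\big\|_2 ,
\end{align*}
where $\|\cdot\|_2$ is the Frobenius norm, which equals the Euclidean norm of the diagonal.

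Next, I would split the pre-activation difference exactly as in the proof of Lemma~\ref{lem:tildeo-o}:
\begin{align*}
\tildeW^\ell\big(\tildeo^{\ell-1}(\bx)-o^{\ell-1}(\bx)\big)+\big(\tildeW^\ell-\bW^\ell\big)o^{\ell-1}(\bx).
\end{align*}
The second piece is controlled with $\|o^{\ell-1}(\bx)\|_2\le\sqrt{c_\sigma}B_\sigma$, which gives at most $\sqrt{c_\sigma}B_\sigma\|\tildeW^\ell-\bW^\ell\|_{op}$. For $\ell=1$ we instead use $\|\bx\|_2=1$, and the first piece vanishes. For the first piece, Lemma~\ref{lemma:oprt_norm} and the radius assumption give $\|\tildeW^\ell\|_{op}\le R+c_0\sqrt m\le C\sqrt m$, using $m\gtrsim R^2$. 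Lemma~\ref{lem:tildeo-o} at layer $\ell-1$ then gives
\begin{align*}
\|\tildeo^{\ell-1}(\bx)-o^{\ell-1}(\bx)\|_2\le C_\sigma^{\ell-1}\sum_{k=1}^{\ell-1}\|\tildeW^k-\bW^k\|_{op}\big/\sqrt m .
\end{align*}
The factor $\sqrt m$ cancels, so the first piece is at most $C\,C_\sigma^{\ell-1}\sum_{k<\ell}\|\tildeW^k-\bW^k\|_{op}$.

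Adding the two pieces, multiplying by $B_{\sigma''}$, and absorbing constants into $C_\sigma^\ell$ yields the claim on the same high-probability event as Lemma~\ref{lem:tildeo-o}, uniformly in $\bx$ and in the weights.

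The only delicate point is the scaling. The bound is $O(1)$ per unit of operator-norm perturbation, with no $1/\sqrt m$ factor. This is forced: the $\sqrt m$ from $\|\tildeW^\ell\|_{op}$ cancels the $1/\sqrt m$ from Lemma~\ref{lem:tildeo-o}, and the direct term $(\tildeW^\ell-\bW^\ell)o^{\ell-1}$ carries no $1/\sqrt m$. So I do not expect a genuine obstacle; it is a bookkeeping step that reuses the previous lemma.
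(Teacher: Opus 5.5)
Your proposal is correct and follows the paper's proof essentially step for step. Both arguments apply the entrywise mean value theorem with $B_{\sigma''}$, split the pre-activation difference as $\tildeW^\ell(\tildeo^{\ell-1}-o^{\ell-1})+(\tildeW^\ell-\bW^\ell)o^{\ell-1}$, bound $\|\tildeW^\ell\|_{op}\le R+c_0\sqrt m$, apply Lemma~\ref{lem:tildeo-o} at layer $\ell-1$ so the $\sqrt m$ factors cancel, and absorb the constants into $C_\sigma^\ell$. Your separate treatment of $\ell=1$ is harmless: the paper covers that case implicitly, since $\|\bx\|_2=1\le\sqrt{c_\sigma}B_\sigma$.
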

\begin{proof}
    Recall that for each $r\in[m]$ and $\ell\in[L]$, $(\tilde{\bw}^\ell_r)^\top$ and $(\bw^\ell_r)^\top$ denote the $r$-th row of $\tildeW^\ell$ and $\bW^\ell$, respectively.
    It holds for any $\ell\in[L]$ and $\bx\in\X$ that
    \begin{align*}
        &\|\tildeD^\ell(\bx) - \bD^\ell(\bx)\|_{2}  = \Big(\sum_{r=1}^m\big(\sigma'\big((\tilde{\bw}^\ell_r)^\top \tildeo^{\ell-1}(\bx)\big) - \sigma'\big((\bw^\ell_r)^\top o^{\ell-1}(\bx)\big)\big)^2\Big)^{\frac{1}{2}}\\
        &\le \|\sigma''\|_\infty\Big(\sum_{r=1}^m \big((\tilde{\bw}^\ell_r)^\top \tildeo^{\ell-1}(\bx) - (\bw^\ell_r)^\top o^{\ell-1}(\bx)\big)^2\Big)^{\frac{1}{2}} = \|\sigma''\|_\infty \big\|\tildeW^\ell\tildeo^{\ell-1}(\bx) - \bW^\ell o^{\ell-1}(\bx)\big\|_2\\
        &\le B_{\sigma''}\big(\|\tildeW^\ell\|_{op}\|\tildeo^{\ell-1}(\bx) - o^{\ell-1}(\bx)\|_2 + \|\tildeW^\ell - \bW^\ell\|_{op} \|o^{\ell-1}(\bx)\|_2\big)\\
        &\le B_{\sigma''}\big((R + c_0\sqrt{m})\|\tildeo^{\ell-1}(\bx) - o^{\ell-1}(\bx)\|_2 + \|\tildeW^\ell - \bW^\ell\|_{op} \|o^{\ell-1}(\bx)\|_2\big)\\
        &\le B_{\sigma''} \Big(\frac{R+c_0\sqrt{m}}{\sqrt{m}}C_\sigma^{\ell-1} \sum_{k=1}^{\ell-1} \|\tildeW^k - \bW^k\|_{op} + \sqrt{c_\sigma}B_\sigma\|\tildeW^\ell - \bW^\ell\|_{op}\Big)\\
        &\le B_{\sigma''} \Big(CC_\sigma^{\ell-1} \sum_{k=1}^{\ell-1} \|\tildeW^k - \bW^k\|_{op} + \sqrt{c_\sigma}B_\sigma\|\tildeW^\ell - \bW^\ell\|_{op}\Big)\\
        &\le B_{\sigma''} \big(CC_\sigma^{\ell-1} + \sqrt{c_\sigma}B_\sigma\big) \sum_{k=1}^{\ell} \|\tildeW^k - \bW^k\|_{op} \le C_\sigma^\ell \sum_{k=1}^{\ell} \|\tildeW^k - \bW^k\|_{op},
    \end{align*}
        where the first inequality used the Lipschitzness of $\sigma'$ (see Assumption~\ref{ass:activation}), the third inequality used the estimate $\|\tildeW^\ell\|_{op}\le \|\tildeW^\ell - \bW^\ell(0)\|_{op} + \|\bW^\ell(0)\|_{op}\le R + c_0\sqrt{m}$, which follows from the assumption and Lemma~\ref{lemma:oprt_norm}, the fourth inequality used Lemma~\ref{lem:tildeo-o} and the fact $\sup_{\bx\in\X}\|o^\ell(\bx)\|_2 \le \sqrt{c_\sigma} B_\sigma$ by noting $\sup_{a\in\R}|\sigma(a)| \le B_\sigma$ (see Assumption \ref{ass:activation}), and in the third-to-last inequality we used the assumption $m\gtrsim R^2$, and in the last inequality we absorbed all constants in $C_\sigma^\ell.$

        This completes the proof of the lemma.
\end{proof}

\begin{lemma}\label{lem:tildeVa-Va}
    Let $\delta\in(0,1)$ and $R>0.$
    Assume $m\gtrsim \max\{R^2,d\}$.
    With probability at least $1-CL\exp(-cm) - \delta$ over the initialization $(\ba, \bW(0))$, it holds for any $\bx\in\X$ ,$\ell\in[L]$, and any parameters $\tildeW=(\tildeW^1,\ldots,\tildeW^L)$ and $\bW=(\bW^1,\ldots,\bW^L)$ satisfying $\max_{\ell\in[L]}\{\|\bW^\ell - \bW^\ell(0)\|_{op}, \|\tildeW^\ell - \bW^\ell(0)\|_{op}\} \le R$ that
    \begin{align*}
        \big\|\big(\tildeV^\ell_L(\bx) - \bV^\ell_L(\bx)\big)\ba\big\|_2 \le C_\sigma^L (R+1)\sqrt{\frac{\log(m/\delta)}{m}} \sum_{\ell=1}^L \|\tildeW^\ell - \bW^\ell\|_{op}.
    \end{align*}
\end{lemma}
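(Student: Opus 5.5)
We need to bound $\|(\tildeV^\ell_L(\bx)-\bV^\ell_L(\bx))\ba\|_2$. Recall
\[
\bV^\ell_L(\bx)\ba = \sqrt{\tfrac{c_\sigma}{m}}\bD^\ell(\bx)\Big[\prod_{h=\ell+1}^{L}\sqrt{\tfrac{c_\sigma}{m}}\bD^h(\bx)\bW^h\Big]^\top\ba,
\]
that is, a backward product $\bD^\ell (\bW^{\ell+1})^\top\bD^{\ell+1}\cdots(\bW^L)^\top\bD^L\ba$ with scalings. The plan is to telescope the difference of the two products, replacing one factor at a time. Each term then has three parts: a prefix of $\widetilde{\cdot}$ factors, one difference factor ($\tildeD^h-\bD^h$ or $\tildeW^h-\bW^h$), and a suffix of untilded factors applied to $\ba$.

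\textbf{Step 1: operator-norm bounds.} For the prefix, use $\|\sqrt{c_\sigma/m}\,\tildeW^h\|_{op}\le \sqrt{c_\sigma}(c_0+R/\sqrt m)\le C$ from Lemma~\ref{lemma:oprt_norm} and $m\gtrsim R^2$, together with $\|\tildeD^h\|_{op}\le B_{\sigma'}$. Each prefix therefore contributes at most $C_\sigma^{L}$. For a difference factor of type $\tildeW^h-\bW^h$, the factor is $\sqrt{c_\sigma/m}\,\|\tildeW^h-\bW^h\|_{op}$ times the suffix norm. For a difference factor $\tildeD^h-\bD^h$, which is diagonal, we bound it by $\|\tildeD^h-\bD^h\|_{op}\le\|\tildeD^h-\bD^h\|_2\le C_\sigma^h\sum_k\|\tildeW^k-\bW^k\|_{op}$ using Lemma~\ref{lem:tildeD-D}. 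The issue is that this loses the $\sqrt m$ which we need to compensate for.

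\textbf{Step 2: the suffix vector and the key obstacle.} The $\bD$-difference terms would then be multiplied by $\|$suffix$\cdot\ba\|_2$, which is of order $1$ (since $\|\ba\|_2\sim\sqrt m$ times the factor $\sqrt{c_\sigma/m}$). That gives an $O(1)$ bound, not $O(\sqrt{\log(m/\delta)/m})$. To gain the extra $m^{-1/2}$ I would use the diagonal structure: $\|(\tildeD^h-\bD^h)\bu\|_2\le\|\tildeD^h-\bD^h\|_2\,\|\bu\|_\infty$, where $\bu$ is the suffix vector. So the main obstacle is an $\ell_\infty$ bound on the backward vectors $\bu^h:=\sqrt{c_\sigma/m}\,\bD^h[\prod_{j>h}\cdots]^\top\ba$ of order $(R+1)\sqrt{\log(m/\delta)/m}$, uniformly over $\bx$ and over all $\bW$ in the $R$-ball.

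I would handle this by induction downward in $h$. At $h=L$, $\|\bD^L\ba\|_\infty\le B_{\sigma'}\|\ba\|_\infty\lesssim\sqrt{\log(m/\delta)}$ by a Gaussian maximum bound with probability $1-\delta$, and multiplying by $\sqrt{c_\sigma/m}$ gives the base case. For the inductive step, write $(\bW^{h+1})^\top\bv=\bW^{h+1}(0)^\top\bv+(\bW^{h+1}-\bW^{h+1}(0))^\top\bv$. The perturbation part satisfies $\ell_\infty\le\ell_2\le R\|\bv\|_2$. With $\|\bv\|_2\lesssim C_\sigma^L/\sqrt m$ and the outer factor $\sqrt{c_\sigma/m}$ this gives $R\,C_\sigma^L/m$, which is fine. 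The initialization part requires $\ell_\infty$ control of $\bW^{h+1}(0)^\top\bv$ where $\bv$ depends on $\bW(0)$. If that dependence defeats a direct argument, I would fall back on a coarser bound $\|\bW(0)^\top\bv\|_\infty\le\max_r\|\text{col}_r\|_2\|\bv\|_2\lesssim\sqrt m\cdot\|\bv\|_2$, combined with a covering/union bound over $\bx$ to recover the $\sqrt{\log}$ factor via independence of $\ba$ from the column structure. I expect this step to carry the real difficulty, and the $(R+1)$ factor in the statement reflects the perturbation contribution.

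\textbf{Step 3: assembly.} Summing the $O(L)$ telescoping terms gives the following. The $\bW$-difference terms contribute $C_\sigma^L\sqrt{1/m}\,\|\tildeW^h-\bW^h\|_{op}\cdot\|\text{suffix}\|$, where the suffix includes a diagonal factor $\bD$ times $\bu$, and this is bounded using $\|\bu\|_\infty$ again. The $\bD$-difference terms contribute $C_\sigma^L\|\bu\|_\infty\sum_k\|\tildeW^k-\bW^k\|_{op}$. Collecting constants into $C_\sigma^L$ yields the claimed bound.
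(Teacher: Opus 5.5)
\textbf{There is a genuine gap.} Your overall plan is reasonable: telescope, and pay for the $\bD$-difference factors with an $\ell_\infty$ bound on the backward vector. Your base case $\|\ba\|_\infty\lesssim\sqrt{\log(m/\delta)}$ is exactly the estimate the paper uses. However, the inductive step of Step~2 is not established, and the only concrete argument you give for it (the fallback) fails.

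\textbf{Why the fallback fails.} You use $\|\mathbf v\|_2\lesssim C_\sigma^L/\sqrt m$ for $\mathbf v=\bu^{h+1}$. These vectors are of order one in $\ell_2$, as you note yourself at the start of Step~2; already $\bu^L=\sqrt{c_\sigma/m}\,\bD^L\ba$ has $\|\bD^L\ba\|_2$ generically of order $\sqrt m$. Your perturbation part survives this correction, giving $O(R/\sqrt m)$, but the initialization part does not. With the correct size, $\|\bW^{h+1}(0)^\top\mathbf v\|_\infty\le\max_r\|\mathrm{col}_r\|_2\|\mathbf v\|_2\lesssim\sqrt m$ only gives $\|\bu^h\|_\infty=O(1)$, which is no gain over the naive bound.

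\textbf{What a real $\ell_\infty$ bound would need.} You would need actual cancellation in $\sum_j W^{h+1}_{jr}(0)v_j$ despite the dependence of $\mathbf v$ on $\bW(0)$. This cancellation would have to hold uniformly over $\X$; a union bound over a net of $\X$ brings in a $\sqrt d$ that the statement does not have. It would also have to hold uniformly over all $\bW$ in the $R$-ball, where no union bound is feasible, so a separate deterministic reduction to $\bW(0)$ would be required. None of this is supplied.

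\textbf{The missing idea is the symmetric initialization.} The lemma is stated under it, and it makes all of the above unnecessary. By Lemma~\ref{lem:aDW=0}, $\ba^\top\bD_0^L(\bx)\bW^L(0)=0$ for every $\bx$, so
$\ba^\top\bD^L\bW^L=\ba^\top\bD^L(\bW^L-\bW^L(0))+\ba^\top(\bD^L-\bD^L_0)\bW^L(0)$.
The paper bounds the first piece by $\|\ba\|_2B_{\sigma'}R$ and the second by $\|\ba\|_\infty\|\bD^L-\bD^L_0\|_2\|\bW^L(0)\|_{op}$ via Lemma~\ref{lem:tildeD-D}. This gives \eqref{eq:bound_aG}: $\|\ba^\top\prod_{k=\ell+1}^L\bG^k(\bx)\|_2\le C_\sigma^L(\|\ba\|_\infty+1)R$ for all $\ell\le L-1$, uniformly in $\bx$ and $\bW$. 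That is of order $R\sqrt{\log(m/\delta)}$ rather than $\sqrt m$.

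\textbf{How the paper finishes.} After that, a plain operator-norm telescoping suffices.
Each difference factor $\bG^\ell-\widetilde{\bG}^\ell$ costs $C_\sigma^L\sum_k\|\tildeW^k-\bW^k\|_{op}$, with no $m^{-1/2}$.
The small row vector $\ba^\top\prod_{k=\ell+1}^L\widetilde{\bG}^k$ on the $\ba$-side supplies the factor $(\|\ba\|_\infty+1)R$.
The outer $\sqrt{c_\sigma/m}$ supplies the $m^{-1/2}$.
The only $\ell_\infty$ estimate ever needed is the one on $\ba$ itself at the top layer, which does not depend on $\bx$ or $\bW$. This is why the bound carries $\sqrt{\log(m/\delta)}$ and no $d$.

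Your guess $\|\bu^{h+1}\|_2\lesssim m^{-1/2}$ is in fact true under the symmetric initialization for $h+1\le L-1$, up to the factor $(\|\ba\|_\infty+1)R$. But it holds only because of this cancellation, which you never invoke, and it fails at $h+1=L$.
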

\begin{proof}
    \def\tildeG{\widetilde{\bG}}
    For $\ell\in[L]$, parameter $\bW^\ell$ and $\bx\in\X$, we define $\bG^\ell(\bx) := \sqrt{\frac{c_\sigma}{m}}\bD^\ell(\bx)\bW^\ell.$
    Then, we know that
    $$\bV^\ell_L(\bx) = \sqrt{\frac{c_\sigma}{m}}\bD^\ell(\bx)\Big(\prod_{k=\ell+1}^L\bG^k(\bx)\Big)^\top,$$
    and we have the following estimate
    \begin{align}\label{eq:boundGell}
        \|\bG^\ell(\bx)\|_{op} \!\le\! \sqrt{\frac{c_\sigma}{m}}\|\bD^\ell(\bx)\|_{op}\|\bW^\ell\|_{op} \!\le\! \sqrt{\frac{c_\sigma}{m}} B_{\sigma'}\big(\|\bW^\ell \!-\!\bW^\ell(0)\|_{op} \!+\! \|\bW^\ell(0)\|_{op}\big) \!\le\!  \sqrt{c_\sigma}B_{\sigma'}\frac{R\!+\!c_0\sqrt{m}}{\sqrt{m}} \le C_\sigma,
    \end{align}
    where we have used the assumption $\|\bW^\ell-\bW^\ell(0)\|_{op}\le R$ and $m\gtrsim R^2.$
    
    Similarly, we can define $\tildeG^\ell(\bx)$ for $\ell\in[L]$, parameter $\tildeW^\ell$ and $\bx\in\X,$ and we also have the bound $\|\tildeG^\ell(\bx)\|_{op}\le C_\sigma.$
    
    For the case $\ell\in[L-1]$ and $\bx\in\X$, it holds that
    \begin{align}
        &\big\|\big(\bV^\ell_L(\bx) - \tildeV^\ell_L(\bx)\big)\ba\big\|_2\nonumber\\
        &= \Big\|\ba^\top \Big[\prod_{k=\ell+1}^L \bG^k(\bx) \Big] \sqrt{\frac{c_\sigma}{m}} \bD^\ell(\bx) - \ba^\top \Big[\prod_{k=\ell+1}^L \tildeG^k(\bx) \Big] \sqrt{\frac{c_\sigma}{m}}\tildeD^\ell(\bx)\Big\|_2 \nonumber\\
        &\le \sqrt{\frac{c_\sigma}{m}}\Big(\Big\|\ba^\top \Big[\prod_{k=\ell+1}^L \bG^k(\bx)\Big]\big(\bD^\ell(\bx) - \tildeD^\ell(\bx)\big)\Big\|_2 + \Big\|\ba^\top\Big[\prod_{k=\ell+1}^L \bG^k(\bx) - \prod_{k=\ell+1}^L \tildeG^k(\bx)\Big]\tildeD^\ell(\bx)\Big\|_2\Big)\nonumber\\
        &\le \sqrt{\frac{c_\sigma}{m}}\Big(\Big\|\ba^\top \Big[\prod_{k=\ell+1}^L \bG^k(\bx)\Big]\Big\|_2\big\|\bD^\ell(\bx) - \tildeD^\ell(\bx)\big\|_{op} + \Big\|\ba^\top\Big[\prod_{k=\ell+1}^L \bG^k(\bx) - \prod_{k=\ell+1}^L \tildeG^k(\bx)\Big]\Big\|_2\big\|\tildeD^\ell(\bx)\big\|_{op}\Big) \nonumber\\
        &\le \sqrt{\frac{c_\sigma}{m}}\bigg(C_\sigma^\ell \Big(\sum_{k=1}^{\ell} \|\tildeW^k - \bW^k\|_{op}\Big)\Big\|\ba^\top \Big[\prod_{k=\ell+1}^L \bG^k(\bx)\Big]\Big\|_2 + B_{\sigma'}\Big\|\ba^\top\Big[\prod_{k=\ell+1}^L \bG^k(\bx) - \prod_{k=\ell+1}^L \tildeG^k(\bx)\Big]\Big\|_2 \bigg) \label{eq:V-tildeV},
    \end{align}
    where in the last inequality we have used Lemma~\ref{lem:tildeD-D}.

    Now, we turn to control $\|\ba^\top [\prod_{k=\ell+1}^L \bG^k(\bx)]\|_2$ and $\|\ba^\top[\prod_{k=\ell+1}^L \bG^k(\bx) - \prod_{k=\ell+1}^L \tildeG^k(\bx)]\|_2$, respectively.
    For any $\ell\in[L-1]$ and $\bx\in\X$, it holds that
    \begin{align}
        &\quad \Big\|\ba^\top \Big[\prod_{k=\ell+1}^L \bG^k(\bx)\Big]\Big\|_2 = \sqrt{\frac{c_\sigma}{m}}\Big\|\ba^\top \bD^L(\bx)\bW^L \Big[\prod_{k=\ell+1}^{L-1} \bG^k(\bx)\Big]\Big\|_2\nonumber\\
        &= \sqrt{\frac{c_\sigma}{m}}\Big\|\ba^\top \big(\bD^L(\bx)\bW^L - \bD^L_0(\bx) \bW^L(0)\big)\Big[\prod_{k=\ell+1}^{L-1} \bG^k(\bx)\Big]\Big\|_2\nonumber\\
        &\le \sqrt{\frac{c_\sigma}{m}}\Big\|\ba^\top \Big(\bD^L(\bx)\big(\bW^L - \bW^L(0)\big) + \big(\bD^L(\bx) -\bD^L_0(\bx)\big) \bW^L(0)\Big)\Big[\prod_{k=\ell+1}^{L-1} \bG^k(\bx)\Big]\Big\|_2\nonumber\\
        &\le \sqrt{\frac{c_\sigma}{m}}\big(\big\|\ba^\top \bD^L(\bx)\big(\bW^L - \bW^L(0)\big)\big\|_2 + \big\|\ba^\top \big(\bD^L(\bx) -\bD^L_0(\bx)\big) \bW^L(0)\big\|_2\big)\prod_{k=\ell+1}^{L-1} \|\bG^k(\bx)\|_{op}\nonumber\\
        &\le \sqrt{\frac{c_\sigma}{m}}\Big(\|\ba\|_2\|\bD^L(\bx)\|_{op}\|\bW^L - \bW^L(0)\|_{op} + \|\ba\|_\infty\|\bD^L(\bx) - \bD^L_0(\bx)\|_2\|\bW^L(0)\|_{op} \Big) \prod_{k=\ell+1}^{L-1} \|\bG^k(\bx)\|_{op}\nonumber\\
        &\le \sqrt{\frac{c_\sigma}{m}}\Big(c_0\sqrt{m} B_{\sigma'}\|\bW^L - \bW^L(0)\|_{op} + c_0\sqrt{m} \|\ba\|_\infty C_\sigma^L \sum_{k=1}^L\|\bW^k - \bW^k(0)\|_{op} \Big) \prod_{k=\ell+1}^{L-1} \|\bG^k(\bx)\|_{op}\nonumber\\
        &\le \sqrt{c_\sigma}\big(c_0 B_{\sigma'}R + c_0\|\ba\|_\infty C_\sigma^L LR \big)C_\sigma^L \le C_\sigma^L(\|\ba\|_\infty + 1)R, \label{eq:bound_aG}
    \end{align}
    where the second identity used Lemma~\ref{lem:aDW=0}, the third inequality used the bound $\|\ba^\top(\bD^L(\bx) - \bD^L_0(\bx))\|_2\le\|\ba\|_\infty\|\bD^L(\bx) - \bD^L_0(\bx)\|_2$ by noting that $\|\bu^\top \bD\|_2 \le \|\bu\|_\infty\|\bD\|_2$ for any vector $\bu$ and diagonal matrix $\bD$, the third-to-last inequality follows from Lemma~\ref{lemma:oprt_norm} and Lemma~\ref{lem:tildeD-D}, and in the second-to-last inequality we have used \eqref{eq:boundGell} and the assumption $\|\bW^k - \bW^k(0)\|_{op}\le R$ for all $k\in[L].$

    Similarly, we can also show that
    \begin{align}
        \Big\|\ba^\top \Big[\prod_{k=\ell+1}^L \tildeG^k(\bx)\Big]\Big\|_2 \le C_\sigma^L(\|\ba\|_\infty + 1)R. \label{eq:bound_atildeG}
    \end{align}

    Note that for all $\ell\in\{2,\ldots,L\}$ and $\bx\in\X$ we have
    \begin{align*}
        &\quad \Big\|\ba^\top\Big[\prod_{k=\ell}^L \bG^k(\bx) - \prod_{k=\ell}^L \tildeG^k(\bx)\Big]\Big\|_2\\
        &=\Big\|\ba^\top\Big[\prod_{k=\ell+1}^L \bG^k(\bx) - \prod_{k=\ell+1}^L \tildeG^k(\bx)\Big] \bG^\ell(\bx)\Big\|_2 + \Big\|\ba^\top \Big[\prod_{k=\ell+1}^L \tildeG^k(\bx)\Big]\big(\bG^\ell(\bx) - \tildeG^\ell(\bx)\big)\Big\|_2 \\
        &\le \Big\|\ba^\top\Big[\prod_{k=\ell+1}^L \bG^k(\bx) - \prod_{k=\ell+1}^L \tildeG^k(\bx)\Big]\Big\|_2 \big\|\bG^\ell(\bx)\big\|_{op} + \Big\|\ba^\top \Big[\prod_{k=\ell+1}^L \tildeG^k(\bx)\Big]\Big\|_2 \big\|\bG^\ell(\bx) - \tildeG^\ell(\bx)\big)\big\|_{op}\\
        &\le C_\sigma \Big\|\ba^\top\Big[\!\prod_{k=\ell+1}^L \bG^k(\bx) \!-\!\!\! \prod_{k=\ell+1}^L \tildeG^k(\bx)\Big]\Big\|_2  +   C_\sigma^L(\|\ba\|_\infty +  1)R \sqrt{\frac{c_\sigma}{m}}\big(\|\bD^\ell(\bx)  -  \tildeD^\ell(\bx)\|_{op}\|\bW^\ell\|_{op}\nonumber\\&\quad   +  \|\tildeD^\ell(\bx)\|_{op}\|\bW^\ell  - \tildeW^\ell\|_{op}\big)\\
        &\le C_\sigma \Big\|\ba^\top\Big[\prod_{k=\ell+1}^L \bG^k(\bx) - \prod_{k=\ell+1}^L \tildeG^k(\bx)\Big]\Big\|_2 + \frac{C_\sigma^L(\|\ba\|_\infty + 1)R}{\sqrt{m}}\big(C_\sigma^\ell(R+c_0\sqrt{m}) \sum_{k=1}^\ell \|\bW^k-\tildeW^k\|_{op} \nonumber\\&\quad + B_{\sigma'}\|\bW^\ell - \tildeW^\ell\|_{op}\big)\\
        &\le C_\sigma \Big\|\ba^\top\Big[\prod_{k=\ell+1}^L \bG^k(\bx) - \prod_{k=\ell+1}^L \tildeG^k(\bx)\Big]\Big\|_2 + C_\sigma^L(\|\ba\|_\infty + 1)R\sum_{k=1}^\ell \|\bW^k-\tildeW^k\|_{op},
    \end{align*}
    where the second inequality used \eqref{eq:boundGell} and \eqref{eq:bound_atildeG}, and the second-to-last inequality used Lemma~\ref{lem:tildeD-D}, the estimates $\|\bW^\ell\|_{op}\le \|\bW^\ell - \bW^\ell(0)\|_{op}+ \|\bW^\ell(0)\|_{op} \le R + c_0\sqrt{m}$ and $\|\tildeD^\ell(\bx)\|_{op}\le\|\sigma'\|_\infty\le B_{\sigma'}$, and the last inequality used the assumption $m\gtrsim R^2.$

    Applying the above inequality iteratively over $\ell$, it holds that
    \begin{align*}
        \Big\|\ba^\top\Big[\prod_{k=\ell}^L \bG^k(\bx) - \prod_{k=\ell}^L \tildeG^k(\bx)\Big]\Big\|_2 & \le \sum_{k=\ell}^LC_\sigma^{L + k-\ell}(\|\ba\|_\infty + 1)\sum_{h=1}^k\|\bW^h - \tildeW^h\|_{op} \\& \le C_\sigma^L(\|\ba\|_\infty + 1)R\sum_{k=1}^L\|\bW^k-\tildeW^k\|_{op}.
    \end{align*}

    Plugging the above inequality and \eqref{eq:bound_aG} into \eqref{eq:V-tildeV}, we have
    \begin{align*}
        &\big\|\big(\bV^\ell_L(\bx) - \tildeV^\ell_L(\bx)\big)\ba\big\|_2 \\ &\le \sqrt{\frac{c_\sigma}{m}}\bigg(C_\sigma^L (\|\ba\|_\infty + 1)R \Big(\sum_{k=1}^{\ell} \|\tildeW^k - \bW^k\|_{op}\Big) + B_{\sigma'}C_\sigma^L (\|\ba\|_\infty + 1)R\sum_{k=1}^L\|\bW^k-\tildeW^k\|_{op}\bigg)\\
        &\le \frac{C_\sigma^L}{\sqrt{m}}(\|\ba\|_\infty + 1)R\sum_{k=1}^L\|\bW^k-\tildeW^k\|_{op}.
    \end{align*}

    It remains to control $\|\ba\|_\infty$.
    Recall that $a_r\sim N(0,1)$ for all $r\in[m].$
    For any $t>0$, by the union bound and the standard Gaussian tail bound, we have
    \begin{align*}
        \mathbb P\big(\|\mathbf a\|_\infty \ge t\big) = \mathbb P\Big(\max_{r\in[m]} |a_r| \ge t\Big) \le \sum_{r=1}^m \mathbb P\big(|a_r|\ge t\big) \le 2m \exp\Big(-\frac{t^2}{2}\Big).
    \end{align*}
    Take $t=\sqrt{2\log\frac{2m}{\delta}}$ yields that with probability at least $1-\delta$,
    \begin{align}\label{eq:bound_a_infty}
        \|\mathbf a\|_\infty \le \sqrt{2\log\frac{2m}{\delta}}.
    \end{align}
    Plugging this bound into the previous estimate shows that for the case $\ell\in[L-1]$, we have
    \begin{align*}
        \big\|\big(\tildeV^\ell_L(\bx) - \bV^\ell_L(\bx)\big)\ba\big\|_2 \le C_\sigma^L R\sqrt{\frac{\log(m/\delta)}{m}}\sum_{k=1}^{L} \|\tildeW^k - \bW^k\|_{op}.
    \end{align*}

    For the case $\ell=L,$ from the above inequality and Lemma~\ref{lem:tildeD-D} we have
    \begin{align*}
        \big\|\big(\tildeV^\ell_L(\bx) - \bV^\ell_L(\bx)\big)\ba\big\|_2 &= \sqrt{\frac{c_\sigma}{m}}\big\|\ba^\top\big(\tildeD^L(\bx) - \bD^L(\bx)\big)\big\|_2 \le \sqrt{\frac{c_\sigma}{m}} \|\ba\|_\infty \|\tildeD^L(\bx) - \bD^L(\bx)\|_2\\
        &\le C_\sigma^L\sqrt{\frac{\log(m/\delta)}{m}}\sum_{k=1}^{L} \|\tildeW^k - \bW^k\|_{op}.
    \end{align*}
    Combining the above two cases for $\ell$ completes the proof of the lemma.
\end{proof}

\begin{lemma}\label{lem:Hessian}
    Let $\delta\in(0,1)$ and $R>0.$
    Assume $m\gtrsim \max\{R^2, d\}$.
    With probability at least $1-CL\exp(-cm) - \delta$ over the initialization $(\ba, \bW(0))$, it holds for any $\bx\in\X$ ,$\ell\in[L]$, and any parameters $\tildeW=(\tildeW^1,\ldots,\tildeW^L)$ and $\bW=(\bW^1,\ldots,\bW^L)$ satisfying $\max_{\ell\in[L]}\{\|\bW^\ell - \bW^\ell(0)\|_{op}, \|\tildeW^\ell - \bW^\ell(0)\|_{op}\} \le R$ that
    \begin{align*}
        \big\|\nabla f_{\bW}(\bx)\big\|_2 \le C_\sigma^L \quad\text{and}\quad \big\|\nabla f_{\tildeW}(\bx) - \nabla f_{\bW}(\bx)\big\|_2 \le C_\sigma^L (R+1)\sqrt{\frac{\log(m/\delta)}{m}}\|\tildeW - \bW\|_2.
    \end{align*}
    As a corollary, for any parameter $\bW=(\bW^1,\ldots,\bW^L)$ that $\max_{\ell\in[L]}\{\|\bW^\ell - \bW^\ell(0)\|_2\} \le R$, we have
    \begin{align*}
        \sup_{\bx\in\X}\|\nabla^2f_{\bW}(\bx)\|_{op} \le C_\sigma^L (R+1)\sqrt{\frac{\log(m/\delta)}{m}}.
    \end{align*}
\end{lemma}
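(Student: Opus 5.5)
The plan is to work on the same high-probability event as Lemma~\ref{lem:tildeVa-Va}: the operator-norm bounds of Lemma~\ref{lemma:oprt_norm}, the bound $\|\ba\|_\infty\lesssim\sqrt{\log(m/\delta)}$ from \eqref{eq:bound_a_infty}, and the conclusions of Lemmas~\ref{lem:tildeo-o}--\ref{lem:tildeVa-Va}. Everything rests on the layerwise formula
\[
\nabla_\ell f_{\bW}(\bx)=\bV^\ell_L(\bx)\ba\,\big(o^{\ell-1}(\bx)\big)^\top,
\]
which follows from the chain-rule computation in Appendix~\ref{appde:problem} applied to $\bW$ instead of $\bW(0)$. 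Since this is a rank-one matrix, $\|\nabla_\ell f_{\bW}(\bx)\|_2=\|\bV^\ell_L(\bx)\ba\|_2\,\|o^{\ell-1}(\bx)\|_2$, and we always have $\|o^{\ell-1}(\bx)\|_2\le\sqrt{c_\sigma}B_\sigma$ for $\ell\ge2$, and $\|o^0(\bx)\|_2=\|\bx\|_2=1$.

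\emph{Gradient norm bound.} For $\ell=L$ we have $\bV^L_L(\bx)\ba=\sqrt{c_\sigma/m}\,\bD^L(\bx)\ba$. Its norm is at most $\sqrt{c_\sigma/m}\,B_{\sigma'}\|\ba\|_2\le \sqrt{c_\sigma}B_{\sigma'}c_0$, which is $\le C_\sigma$. For $\ell\le L-1$ we write
\[
\bV^\ell_L(\bx)\ba=\sqrt{c_\sigma/m}\,\bD^\ell(\bx)\Big(\ba^\top\prod_{k=\ell+1}^L\bG^k(\bx)\Big)^\top .
\]
Here we could use \eqref{eq:bound_aG}, but that gives a factor $R\sqrt{\log}$ with an extra $1/\sqrt m$ in front. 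A cleaner route is the crude bound $\|\ba^\top\prod\bG^k\|_2\le \|\ba\|_2 C_\sigma^{L}\le c_0\sqrt m\,C_\sigma^L$ from \eqref{eq:boundGell}, which after multiplying by $\sqrt{c_\sigma/m}B_{\sigma'}$ gives $C_\sigma^L$. Summing over the $L$ layers, the factor $\sqrt L$ is absorbed into $C_\sigma^L$, which yields $\|\nabla f_{\bW}(\bx)\|_2\le C_\sigma^L$.

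\emph{Gradient Lipschitz bound.} We decompose
\[
\nabla_\ell f_{\tildeW}-\nabla_\ell f_{\bW}=\big(\tildeV^\ell_L-\bV^\ell_L\big)\ba\,(\tildeo^{\ell-1})^\top+\bV^\ell_L\ba\,\big(\tildeo^{\ell-1}-o^{\ell-1}\big)^\top .
\]
\begin{itemize}
\item For the first term, Lemma~\ref{lem:tildeVa-Va} together with the bound on $\|\tildeo^{\ell-1}\|_2$ gives
$C_\sigma^L(R+1)\sqrt{\log(m/\delta)/m}\,\sum_k\|\tildeW^k-\bW^k\|_{op}$.
\item For the second term we need a sharper bound on $\|\bV^\ell_L\ba\|_2$ to avoid losing the $1/\sqrt m$ factor. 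For $\ell=L$ it is at most $\sqrt{c_\sigma/m}\,B_{\sigma'}\|\ba\|_2\le C_\sigma$. Combined with Lemma~\ref{lem:tildeo-o}, this gives $C_\sigma^L m^{-1/2}\sum_k\|\tildeW^k-\bW^k\|_{op}$. For $\ell<L$, \eqref{eq:bound_aG} gives $\|\bV^\ell_L\ba\|_2\le C_\sigma^L(\|\ba\|_\infty+1)R/\sqrt m$, which multiplied by Lemma~\ref{lem:tildeo-o} is even smaller.
\end{itemize}
Finally we use $\|\cdot\|_{op}\le\|\cdot\|_2$ and $\sum_{k}\|\tildeW^k-\bW^k\|_2\le\sqrt L\|\tildeW-\bW\|_2$, and sum the squares over $\ell$. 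All $L$-dependent factors are absorbed into $C_\sigma^L$, giving the stated Lipschitz bound.

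\emph{Hessian corollary.} Given $\bW$ with $\|\bW^\ell-\bW^\ell(0)\|_2\le R$ (hence the operator norms are also $\le R$), we take $\tildeW=\bW+t\bu$ for a unit direction $\bu$ and small $t$; then $\tildeW$ stays in the ball of radius $R$ up to an arbitrarily small enlargement, which is harmless after adjusting constants. Dividing the Lipschitz bound by $t$ and letting $t\to0$, which is legitimate because $\sigma\in C^2$ makes $\nabla f$ differentiable, gives $\|\nabla^2 f_{\bW}(\bx)\bu\|_2\le C_\sigma^L(R+1)\sqrt{\log(m/\delta)/m}$. Taking the supremum over $\bu$ and $\bx$ yields the operator-norm bound.

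The main obstacle is the second term in the Lipschitz decomposition. A naive bound $\|\bV^\ell_L\ba\|_2=O(1)$ combined with Lemma~\ref{lem:tildeo-o} already gives the needed $1/\sqrt m$ factor, so the real delicate point is only keeping the $\|\ba\|_\infty$ and $\sqrt{\log}$ bookkeeping consistent with the single failure probability $\delta$ used in Lemma~\ref{lem:tildeVa-Va}.
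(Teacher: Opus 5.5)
Your proposal is correct and follows essentially the same route as the paper: the rank-one layerwise formula $\nabla_\ell f_{\bW}(\bx)=\bV^\ell_L(\bx)\ba\,(o^{\ell-1}(\bx))^\top$ with the crude bound $\|\ba\|_2\prod_k\|\bG^k(\bx)\|_{op}$ for the gradient norm, and the same two-term split, controlled by Lemma~\ref{lem:tildeVa-Va} and Lemma~\ref{lem:tildeo-o}, for the Lipschitz estimate. Two of your additions make the argument slightly more careful than the paper's: you treat $\ell=L$ separately (where \eqref{eq:bound_aG} does not literally apply), and you give an explicit difference-quotient argument for the Hessian corollary, which the paper leaves implicit; as you note yourself at the end, no sharper bound on $\|\bV^\ell_L\ba\|_2$ is needed in the second term, since Lemma~\ref{lem:tildeo-o} already supplies the factor $m^{-1/2}$.
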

\begin{proof}
    We first control the norm of $\nabla f_{\bW}(\bx)$.
    Recall that in the proof of Lemma~\ref{lem:tildeVa-Va}, we defined $$\bG^\ell(\bx) := \sqrt{\frac{c_\sigma}{m}}\bD^\ell(\bx)\bW^\ell,$$
    and hence we have
    \begin{align*}
        \bV^\ell_L(\bx) = \sqrt{\frac{c_\sigma}{m}}\bD^\ell(\bx)\Big(\prod_{k=\ell+1}^L\bG^k(\bx)\Big)^\top.
    \end{align*}
    Then, it holds for any $\ell\in[L]$ and $\bx\in\X$ that
    \begin{align*}
         \big\|\nabla_\ell f_{\bW}(\bx)\big\|_2 &= \Big\|\bV^\ell_{L}(\bx)\ba \big(o^{\ell-1}(\bx)\big)^\top\Big\|_2\le \big\|\bV^{\ell}_{L}(\bx)\ba\big\|_2 \|o^{\ell-1}(\bx)\|_2 \\
        & \le \prod_{k=\ell+1}^L\|\bG^k(\bx)\|_{op}\sqrt{\frac{c_\sigma}{m}}\|\bD^\ell(\bx)\|_{op}\|\ba\|_2 \sqrt{c_\sigma} B_\sigma\le (C_\sigma)^{L-\ell}\sqrt{\frac{c_\sigma}{m}}B_{\sigma'}c_0\sqrt{m} B_\sigma \le C_\sigma^L,
    \end{align*}
    where the second inequality used the bound $\sup_{\bx\in\X}\|o^\ell(\bx)\|_2 \le \sqrt{c_\sigma}\|\sigma\|_\infty\le \sqrt{c_\sigma} B_\sigma$, and in the third inequality we have used Lemma~\ref{lemma:oprt_norm}, \eqref{eq:boundGell}, and the bound $\|\bD^\ell(\bx)\|_{op}\le\|\sigma'\|_\infty\le B_{\sigma'}$.
    Therefore, we have
    \begin{align*}
        \big\|\nabla f_{\bW}(\bx)\big\|_2 = \Big(\sum_{\ell=1}^L\big\|\nabla_\ell f_{\bW}(\bx)\big\|_2^2\Big)^{\frac{1}{2}} \le \sum_{\ell=1}^L\big\|\nabla_\ell f_{\bW}(\bx)\big\|_2 \le LC_\sigma^L \lesssim C_\sigma^L.
    \end{align*}

    Now, we turn to bounding the variation of $\big\|\nabla_\ell f_{\bW}(\bx)\big\|_2$ with respect to parameter $\bW$.
    \begin{align*}
        &\quad \big\|\nabla_\ell f_{\tildeW}(\bx) - \nabla_\ell f_{\bW}(\bx)\big\|_2\\
        &= \big\|\tildeV^\ell_{L}(\bx)\ba \big(\tildeo^{\ell-1}(\bx)\big)^\top - \bV^\ell_{L}(\bx)\ba \big(o^{\ell-1}(\bx)\big)^\top\big\|_2\\
        &\le \big\|\big(\tildeV^\ell_{L}(\bx) - \bV_L^\ell(\bx)\big)\ba \big(\tildeo^{\ell-1}(\bx)\big)^\top\big\|_2 + \big\|\bV^\ell_{L}(\bx)\ba \big(\tildeo^{\ell-1}(\bx) - o^{\ell-1}(\bx)\big)^\top\big\|_2\\
        &\le \big\|\big(\tildeV^\ell_{L}(\bx) - \bV_L^\ell(\bx)\big)\ba\big\|_2 \|\tildeo^{\ell-1}(\bx)\|_2 + \Big\|\ba\prod_{k=\ell+1}^L\bG^k(\bx)\Big\|_2 \sqrt{\frac{c_\sigma}{m}}\|\bD^\ell(\bx)\|_{op} \big\|\tildeo^{\ell-1}(\bx) - o^{\ell-1}(\bx)\big\|_2\\
        &\le C_\sigma^L(R+1)\sqrt{\frac{\log(m/\delta)}{m}} \sum_{\ell=1}^L \|\tildeW^\ell - \bW^\ell\|_{op} + C_\sigma^L(\|\ba\|_\infty + 1)R \sqrt{\frac{c_\sigma}{m}}B_{\sigma'} C_\sigma^\ell \sum_{k=1}^\ell \|\tildeW^k - \bW^k\|_{op}\\
        &\le C_\sigma^L(R+1)\sqrt{\frac{\log(m/\delta)}{m}} \sum_{\ell=1}^L \|\tildeW^\ell - \bW^\ell\|_{op},
    \end{align*}
    where the second-to-last inequality we have used Lemmas~\ref{lem:tildeo-o} and~\ref{lem:tildeVa-Va}, and \eqref{eq:bound_aG}, and in the last inequality we have used \eqref{eq:bound_a_infty}.

    Therefore, for any $\bx\in\X$, we have
    \begin{align*}
        &\quad \, \big\|\nabla f_{\tildeW}(\bx) \!-\! \nabla f_{\bW}(\bx)\big\|_2\\
        &= \Big(\sum_{\ell=1}^L\big\|\nabla_\ell f_{\tildeW}(\bx) \!-\! \nabla_\ell f_{\bW}(\bx)\big\|_2^2\Big)^{\frac{1}{2}} \!\le\! \sqrt{L}\max_{\ell\in[L]}\big\|\nabla_\ell f_{\tildeW}(\bx) - \nabla_\ell f_{\bW}(\bx)\big\|_2\\
        &\le \sqrt{L}C_\sigma^L(R+1)\sqrt{\frac{\log(m/\delta)}{m}} \sum_{\ell=1}^L \|\tildeW^\ell - \bW^\ell\|_{op} \le \sqrt{L}C_\sigma^L(R+1)\sqrt{\frac{\log(m/\delta)}{m}} \sum_{\ell=1}^L \|\tildeW^\ell - \bW^\ell\|_2\\
        &\le LC_\sigma^L(R+1)\sqrt{\frac{\log(m/\delta)}{m}} \Big(\sum_{\ell=1}^L \|\tildeW^\ell - \bW^\ell\|_2^2\Big)^{\frac{1}{2}} \le C_\sigma^L(R+1)\sqrt{\frac{\log(m/\delta)}{m}}\|\tildeW - \bW\|_2,
    \end{align*}
    where the second-to-last inequality follows from $\sum_{\ell=1}^L b_\ell \le L(\sum_{\ell=1}^L b_\ell^2)^{1/2}$ for any $b_\ell \ge 0$ and the last inequality absorbed $LC_\sigma^L$ into the generic factor $C_\sigma^L$, since $C_\sigma\ge1$ may differ from line to line.
    
    This completes the proof of the theorem.
\end{proof}

\subsection{A Kernel-to-Network Transfer Framework of GD}
Recall that \(\bW(k)\) denotes the \(k\)-th GD iterate generated by \eqref{eq:update-GD}, and \(f_{\bW(k)}^{\text{GD}}\) and $\nabla f_{\bW(k)}^{\text{GD}}$ are the corresponding network predictor and gradient.
Note that the superscript $\mathrm{GD}$ is used only to indicate the optimization trajectory, while the underlying network and the gradient remain $f_{\bW(k)}$ and $\nabla f_{\bW(k)}$, respectively.
At initialization, we simply write $f_{\mathbf W(0)}$ and $\nabla f_{\mathbf W(0)}$.
Define the first-order linear approximation of $f^{\text{GD}}_{\bW(k)}$ around initialization $\bW(0)$ by
\begin{align}\label{eq:f-lin}
    f^{\text{lin,GD}}_{\bW(k)}(\bx) := f_{\bW(0)}(\bx) + \big\langle \nabla f_{\bW(0)}(\bx), \bW(k) - \bW(0)\big\rangle_2\ \ \text{for all} \ \bx\in\X.
\end{align}
Recall that we introduced the finite-width NTK $K^m$ and denoted the associated RKHS by $\H_m$.
Define $K^m_{\bx}\in\H_m$ by $K^m_{\bx}(\bx') = K^m(\bx,\bx')$ for all $\bx,\bx'\in\X$.
Analogous to \eqref{eq:kernel_GD_HK}, we can define the iteration of GD in $\H_m$ as
\begin{align}\label{eq:GD-Km}
    g_{k+1}^{m,\text{GD}} = g_k^{m,\text{GD}} - \frac{\eta}{n}\sum\nolimits_{i=1}^n(g_k^{m,\text{GD}}(\bx_i)  -  y_i) K_{\bx_i}^m \quad \text{with} \quad g_0^{m,\text{GD}} =  0.
\end{align}
With these intermediate objects, we can further decompose the first error term $\|f_{\bW(T)}-  \bS g_T\|_\rho^2$ into three contributions
\begin{align*}
    \big\|f_{\bW(T)}^{\text{GD}}-  \bS g_T^{\text{GD}}\big\|_\rho^2 \lesssim \big\|f_{\bW(T)}^{\text{GD}} - f^{\text{lin,GD}}_{\bW(T)}\big\|_\rho^2 + \big\|f^{\text{lin,GD}}_{\bW(T)} - \bS_mg^{m,\text{GD}}_T\big\|_\rho^2 + \big\|\bS_mg^{m,\text{GD}}_T - \bS g_T^{\text{GD}}\big\|_\rho^2.
\end{align*}
We will bound these three terms in the sequel.
\subsubsection{(1) Control of $ \|f_{\bW(T)}^{\text{GD}} - f^{\text{lin,GD}}_{\bW(T)} \|_\rho^2$}\label{app:GD-term1}
\begin{lemma}\label{lem:almost-convex}
    Let $\delta\in(0,1).$
    Assume $m\ge \max\{R^2, d\}$. 
    With probability at least $1 - CL\exp(-m/2) - \delta$ over initialization $(\ba, \bW(0))$, it holds for any $z = (\bx, y)\in\Z$ and any parameters $\tildeW,\bW$ with $\|\bW^\ell - \bW^\ell(0)\|_{op} \le R$ and $\|\widetilde{\bW}^\ell -\bW^\ell(0)\|_{op} \le R$ that
    \begin{align}\label{eq:almost_linear}
        \big|f_{\widetilde{\bW}}(\bx) - f_{\bW}(\bx) - \big\langle \nabla f_{\bW}(\bx), \widetilde{\bW} - \bW\big\rangle \big| \le C_\sigma^L(R+1)\sqrt{\frac{\log(m/\delta)}{m}}\big\|\widetilde{\bW}-\bW\big\|^2_2,
    \end{align}
    and
    \begin{align}\label{eq:almost_convex}
    l(\widetilde{\bW};z ) -  l(\bW;z ) \ge\big\langle \nabla l(\bW;z), \widetilde{\bW}-\bW  \big\rangle - |f_{\bW}(\bx) - y|\cdot \epsilon \ \text{ with }\ \epsilon \le C_\sigma^L(R+1)\|\widetilde{\bW}-\bW\|^2_2\sqrt{\log(m/\delta)/m}.
    \end{align}
\end{lemma}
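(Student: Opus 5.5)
The plan is a first-order Taylor expansion with integral remainder along the segment joining $\bW$ and $\widetilde{\bW}$, combined with the gradient-Lipschitz estimate of Lemma~\ref{lem:Hessian}. First I restrict to the high-probability event of Lemma~\ref{lem:Hessian} (with Lemma~\ref{lemma:oprt_norm}), which has probability at least $1-CL\exp(-cm)-\delta$ and is uniform over all $\bx\in\X$ and all parameters in the operator-norm ball of radius $R$ around $\bW(0)$. For $t\in[0,1]$ set $\bW_t=\bW+t(\widetilde{\bW}-\bW)$. By convexity of the operator-norm ball, $\|\bW_t^\ell-\bW^\ell(0)\|_{op}\le (1-t)\|\bW^\ell-\bW^\ell(0)\|_{op}+t\|\widetilde{\bW}^\ell-\bW^\ell(0)\|_{op}\le R$, so every $\bW_t$ stays inside the region where Lemma~\ref{lem:Hessian} applies.

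For \eqref{eq:almost_linear}, I use that $f_{\cdot}(\bx)$ is $C^1$ in the weights, since $\sigma\in C^2$. Then
\[
f_{\widetilde{\bW}}(\bx)-f_{\bW}(\bx)-\langle\nabla f_{\bW}(\bx),\widetilde{\bW}-\bW\rangle=\int_0^1\langle\nabla f_{\bW_t}(\bx)-\nabla f_{\bW}(\bx),\widetilde{\bW}-\bW\rangle\,dt .
\]
By Cauchy--Schwarz and the Lipschitz bound in Lemma~\ref{lem:Hessian} applied to the pair $(\bW_t,\bW)$, the integrand is at most $C_\sigma^L(R+1)\sqrt{\log(m/\delta)/m}\,t\|\widetilde{\bW}-\bW\|_2^2$. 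Integrating in $t$ gives \eqref{eq:almost_linear}, after absorbing the factor $1/2$ into $C_\sigma^L$.

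For \eqref{eq:almost_convex}, write $u=f_{\bW}(\bx)-y$ and $\Delta=f_{\widetilde{\bW}}(\bx)-f_{\bW}(\bx)$. The loss is a convex quadratic in the prediction, so
\[
l(\widetilde{\bW};z)-l(\bW;z)=u\Delta+\tfrac12\Delta^2\ge u\Delta .
\]
Since $\nabla l(\bW;z)=u\nabla f_{\bW}(\bx)$, we get
\[
u\Delta=\langle\nabla l(\bW;z),\widetilde{\bW}-\bW\rangle+u\big(\Delta-\langle\nabla f_{\bW}(\bx),\widetilde{\bW}-\bW\rangle\big).
\]
By \eqref{eq:almost_linear}, the last term is at least $-|u|\epsilon$ with $\epsilon$ equal to the right-hand side of \eqref{eq:almost_linear}, which is exactly the claimed bound.

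No step here is genuinely hard; the substantive work is already in Lemma~\ref{lem:Hessian}. The main point to check is that the interpolation path stays in the ball where that lemma's uniform bound holds, which the convexity argument above settles. A second, minor point is matching the stated probability $1-CL\exp(-m/2)-\delta$: it should follow from Lemma~\ref{lemma:oprt_norm} and Lemma~\ref{lem:Hessian} with the constant $c$ there taken to be $1/2$. The hypothesis $m\ge\max\{R^2,d\}$ should be read as $m\gtrsim\max\{R^2,d\}$ so that Lemma~\ref{lem:Hessian} applies.
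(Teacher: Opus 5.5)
Your proposal is correct and follows the paper's proof. The paper also controls the Taylor remainder through Lemma~\ref{lem:Hessian} along the segment joining the two weight points, and it derives \eqref{eq:almost_convex} from convexity of the squared loss in the prediction exactly as you do. The only difference is that the paper uses the mean-value form with the Hessian bound at an intermediate point $\bW^*$, while you use the integral form with the gradient-Lipschitz bound. Your version is equivalent, and it also sidesteps the fact that the Hessian corollary of Lemma~\ref{lem:Hessian} is stated for a Frobenius-norm ball, whereas the points here are only controlled in operator norm.
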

\begin{proof}
    We start by proving the first part.
    Fix arbitrary $\bx\in\X$.
    From Taylor's theorem we know there exists $\bW^* = \tau\widetilde{\bW} + (1-\tau)\bW$ with some $\tau\in[0,1]$, it holds that
    \begin{align*}
        \big|f_{\widetilde{\bW}}(\bx) - f_{\bW}(\bx) - \big\langle \nabla f_{\bW}(\bx), \widetilde{\bW} - \bW\big\rangle \big| &\le\frac{\big\|\nabla^2f_{\bW^*}(\bx)\big\|_{op}\big\| \widetilde{\bW}-\bW\big\|^2_2}{2} \le C_\sigma^L(R+1)\sqrt{\frac{\log(m/\delta)}{m}}\big\|\widetilde{\bW}-\bW\big\|^2_2,
    \end{align*}
    where the last inequality used Lemma \ref{lem:Hessian} by noting $\|(\bW^*)^\ell - \bW^\ell(0)\|_{op} \le \tau\|\widetilde{\bW}^\ell -\bW^\ell(0)\|_{op} + (1-\tau)\|\bW^\ell - \bW^\ell(0)\|_{op}\le R$.
    Note that the above inequality holds for any $\bx\in\X$.
    The proof of the first part is complete.  

    We turn to show that the loss $l$ is locally almost convex near the initialization. 
    From the convexity of $l(\bW;z)$ (with respect to $f_{\bW}$), we know
\begin{align*}
     l(\widetilde{\bW};z ) -  l(\bW ;z )&\ge \frac{\partial l(\bW ;z)}{\partial f_{\bW }} \big(f_{\widetilde{\bW}}(\bx) - f_{\bW }(\bx)\big). 
\end{align*}
Then, according to the chain rule $\frac{\partial l(\bW ;z)}{\partial f_{\bW }}\cdot \nabla f_\bW(\bx) = \nabla l(\bW(k);z)$ and the fact $\frac{\partial l(\bW ;z)}{\partial f_{\bW }} = (f_\bW(\bx) - y)$, we get
\begin{align}\label{eq:weaklyconvex}
    l(\widetilde{\bW};z ) -  l(\bW;z )\nonumber &\ge  \frac{\partial l(\bW ;z)}{\partial f_{\bW }} \big(f_{\widetilde{\bW}}(\bx) -f_{\bW }(\bx) - \big\langle \nabla f_\bW(\bx), \widetilde{\bW}-\bW  \big\rangle + \big\langle \nabla f_\bW(\bx), \widetilde{\bW}-\bW  \big\rangle\big)\nonumber\\
    &= \frac{\partial l(\bW ;z)}{\partial f_{\bW }} \big\langle \nabla f_\bW(\bx), \widetilde{\bW}-\bW  \big\rangle + \big(f_{\bW }(\bx)- y \big)  \Big(f_{\widetilde{\bW}}(\bx) - f_{\bW }(\bx) - \big\langle \nabla f_\bW(\bx), \widetilde{\bW}-\bW  \big\rangle\big)\nonumber\\
    &\ge\big\langle \nabla l(\bW(k);z), \widetilde{\bW}-\bW  \big\rangle - \big|f_{\bW }(\bx)- y\big|\Big|f_{\widetilde{\bW}}(\bx) - f_{\bW }(\bx) - \big\langle \nabla f_\bW(\bx), \widetilde{\bW}-\bW  \big\rangle\big|. 
\end{align}

Plugging \eqref{eq:almost_linear} into \eqref{eq:weaklyconvex} yields
\begin{align*} 
    l(\widetilde{\bW};z ) -  l(\bW;z ) \ge\big\langle \nabla l(\bW(k);z), \widetilde{\bW}-\bW  \big\rangle - |f_{\bW}(\bx) - y|\cdot \epsilon
\end{align*}
with $\epsilon \le C_\sigma^L(R+1)\|\widetilde{\bW}-\bW\|^2_2\sqrt{\log(m/\delta)/m}.$
The second part of the lemma is proved.
\end{proof}


Let $H$ be a separable Hilbert space.
For $f\in H$, we define the operator $f\otimes f:H\to H$ by $(f\otimes f) g \mapsto \langle f, g\rangle_H f$.
\begin{lemma}\label{lem:wk-w0}
Suppose Assumption~\ref{ass:activation} holds.
 Let $\delta\in(0,1)$ and $\{\bW(k)\}_{k=1}^T$ be produced by GD iterates with   $T$ iterations based on $S$. 
 Assume $m\gtrsim C_\sigma^Ld(\eta T)^3\log(m/\delta)$ and $\eta \le C_\sigma^{-L} $ as a constant. 
     Then, with probability at least $1 - CL\exp(-m/2) - \delta$ over initialization $(\ba, \bW(0))$, for any $k=0,\ldots,T$, it holds that \[\|\bW(k)-\bW(0) \|_2^2  \le 4\eta k \ \text{ and } \ \L_S(\bW(k)) \le \L_S(\bW(k-1)) \le  \cdots \le \L_S(\bW(0)). \] 
\end{lemma}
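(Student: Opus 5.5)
We argue by induction on $k$. The hypothesis at step $k$ is $\|\bW(j)-\bW(0)\|_2^2\le 4\eta j$ for all $j\le k$, together with monotonicity of $\L_S$ up to $k$. We fix the radius $R:=2\sqrt{\eta T}$, so every iterate with $j\le T$ lies in the Frobenius (hence operator-norm) ball of radius $R$ around $\bW(0)$ in each layer. The width assumption $m\gtrsim C_\sigma^L d(\eta T)^3\log(m/\delta)$ gives $m\gtrsim \max\{R^2,d\}$. Hence Lemmas~\ref{lemma:oprt_norm}, \ref{lem:Hessian} and~\ref{lem:almost-convex} all hold on one event of probability at least $1-CL\exp(-m/2)-\delta$, and we work on this event throughout. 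The base case $k=0$ is trivial.

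\emph{Descent step.} Suppose the claim holds up to $k$. The next iterate $\bW(k+1)=\bW(k)-\eta\nabla\L_S(\bW(k))$ moves by $\eta\|\nabla\L_S(\bW(k))\|_2\le \eta C_\sigma^L\sqrt{2\L_S(\bW(k))}\le \eta C_\sigma^L$. Here we use $\|\nabla f_\bW\|_2\le C_\sigma^L$ from Lemma~\ref{lem:Hessian}, together with $\L_S(\bW(k))\le \L_S(\bW(0))\le 1/2$, which holds because $f_{\bW(0)}=0$ and $|y|\le1$. To keep the segment $[\bW(k),\bW(k+1)]$ inside the ball, we use a slightly larger radius, e.g.\ $R=2\sqrt{\eta T}+1$; the step size condition $\eta\le C_\sigma^{-L}$ makes the extra step at most $1$. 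On this ball the Hessian bound gives $L$-smoothness of $\L_S$ with constant $\beta_S\lesssim C_\sigma^{2L}+C_\sigma^L(R+1)\sqrt{\log(m/\delta)/m}\lesssim C_\sigma^{2L}$. The standard descent lemma then yields
\[
\L_S(\bW(k+1))\le \L_S(\bW(k))-\eta\bigl(1-\tfrac{\eta\beta_S}{2}\bigr)\|\nabla\L_S(\bW(k))\|_2^2\le \L_S(\bW(k))-\tfrac{\eta}{2}\|\nabla\L_S(\bW(k))\|_2^2,
\]
after adjusting the constant in $\eta\le C_\sigma^{-L}$ (the $C_\sigma$'s change from line to line). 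This proves the monotonicity part.

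\emph{Distance step.} This is the main obstacle, since summing the gradient norms only gives $\|\bW(k)-\bW(0)\|_2\lesssim \eta k$, which is too crude. Instead we expand
\[
\|\bW(k+1)-\bW(0)\|_2^2=\|\bW(k)-\bW(0)\|_2^2-2\eta\langle\nabla\L_S(\bW(k)),\bW(k)-\bW(0)\rangle+\eta^2\|\nabla\L_S(\bW(k))\|_2^2 .
\]
The inner-product term is bounded using the almost-convexity \eqref{eq:almost_convex} with $\widetilde\bW=\bW(0)$, averaged over the sample:
\[
-\langle\nabla\L_S(\bW(k)),\bW(k)-\bW(0)\rangle\le \L_S(\bW(0))-\L_S(\bW(k))+\epsilon\,\tfrac1n\sum_i|f_{\bW(k)}(\bx_i)-y_i|,
\]
where $\epsilon\le C_\sigma^L(R+1)\cdot 4\eta k\sqrt{\log(m/\delta)/m}$ and the average residual is at most $\sqrt{2\L_S(\bW(k))}\le1$. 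The last term $\eta^2\|\nabla\L_S\|_2^2$ is at most $2\eta(\L_S(\bW(k))-\L_S(\bW(k+1)))$ by the descent inequality. Summing from $j=0$ to $k$ and telescoping gives
\[
\|\bW(k+1)-\bW(0)\|_2^2\le 2\eta\sum_{j\le k}\bigl(\L_S(\bW(0))-\L_S(\bW(j))\bigr)+2\eta\bigl(\L_S(\bW(0))-\L_S(\bW(k+1))\bigr)+2\eta\sum_{j\le k}\epsilon_j .
\]
A cleaner alternative is to compare against the potential $\L_S(\bW(0))\le 1/2$: each difference $\L_S(\bW(0))-\L_S(\bW(j))$ is at most $1/2$, so the first two terms total at most $\eta(k+1)+\eta$. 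The extra $\eta$ is absorbed by taking the constant $4$ rather than $2$, using $k+1\ge1$. The error term is $2\eta\sum_j\epsilon_j\lesssim \eta\, C_\sigma^L\sqrt{\eta T}\,\eta T^2\sqrt{\log(m/\delta)/m}\lesssim \eta\cdot C_\sigma^L(\eta T)^{3/2}\sqrt{\log(m/\delta)/m}\cdot T/(\eta T)\cdot\eta$. Under the width condition $m\gtrsim C_\sigma^L(\eta T)^3\log(m/\delta)$, this is at most $\eta(k+1)$ in total. The factor $d$ in the width condition is there only to guarantee $m\gtrsim d$ for Lemma~\ref{lemma:oprt_norm}. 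Altogether $\|\bW(k+1)-\bW(0)\|_2^2\le 4\eta(k+1)$, which closes the induction. The step needing the most care is this bookkeeping of $\sum_j\epsilon_j$ against the width condition; the $(\eta T)^3$ power is exactly what is needed there.
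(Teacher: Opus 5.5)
Your argument follows the paper's structure: induction on $k$, expanding $\|\bW(k+1)-\bW(0)\|_2^2$, almost-convexity \eqref{eq:almost_convex} at $\widetilde{\bW}=\bW(0)$, the bound $\L_S(\bW(0))\le 1/2$ from $f_{\bW(0)}=0$, and the width condition to make the almost-convexity error $O(1)$. The differences from the paper are minor.

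\begin{itemize}
\item \textbf{Order of the two claims.} You prove descent before the distance bound, which is why you need the enlarged radius $2\sqrt{\eta T}+1$. The paper proves the distance bound first. When it then runs the Taylor argument for descent, the segment $[\bW(k),\bW(k+1)]$ already lies in the radius-$2\sqrt{\eta T}$ ball.
\item \textbf{Absorbing $\eta^2\|\nabla\L_S(\bW(k))\|_2^2$.} You use the descent inequality. The paper instead uses $\|\nabla\L_S(\bW(k))\|_2^2\le C_\sigma^L\L_S(\bW(k))$ and cancels this against the $-2\eta\L_S(\bW(k))$ from almost-convexity, via $\eta\le C_\sigma^{-L}$. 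This makes the distance bound independent of descent.
\item \textbf{Missing residual factor.} Your smoothness constant omits the factor $|f_{\bW}(\bx_i)-y_i|$ multiplying $\nabla^2 f_{\bW}$ along the segment. This is harmless: it is at most $C_\sigma^L R+1$ on the ball by Lemma~\ref{lem:o-o0} with $\|\ba\|_2\le c_0\sqrt{m}$. Alternatively, as in the paper, its sample average is $O(C_\sigma^L)$ by Lemma~\ref{lem:tildeo-o}.
\end{itemize}

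The one step that fails as written is the final bookkeeping of the error term. Your first bound, $\eta\,C_\sigma^L\sqrt{\eta T}\,\eta T^2\sqrt{\log(m/\delta)/m}$, equals $\eta T\cdot C_\sigma^L(\eta T)^{3/2}\sqrt{\log(m/\delta)/m}$. The width condition only makes this $\lesssim \eta T$, which cannot be absorbed into $\eta(k+1)$ when $k\ll T$. The next inequality in your chain silently drops a factor of $T$. The problem is replacing $\sum_{j\le k} j$ by $T^2$; that is too lossy.

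The fix is to bound each term separately:
\[
\epsilon_j\le C_\sigma^L(R+1)\,4\eta j\sqrt{\log(m/\delta)/m}\lesssim C_\sigma^L(\eta T)^{3/2}\sqrt{\log(m/\delta)/m}\le 1 .
\]
Hence $2\eta\sum_{j\le k}\epsilon_j\le 2\eta(k+1)$. Adding your $\eta(k+2)$ from the loss terms gives $\|\bW(k+1)-\bW(0)\|_2^2\le 4\eta(k+1)$.

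In fact the telescoping is unnecessary. In your one-step inequality the loss terms already combine to $2\eta\bigl(\L_S(\bW(0))-\L_S(\bW(k+1))\bigr)\le \eta$. So each step adds at most $\eta+2\eta\epsilon_k\le 3\eta$, and the induction hypothesis closes the argument. This is exactly how the paper concludes, and it is also where the $(\eta T)^3$ in the width condition enters.
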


\begin{proof}
The lemma is proved by induction.
It's obvious that $\|\bW(k) - \bW(0)\|_2^2 \le 0$ and $\L_S(\bW(k)) \le \L_S(\bW(k-1)) \le \cdots \le \L_S(\bW(0))$ hold with $k=0$.
Assume, for all $t\in [k]$ with $k\le T-1$, $\| \bW(t) - \bW(0) \|^2_2 \le 4\eta t$ and $\L_S(\bW(t)) \le \L_S(\bW(t-1)) \le \cdots \le \L_S(\bW(0))$ hold.
We will show that  $\|\bW(k+1) - \bW(0) \|^2_2 \le 4\eta(k+1)$ and $\L_S(\bW(k+1)) \le \L_S(\bW(k))$.

    From the update rule \eqref{eq:update-GD}, we know
    \begin{align}\label{eq:wk-w0}
       &\big\|\bW(k+1) - \bW(0)  \big\|_2^2\nonumber \\
       &= \big\|\bW(k) - \bW(0) - \bW(k) + \bW(k+1)  \big\|_2^2\nonumber\\
       &= \big\|\bW(k) - \bW(0)\big\|_2^2 + \big\|\bW(k+1) - \bW(k)\big\|_2^2 + 2\langle \bW(k) - \bW(0), \bW(k+1) - \bW(k) \big\rangle \nonumber\\ 
       &= \big\|\bW(k) - \bW(0)\big\|_2^2 +  \eta^2 \Big\|\frac{1}{n}\sum_{i=1}^n\big[(f^{\text{GD}}_{\bW(k)}(\bx_i) - y_i)\nabla f^{\text{GD}}_{\bW(k)}(\bx_i)\big]\Big\|_2^2  +  \frac{2\eta}{n} \Big\langle \bW(0) - \bW(k),  \sum_{i=1}^n \nabla l(\bW(k);z_i)    \Big\rangle\nonumber\\
       &\le \big\|\bW(k) - \bW(0)\big\|_2^2 + \frac{2\eta^2\L_S(\bW(k))}{n}\sum_{i=1}^n\big\|\nabla f^{\text{GD}}_{\bW(k)}(\bx_i)\big\|_2^2  + \frac{2\eta}{n} \Big\langle \bW(0) -  \bW(k),  \sum_{i=1}^n \nabla l(\bW(k);z_i)   \Big\rangle \nonumber\\
       &\le \big\|\bW(k) - \bW(0)\big\|_2^2 + C_\sigma^L\eta^2\L_S(\bW(k)) + \frac{2\eta}{n} \Big\langle \bW(0) - \bW(k),  \sum_{i=1}^n \nabla l(\bW(k);z_i)   \Big\rangle,
    \end{align}
    where in the first inequality we have used the Cauchy-Schwarz inequality, and the last inequality follows from Lemma~\ref{lem:Hessian} with $R = 2\sqrt{\eta T}$ by noting induction assumption $\|\bW(k) - \bW(0)\|_2\le 2\sqrt{\eta k} \le 2\sqrt{\eta T}$, which is guaranteed by the condition $m \gtrsim C_\sigma^Ld(\eta T)^3\log(m/\delta)$.

According to \eqref{eq:almost_convex} in Lemma \ref{lem:almost-convex} with $R = 2\sqrt{\eta T}$, $\bW = \bW(k)$, $\widetilde{\bW} = \bW(0)$ and $\|\bW(k) - \bW(0)\|_2 \le R$, and the induction assumption $\L_S(\bW(k)) \le \cdots \le \L_S(\bW(0)) = \frac{1}{2n}\sum_{i=1}^n y_i^2 \le 1/2$, we know
\begin{align*}
    &  \frac{2\eta}{n}\!\Big\langle \bW(0)- \bW(k),  \sum_{i=1}^n \nabla l(\bW(k);z_i)  \Big\rangle 
     \le 2\eta\big(\L_S(\bW(0)) - \L_S(\bW(k))\big) + 2\eta \epsilon\sum_{i=1}^n\frac{|f^{\text{GD}}_{\bW(k)}(\bx_i) - y_i|}{n}\\
    &\le 2\eta\big(\L_S(\bW(0)) - \L_S(\bW(k))\big) + 2\eta\epsilon\sqrt{2\L_S(\bW(k))} \le 2\eta\big(\L_S(\bW(0)) - \L_S(\bW(k))\big) + 2\eta\epsilon
\end{align*}
with $\epsilon \le C_\sigma^L(R+1)\|\bW(k) - \bW(0)\|^2_2\sqrt{\log(m/\delta)/m} \le C_\sigma^L(\eta T)^{3/2}\sqrt{\log(m/\delta)/m}$ by the induction assumption, and in the second inequality we have used the Cauchy-Schwarz inequality.

Plugging the above inequality  into \eqref{eq:wk-w0}, we get
\begin{align*}
     \big\|\bW(k+ 1) - \bW(0)\big\|_2^2 
    &\le \big\|\bW(k) - \bW(0)\big\|_2^2 + C_\sigma^L\eta^2 \L_S(\bW(k))  + 2\eta\big(\L_S(\bW(0)) - \L_S(\bW(k))\big) + 2\eta \epsilon\\
    &\le \big\|\bW(k) - \bW(0)\big\|_2^2 + \big(C_\sigma^L\eta^2 - 2\eta\big) \L_S(\bW(k)) + 2\eta \L_S(\bW(0)) + 2\eta \epsilon \\
    &\le \big\|\bW(k) - \bW(0)\big\|_2^2 + \eta + 2\eta \epsilon \le \big\|\bW(k) - \bW(0)\big\|_2^2 + 3\eta \le 4\eta k + 3\eta < 4\eta(k+1),
\end{align*}
where in the third inequality we have used $\eta \le C_\sigma^{-L} $ and $ 2\L_S(\bW(0)) = \frac{1}{n}\sum_{i=1}^ny_i^2 \le 1$ by noting $f_{\bW(0)} = 0$, and in the fourth inequality we have used $\epsilon\le1$ by condition $m\gtrsim C_\sigma^L d(\eta T)^3\log(m/\delta)$.

Next, we turn to show that $\L_S(\bW(k + 1)) \le \L_S(\bW(k))$.
By Taylor's theorem, there exists a $\widetilde{\bW}(k) = \tau \bW(k+1) + (1-\tau)\bW(k)$ with some $\tau\in[0,1]$ such that
\begin{align*}
    &\L_S(\bW(k+1)) - \L_S(\bW(k))\\
    &\le \big\langle\nabla \L_S(\bW(k)), \bW(k+1) - \bW(k)\big\rangle + \frac{1}{2}\big\|\nabla^2 \L_S(\widetilde{\bW}(k))\big\|_{op}\big\|\bW(k+1) - \bW(k)\big\|^2_2\\
    &= -\eta\big\|\nabla \L_S(\bW(k))\big\|_{2}^2 + \frac{1}{2}\Big\|\frac{1}{n}\sum_{i=1}^n\nabla f_{\widetilde{\bW}(k)}(\bx_i)\otimes\nabla f_{\widetilde{\bW}(k)}(\bx_i)  + \big(f_{\widetilde{\bW}(k)}(\bx_i) - y_i\big)\nabla^2 f_{\widetilde{\bW}(k)}(\bx_i)\Big\|_{op} \\& \quad \, \times \eta^2\big\|\nabla \L_S(\bW(k))\big\|^2_2\\
    &\le -\eta\big\|\nabla \L_S(\bW(k))\big\|_{2}^2 + \eta^2\|\nabla \L_S(\bW(k))\|^2_2\frac{1}{2n}\sum_{i=1}^n\Big(\big\|\nabla f_{\widetilde{\bW}(k)}(\bx_i)\big\|_{2}^2 + \big\|\nabla^2 f_{\widetilde{\bW}(k)}(\bx_i)\big\|_{op}\big|f_{\widetilde{\bW}(k)}(\bx_i) - y_i\big| \Big)\\
    &\le -\eta\big\|\nabla \L_S(\bW(k))\big\|_{2}^2 + \eta^2\|\nabla \L_S(\bW(k))\|^2_2\Big(C_\sigma^L + C_\sigma^L \frac{1}{n}\sum_{i=1}^n\big|f_{\widetilde{\bW}(k)}(\bx_i) - y_i\big|\Big)
\end{align*}
where the last inequality used Lemma~\ref{lem:Hessian} by noting that $\|\tildeW^\ell(k) - \bW^\ell(0)\|_{op} \le \tau \|\bW^\ell(k+1) - \bW^\ell(0)\|_{op} + (1-\tau)\|\bW^\ell(k) - \bW^\ell(0)\|_{op} \le R = 2\sqrt{\eta T}$ for all $\ell\in[L]$ and the assumption on $m$.

Define $\tildeo^L_k(\bx), o^L_k(\bx)$ the same way as $o^L_0(\bx)$ with $\bW(0)$ replaced by $\tildeW(k)$ and $\bW(k)$, respectively.
It holds that
\begin{align*}
    \frac{1}{n}\sum_{i=1}^n\big|f_{\widetilde{\bW}(k)}(\bx_i) - y_i\big| & \le \frac{1}{n}\sum_{i=1}^n\Big(\big|f_{\widetilde{\bW}(k)}(\bx_i) -f^{\text{GD}}_{\bW(k)}(\bx_i)\big| +\big| f^{\text{GD}}_{\bW(k)}(\bx_i) - y_i\big|\Big)\\
    &\le \sup_{\bx\in\X}\big|\ba^\top (\tildeo^L_k(\bx) - o^L_k(\bx))\big| + \frac{1}{n}\sum_{i=1}^n\big|f^{\text{GD}}_{\bW(k)}(\bx_i) - y_i\big|\\
    &\le \|\ba\|_2 \frac{C_\sigma^L\sum_{\ell=1}^L\|\tildeW^\ell(k) - \bW^\ell(k)\|_{op}}{\sqrt{m}} + \sqrt{2\L_S(\bW(k))}\\
    &\le \|\ba\|_2 \frac{C_\sigma^L\sum_{\ell=1}^L\|\nabla_\ell f^{\text{GD}}_{\bW(k)}\|_{op}}{\sqrt{m}} + 1 \le C_\sigma^L\frac{\|\ba\|_2}{\sqrt{m}} + 1 \le C_\sigma^L,
\end{align*}
where the third inequality used the Cauchy-Schwarz inequality and Lemma~\ref{lem:tildeo-o} by noting that both $\|\tildeW(k) - \bW(0)\|_{op}$ and $\|\bW(k) - \bW(0)\|_{op}$ are controlled by $\max\{\|\bW(k+1) - \bW(0)\|_{op}, \|\bW(k) - \bW(0)\|_{op}\}$, which is less than $R=2\sqrt{\eta T} \le \sqrt{m},$ and the fourth inequality used $\|\tildeW^\ell(k) - \bW^\ell(k)\|_{op} \le \|\bW^\ell(k+1) - \bW^\ell(k)\|_{op} = \|\nabla_\ell f_{\bW(k)}\|_{op}$ and the induction assumption $\L_S(\bW(k))\le \L_S(\bW(0)) \le 1/2,$ and the last inequality used Lemma~\ref{lemma:oprt_norm}.

Combining the above estimates, we have
\begin{align*}
    \L_S(\bW(k+1)) - \L_S(\bW(k)) &\le -\eta\big\|\nabla \L_S(\bW(k))\big\|_{2}^2 + C_\sigma^L\eta^2\|\nabla \L_S(\bW(k))\|^2_2\\
    &= \eta(C_\sigma^L\eta - 1)\big\|\nabla \L_S(\bW(k))\big\|_{2}^2 \le 0,
\end{align*}
where the last inequality follows from $\eta \le C_\sigma^{-L}.$
This completes the proof of the lemma.
\end{proof}


Recall that $f^{\text{lin,GD}}_{\bW(k)}(\bx) = f_{\bW(0)}(\bx) + \langle \nabla f_{\bW(0)}(\bx), \bW(k) - \bW(0)\rangle_2$ for any $\bx\in\X$.
The following lemma controls the gap between $f^{\text{GD}}_{\bW(k)}$ and its linear approximation $f^{\text{lin,GD}}_{\bW(k)}$ uniformly.
\begin{lemma}\label{prop:f-flin_gd}
Let $\delta\in(0,1).$
    Suppose Assumption~\ref{ass:activation} holds.
    Let $\{\bW(k)\}_{k=1}^T$ be produced by GD iterations with $T$ iterations based on $S$. 
    Assume $\eta \le C_\sigma^{-L}$, $m\gtrsim C_\sigma^Ld(\eta T)^3\log(m/\delta)$ and $\eta T\ge1$. 
    Then, with probability at least $1 - CL\exp(-m/2) -\delta$ over initialization $(\ba, \bW(0))$, for any $k=0,\ldots,T$ it holds that
    \[\big\|f^{\text{ GD}}_{\bW(k)} - f^{\text{lin,GD}}_{\bW(k)}\big\|_\rho^2   \le \big\|f^{\text{GD}}_{\bW(k)} - f^{\text{lin,GD}}_{\bW(k)}\big\|_\infty^2 \le C_\sigma^L(\eta T)^3\frac{\log(m/\delta)}{m} . \]
\end{lemma}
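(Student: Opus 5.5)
The bound follows from the trajectory control in Lemma~\ref{lem:wk-w0} combined with the uniform Taylor-remainder estimate \eqref{eq:almost_linear} in Lemma~\ref{lem:almost-convex}, applied at the base point $\bW(0)$. The first inequality, $\|\cdot\|_\rho\le\|\cdot\|_\infty$, is immediate because $\rho_\bx$ is a probability measure. So it suffices to bound $\sup_{\bx\in\X}|f_{\bW(k)}(\bx)-f^{\text{lin,GD}}_{\bW(k)}(\bx)|$ uniformly in $k\le T$.

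\textbf{Step 1: confine the trajectory.} Work on the intersection of the high-probability events of Lemma~\ref{lemma:oprt_norm}, Lemma~\ref{lem:Hessian} (including the bound \eqref{eq:bound_a_infty}) and Lemma~\ref{lem:wk-w0}. Each of these is stated uniformly over parameters in the ball, so no union over $k$ is needed, and the total failure probability is at most $CL\exp(-m/2)+\delta$ after rescaling constants. On this event, Lemma~\ref{lem:wk-w0} (whose hypotheses coincide with ours) gives $\|\bW(k)-\bW(0)\|_2^2\le 4\eta k\le 4\eta T$ for every $k\le T$. Since $\|\cdot\|_{op}\le\|\cdot\|_2$ layerwise, every $\bW(k)$ lies in the operator-norm ball of radius $R:=2\sqrt{\eta T}$ around $\bW(0)$. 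The width condition $m\gtrsim C_\sigma^L d(\eta T)^3\log(m/\delta)$ implies $m\gtrsim\max\{R^2,d\}$, so Lemma~\ref{lem:almost-convex} applies.

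\textbf{Step 2: Taylor remainder.} Apply \eqref{eq:almost_linear} with $\bW=\bW(0)$ and $\widetilde{\bW}=\bW(k)$, both inside the ball. This gives, uniformly in $\bx$,
\[
|f_{\bW(k)}(\bx)-f_{\bW(0)}(\bx)-\langle\nabla f_{\bW(0)}(\bx),\bW(k)-\bW(0)\rangle|
\le C_\sigma^L(R+1)\sqrt{\tfrac{\log(m/\delta)}{m}}\,\|\bW(k)-\bW(0)\|_2^2 .
\]
The left side is exactly $|f_{\bW(k)}(\bx)-f^{\text{lin,GD}}_{\bW(k)}(\bx)|$. Substituting $\|\bW(k)-\bW(0)\|_2^2\le 4\eta T$ and $R+1\le 3\sqrt{\eta T}$ (using $\eta T\ge 1$) bounds it by $C_\sigma^L(\eta T)^{3/2}\sqrt{\log(m/\delta)/m}$. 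Squaring yields the claimed $C_\sigma^L(\eta T)^3\log(m/\delta)/m$ after absorbing constants into $C_\sigma^L$.

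\textbf{Main obstacle.} The only delicate point is bookkeeping. We must check that the Hessian bound of Lemma~\ref{lem:Hessian} is uniform over the whole segment between $\bW(0)$ and $\bW(k)$; this holds because the ball is convex. We must also confirm that the event used inside the inductive proof of Lemma~\ref{lem:wk-w0} is the same one, so probabilities are not double counted. The analytic content is entirely contained in Lemmas~\ref{lem:Hessian}, \ref{lem:almost-convex} and \ref{lem:wk-w0}.
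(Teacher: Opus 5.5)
Your proposal is correct and follows the paper's own argument: the paper likewise combines the trajectory bound $\|\bW(k)-\bW(0)\|_2^2\le 4\eta k$ from Lemma~\ref{lem:wk-w0} with the Taylor-remainder estimate \eqref{eq:almost_linear}, applied with $\bW=\bW(0)$, $\widetilde{\bW}=\bW(k)$ and $R=2\sqrt{\eta T}$. Your extra bookkeeping makes explicit what the paper's one-line proof leaves implicit: $R+1\le 3\sqrt{\eta T}$ because $\eta T\ge 1$, the width condition gives $m\gtrsim\max\{R^2,d\}$, and the initialization events hold uniformly over the ball, so no union bound over $k$ is needed.
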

\begin{proof}
    The proof is obtained by combining Lemma \ref{lem:wk-w0} and \eqref{eq:almost_linear} in Lemma \ref{lem:almost-convex} with $\widetilde{\bW} = \bW(k)$, $\bW = \bW(0)$ and $R = 2\sqrt{\eta T}$.
\end{proof}

\subsubsection{(2) Control of $ \|f^{\text{lin,GD}}_{\bW(T)} - \bS_mg^{m,\text{GD}}_T \|_\rho^2$}\label{app:GD-term2} 
\begin{lemma}\label{lemma:<2}[Lemma 3 in \cite{carratino2018learning}]
    Let $\lambda > 0$, $\Gamma \in \mathbb{N}$ and $\delta \in (0,1)$. Let $\zeta_1,\dots,\zeta_\Gamma$ be independent and identically distributed random vectors bounded by $\kappa > 0$. Let $Q_\Gamma = \frac{1}{\Gamma}\sum_{i=1}^\Gamma \zeta_i \otimes \zeta_i$ and $Q$ be the expectation of $Q_\Gamma$.
    Then, for any $\lambda \geq \frac{9\kappa^2}{\Gamma} \log \frac{\Gamma}{\delta}$, with probability at least $1-\delta$ over sampling, there holds
$$\| (Q + \lambda \bfI)^{1/2}(Q_\Gamma + \lambda \bfI)^{-1/2}\|_{op}^{2} = \|(Q_\Gamma + \lambda \bfI)^{-1/2} (Q + \lambda \bfI)^{1/2}\|_{op}^{2} \leq 2.$$
\end{lemma}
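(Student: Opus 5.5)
The plan is the standard reduction to a concentration bound for a normalized empirical covariance operator, in the spirit of \cite{carratino2018learning}. First, the two operator norms in the statement are equal, since each operator is the adjoint of the other. Writing $A=(Q+\lambda\bfI)^{1/2}(Q_\Gamma+\lambda\bfI)^{-1/2}$, we have
\[
\|A\|_{op}^2=\|AA^*\|_{op}=\big\|(Q+\lambda\bfI)^{1/2}(Q_\Gamma+\lambda\bfI)^{-1}(Q+\lambda\bfI)^{1/2}\big\|_{op}.
\]
Next I would introduce the self-adjoint operator $B=(Q+\lambda\bfI)^{-1/2}(Q-Q_\Gamma)(Q+\lambda\bfI)^{-1/2}$. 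It satisfies $(Q+\lambda\bfI)^{-1/2}(Q_\Gamma+\lambda\bfI)(Q+\lambda\bfI)^{-1/2}=\bfI-B$. Hence
\[
\|A\|_{op}^2=\|(\bfI-B)^{-1}\|_{op}\le (1-\|B\|_{op})^{-1},
\]
and the whole lemma reduces to showing $\|B\|_{op}\le 1/2$ with probability at least $1-\delta$.

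To bound $\|B\|_{op}$, I would write $B=\frac1\Gamma\sum_{i=1}^\Gamma(\mathbb E X-X_i)$ with $X_i=(Q+\lambda\bfI)^{-1/2}(\zeta_i\otimes\zeta_i)(Q+\lambda\bfI)^{-1/2}$. These are i.i.d. positive semidefinite operators with the following properties:
\begin{itemize}
\item $\|X_i\|_{op}\le \kappa^2/\lambda$;
\item $\mathbb E X_i=Q(Q+\lambda\bfI)^{-1}$;
\item the variance proxy satisfies $\mathbb E X_i^2\preceq (\kappa^2/\lambda)\,Q(Q+\lambda\bfI)^{-1}$.
\end{itemize}
I would then apply a Bernstein inequality for sums of self-adjoint random operators in separable Hilbert spaces that depends on intrinsic dimension rather than ambient dimension (Tropp/Minsker type). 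This gives, with probability at least $1-\delta$,
\[
\|B\|_{op}\le \frac{2\kappa^2\beta}{3\lambda\Gamma}+\sqrt{\frac{2\kappa^2\beta}{\lambda\Gamma}},\qquad \beta=\log\frac{4\,\mathrm{tr}(Q(Q+\lambda\bfI)^{-1})}{\|Q(Q+\lambda\bfI)^{-1}\|_{op}\,\delta}.
\]

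Finally, I would plug in $\lambda\ge \frac{9\kappa^2}{\Gamma}\log\frac{\Gamma}{\delta}$. This gives $\kappa^2/(\lambda\Gamma)\le 1/(9\log(\Gamma/\delta))$, so both terms are at most $1/2$ combined, provided $\beta\lesssim\log(\Gamma/\delta)$. This last step is the main obstacle: controlling the logarithmic intrinsic-dimension factor $\beta$. The trace is at most $\kappa^2/\lambda\le\Gamma/9$. The denominator is $\|Q\|_{op}/(\|Q\|_{op}+\lambda)$, which is at least $1/2$ when $\lambda\le\|Q\|_{op}$; in that regime $\beta\le\log(8\Gamma/(9\delta))$, and the numerical constants close, possibly after adjusting the constant $9$.

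When $\lambda>\|Q\|_{op}$, the intrinsic dimension can blow up, so I would treat this regime separately. For $\lambda\ge 2\kappa^2$ the bound is trivial: $B$ is a difference of two positive operators, each of norm at most $\kappa^2/\lambda\le 1/2$, so $\|B\|_{op}\le 1/2$. For $\|Q\|_{op}<\lambda<2\kappa^2$, I would first try rescaling: apply the Bernstein bound at level $\lambda'=\max\{\lambda,\|Q\|_{op}\}$, where the log factor is controlled. Alternatively I would use the variant of the inequality with $\beta=\log(4\kappa^2/(\lambda\delta))$ directly, as in \cite{carratino2018learning}, since $\kappa^2/\lambda\le\Gamma/9$ keeps this at most $\log(\Gamma/\delta)$ up to constants.
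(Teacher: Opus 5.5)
Your strategy is the standard argument behind this lemma: the identity $\|A\|_{op}^2=\|(\bfI-B)^{-1}\|_{op}$, then a Bernstein inequality with intrinsic dimension for $\sum_i(\mathbb{E}X-X_i)$. The paper does not prove the lemma; it imports it from \cite{carratino2018learning}.

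The genuine gap is the range $\|Q\|_{op}<\lambda<2\kappa^2$. You flag it but do not close it.
\begin{itemize}
\item Your first remedy is vacuous. In this range $\max\{\lambda,\|Q\|_{op}\}=\lambda$, so nothing is rescaled.
\item Your second remedy is circular. The bound $\beta\lesssim\log(\kappa^2/(\lambda\delta))$ comes from the intrinsic-dimension ratio $\mathrm{tr}(Q(Q+\lambda\bfI)^{-1})/\|Q(Q+\lambda\bfI)^{-1}\|_{op}$. Its derivation uses $\|Q(Q+\lambda\bfI)^{-1}\|_{op}\ge 1/2$, i.e.\ exactly $\lambda\le\|Q\|_{op}$.
\end{itemize}
Above $\|Q\|_{op}$ this ratio is not controlled by $\Gamma/\delta$. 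For example, let $\zeta$ be uniform on $\{\kappa e_1,\dots,\kappa e_N\}$. Then $Q=(\kappa^2/N)\bfI_N$ and the ratio equals $N$ for every $\lambda$. The Bernstein bound then carries a $\log N$ that the hypothesis on $\lambda$ cannot absorb when $N\gg\Gamma/\delta$. So as written, your proof does not cover this range.

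The missing observation is that this range needs no concentration at all. Only the upper side of $B$ matters: $\|A\|_{op}^2=(1-\lambda_{\max}(B))^{-1}$. Since $Q_\Gamma\succeq 0$, you have $B\preceq Q(Q+\lambda\bfI)^{-1}$. Hence $\lambda_{\max}(B)\le \|Q\|_{op}/(\|Q\|_{op}+\lambda)\le 1/2$ surely once $\lambda\ge\|Q\|_{op}$. Equivalently, $\|A\|_{op}^2\le 2$ iff $Q+\lambda\bfI\preceq 2(Q_\Gamma+\lambda\bfI)$, i.e.\ iff $Q-2Q_\Gamma\preceq\lambda\bfI$, and $Q-2Q_\Gamma\preceq\|Q\|_{op}\bfI$ always holds.

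This makes your separate case $\lambda\ge 2\kappa^2$ unnecessary. The probabilistic step is then needed only for $\lambda\le\|Q\|_{op}$, where your control of the log factor is valid. Your two-sided reduction $(1-\|B\|_{op})^{-1}$ is what hid this. Working one-sidedly also sharpens the Bernstein step, since $\lambda_{\max}(\mathbb{E}X-X_i)\le\lambda_{\max}(\mathbb{E}X)\le 1$.

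Separately, the constants do not close as you assert. With your log factor bounded by $\log(\Gamma/\delta)$, the hypothesis gives $\kappa^2\beta/(\lambda\Gamma)\le 1/9$. Your displayed bound is then about $\tfrac{2}{27}+\tfrac{\sqrt2}{3}\approx 0.545>1/2$. So as written you obtain the lemma only with a larger absolute constant in place of $9$. This is harmless for the paper, which invokes the lemma only with $\lambda\gtrsim C_\sigma^L n^{-1}\log(n/\delta)$ and absorbs constants. It is not, however, literally the stated version.
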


The following lemma gives an estimate of the term $\big\|  f^{\text{lin,GD}}_{\bW(T)}  - \mathbf{S}_m g^{m,\text{GD}}_T \big\|_\rho^2$.
\begin{lemma}\label{prop:flin-gkm_gd}
    Let $\delta\in(0,1).$
    Suppose Assumption~\ref{ass:activation} holds.
    Let $\{\bW(k)\}_{k=1}^T$ be produced by GD iterations with $T$ iterations based on $S$. 
    Assume $\eta \le C_\sigma^{-L} $, $  \eta T \le n(C_\sigma^L\log(2n/\delta))^{-1}$ and $m\gtrsim C_\sigma^Ld(\eta T)^3\log(m/\delta)$. 
    Then, with probability at least $1 - L\exp(-Cm) - \delta$ over initialization $(\ba, \bW(0))$, for any $k=0,\ldots,T$ it holds that
    \begin{align*}
        \big\|f^{\text{lin,GD}}_{\bW(k)} - \bS_mg^{m,\text{GD}}_k\big\|_\rho \le C_\sigma^L(\eta T)^2\sqrt{\frac{\log(m/\delta)}{m}} 
    \end{align*}
    and 
    \begin{align*}
        \big\|f^{\text{lin,GD}}_{\bW(T)} - \bS_mg^{m,\text{GD}}_T\big\|_\rho^2 \le C_\sigma^L(\eta T)^4\frac{\log(m/\delta)}{m}.
    \end{align*}
\end{lemma}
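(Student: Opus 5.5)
We compare the two recursions through their values on the sample and then through the empirical covariance operator. Write $\bW(k)-\bW(0)=:\Delta_k$ and let $\phi(\bx)=\nabla f_{\bW(0)}(\bx)\in\W$, so that $f^{\text{lin,GD}}_{\bW(k)}(\bx)=\langle\phi(\bx),\Delta_k\rangle$ (using $f_{\bW(0)}=0$). Since $\H_m=\{\langle\bW,\phi(\cdot)\rangle\}$, the kernel iterate admits the representation $g^{m,\text{GD}}_k(\bx)=\langle\phi(\bx),\theta_k\rangle$ with $\theta_0=0$ and
\[
\theta_{k+1}=\theta_k-\frac{\eta}{n}\sum_{i=1}^n\big(\langle\phi(\bx_i),\theta_k\rangle-y_i\big)\phi(\bx_i).
\]
The network GD update reads $\Delta_{k+1}=\Delta_k-\frac{\eta}{n}\sum_i (f_{\bW(k)}(\bx_i)-y_i)\nabla f_{\bW(k)}(\bx_i)$. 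Setting $e_k=\Delta_k-\theta_k$ and $Q_n=\frac1n\sum_i\phi(\bx_i)\otimes\phi(\bx_i)$, we obtain
\[
e_{k+1}=(\bfI-\eta Q_n)e_k+\eta\xi_k,\qquad \xi_k=-\frac1n\sum_i\Big[(f_{\bW(k)}(\bx_i)-f^{\text{lin,GD}}_{\bW(k)}(\bx_i))\phi(\bx_i)+(f_{\bW(k)}(\bx_i)-y_i)\big(\nabla f_{\bW(k)}(\bx_i)-\phi(\bx_i)\big)\Big].
\]
Unrolling gives $e_k=\eta\sum_{t<k}(\bfI-\eta Q_n)^{k-1-t}\xi_t$.

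We bound $\xi_t$ in $\W$ using the earlier lemmas with $R=2\sqrt{\eta T}$. Lemma~\ref{lem:wk-w0} keeps the trajectory inside the ball. Lemma~\ref{prop:f-flin_gd} gives the uniform bound $|f-f^{\text{lin}}|\le C_\sigma^L(\eta T)^{3/2}\sqrt{\log(m/\delta)/m}$, and Lemma~\ref{lem:Hessian} gives $\|\phi\|_2\le C_\sigma^L$ together with $\|\nabla f_{\bW(k)}-\phi\|_2\le C_\sigma^L(R+1)\sqrt{\log(m/\delta)/m}\,\|\Delta_k\|_2\lesssim C_\sigma^L\eta T\sqrt{\log(m/\delta)/m}$. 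By monotonicity of $\L_S$ and Cauchy--Schwarz, $\frac1n\sum_i|f_{\bW(k)}(\bx_i)-y_i|\le1$. Altogether $\|\xi_t\|_2\le C_\sigma^L(\eta T)^{3/2}\sqrt{\log(m/\delta)/m}$ (the $\eta T$ term is dominated since $\eta T\ge1$). Since $\eta\le C_\sigma^{-L}$ forces $\|\eta Q_n\|_{op}\le1$, we have $\|\bfI-\eta Q_n\|_{op}\le1$, and hence $\|e_k\|_2\le \eta k\max_t\|\xi_t\|_2\le C_\sigma^L(\eta T)^{5/2}\sqrt{\log(m/\delta)/m}$.

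This crude route loses a factor $\sqrt{\eta T}$, so the main step is converting the parameter error into the $\|\cdot\|_\rho$ norm more cleverly. We have $\|f^{\text{lin}}_{\bW(k)}-\bS_mg^{m}_k\|_\rho^2=\langle e_k,Qe_k\rangle$ with $Q=\E_{\bx}[\phi\otimes\phi]$. To get $(\eta T)^2$ we insert $(Q+\lambda\bfI)^{1/2}(Q_n+\lambda\bfI)^{-1/2}$ with $\lambda=1/(\eta T)$. Lemma~\ref{lemma:<2}, applied conditionally on the initialization with $\kappa=C_\sigma^L$ and $\Gamma=n$, is valid precisely because $\eta T\le n(C_\sigma^L\log(2n/\delta))^{-1}$, and shows this operator has norm at most $\sqrt2$. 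We are then left to bound
\[
\|(Q_n+\lambda\bfI)^{1/2}e_k\|_2\le \eta\sum_{t<k}\big\|(Q_n+\lambda)^{1/2}(\bfI-\eta Q_n)^{k-1-t}\big\|_{op}\|\xi_t\|_2 .
\]
The spectral estimate $\sup_{u\in[0,1/\eta]}u^{1/2}(1-\eta u)^j\lesssim(\eta j)^{-1/2}$ gives $\eta\sum_{j<k}\big((\eta j)^{-1/2}+\lambda^{1/2}\big)\lesssim\sqrt{\eta T}$. Hence $\|f^{\text{lin}}-\bS_mg^m\|_\rho\lesssim\sqrt{\eta T}\,C_\sigma^L(\eta T)^{3/2}\sqrt{\log(m/\delta)/m}=C_\sigma^L(\eta T)^2\sqrt{\log(m/\delta)/m}$, and squaring yields the second claim. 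The probability is a union bound over Lemmas~\ref{lemma:oprt_norm}, \ref{lem:Hessian}, \ref{lem:wk-w0} and \ref{lemma:<2}, with $\delta$ rescaled. The delicate point is the filter-factor estimate combined with the operator comparison; the rest is bookkeeping.
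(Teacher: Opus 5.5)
Your proposal is correct and follows the paper's proof essentially step for step. Your parameter-space error $e_k$ and covariance $Q_n$ on $\W$ are the feature-map images of the paper's $F_k$ and $\widehat{\bSigma}_m$ on $\H_m$, your $\eta\xi_k$ is exactly the paper's $\Delta(k)$, and the remaining steps are the same: comparing $(Q+\lambda\bfI)^{1/2}$ with $(Q_n+\lambda\bfI)^{1/2}$ via Lemma~\ref{lemma:<2} at $\lambda=(\eta T)^{-1}$, then summing the filter factors to get $\sqrt{\eta T}$. The only detail to tidy is that the $j=0$ filter term should be bounded by $\eta^{-1/2}$ instead of $(\eta j)^{-1/2}$, which does not change the $\sqrt{\eta T}$ total.
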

\begin{proof}
Let $F_k = f^{\text{lin,GD}}_{\bW(k) } - g_k^{m,\text{GD}} ,\ \epsilon_k = f_{\bW(k+1)}^{\text{lin,GD}} - f_{\bW(k)}^{\text{lin,GD}} + \frac{\eta }{n}  \sum_{i=1}^n (f_{\bW(k) }^{\text{lin,GD}}(\bx_i) - y_i)K^m_{\bx_i} \in\H_m$ for any $k\in[T-1]$.
We define the self-adjoint positive operator $\widehat{\bSigma}_m = \frac{1}{n}\sum_{i=1}^n K^m_{\bx_i}\otimes K^m_{\bx_i}:\H_m\to\H_m$. 
From the update rule of $ g_{k}^{m,\text{GD}}$ (see \eqref{eq:GD-Km}), it holds that
\begin{align}\label{eq:recursive_FT}
    F_{k+1} &= \Big(f_{\bW(k)}^{\text{lin,GD}}  -  \frac{\eta }{n}  \sum_{i=1}^n (f_{\bW(k) }^{\text{lin,GD}}(\bx_i)  - y_i)K^m_{\bx_i}   + \epsilon_k\Big) - \Big(g^{m,\text{GD}}_k  -  \frac{\eta }{n}  \sum_{i=1}^n (g^{m,\text{GD}}_{ k }(\bx_i) -  y_i)K^m_{\bx_i}\Big) \nonumber\\
        &= \big(f_{\bW(k)}^{\text{lin,GD}} - g^{m,\text{GD}}_k\big) - \frac{\eta}{n}\sum_{i=1}^n\big(f_{\bW(k)}^{\text{lin,GD}}(\bx_i) - g^{m,\text{GD}}_k(\bx_i)\big)K^m_{\bx_i} + \epsilon_k \nonumber\\
        &= F_k - \frac{\eta }{n}  \sum_{i=1}^n \langle F_k,K^m_{\bx_i}\rangle_{\H_m} K^m_{\bx_i} + \epsilon_k = F_k - \frac{\eta }{n}  \sum_{i=1}^n  K^m_{\bx_i}\otimes K^m_{\bx_i} F_k + \epsilon_k = \big(\mathbf{I} - \eta \widehat{\bSigma}_m\big)F_k  + \epsilon_k, 
\end{align}
where the third equality follows from the fact $F_k = f^{\text{lin,GD}}_{\bW(k) } - g_k^{m,\text{GD}} \in \H_m$ and the reproducing kernel property that $f_{\bW(k)}^{\text{lin,GD}}(\bx_i) - g^{m,\text{GD}}_k(\bx_i) = \langle f_{\bW(k)}^{\text{lin,GD}}- g^{m,\text{GD}}_k,K^m_{\bx_i}\rangle_{\H_m} = \langle F_k,K^m_{\bx_i}\rangle_{\H_m}$.

Applying \eqref{eq:recursive_FT} recursively, we get
\begin{equation}\label{eq:recursive-F}
    F_{k+1}= \sum_{s=0}^{k}\big(\mathbf{I} - \eta \widehat{\bSigma}_m\big)^s\epsilon_{k-s}.
\end{equation}

Define the expectation of $\widehat{\bSigma}_m$ by $\bSigma_m := \ebb[\widehat{\bSigma}_m] = \int_\X K^m_\bx\otimes K^m_\bx d\rho_\bx(\bx):\H_m\to\H_m$.
Mercer's Theorem \citep{steinwart2008support} implies $\|\bS_mf\|_\rho = \|\bSigma_m^{\frac{1}{2}}f\|_{\H_m}$ for any $f\in \H_m$.
From Lemma~\ref{lem:Hessian} we know $\|K^m_{\bx_i}\|_{\H_m} = \sqrt{K^m(\bx_i,\bx_i)} \le \sqrt{\|K^m\|_\infty} \le \sup_{\bx\in\X}\|\nabla f_{\bW(0)}(\bx)\|_2 \le C_\sigma^L$.
Therefore, Lemma \ref{lemma:<2}  with $\zeta_i = K^m_{\bx_i}$, $\Gamma = n$ and $\kappa = C_\sigma^L$ yields $\|(\bSigma_m+\lambda \mathbf{I})^{\frac{1}{2}}(\widehat{\bSigma}_m+\lambda \mathbf{I})^{-\frac{1}{2}}\|_{op}\le 2$ with probability at least $1-\delta/2$ over the sampling if $\lambda > \frac{C_\sigma^L}{n}\log(\frac{2n}{\delta})$.
Then, according to \eqref{eq:recursive-F}, we get
\begin{align}\label{eq:bound_h_T}
    &\|\bS_mF_{k}\|_{\rho} = \big\|\bSigma_m^{\frac{1}{2}}F_k\big\|_{\H_m} \le \big\|(\bSigma_m+\lambda \mathbf{I})^{\frac{1}{2}}F_k\big\|_{\H_m}\nonumber\nonumber\\
    &\le \big\|(\bSigma_m+\lambda \mathbf{I})^{\frac{1}{2}}(\widehat{\bSigma}_m+\lambda \mathbf{I})^{-\frac{1}{2}}\big\|_{op}\big\|(\widehat{\bSigma}_m+\lambda \mathbf{I})^{\frac{1}{2}}F_k\big\|_{\H_m}
    \le 2\big\|\widehat{\bSigma}_m^{\frac{1}{2}}F_k\big\|_{\H_m} + 2\sqrt{\lambda}\big\|F_k\big\|_{\H_m}\nonumber\\
    &=2\eta^{-\frac{1}{2}}\bigg\|\sum_{s=0}^{k-1}\big(\eta \widehat{\bSigma}_m\big)^\frac{1}{2}\big(\mathbf{I} - \eta \widehat{\bSigma}_m\big)^s\epsilon_{k-s-1}\bigg\|_{\H_m} + 2\sqrt{\lambda}\bigg\|\sum_{s=0}^{k-1}\big(\mathbf{I} - \eta \widehat{\bSigma}_m\big)^s\epsilon_{k-s-1}\bigg\|_{\H_m}\nonumber\\
    &\le 2\eta^{-\frac{1}{2}}\sum_{s=0}^{k-1}\Big\|\big(\eta \widehat{\bSigma}_m\big)^\frac{1}{2}\big(\mathbf{I} - \eta \widehat{\bSigma}_m\big)^s\Big\|_{op}\big\|\epsilon_{k-s-1}\big\|_{\H_m}+2\sqrt{\lambda}\sum_{s=0}^{k-1}\Big\|\big(\mathbf{I} - \eta \widehat{\bSigma}_m\big)^s\Big\|_{op}\big\|\epsilon_{k-s-1}\big\|_{\H_m}.
\end{align}

For any $a\in[0,1)$ and any $s\in\mathbb{N}$, it can be easily computed that $\sup_{t\in[0,1]}t^a(1-t)^s \le \big(\frac{a}{a+s}\big)^a$.
Here, we take notation $0^0 = 1$.
From Lemma \ref{lem:Hessian} and $\eta \le C_\sigma^{-L}$, we know $\eta\|\widehat{\bSigma}_m\|_{op} \le \frac{\eta}{n}\sum_{j=1}^n\|K^m_{\bx_i}\otimes K^m_{\bx_i}\|_{op} = \frac{\eta}{n}\sum_{j=1}^n\|K^m_{\bx_i}\|^2_{\H_m} \le \eta \|K^m\|_\infty \le \eta \sup_{\bx\in\X}\|\nabla f_{\bW(0)}(\bx)\|_2^2 \le \eta C_\sigma^L \le 1$. Then, there holds
\begin{align*}
    &\sum_{s=0}^{k-1}\big\|\big(\eta \widehat{\bSigma}_m\big)^a\big(\mathbf{I} - \eta \widehat{\bSigma}_m\big)^s\big\|_{op} \le \sum_{s=0}^{k-1}\sup_{t\in[0,1]}t^a(1-t)^s \le \sum_{s=0}^{k-1}\Big(\frac{a}{a+s}\Big)^a= 1 + a^a\sum_{s=1}^{k-1}\Big(\frac{1}{a+s}\Big)^a\\
    &\le1 +a^a\sum_{s=1}^{k-1}\int_{s-1}^{s}\Big(\frac{1}{a+x}\Big)^adx = 1 +a^a\int_0^{k-1}\Big(\frac{1}{a+x}\Big)^adx \\
    &\le 1 + \frac{a^a}{1-a}\big((k+a-1)^{1-a} - a^{1-a}\big) \le 1 + \frac{(k+a-1)^{1-a}}{1-a}.
\end{align*}
Combining \eqref{eq:bound_h_T} and the above inequality with $a=\frac{1}{2}$ and $a=0$, respectively,  we have
\begin{align}\label{eq:estimate_F_T}
    \big\|\bS_mF_k\big\|_\rho & \le \bigg(2\eta^{-\frac{1}{2}}\!\sum_{s=0}^{k-1}\Big\|\big(\eta \widehat{\bSigma}_m\big)^\frac{1}{2}\big(\mathbf{I} - \eta \widehat{\bSigma}_m\big)^s\Big\|_{op}\!\!+ 2\sqrt{\lambda}\sum_{s=0}^{k-1}\Big\|\big(\mathbf{I} - \eta \widehat{\bSigma}_m\big)^s\Big\|_{op}\bigg)\!\max_{s\in[k-1]}\|\epsilon_{s}\|_{\H_m}\nonumber\\
    &\le \Big(2\eta^{-\frac{1}{2}}\big(1 + 2\sqrt{k}\big) + 2\sqrt{\lambda} k \Big)\max_{s\in[k-1]}\|\epsilon_{s}\|_{\H_m}.
\end{align}

It remains to  estimate $\|\epsilon_k\|_{\H_m}$. From the definition of $f^{\text{lin,GD}}_{\bW }$ and the update rule of GD \eqref{eq:update-GD}, there holds
    \begin{align*}
        \epsilon_k(\bx) &= f^{\text{lin,GD}}_{\bW(k+1) }(\bx)  - f^{\text{lin,GD}}_{\bW(k)} (\bx) + \frac{\eta}{n}\sum_{i=1}^n\big(f^{\text{lin,GD}}_{\bW(k)} (\bx_i) - y_i\big)K^m_{\bx_i}(\bx)\nonumber\\
        &= \Big\langle \nabla f_{\bW(0)}(\bx), \bW(k+1)- \bW(k) \Big\rangle + \frac{\eta}{n}\sum_{i=1}^n\big(f^{\text{lin,GD}}_{\bW(k)} (\bx_i) - y_i\big)\Big\langle \nabla f_{\bW(0)}(\bx_i), \nabla f_{\bW(0)}(\bx) \Big\rangle \nonumber\\
        & = \frac{\eta}{n}\sum_{i=1}^n\bigg[ - \big(f^{\text{GD}}_{\bW(k)}(\bx_i) -  y_i\big)\Big\langle \nabla f^{\text{GD}}_{\bW(k)}(\bx_i), \nabla f_{\bW(0)}(\bx) \Big\rangle + \big(f^{\text{lin,GD}}_{\bW(k)}(\bx_i)  -  y_i\big) \Big\langle \nabla f_{\bW(0)}(\bx_i), \nabla f_{\bW(0)}(\bx) \Big\rangle\bigg] \nonumber\\
        &= \bigg\langle \frac{\eta}{n}\sum_{i=1}^n  \Big[\big(y_i - f^{\text{GD}}_{\bW(k)}(\bx_i)\big)\nabla f^{\text{GD}}_{\bW(k)}(\bx_i) + \big(f^{\text{lin,GD}}_{\bW(k)}(\bx_i) - y_i\big)\nabla f_{\bW(0)}(\bx_i)\Big] , \nabla f_{\bW(0)}(\bx)\bigg\rangle \nonumber\\
        & =: \Big\langle \Delta(k), \nabla f_{\bW(0)}(\bx) \Big\rangle,
    \end{align*}
    where the second equality is due to $K^m_{\bx_i}(\bx) = K^m(\bx_i,\bx)$, the third equality is according to the update rule \eqref{eq:update-GD}, and in the last equality we define
    \begin{align*}
         \Delta(k) : &= \frac{\eta}{n}\sum_{i=1}^n  \Big[\big(y_i - f^{\text{GD}}_{\bW(k)}(\bx_i)\big)\nabla f^{\text{GD}}_{\bW(k)}(\bx_i) + \big(f^{\text{lin,GD}}_{\bW(k)}(\bx_i) - y_i\big)\nabla f_{\bW(0)}(\bx_i)\Big]\\
        &= \frac{\eta}{n} \sum_{i=1}^n   \Big[\big(y_i  -  f^{\text{GD}}_{\bW(k)}(\bx_i)\big)\Big(\nabla f^{\text{GD}}_{\bW(k)}(\bx_i)  - \nabla f_{\bW(0)}(\bx_i)\Big) +  \big(f^{\text{lin,GD}}_{\bW(k)}(\bx_i)  -  f^{\text{GD}}_{\bW(k)}(\bx_i)\big)\nabla f_{\bW(0)}(\bx_i)\Big].
    \end{align*}
    
    Note that for any $f\in \H_m$,  $
    \|f\|_{\H_m} = \inf\big\{\|\bW\|_2 : \bW \in \W \text{ with } f(\bx) = \big\langle \bW, \nabla f_{\bW(0)}(\bx)\big\rangle\big\}.$ We control $\|\epsilon_k\|_{\H_m}$ as follows
    \begin{align}\label{eq:epsilon1}
        &\|\epsilon_k\|_{\H_m} \le \|\Delta(k)\|_2\nonumber\\
        &\le \frac{\eta}{n}\sum_{i=1}^n  \Big[\big|y_i - f^{\text{GD}}_{\bW(k)}(\bx_i)\big|\Big\|\nabla f^{\text{GD}}_{\bW(k)}(\bx_i) - \nabla f_{\bW(0)}(\bx_i)\Big\|_2 + \big|f^{\text{lin,GD}}_{\bW(k)}(\bx_i) - f^{\text{GD}}_{\bW(k)}(\bx_i)\big| \Big\|\nabla f_{\bW(0)}(\bx_i)\Big\|_2\Big] \nonumber\\
        &\le \eta\Big( \sqrt{2\L_S(\bW(k))} \sup_{\bx\in\X}\Big\|\nabla f^{\text{GD}}_{\bW(k)}(\bx) - \nabla f_{\bW(0)}(\bx)\Big\|_2 + \frac{1}{n}\sum_{i=1}^n\big|f^{\text{lin,GD}}_{\bW(k)}(\bx_i) - f^{\text{GD}}_{\bW(k)}(\bx_i)\big| \Big\|\nabla f_{\bW(0)}(\bx_i)\Big\|_2\Big) \nonumber\\
        &\le \eta \Big( \sqrt{2\L_S(\bW(k))} \sup_{\bx\in\X}\Big\|\nabla f^{\text{GD}}_{\bW(k)}(\bx) - \nabla f_{\bW(0)}(\bx)\Big\|_2 + \big\|f^{\text{lin,GD}}_{\bW(k)} - f^{\text{GD}}_{\bW(k)}\big\|_\infty\sup_{\bx\in\X}\Big\|\nabla f_{\bW(0)}(\bx)\Big\|_2\Big),
    \end{align}
    where in the last second inequality we have used Cauchy-Schwarz inequality.

    From Lemma \ref{lem:Hessian}, we know $\sup_{\bx\in\X}\|\nabla f_{\bW(0)}(\bx)\|_2 \le C_\sigma^L$.
    Note we choose $R = 2\sqrt{\eta T}$.
    From Lemma \ref{lem:wk-w0} we know $\L_S(\bW(k)) \le \L_S(\bW(0)) \le 1/2$
    and $\|\bW(k)  - \bW(0)\|_2 \le R$ for any $k\in[T]$.
    Further, from Lemma \ref{lem:Hessian} with $R = 2\sqrt{\eta T}$ and the fundamental theorem of calculus, it holds that 
    \begin{align*}
        &\big\|\nabla f^{\text{GD}}_{\bW(k)}(\bx) - \nabla f_{\bW(0)}(\bx)\big\|_2  = \Big\|\int_0^1 \nabla^2 f_{\bW(0) + t(\bW(k) - \bW(0))}(\bx) \big(\bW(k) - \bW(0)\big) dt\Big\|_2\\
        &\le \sup_{\|\bW-\bW(0)\|_2\le R}\big\|\nabla^2 f_{\bW}(\bx)\big\|_{op}\big\|\bW(k)-\bW(0)\big\|_2 \le C_\sigma^L\sqrt{\frac{(\eta T)^2\log(m/\delta)}{m}}
    \end{align*}
    According to Lemma~\ref{prop:f-flin_gd}, we know $\|f^{\text{GD}}_{\bW(k)} - f_{\bW(k)}^{\text{lin,GD}}\|_\infty \le  C_\sigma^L(\eta T)^{\frac{3}{2}}\sqrt{\log(m/\delta)/m}$.
    Plugging the above observations into \eqref{eq:epsilon1}, we know with probability at least $1 - CL\exp(-m/2) - \delta/2$ over initialization $(\ba,\bW(0))$, there holds
    \begin{align*}
     \|\epsilon_k\|_{\H_m}\le  C_\sigma^L\eta^\frac{5}{2} T^{\frac{3}{2}}\sqrt{\frac{\log(m/\delta)}{m}}.
    \end{align*}
    
Putting the estimate of $\|\epsilon_s\|_{\H_m}$ back into \eqref{eq:estimate_F_T} and setting $\lambda = (\eta T)^{-1}$, with a little abuse of notation (we regard $f^{\text{lin,GD}}_{\bW(k)}$ as a function in $\mathcal{L}^2_{\rho_\bx}$ in the following first term), the following inequality holds with probability at least $1-CL\exp(-m/2) - \delta$ over the initialization $(\ba,\bW(0))$ and sampling
\begin{align*}
    \|f^{\text{lin,GD}}_{\bW(k) } - \bS_mg_k^{m,\text{GD}}\|_\rho&=\|\bS_mF_k\|_\rho \le 7\eta^{-\frac{1}{2}}\sqrt{T}\max_{s\in[T-1]}\|\epsilon_s\|_{\H_m} \le C_\sigma^L(\eta T)^2\sqrt{\frac{\log(m/\delta)}{m}}.
\end{align*}
This completes the proof of the lemma.
\end{proof}

\subsubsection{(3) Control of $\|\bS_mg^{m,\text{GD}}_T - \bS g_T^{\text{GD}} \|_\rho^2$}\label{app:GD-term3}

Based on the kernel GD defined in \eqref{eq:kernel_GD_HK} with kernel $K$, for any $k\in\mathbb{N}$ we can define two outputs of the kernel GD $\hat{g}_k$ and $\tilde{g}_k$, which follow the same update rule as \eqref{eq:kernel_GD_HK} with kernel $K$ replaced by $\widehat{K}$ and $\widetilde{K}$, respectively.
i.e., $\hat{g}_{k+1} = \hat{g}_k - \frac{\eta}{n}\sum\nolimits_{i=1}^n(\hat{g}_k(\bx_i)  -  y_i) \widehat{K}_{\bx_i}, \tilde{g}_{k+1} = \tilde{g}_k - \frac{\eta}{n}\sum\nolimits_{i=1}^n(\tilde{g}_k(\bx_i)  -  y_i) \widetilde{K}_{\bx_i}$ with $\hat{g}_0 = \tilde{g}_0 = 0.$
We denote $\hat{\bg}_k = (\hat{g}_k(\bx_1),\ldots,\hat{g}_k(\bx_n))^\top\in\R^n$, $\tilde{\bg}_k = (\tilde{g}_k(\bx_1),\ldots,\tilde{g}_k(\bx_n))^\top\in\R^n$ and $\by = (y_1,\ldots,y_n)^\top \in\R^n$.
We denote $\widehat{\bK} = (\widehat{K}(\bx_i,\bx_j))_{i,j=1}^n, \widetilde{\bK} = (\widetilde{K}(\bx_i,\bx_j))_{i,j=1}^n \in \R^{n\times n}$ as Gram matrices with kernels $\widehat{K}$ and $\widetilde{K}$, respectively.

\begin{lemma}\label{lem:two_kernel_GD}\label{lem:key-finding}
    For $k\in[T]$, let $\hat{g}_k$ and $\tilde{g}_k$ be the outputs of Kernel GD with kernels $\widehat{K}$ and $\widetilde{K}$, respectively.
    Suppose $\|\widehat{K}\|_\infty, \|\widetilde{K}\|_\infty\le F$ for some $F \ge 1$.
    Assume $\eta T\ge 1$ and $\eta \le F^{-1}$.
    Then, for all $k\in[T]$, it holds that
    \begin{align*}
        \big\|\hat{g}_{k} - \tilde{g}_{k}\big\|_\rho \le \big\|\hat{g}_{k} - \tilde{g}_{k}\big\|_\infty \le 3F(\eta T)^2\|\widehat{K}-\widetilde{K}\|_\infty.
    \end{align*}
\end{lemma}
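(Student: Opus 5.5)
The plan is to write both iterates in closed form through their values on the sample and then compare the two representations. By induction from \eqref{eq:kernel_GD_HK}, every iterate is a finite kernel expansion,
\[
\hat g_k=\sum_{i=1}^n \hat\alpha_k^{(i)}\widehat K_{\bx_i},\qquad \hat\alpha_{k+1}=\hat\alpha_k-\frac{\eta}{n}(\hat\bg_k-\by),\qquad \hat\bg_k=\widehat\bK\hat\alpha_k .
\]
Hence $\hat\bg_{k+1}=(\mathbf I-\frac{\eta}{n}\widehat\bK)\hat\bg_k+\frac{\eta}{n}\widehat\bK\by$. The same holds with tildes. For every $\bx\in\X$ we then have the decomposition
\[
\hat g_k(\bx)-\tilde g_k(\bx)=\sum_i(\hat\alpha_k^{(i)}-\tilde\alpha_k^{(i)})\widehat K(\bx_i,\bx)+\sum_i\tilde\alpha_k^{(i)}\big(\widehat K-\widetilde K\big)(\bx_i,\bx).
\]
The first inequality in the lemma is immediate, since $\|\cdot\|_\rho\le\|\cdot\|_\infty$ because $\rho_\bx$ is a probability measure. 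So everything reduces to bounding $\|\tilde\alpha_k\|_1$ and $\|\hat\alpha_k-\tilde\alpha_k\|_1$ in $\ell_1$.

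\emph{Step 1 (coefficient size).} Because $\eta\le F^{-1}$ and the Gram entries are at most $F$ in absolute value, $\|\frac{\eta}{n}\widetilde\bK\|_{op}\le\eta F\le1$. Since $\widetilde\bK$ is PSD, the residual $\tilde\bg_k-\by=(\mathbf I-\frac{\eta}{n}\widetilde\bK)^k(-\by)$ satisfies $\|\tilde\bg_k-\by\|_2\le\|\by\|_2\le\sqrt n$. Each step therefore adds at most $\frac{\eta}{n}\|\tilde\bg_k-\by\|_1\le\eta$ to $\|\tilde\alpha\|_1$, using $\ell_1\le\sqrt n\,\ell_2$. This gives $\|\tilde\alpha_k\|_1\le\eta k\le\eta T$, and the second sum is at most $\eta T\|\widehat K-\widetilde K\|_\infty$. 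The same bound holds for $\hat\alpha_k$.

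\emph{Step 2 (coefficient difference).} This is the main step. From the recursion,
\[
\|\hat\alpha_k-\tilde\alpha_k\|_1\le\frac{\eta}{n}\sum_{s<k}\|\hat\bg_s-\tilde\bg_s\|_1\le\frac{\eta}{\sqrt n}\sum_{s<k}\|\hat\bg_s-\tilde\bg_s\|_2 .
\]
The obstacle is to avoid an exponential Gronwall blow-up, so I would not iterate crudely. Instead I compare the residuals directly. Write $\hat A=\mathbf I-\frac{\eta}{n}\widehat\bK$ and $\tilde A=\mathbf I-\frac{\eta}{n}\widetilde\bK$, both contractions with operator norm at most $1$ (PSD and $\eta F\le1$). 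Then
\[
\hat A^s-\tilde A^s=\sum_{j<s}\hat A^{j}(\hat A-\tilde A)\tilde A^{s-1-j}.
\]
Together with $\|\hat A-\tilde A\|_{op}\le\frac{\eta}{n}\|\widehat\bK-\widetilde\bK\|_{op}\le\eta\|\widehat K-\widetilde K\|_\infty$ (Frobenius bound on the entries), this gives $\|\hat\bg_s-\tilde\bg_s\|_2\le s\,\eta\|\widehat K-\widetilde K\|_\infty\sqrt n$. Summing over $s<k$ yields $\|\hat\alpha_k-\tilde\alpha_k\|_1\le\frac{(\eta k)^2}{2}\|\widehat K-\widetilde K\|_\infty$.

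\emph{Conclusion.} The first sum in the decomposition is at most $F\|\hat\alpha_k-\tilde\alpha_k\|_1\le\frac F2(\eta T)^2\|\widehat K-\widetilde K\|_\infty$. Adding the bound $\eta T\|\widehat K-\widetilde K\|_\infty$ from Step 1 and using $\eta T\ge1$ and $F\ge1$, the total is at most $3F(\eta T)^2\|\widehat K-\widetilde K\|_\infty$ uniformly in $\bx$, which is the claimed bound.
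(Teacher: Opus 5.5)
Your proof is correct and follows essentially the paper's route. Your telescoping identity $\hat A^s-\tilde A^s=\sum_{j<s}\hat A^j(\hat A-\tilde A)\tilde A^{s-1-j}$, applied to $\by$, is exactly the paper's unrolled recursion for $\hat{\bg}_k-\tilde{\bg}_k$, and your coefficient decomposition is what one gets by summing the paper's one-step pointwise recursion for $\hat g_{k}(\bx)-\tilde g_{k}(\bx)$ over $k$. Your bookkeeping is also slightly tighter: you keep the factor $s$ and use $\|\tilde{\bg}_s-\by\|_2\le\sqrt n$ rather than $2\sqrt n$, so the constant $3F$ (indeed $\tfrac32 F$) follows cleanly, whereas the paper's passage from $2F\eta^2T+2\eta$ to $3F\eta^2T$ tacitly needs $F\eta T\ge 2$.
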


\begin{proof}
    We first estimate $\|\hat{\bg}_{k+1} - \tilde{\bg}_{k+1}\|_2$ for $k\in[T-1]$, which measures the distance between $\hat{g}_{k+1}$ and $\tilde{g}_{k+1}$ on the sample.
    According to the definitions of $\hat{g}_k$, $\tilde{g}_k$, the gram matrices $\widehat{\bK},\widetilde{\bK}$, and the fact $\widehat{K}_\bx(\bx') = \widehat{K}(\bx,\bx'), 
    \widetilde{K}_\bx(\bx') = \widetilde{K}(\bx,\bx')$ for any $\bx,\bx'\in\X$, it holds that
    \begin{align}\label{eq:update-Gmk-Fk}
        \hat{\bg}_{k+1} = \hat{\bg}_k - \frac{\eta}{n}\widehat{\bK}(\hat{\bg}_k-\by) \text{ and } \tilde{\bg}_{k+1} = \tilde{\bg}_k - \frac{\eta}{n}\widetilde{\bK}(\tilde{\bg}_k-\by).
    \end{align}
    Then, we have
    \begin{align}
        \hat{\bg}_{k+1} - \tilde{\bg}_{k+1} &= \hat{\bg}_k - \tilde{\bg}_k - \frac{\eta}{n}\big(\widehat{\bK}(\hat{\bg}_k-\by) - \widetilde{\bK}(\tilde{\bg}_k - \by)\big)\nonumber\\
        &= \hat{\bg}_k - \tilde{\bg}_k - \frac{\eta}{n}\big(\widehat{\bK}(\hat{\bg}_k-\tilde{\bg}_k) - (\widetilde{\bK} - \widehat{\bK})(\tilde{\bg}_k - \by)\big) = \Big(\bfI - \frac{\eta}{n}\widehat{\bK}\Big)(\hat{\bg}_k - \tilde{\bg}_k) + \frac{\eta}{n}(\widetilde{\bK} - \widehat{\bK})(\tilde{\bg}_k - \by)\nonumber.
    \end{align}
    Applying the above equality recursively, we have
    \begin{align}\label{eq:Gmk-Fk}
        \big\|\hat{\bg}_{k+1} - \tilde{\bg}_{k+1}\big\|_2 &= \Big\|\frac{\eta}{n}\sum_{s=0}^k\Big(\bfI - \frac{\eta}{n}\widehat{\bK}\Big)^s(\widetilde{\bK} -\widehat{\bK})(\tilde{\bg}_{k-s} - \by)\Big\|_2 \le \frac{\eta}{n}\sum_{s=0}^k\Big\|\bfI - \frac{\eta}{n}\widehat{\bK}\Big\|_{op}^s\|\widetilde{\bK} -\widehat{\bK}\|_{op}\|\tilde{\bg}_{k-s} - \by\|_2.
    \end{align}

    Since $\|\widehat{K}\|_\infty \le F$,
    Then, for any $\boldsymbol{\alpha} = (\alpha_1,\ldots,\alpha_n)^\top\in\R^n$ with $\|\boldsymbol{\alpha}\|_2 = 1$, there holds $\boldsymbol{\alpha}^\top\widehat{\bK}\boldsymbol{\alpha} = \|\sum_{i=1}^n\alpha_i\widehat{K}_{\bx_i}\|^2_{\H_m}\le $ $(\sum_{i=1}^n|\alpha_i|\|\widehat{K}_{\bx_i}\|_{\H_m})^2 \le (\sum_{i=1}^n|\alpha_i|\|\widehat{K}\|_{\infty}^\frac{1}{2})^2\le F(\sum_{i=1}^n|\alpha_i|)^2 \le F n$.
    This implies that $\|\widehat{\bK}\|_{op} \le F n$.
    Since $\eta \le F^{-1}$ and $\widehat{\bK}$ is PSD, we know $\|\bfI - \frac{\eta}{n}\widehat{\bK}\|_{op}\le1$.

    We can control  $\|\widehat{\bK}-\widetilde{\bK}\|_{op}$ as follows by
    \begin{align*}
        \|\widehat{\bK}-\widetilde{\bK}\|_{op} &= \sup_{\|\boldsymbol{\alpha}\|_2 = 1} |\boldsymbol{\alpha}^\top (\widehat{\bK} - \widetilde{\bK})\boldsymbol{\alpha}| = \sup_{\|\boldsymbol{\alpha}\|_2 = 1} \Big|\sum_{i,j=1}^n \alpha_i\alpha_j\big(\widehat{K}(\bx_i,\bx_j) - \widetilde{K}(\bx_i,\bx_j)\big)\Big|\\
        &\le  \|\widehat{K}-\widetilde{K}\|_\infty\sup_{\|\boldsymbol{\alpha}\|_2 = 1}\sum_{i,j=1}^n |\alpha_i\alpha_j| = \|\widehat{K}-\widetilde{K}\|_\infty\sup_{\|\boldsymbol{\alpha}\|_2 = 1}\Big(\sum_{i=1}^n |\alpha_i|\Big)\Big(\sum_{j=1}^n|\alpha_j|\Big)\\
        &\le n\|\widehat{K}-\widetilde{K}\|_\infty.
    \end{align*}    
    Further, from \eqref{eq:update-Gmk-Fk}, we know $\tilde{\bg}_{k} = (\bfI - \frac{\eta}{n}\widetilde{\bK})\tilde{\bg}_{k-1} + \frac{\eta}{n}\widetilde{\bK}\by$.
    Recursively applying this equation, we get $\tilde{\bg}_{k} = \frac{\eta}{n}\sum_{s=0}^{k-1}(\bfI - \frac{\eta}{n}\widetilde{\bK})^s\widetilde{\bK}\by$.
    Analogously to the estimate of $\|\bfI - \frac{\eta}{n}\widehat{\bK}\|_{op}$, we can show that $\|\widetilde{\bK}\|_{op} \le Fn$ and $\|\bfI - \frac{\eta}{n}\widetilde{\bK}\|_{op}\le 1$ by noting $\eta \le F^{-1}$ and $\|\widetilde{K}\|_\infty\le F$.
    Then, it holds that
    \begin{align}\label{eq:tilde_g_norm}
        \|\tilde{\bg}_k\|_2 &\le \Big\|\sum_{s=0}^{k-1}\Big(\bfI - \frac{\eta}{n}\widetilde{\bK}\Big)^s\frac{\eta}{n}\widetilde{\bK}\Big\|_{op}\|\by\|_2 \le  \sqrt{n}\sup_{t\in[0,1]}\Big|\sum_{s=0}^{k-1}(1-t)^st\Big| =\sqrt{n}\sup_{t\in[0,1]}\big(1 - (1-t)^k\big) \le \sqrt{n}.
    \end{align}
    
    Plugging the above estimates back into \eqref{eq:Gmk-Fk}, we have
    \begin{align}\label{eq:hat_g-tilde_g}
        \big\|\hat{\bg}_{k+1} - \tilde{\bg}_{k+1}\big\|_2 &\le \frac{\eta}{n}\sum_{s=0}^k\|\widehat{\bK}-\widetilde{\bK}\|_{op}\|\big(\|\tilde{\bg}_{k-s}\|_2 + \|\by\|_2\big) \le 2\eta T\sqrt{n}\|\widehat{K}-\widetilde{K}\|_\infty.
    \end{align}

    Now, we turn to control $\|\hat{\bg}_{k+1} - \tilde{\bg}_{k+1}\|_\infty$.
    For any $\bx\in\X$ and $k\in[T-1]$, we have
    \begin{align*}
        &|\hat{g}_{k+1}(\bx) - \tilde{g}_{k+1}(\bx)|\nonumber\\
        &\!=\! \Big|\hat{g}_{k}(\bx) \!-\! \tilde{g}_{k}(\bx) \!-\! \frac{\eta}{n}\!\sum_{i=1}^n\!\big[\big(\hat{g}_k(\bx_i) \!-\! \tilde{g}_k(\bx_i)\big)\widehat{K}(\bx_i,\bx) \!+\!\big(\tilde{g}_k(\bx_i) \!-\! y_i\big)\big(\widehat{K}(\bx_i,\bx) \!-\! \widetilde{K}(\bx_i,\bx)\big)\big]\Big| \nonumber\\
        &\!\le\! \big|\hat{g}_{k}(\bx) - \tilde{g}_{k}(\bx)\big| + \frac{\eta}{n}\sum_{i=1}^n\Big(\|\widehat{K}\|_\infty\big|\hat{g}_k(\bx_i) - \tilde{g}_k(\bx_i)\big| + \|\widehat{K}-\widetilde{K}\|_\infty\big|\tilde{g}_k(\bx_i) - y_i\big|\Big)\nonumber\\
        &\!\le\!\big|\hat{g}_{k}(\bx) - \tilde{g}_{k}(\bx)\big| + \frac{\eta}{\sqrt{n}}\big(\|\widehat{K}\|_\infty\|\hat{\bg}_k-\tilde{\bg}_k\|_2 + \|\widehat{K}-\widetilde{K}\|_\infty\|\tilde{\bg}_k-\by\|_2\big),
    \end{align*}
    where the last inequality used Cauchy-Schwarz inequality.
    
    Plugging  \eqref{eq:tilde_g_norm} and \eqref{eq:hat_g-tilde_g} back into the above inequality and noting $\bx\in\X$ is arbitrary, we have
    \begin{align*}
        \|\hat{g}_{k+1} - \tilde{g}_{k+1}\|_\infty \le \|\hat{g}_{k} - \tilde{g}_{k}\|_\infty + 3F\eta^2T\|\widehat{K}-\widetilde{K}\|_\infty.
    \end{align*}
    Applying the above inequality recursively and noting that $\hat{g}_{0} = \tilde g_{0}$, we have
    \begin{align*}
        \|\hat{g}_{k+1} - \tilde{g}_{k+1}\|_\infty \le 3F(\eta T)^2\|\widehat{K}-\widetilde{K}\|_\infty.
    \end{align*}
By further noting that
    \begin{align*}
        \big\|\hat{g}_k - \tilde{g}_k\big\|_\rho^2 &=  \int_\X |\hat{g}_k(\bx) - \tilde{g}_k(\bx)|^2 d\rho_\X(\bx)  \le \|\hat{g}_k - \tilde{g}_k\|_\infty^2.
    \end{align*}
  The proof of this lemma is complete.
\end{proof}

To control this term, we establish a uniform concentration result for the NTK of a DNN under non-symmetric Gaussian initialization.

More precisely, for every $\ell \in [L]$, the entries of
$\mathbf{W}^{\ell}(0)$ and the output layer $\ba$ are independently drawn from
$\mathcal{N}(0,1)$.
Under this initialization, the layerwise gradients $\nabla_{\ell}f_{\mathbf{W}(0)}(\mathbf{x})$
are generally nonzero also for $\ell\in[L-1]$.
We define the finite-width NTK at each layer $\ell\in[L]$ as
\begin{align}
    K^{m,\ell}(\bx, \bx') &= \big\langle \nabla_\ell f_{\bW(0)}(\bx), \nabla_\ell f_{\bW(0)}(\bx')\big\rangle  = \langle o^{\ell-1}_0(\bx),o^{\ell-1}_0(\bx')\rangle \, \ba^\top \bV^\ell_{L,0}(\bx)^\top\bV^\ell_{L,0}(\bx')\ba .\nonumber
\end{align}
In the following theorem, we show that $K^{m,\ell}$ converges to an infinite-width NTK $K^\ell:\X\times\X\to\R$ uniformly over $\X\times\X$, which is given by
\begin{align*}
    K^\ell(\bx,\bx') = c_\sigma\ebb[\sigma(U^{\ell-1}(\bx))\sigma(U^{\ell-1}(\bx'))] \prod_{h=\ell}^{L}q^h(\bx,\bx'),
\end{align*}
where the definition of $U^\ell(\bx)$ and $q^h(\bx)$ can be found in Appendix~\ref{appde:problem}.

The result is technically nontrivial and achieves a substantially sharper dependence on the network width than previously available bounds on DNN with ReLU activation.
Since its proof is lengthy and technically involved, we defer both the proof and a detailed comparison with existing NTK concentration results to Appendix~\ref{app:NTK}.
\begin{theorem}\label{thm:concentration}
    Let $\delta\in(0,1)$. 
 Suppose Assumption~\ref{ass:activation} holds, and $m \gtrsim C_\sigma^L d\log({m}/{\delta})$. 
    With probability at least $1-L^2\exp(-C_\sigma \log^2(m)) - \delta$ over initialization $(\ba,\bW(0))$, for all $\ell\in[L]$, it holds that
    \begin{align*}
        \|K^{m,\ell} - K^\ell \|_\infty \lesssim C_\sigma^L\sqrt{\frac{d\log^2(m)}{m}}.
    \end{align*}
\end{theorem}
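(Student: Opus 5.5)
The kernel $K^{m,\ell}$ factorizes as a product of two terms, so I plan to prove a concentration bound for each factor separately and then combine them. Specifically,
\[
K^{m,\ell}(\bx,\bx')=\underbrace{\langle o^{\ell-1}_0(\bx),o^{\ell-1}_0(\bx')\rangle}_{=:A^{\ell-1}(\bx,\bx')}\cdot \underbrace{\ba^\top \bV^\ell_{L,0}(\bx)^\top\bV^\ell_{L,0}(\bx')\ba}_{=:B^{\ell}(\bx,\bx')} .
\]
The targets are $A^{\ell-1}\to c_\sigma\ebb[\sigma(U^{\ell-1}(\bx))\sigma(U^{\ell-1}(\bx'))]$ and $B^\ell\to\prod_{h=\ell}^L q^h$. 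Both factors are uniformly bounded by $C_\sigma^L$ with high probability; this follows from the arguments in Lemma~\ref{lem:Hessian} together with Lemma~\ref{lemma:oprt_norm}. Given that, $|K^{m,\ell}-K^\ell|\le |A-\bar A|\,|B|+|\bar A|\,|B-\bar B|$, so it suffices to control each factor uniformly at rate $C_\sigma^L\sqrt{d\log^2(m)/m}$ and take a union bound over the $L$ layers. This union bound accounts for the $L^2$ factor in the probability.

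\textbf{Forward part.} I will argue by induction on $\ell$, conditioning on $\bW^1(0),\dots,\bW^{\ell-1}(0)$. Given $o^{\ell-1}_0$, the quantity $A^\ell(\bx,\bx')=\frac{c_\sigma}{m}\sum_r\sigma(\bw_r^{\ell\top}o^{\ell-1}_0(\bx))\sigma(\bw_r^{\ell\top}o^{\ell-1}_0(\bx'))$ is an average of $m$ i.i.d.\ bounded terms. Its conditional mean is the Gaussian expectation $c_\sigma\ebb[\sigma(u)\sigma(u')]$, where $(u,u')$ has covariance given by the Gram matrix of $(o^{\ell-1}_0(\bx),o^{\ell-1}_0(\bx'))$. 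Hoeffding's inequality gives deviation $\sqrt{\log(1/\delta')/m}$ at a fixed pair. The map from a $2\times2$ covariance to $\ebb[\sigma(u)\sigma(u')]$ is Lipschitz for $C^2$ bounded $\sigma$, which I will show via Gaussian interpolation, since the derivative in the covariance entries involves $\ebb[\sigma'\sigma']$ or $\ebb[\sigma\sigma'']$. Hence the error $|A^{\ell-1}-\bar A^{\ell-1}|$ propagates with a factor $C_\sigma$ per layer, giving the $C_\sigma^L$ dependence. To pass to uniformity, I will use an $\epsilon$-net of $\mathbb S^{d-1}\times\mathbb S^{d-1}$ of size $(3/\epsilon)^{2d}$ with $\epsilon=1/m$. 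Lemma~\ref{lem:tildeo-o} (with $\tildeW=\bW$ replaced by input perturbation) gives that $o^\ell_0$ is $C_\sigma^\ell$-Lipschitz in $\bx$ on the operator-norm event. The net then contributes $\log$ of the cardinality, namely $d\log m$, which yields the $\sqrt{d\log^2 m/m}$ form once $\delta'$ is set to $\exp(-C\log^2m)$.

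\textbf{Backward part.} This is the main obstacle, because $\bV^\ell_{L,0}$ involves products of the dependent matrices $\bD^h_0\bW^h(0)$. My plan is to write $B^\ell$ via the backward vectors $\mathbf b^h(\bx)=\sqrt{c_\sigma/m}\,\bD^h_0(\bx)\bW^{h+1}(0)^\top\mathbf b^{h+1}(\bx)$, with $\mathbf b^L=\sqrt{c_\sigma/m}\,\bD^L_0\ba$, and then peel off one layer at a time starting from the output. The step $\langle \mathbf b^h(\bx),\mathbf b^h(\bx')\rangle\approx q^h(\bx,\bx')\langle \mathbf b^{h+1}(\bx),\mathbf b^{h+1}(\bx')\rangle$ is the crux. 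The difficulty is that $\bW^{h+1}(0)$ appears in both $\bD^{h+1}$ (forward) and the transpose (backward). I will handle this by the standard Gaussian conditioning trick: conditional on the forward values $\bW^{h+1}(0)o^h_0$ at the two inputs, decompose $\bW^{h+1}(0)=\bW^{h+1}(0)\Pi+\widetilde{\bW}(\mathbf I-\Pi)$, where $\Pi$ projects onto $\mathrm{span}\{o^h_0(\bx),o^h_0(\bx')\}$ and $\widetilde{\bW}$ is an independent copy. The projected part has rank at most $2$ and contributes only $O(\sqrt{\log m/m})$. For the fresh part, $\bW^{h+1\top}\mathbf b^{h+1}$ is Gaussian with covariance $\|\mathbf b^{h+1}\|^2$-scaled identity, so a Hanson--Wright or Bernstein bound on $\frac{c_\sigma}{m}\sum_r\sigma'(\cdot)\sigma'(\cdot)g_r g_r'$ gives concentration around $q^h$ times the inner product. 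In that bound, $q^h$ is evaluated at the empirical covariance, and I replace it by the limiting one using the forward part. Uniformity over inputs uses the same net plus Lipschitz continuity of $\bV$ in $\bx$, which I will obtain analogously to Lemma~\ref{lem:tildeVa-Va}; here $\|\ba\|_\infty\lesssim\sqrt{\log m}$ enters, and this is the source of the second log factor. Errors accumulate additively over at most $L$ layers with multiplicative $C_\sigma$ constants, which gives the stated bound.
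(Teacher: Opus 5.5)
Your forward part and the final product decomposition are essentially the paper's: pointwise concentration conditional on earlier layers, Lipschitz dependence of the Gaussian expectation on the $2\times 2$ covariance, an input net, and Lipschitzness of $o^\ell_0$ in $\bx$. The backward part takes a genuinely different route, and as written it has a gap at the projected term.

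\textbf{The gap.} You assert that the rank-$2$ part $\Pi\,\bW^{h+1}(0)^\top\mathbf{b}^{h+1}(\bx)$ contributes only $O(\sqrt{\log m/m})$ because it has low rank. Low rank is not what matters. The coordinates of this vector are $\langle \bW^{h+1}(0)u,\mathbf{b}^{h+1}(\bx)\rangle$ for unit $u\in\mathrm{span}\{o^h_0(\bx),o^h_0(\bx')\}$. Here $\|\bW^{h+1}(0)u\|_2\asymp\sqrt m$ and $\|\mathbf{b}^{h+1}(\bx)\|_2=O(1)$, and the two are dependent through $\bD^{h+1}_0$ and all higher forward activations. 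So the only a priori bound is $O(\sqrt m)$, which makes the projected contribution to $\langle\mathbf{b}^h(\bx),\mathbf{b}^h(\bx')\rangle$ of order one.

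Getting $O(\sqrt{\log m})$ takes several further steps:
\begin{itemize}
\item expand $\mathbf{b}^{h+1}=\sqrt{c_\sigma/m}\,\bD^{h+1}_0\bW^{h+2}(0)^\top\mathbf{b}^{h+2}$ and split $\bW^{h+2}(0)$ the same way;
\item observe that the fresh part gives a Gaussian of variance $O(1)$, because $\bD^{h+1}_0\bW^{h+1}(0)u$ is fixed by the conditioning;
\item bound the projected part at layer $h+2$ recursively;
\item close the recursion at the top using the independence of $\ba$.
\end{itemize}
This needs conditioning on the entire forward pass at both inputs simultaneously, not just on $\bW^{h+1}(0)o^h_0$, plus a separate induction over layers $h+1,\dots,L$. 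This is the standard Gaussian-conditioning argument for NTK convergence. It can be done, but it is the real work in your route, and it is missing.

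\textbf{How the paper avoids it.} The paper never conditions. It first removes $\ba$ with Hanson--Wright, replacing $\ba^\top\bV^\ell_{L,0}(\bx)^\top\bV^\ell_{L,0}(\bx')\ba$ by its trace; this is legitimate because $\ba$ is independent of everything else. It then recurses on the trace in the opposite order to yours, from layer $\ell$ outward. With $\bA^\ell_{\ell_1}=\bV^\ell_{\ell_1,0}(\bx)^\top\bV^\ell_{\ell_1,0}(\bx')$,
\[
\mathrm{tr}\big(\bA^\ell_{\ell_1+1}\big)=\frac{c_\sigma}{m}\sum_{r=1}^m\sigma'\big(\bw^{\ell_1+1}_r(0)^\top o^{\ell_1}_0(\bx)\big)\,\sigma'\big(\bw^{\ell_1+1}_r(0)^\top o^{\ell_1}_0(\bx')\big)\,\bw^{\ell_1+1}_r(0)^\top\bA^\ell_{\ell_1}\bw^{\ell_1+1}_r(0),
\]
where $\bA^\ell_{\ell_1}$ depends only on layers $\le\ell_1$ and is therefore independent of $\bW^{\ell_1+1}(0)$.

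The $\sigma'$ weights still depend on the same rows, but this is harmless because they are bounded scalars. It suffices to control $\sup_r|\bw^{\ell_1+1}_r(0)^\top\bA^\ell_{\ell_1}\bw^{\ell_1+1}_r(0)-\mathrm{tr}(\bA^\ell_{\ell_1})|$. Hanson--Wright gives this at rate $\sqrt{d\log^2 m/m}$, uniformly in $r$ and over a net, since the Frobenius norm satisfies $\|\bA^\ell_{\ell_1}\|_2\lesssim C_\sigma^L m^{-1/2}$ and $\|\bA^\ell_{\ell_1}\|_{op}\lesssim C_\sigma^L m^{-1}$. The averages $\frac{c_\sigma}{m}\sum_r\sigma'\sigma'$ then concentrate to $q^{\ell_1+1}$ via the forward part.

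Your output-to-input peeling creates exactly the dependence that then requires conditioning; the input-to-output trace recursion never encounters it. Your route does control the backward vectors themselves, but the kernel bound needs less than that.

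A minor point: under this initialization $\|\ba\|_\infty$ plays no role in the uniform extension. Only $\|\ba\|_2\le c_0\sqrt m$, a $1/m$-net, and operator-norm Lipschitzness of $\bV^\ell_{L,0}$ in $\bx$ are used. The $\log^2 m$ comes from choosing the deviation level $t\asymp\log(m)\sqrt{d/m}$, so that each net point fails with probability $\exp(-Cd\log^2 m)$.
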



Although Theorem~19 is stated under non-symmetric Gaussian
initialization, the same uniform concentration bound holds under the
symmetric initialization adopted for GD and SGD.
Indeed, the proof follows the same covering and concentration argument,
with only a simpler modification caused by the paired structure of the
last layer.
Hence, with probability at least $1-L^2\exp\big(-C_\sigma d\log^2(m)\big)-\delta$ over initialization $(\ba,\bW(0)),$ we have
\[
\|K^m-K\|_\infty
\lesssim
C_\sigma^L
\sqrt{\frac{d\log^2(m)}{m}}.
\]

Now, we give the estimate of $\big\|\bS_m g^{m,\text{GD}}_k - \bS g_k^{\text{GD}}\big\|_\rho $. 
\begin{lemma}\label{prop:gmk-gk}
    Let $\delta\in(0,1)$.
    Suppose Assumption~\ref{ass:activation} holds.
    Assume $m \gtrsim C_\sigma^L dL\log^2({m}/{\delta})$ and $\eta \le C_\sigma^{-1}$.
    With probability at least $1-L^2\exp(-C_\sigma \log^2(m)) - \delta$ over initialization $(\ba,\bW(0))$, for all \(k\in[T]\), it holds that
    \begin{align*}
        \big\|\bS_m g^{m,\text{GD}}_k - \bS g_k^{\text{GD}}\big\|_\rho \le \big\|g^{m,\text{GD}}_k - g_k^{\text{GD}}\big\|_\infty \le C_\sigma(\eta T)^2\sqrt{\frac{d\log^2(m)}{m}} 
    \end{align*}
    and
    \begin{align*}
        \big\|\bS_m g^{m,\text{GD}}_T - \bS g_T^{\text{GD}}\big\|_\rho^2   \le \frac{C_\sigma(\eta T)^4  d\log^2(m)}{m} .  
    \end{align*}
\end{lemma}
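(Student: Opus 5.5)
This lemma is a direct combination of the kernel-GD stability bound (Lemma~\ref{lem:two_kernel_GD}) and the uniform NTK concentration stated just before it. I would apply Lemma~\ref{lem:two_kernel_GD} with $\widehat K = K^m$ and $\widetilde K = K$. Both recursions are initialized at $0$ and use the same step size $\eta$ and the same sample $S$, so $\hat g_k = g_k^{m,\text{GD}}$ and $\tilde g_k = g_k^{\text{GD}}$ for all $k$.

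\textbf{Step 1: verify the hypotheses of Lemma~\ref{lem:two_kernel_GD}.} For the deterministic kernel, \eqref{eq:bound_K} gives $\|K\|_\infty \le c_\sigma B_{\sigma'}^2$. For the random kernel, I would bound $\|K^m\|_\infty$ in one of two ways. The first uses Lemma~\ref{lem:Hessian} at $R=0$, which gives $\|K^m\|_\infty \le \sup_{\bx}\|\nabla f_{\bW(0)}(\bx)\|_2^2 \le C_\sigma^L$. The second, which matches the $C_\sigma$ (rather than $C_\sigma^L$) factor in the statement, takes the concentration event and writes
\[
\|K^m\|_\infty \le \|K\|_\infty + C_\sigma^L\sqrt{d\log^2(m)/m} \le 2c_\sigma B_{\sigma'}^2 ,
\]
using $m \gtrsim C_\sigma^L dL\log^2(m/\delta)$. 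I would take the second route, so that $F$ is a constant depending only on $\sigma$, call it $C_\sigma$. The step-size condition $\eta\le C_\sigma^{-1}$ then gives $\eta \le F^{-1}$. I would also assume $\eta T\ge1$, as in the surrounding results.

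\textbf{Step 2: combine the two bounds.} On the concentration event, Lemma~\ref{lem:two_kernel_GD} yields, for every $k\in[T]$,
\[
\|g_k^{m,\text{GD}} - g_k^{\text{GD}}\|_\infty \le 3F(\eta T)^2\|K^m-K\|_\infty .
\]
Substituting the symmetric-initialization version of Theorem~\ref{thm:concentration}, $\|K^m-K\|_\infty \lesssim C_\sigma^L\sqrt{d\log^2(m)/m}$, gives the claimed sup-norm bound, with all depth-dependent constants absorbed into the generic constant. Because $\rho_\bx$ is a probability measure, $\|\bS_m g - \bS g'\|_\rho \le \|g-g'\|_\infty$ pointwise, which gives the first inequality. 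Squaring at $k=T$ gives the second display.

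\textbf{Main obstacle.} Nothing here is hard once Theorem~\ref{thm:concentration} is available. The only delicate point is bookkeeping for constants and probabilities. The statement writes $C_\sigma$ instead of $C_\sigma^L$, so I would follow the paper's convention that $C_\sigma$ may vary from line to line and absorb the $C_\sigma^L$ from the concentration bound into it. The failure probability comes entirely from the concentration event: the bound holds with probability at least $1-L^2\exp(-C_\sigma\log^2 m)-\delta$, uniformly in $k$, because Lemma~\ref{lem:two_kernel_GD} is deterministic given the kernels.
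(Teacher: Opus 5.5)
Your proposal is correct and is essentially the paper's proof: on the symmetric-initialization concentration event you bound $\|K^m\|_\infty \le \|K^m-K\|_\infty + \|K\|_\infty \le C_\sigma$ using \eqref{eq:bound_K}, then invoke Lemma~\ref{lem:two_kernel_GD} with $\widehat{K}=K^m$, $\widetilde{K}=K$, $F=C_\sigma$. Your two bookkeeping remarks are apt: Lemma~\ref{lem:two_kernel_GD} implicitly needs $\eta T\ge 1$, and since $C_\sigma$ is not allowed to depend on $L$, the honest prefactor is $C_\sigma^L$ rather than $C_\sigma$, which is the form actually used downstream in Theorem~\ref{pro:f-g_T}.
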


\begin{proof}
    By the uniform NTK concentration bound under the symmetric initialization stated above, the condition $m \gtrsim C_\sigma^L dL\log^2({m}/{\delta})$ and \eqref{eq:bound_K}, we know 
    \begin{align*}
        \big\|K^m\big\|_\infty \le \big\|K^m - K\big\|_\infty + \big\|K\big\|_\infty \le 1 + \|K\|_\infty \le C_\sigma.
    \end{align*}
    Then, combining the symmetric NTK concentration bound above
with Lemma~\ref{lem:two_kernel_GD} (with $\widehat{K} = K^m$, $\widetilde{K} = K$ and $F =  C_\sigma$) yields that
    \begin{align*}
        \big\|\bS_mg^{m,\text{GD}}_k - \bS g_k^{\text{GD}}\big\|_\rho \le \big\|g^{m,\text{GD}}_k - g^{\text{GD}}_k\big\|_\infty \le C_\sigma(\eta T)^2\sqrt{\frac{d\log^2(m)}{m}}.
    \end{align*}
    Taking $k=T$ completes the proof.
\end{proof}

\subsubsection{Proof of Theorem \ref{pro:f-g_T}}\label{app:GD-thm1}
\begin{proof}[Proof of Theorem \ref{pro:f-g_T}]
Note
\begin{align*}
        \big\|f_{\bW(T)}^{\text{GD}} - \bS g_T^{\text{GD}}\big\|_\rho^2 \lesssim \big\|f_{\bW(T)}^{\text{GD}} - f^{\text{lin,GD}}_{\bW(T)}\big\|_\rho^2 + \big\|f^{\text{lin,GD}}_{\bW(T)} - \bS_mg^{m,\text{GD}}_T\big\|_\rho^2 + \big\|\bS_mg^{m,\text{GD}}_T - \bS g_T^{\text{GD}}\big\|_\rho^2.
    \end{align*}
    Combining Lemmas \ref{prop:f-flin_gd}, \ref{prop:flin-gkm_gd} with $\delta$ replaced by $\delta/3$ and Lemma \ref{prop:gmk-gk} with $\delta$ replaced by $\delta/3$ yields the desired results.
\end{proof}

\subsection{Minimax-Optimal Bounds of GD}\label{app:minimax-GD}
We use the following  error decomposition:  
\begin{align*}
& \varepsilon_{risk}\big(f^{\text{GD}}_{\bW(T)}\big) \lesssim \big\|  f^{\text{GD}}_{\bW(T)}-  \bS g^{\text{GD}}_T\big\|_\rho^2 +  \big\|\bS g_T^{\text{GD}}   - f_{\rho}\big\|_\rho^2,
\end{align*}
Theorem \ref{pro:f-g_T} gives the estimate of the first term. 
To estimate $\big\|\bS g^{\text{GD}}_T - f_{\rho} \big\|_\rho^2$,  
we first introduce an intermediate population iteration $h^{\text{GD}}_k$ on $\H_K$: 
\begin{align}\label{eq:population_iteration}
    h^{\text{GD}}_{k+1} = h^{\text{GD}}_k - \eta \int_\Z \big(\langle h^{\text{GD}}_k, K_\bx\rangle_{\H_K} - y\big)K_\bx d\rho(\bz) \text{ \ with \ $h^{\text{GD}}_0 = 0$.}
\end{align}
If we regard the population risk $\L(\cdot)$ as a functional on $\H_K$, then the population iteration $h^{\text{GD}}_k$ can be interpreted as the GD on $\L(\cdot)$ initialized at $h^{\text{GD}}_0=0$.

\begin{lemma}\label{lemma:fh=frho}
    Let $\H$ be the closure of $\H_K$ in $\mathcal{L}^2_{\rho_\bx}$.
    Then, Assumption \ref{ass:frho_smth} implies $f_\rho\in\H$.
\end{lemma}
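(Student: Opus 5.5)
The statement follows from the spectral description of $\bL$ and of the closure $\H$. I would first identify $\H$ with the closed span of the eigenfunctions $\{\Phi_i\}_{i\ge1}$ in $\mathcal L^2_{\rho_\bx}$. Then I would show that Assumption~\ref{ass:frho_smth} forces $f_\rho$ to lie in that span.

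\emph{Step 1: $\overline{\H_K}$ equals the closed span of the $\Phi_i$.} Since $K$ is bounded and continuous, $\bS:\H_K\to\mathcal L^2_{\rho_\bx}$ is bounded, and $\bL=\bS\bS^*$. Hence $\overline{\mathrm{ran}\,\bS}=\overline{\mathrm{ran}\,\bL}=(\ker\bL)^\perp$, and since $\bL$ is compact and self-adjoint, this equals $\overline{\mathrm{span}}\{\Phi_i:\lambda_i>0\}$. Because $\mathrm{ran}\,\bS$ is exactly $\H_K$ viewed inside $\mathcal L^2_{\rho_\bx}$, its closure is $\H$. In particular each $\Phi_i=\lambda_i^{-1}\bL\Phi_i$ lies in $\mathrm{ran}\,\bL\subset\H_K$.

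\emph{Step 2: the source condition places $f_\rho$ in that span.} I read Assumption~\ref{ass:frho_smth} as saying that $f_\rho=\bL^\beta g$ for some $g\in\mathcal L^2_{\rho_\bx}$ with $\|g\|_\rho\le B$. Here one can take $g=\sum_i\lambda_i^{-\beta}\langle f_\rho,\Phi_i\rangle_\rho\Phi_i$, which converges because $\|\bL^{-\beta}f_\rho\|_\rho\le B$. Implicit in the definition of $\bL^{-\beta}$ via the eigen-expansion is that $f_\rho$ has no component in $\ker\bL$. Indeed, the finite norm is computed only from the nonzero eigenpairs, and $\bL^\beta g$ lies in $\overline{\mathrm{ran}\,\bL}$ for $\beta>0$. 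Therefore $f_\rho=\sum_i\langle f_\rho,\Phi_i\rangle_\rho\Phi_i$ converges in $\mathcal L^2_{\rho_\bx}$, and each partial sum belongs to $\H_K$ by Step 1. So $f_\rho\in\H$.

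\emph{Main obstacle.} The subtle point is the interpretation in Step 2. The expansion $\bL^{-\beta}f=\sum_i\lambda_i^{-\beta}\langle f,\Phi_i\rangle\Phi_i$ ignores the $\ker\bL$ part, so the assumption must be understood as $f_\rho\in\mathrm{ran}\,\bL^\beta$, meaning $f_\rho=\bL^\beta g$. I would state this convention explicitly, as in \cite{lin2017optimal}, and then justify the identity $\overline{\mathrm{ran}\,\bL^\beta}=\overline{\mathrm{ran}\,\bL}$ using functional calculus for $\beta>0$.
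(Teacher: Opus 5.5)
Your argument is correct and follows the paper's route. Like the paper, you write $f_\rho=\bL^\beta g$ as an expansion over the eigenfunctions with $\lambda_i>0$ and use $\Phi_i\in\H_K$, deriving that membership from $\bL=\bS\bS^*$ (so $\Phi_i=\lambda_i^{-1}\bL\Phi_i\in\mathrm{ran}\,\bS$) where the paper cites \cite{steinwart2008support}, and you also prove the unneeded reverse inclusion $\H\subseteq\overline{\mathrm{span}}\{\Phi_i\}$.

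Your flag about $\ker\bL$ is well placed, since the paper makes the same unstated leap from $\|\bL^{-\beta}f_\rho\|_\rho\le B$ to $f_\rho=\bL^\beta g$. However, your sentence ``the finite norm is computed only from the nonzero eigenpairs'' argues the wrong way: that is exactly why the norm bound cannot detect a $\ker\bL$ component. It is the convention $f_\rho\in\mathrm{ran}\,\bL^\beta$, not the norm bound, that does the work.
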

\begin{proof}
    Note that $\bL$ has the eigen-decomposition $\bL  f=\sum_{i=1}^\infty \lambda_i \langle f, \Phi_i\rangle_{\mathcal{L}^2_{\rho_\bx}} \Phi_i$.
    According to Assumption \ref{ass:frho_smth}, we know there exists a $g\in\mathcal{L}^2_{\rho_\bx}$ such that
    \begin{align*}
        f_\rho = \bL^\beta g = \sum_{i=1}^\infty \lambda_i^\beta\langle g,\Phi_i\rangle_{\mathcal{L}^2_{\rho_\bx}}\Phi_i = \sum_{i:\lambda_i\neq0}^\infty \lambda_i^\beta\langle g,\Phi_i\rangle_{\mathcal{L}^2_{\rho_\bx}}\Phi_i.
    \end{align*}
    Since for any $\lambda_i\neq0$, the associated eigenfunction $\Phi_i\in\H_K$ (see Chapter 4.5 in \cite{steinwart2008support}), we conclude that $f_\rho\in\H$.
\end{proof}
\begin{lemma}\label{lemma:gk-hk}
    Suppose Assumptions \ref{ass:activation}, \ref{ass:effec_dim} and \ref{ass:frho_smth} hold.
    Assume $\eta \le \|K\|_\infty^{-1}$.
    For any $\delta_1,\delta_2\in(0,1/2)$, assume $\eta T \le n(9\|K\|_\infty\log(n/\delta_2))^{-1}$. 
    Then, the following statements hold with probability at least $1-\delta_1-\delta_2$ over sampling.
    \begin{enumerate}[label=(\alph*), leftmargin=*]
        \item For the case $\beta \ge \frac{1}{2}$, there holds
        \begin{align*}
            \|\bS g^{\text{GD}}_T - \bS h^{\text{GD}}_T\|_\rho \le 4(B\|K\|_\infty^\beta + 1)(12 + 4\log(T) + \sqrt{2})\Big(\frac{\sqrt{\|K\|_\infty\eta T}}{n} + \sqrt{\frac{2c_\gamma(\eta T)^\gamma}{n}}\Big)\log\big(\frac{4}{\delta_1}\big).
        \end{align*}
        \item For the case $\beta \in(0,\frac{1}{2})$, there holds
        \begin{align*}
            \|\bS g^{\text{GD}}_T - \bS h^{\text{GD}}_T\|_\rho &\le (12 + 4\log(T) + \sqrt{2}\eta)\bigg(2(6+B\sqrt{\|K\|_\infty})\Big(\frac{\sqrt{\|K\|_\infty\eta T}}{n} + \sqrt{\frac{2c_\gamma(\eta T)^\gamma}{n}}\Big)\\
            &\qquad + \frac{4B\|K\|_\infty\big((\eta T)^{1-\beta} + 1\big)}{n} \bigg)\log\Big(\frac{3T}{\delta_1}\Big).
        \end{align*}
    \end{enumerate}
\end{lemma}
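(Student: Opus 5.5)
We bound the sample-versus-population discrepancy $\bS g^{\text{GD}}_T-\bS h^{\text{GD}}_T$ with the standard integral-operator approach (as in \cite{lin2017optimal}). Let $\bSigma=\int_\X K_\bx\otimes K_\bx\,d\rho_\bx(\bx)$ be the covariance operator on $\H_K$ and $\widehat{\bSigma}=\frac1n\sum_{i=1}^n K_{\bx_i}\otimes K_{\bx_i}$ its empirical version. Set $\hat b=\frac1n\sum_i y_iK_{\bx_i}$ and $b=\int_\Z yK_\bx\,d\rho(\bz)$, so that $\bS^*f_\rho=b$. The two recursions read
\[
g_{k+1}=(\bfI-\eta\widehat{\bSigma})g_k+\eta\hat b,\qquad h_{k+1}=(\bfI-\eta\bSigma)h_k+\eta b .
\]
Subtracting them gives
\[
g_{k+1}-h_{k+1}=(\bfI-\eta\widehat{\bSigma})(g_k-h_k)+\eta\big[(\hat b-\widehat{\bSigma}h_k)-(b-\bSigma h_k)\big].
\]
Unrolling, the difference $g_T-h_T$ becomes $\eta\sum_{k<T}(\bfI-\eta\widehat{\bSigma})^{T-1-k}\xi_k$, where $\xi_k=(\hat b-\widehat{\bSigma}h_k)-(b-\bSigma h_k)$ is a centered empirical average of the random elements $(y-h_k(\bx))K_\bx$.

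We then use $\|\bS f\|_\rho=\|\bSigma^{1/2}f\|_{\H_K}$ and fix $\lambda=(\eta T)^{-1}$. Lemma~\ref{lemma:<2}, applied with $\kappa^2=\|K\|_\infty$, gives $\|(\bSigma+\lambda\bfI)^{1/2}(\widehat{\bSigma}+\lambda\bfI)^{-1/2}\|_{op}^2\le2$ with probability $1-\delta_2$. This is exactly where the hypothesis $\eta T\le n(9\|K\|_\infty\log(n/\delta_2))^{-1}$ enters. Inserting the factor $(\widehat{\bSigma}+\lambda)^{1/2}(\widehat{\bSigma}+\lambda)^{-1/2}$, each summand is bounded by
\[
\sqrt2\,\big\|(\widehat{\bSigma}+\lambda)^{1/2}\eta(\bfI-\eta\widehat{\bSigma})^{j}(\widehat{\bSigma}+\lambda)^{1/2}\big\|_{op}\,\big\|(\bSigma+\lambda)^{-1/2}\xi_k\big\|\cdot\sqrt2 .
\]
The spectral sum $\sum_j\eta\sup_{t}(t+\lambda)(1-\eta t)^j$ is bounded as in the proof of Lemma~\ref{prop:flin-gkm_gd}, using $\sup_t t(1-t)^j\le 1/(j+1)$ and $\lambda\eta T=1$. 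This yields a factor of order $1+\log T$, which explains the constant $12+4\log(T)$.

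The remaining task is a Bernstein inequality in Hilbert space for $\|(\bSigma+\lambda)^{-1/2}\xi_k\|$. The variance proxy is $\E\big[(y-h_k(\bx))^2\|(\bSigma+\lambda)^{-1/2}K_\bx\|^2\big]\le \|y-h_k\|_\infty^2\,\mathrm{tr}(\bSigma(\bSigma+\lambda)^{-1})\le \|y-h_k\|_\infty^2c_\gamma(\eta T)^\gamma$, by Assumption~\ref{ass:effec_dim}. The sup-norm bound is $\|y-h_k\|_\infty\sqrt{\|K\|_\infty/\lambda}$. Together these give
\[
\frac{\sqrt{\|K\|_\infty\eta T}}{n}+\sqrt{\frac{2c_\gamma(\eta T)^\gamma}{n}},
\]
times a sup-norm factor and $\log(4/\delta_1)$.

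\textbf{Case (a), $\beta\ge\frac12$.} Here $f_\rho=\bL^\beta g$ lies in $\H_K$ with bounded sup norm, and $h_k$ is a spectral filter of $f_\rho$, so $\|h_k\|_\infty\le\|K\|_\infty^{1/2}\|h_k\|_{\H_K}\lesssim B\|K\|_\infty^\beta$. This bound is uniform in $k$. The same filter argument also yields a single concentration statement: apply Bernstein once to the operator-valued quantities $\widehat{\bSigma}-\bSigma$ and $\hat b-b$ after writing $h_k$ as a deterministic function of $\bSigma$, and a union bound over $k$ is avoided.

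\textbf{Case (b), $\beta<\frac12$.} This case is the main obstacle. Now $h_k$ is not uniformly bounded: one only has $\|h_k\|_{\H_K}\lesssim B(\eta k)^{1/2-\beta}$, hence $\|h_k\|_\infty\lesssim B\sqrt{\|K\|_\infty}(\eta T)^{1/2-\beta}$. The plan is to split $\xi_k$ into a bounded label part $(\hat b-b)$, controlled as above using $|y|\le1$ (giving the $6$), and an operator part $(\widehat{\bSigma}-\bSigma)h_k$. For the operator part, a Bernstein bound on $\|(\bSigma+\lambda)^{-1/2}(\widehat{\bSigma}-\bSigma)(\bSigma+\lambda)^{-1/2}\|$ is combined with $\|(\bSigma+\lambda)^{1/2}h_k\|\lesssim B$ for the variance term. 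The growing sup norm enters only through the range term of order $1/n$, which produces the extra $\frac{4B\|K\|_\infty((\eta T)^{1-\beta}+1)}{n}$. Since $h_k$ changes with $k$, I would take a union bound over $k\le T$, which gives the $\log(3T/\delta_1)$. The delicate part is keeping the powers of $\eta T$ exactly matched when $\beta<1/2$.
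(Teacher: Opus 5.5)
Your scaffolding is the standard one behind Theorem~5 of \cite{lin2017optimal}. You unroll $g_k-h_k$ against the centered terms $\xi_k$, get the factor $2$ from Lemma~\ref{lemma:<2} with $\lambda=(\eta T)^{-1}$ (which is where $\eta T\le n(9\|K\|_\infty\log(n/\delta_2))^{-1}$ enters), bound the spectral sum by order $1+\log T$, and apply Hilbert-space Bernstein bounds to $\|(\bSigma+\lambda)^{-1/2}\xi_k\|_{\H_K}$. Case (a) goes through as you describe. Since $\|h_k\|_{\H_K}\le\|\bL^{-1/2}f_\rho\|_\rho\le B\|K\|_\infty^{\beta-1/2}$ for all $k$, one vector Bernstein bound for $(\bSigma+\lambda)^{-1/2}(\hat b-b)$ and one Hilbert--Schmidt Bernstein bound for $(\bSigma+\lambda)^{-1/2}(\widehat{\bSigma}-\bSigma)$ suffice. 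Together they give $(B\|K\|_\infty^\beta+1)\big(\sqrt{\|K\|_\infty\eta T}/n+\sqrt{c_\gamma(\eta T)^\gamma/n}\big)\log(4/\delta_1)$ up to the stated constants, with no union bound. The paper itself derives nothing here. It checks the hypotheses of Theorem~5 in \cite{lin2017optimal}: $M=v=1$, $\zeta=\beta$, $R=B$, $\kappa^2=\|K\|_\infty$, $f_{\H}=f_\rho$ via Lemma~\ref{lemma:fh=frho}, their condition (47) via Lemma~\ref{lemma:<2}, and $\theta=0$. It then reads off (a) and (b).

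The gap is in case (b), at the step you flag yourself. Put $w(\bx)=\|(\bSigma+\lambda)^{-1/2}K_\bx\|_{\H_K}^2$, so that $\E\,w\le c_\gamma(\eta T)^\gamma$ but only $\sup_\bx w\le\|K\|_\infty\eta T$. The variance of the operator part $(\widehat{\bSigma}-\bSigma)h_k$ is governed by $\E[h_k(\bx)^2w(\bx)]$.

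Your bound $\|(\bSigma+\lambda)^{1/2}h_k\|_{\H_K}\lesssim B$ is correct, but it only gives $h_k(\bx)^2\le\|(\bSigma+\lambda)^{1/2}h_k\|_{\H_K}^2\,w(\bx)$. Hence $\E[h_k^2w]\lesssim B^2\,\E[w^2]\le B^2\|K\|_\infty\eta T\,c_\gamma(\eta T)^\gamma$, which costs an extra factor $\sqrt{\eta T}$ in the variance term. If you instead factor through $\|(\bSigma+\lambda)^{-1/2}(\widehat{\bSigma}-\bSigma)(\bSigma+\lambda)^{-1/2}\|_{op}$, its fluctuation is of order $\sqrt{\|K\|_\infty\eta T/n}$ up to logarithms. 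That yields $B\sqrt{\|K\|_\infty\eta T/n}$, the capacity-independent rate with $\gamma$ replaced by $1$, rather than the term $B\sqrt{\|K\|_\infty}\sqrt{2c_\gamma(\eta T)^\gamma/n}$ in (b). The remaining natural estimate, $\E[h_k^2w]\le\|h_k\|_\infty^2\,\E\,w$, costs a factor $(\eta T)^{1/2-\beta}$ because $\|h_k\|_\infty\lesssim B\sqrt{\|K\|_\infty}(\eta T)^{1/2-\beta}$.

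Moreover, your factorization involves no sup norm, so it cannot be the source of the $4B\|K\|_\infty(\eta T)^{1-\beta}/n$ term. That term does arise as the range part of a vector Bernstein bound with range $\|h_k\|_\infty\sqrt{\|K\|_\infty\eta T}$, but then the variance problem above remains. As written, the proposal does not establish (b). You need a sharper handle on $\E[(y-h_k(\bx))^2w(\bx)]$ in the non-attainable regime than any of these estimates, or you should import Theorem~5(2) of \cite{lin2017optimal} directly, as the paper does.
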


\begin{proof}
    The proof can be derived from Theorem 5 in \cite{lin2017optimal}, which estimate $\|\S_\rho\nu_{k+1} - \S_\rho\mu_{k+1}\|_\rho$ with two iteration sequences $\{\nu_{k+1}\}$ and $\{\mu_{k+1}\}$. 
    We first show that their assumptions are satisfied in our setting, and then apply their results with our Lemma \ref{lemma:<2} by showing that $\S_\rho\nu_{k+1} - \S_\rho\mu_{k+1}$ is equivalent to $\bS g^{\text{GD}}_k - \bS h^{\text{GD}}_k$.

    Note that we assume $|y|\le1$.
    Then, their Assumption 1 holds with $M=v=1$.
    Instead of using the notations $\bx, \langle \bx, \bx'\rangle_H $ and $ \S_\rho$ in \cite{lin2017optimal} for any $\bx,\bx'\in\X$, we use $K_\bx,$ $\langle K_\bx, K_{\bx'}\rangle_{\H_K} $ and $ \bS$ in our setting.
    Then, their $H_\rho$ is the same as our $\H_K$.
    Since $f_\H$ in \cite{lin2017optimal} is the projection of $f_\rho$ onto the closure of $H_\rho$ in $\mathcal{L}^2_{\rho_\bx}$, from Lemma \ref{lemma:fh=frho} we know their $f_\H$ is equivalent to our $f_\rho$.
    Hence, Assumption 2 in \cite{lin2017optimal} holds true with $\zeta = \beta$ and $R=B$ due to our Assumption \ref{ass:frho_smth}.
    Further, their Assumption 3 is guaranteed by Assumption \ref{ass:effec_dim}, their equation (3) holds true with $\kappa^2 = \|K\|_\infty$ due to $\langle K_\bx, K_{\bx'}\rangle_{\H_K} = K(\bx,\bx') \le \|K\|_\infty$. 
    Their equation (47) is guaranteed by Lemma \ref{lemma:<2} with $\kappa^2=\|K\|_\infty$, $\Gamma = n$, $\delta = \delta_2$, $\zeta_i = K_{\bx_i}$, $Q = \int_\X K_\bx\otimes K_\bx d\rho_{\bx}$.
    In addition,
    by taking the step-size $\eta_k = \eta$ for all $k\in[T]$, we know $\S_\rho\nu_{k+1} - \S_\rho\mu_{k+1}$ in \cite{lin2017optimal} is equivalent to our $\bS g^{\text{GD}}_k - \bS h^{\text{GD}}_k$.

    Then, combining above observations and Theorem 5 in \cite{lin2017optimal} with $\eta_k = \eta$, $\theta = 0$, $\lambda = (\eta T)^{-1}$, $\kappa^2 = \|K\|_\infty$, $M=v=1$, $R = B$, $\zeta = \beta$ and $m=n$, we get the desired results.
\end{proof}

\begin{lemma}\label{lemma:hk-frho}[Proposition 2 in \cite{lin2017optimal}]
    Suppose Assumption \ref{ass:frho_smth} holds.
    Let $\eta \in(0,\|K\|_\infty^{-1}]$ be the step size.
    For any $k\in\mathbb{N}$, there holds
    \begin{align*}
        \|\bS h^{\text{GD}}_k - f_\rho\|_\rho \le B\Big(\frac{\beta}{2\eta k}\Big)^\beta.
    \end{align*}
\end{lemma}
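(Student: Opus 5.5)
The statement to prove is Lemma~\ref{lemma:hk-frho}: the population gradient-descent iterate $h^{\text{GD}}_k$ approaches $f_\rho$ at rate $(\eta k)^{-\beta}$. I would prove it by writing $h^{\text{GD}}_k$ in spectral form, so that $\bS h^{\text{GD}}_k - f_\rho$ becomes an explicit spectral filter applied to $f_\rho$.

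\textbf{Step 1: recursion in $\mathcal L^2_{\rho_\bx}$.} Let $\bSigma=\int_\X K_\bx\otimes K_\bx\,d\rho_\bx$ on $\H_K$. Using $\int_\Z yK_\bx\,d\rho=\int_\X f_\rho(\bx)K_\bx\,d\rho_\bx=\bS^*f_\rho$, the recursion \eqref{eq:population_iteration} reads
\[
h^{\text{GD}}_{k+1}=(\bfI-\eta\bSigma)h^{\text{GD}}_k+\eta\bS^*f_\rho .
\]
Apply $\bS$ and use $\bS\bS^*=\bL$ and $\bS\bSigma=\bS\bS^*\bS=\bL\bS$. Writing $r_k=\bS h^{\text{GD}}_k-f_\rho$, this gives
\[
r_{k+1}=(\bfI-\eta\bL)r_k, \qquad r_0=-f_\rho .
\]
Hence $r_k=-(\bfI-\eta\bL)^k f_\rho$.

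\textbf{Step 2: insert the source condition.} Since $\bL$ has eigenpairs $(\lambda_i,\Phi_i)$ and $f_\rho\in\overline{\mathrm{range}}$ (Lemma~\ref{lemma:fh=frho}), we can write $f_\rho=\bL^\beta g$ with $\|g\|_\rho\le B$. Then
\[
\|r_k\|_\rho\le \sup_{i}\,(1-\eta\lambda_i)^k\lambda_i^\beta\,B .
\]
The condition $\eta\le\|K\|_\infty^{-1}$ gives $\eta\lambda_i\le\eta\,\mathrm{tr}(\bL)\le\eta\|K\|_\infty\le1$. So with $t=\eta\lambda_i\in[0,1]$ the bound becomes
\[
\|r_k\|_\rho\le B\,\eta^{-\beta}\sup_{t\in[0,1]}t^\beta(1-t)^k .
\]

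\textbf{Step 3: scalar maximization.} Use $(1-t)^k\le e^{-tk}$. The function $t^\beta e^{-tk}$ is maximized at $t=\beta/k$, with value $(\beta/k)^\beta e^{-\beta}$. This yields $B(\beta/(e\eta k))^\beta$, which already implies the stated $B(\beta/(2\eta k))^\beta$ because $e\ge2$. If $\beta/k>1$, the supremum over $t\in[0,1]$ is at most $e^{-k}\le(\beta/(ek))^\beta$ by monotonicity, so the same bound holds.

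\textbf{Main obstacle.} The only delicate point is Step 1: justifying the operator identities $\bS\bS^*=\bL$ and $\bS^*f_\rho=\int f_\rho K_\bx\,d\rho_\bx$ for the least-squares target. Also, when $f_\rho$ has a component in the kernel of $\bL$, that component must vanish; this is handled by Lemma~\ref{lemma:fh=frho} together with the source representation. Everything else is the routine spectral calculus above.
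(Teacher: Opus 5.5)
Your proof is correct. It is more self-contained than the paper's. The paper does not carry out any computation: it matches its objects to those of \cite{lin2017optimal}. Their $\mathcal S_\rho\mu_{k+1}$ becomes $\bS h^{\text{GD}}_k$, and their $f_\H$ becomes $f_\rho$, the latter via Lemma~\ref{lemma:fh=frho}. It then sets $\eta_k=\eta$, $\kappa^2=\|K\|_\infty$, $R=B$, $\zeta=\beta$, and invokes their Proposition~2.

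Your route reproduces, self-contained, the spectral-filter argument behind that proposition. The ingredients are the residual recursion $r_{k+1}=(\bfI-\eta\bL)r_k$, the representation $f_\rho=\bL^\beta g$, and the scalar bound $\sup_{t\in[0,1]}t^\beta(1-t)^k\le(\beta/(ek))^\beta$. This even gives the slightly sharper constant $e$ in place of $2$.

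What your version buys is transparency about where each hypothesis enters. The step-size condition $\eta\|K\|_\infty\le 1$ is used only to keep $\eta\lambda_i\in[0,1]$. The source condition must be read as $f_\rho=\bL^\beta g$ with no component in $\ker\bL$. Otherwise $r_k=-(\bfI-\eta\bL)^k f_\rho$ would retain that component for every $k$, and the bound would fail.

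One point of logic in Step 2. Lemma~\ref{lemma:fh=frho} itself starts from the representation $f_\rho=\bL^\beta g$, so you should not derive that representation from the lemma. Take it directly as the meaning of Assumption~\ref{ass:frho_smth}. That is exactly what the paper does implicitly when it identifies Lin--Rosasco's projection $f_\H$ with $f_\rho$.

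The case split at the end of Step 3 is unnecessary. The supremum over $t\in[0,1]$ is already bounded by the supremum over $t\in[0,\infty)$.
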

\begin{proof}
    In the proof of Lemma \ref{lemma:gk-hk}, we already showed that $\mathcal{S}_\rho\mu_{k+1}$ and $f_\H$ in \cite{lin2017optimal} are equivalent to our $\bS h_k$ and $f_\rho$.
    Then, by applying Proposition 2 in \cite{lin2017optimal} with $\eta_k = \eta$, $\kappa^2=\|K\|_\infty$, $R=B$ and $\zeta = \beta$, we get the desired results.
\end{proof}

Combining Lemma \ref{lemma:gk-hk} and Lemma \ref{lemma:hk-frho}, we give the proof of Lemma~\ref{prop:gT-frho}.  
\begin{lemma}\label{prop:gT-frho}
    Let $\delta\in(0,1)$.
    Suppose Assumptions \ref{ass:activation}, \ref{ass:effec_dim} and \ref{ass:frho_smth} hold.
    Assume $\eta \le \|K\|_\infty^{-1}$ and $\eta T \le n(9\|K\|_\infty\log(\frac{2n}{\delta}))^{-1}$. 
    Then, with probability at least $1-\delta$ over sampling, there holds   
        \begin{align*}
            \|\bS g^{\text{GD}}_T - f_\rho\|_\rho^2 \lesssim \|K\|_\infty^{\beta \vee 1}\big(   \eta Tn^{-2}  +  (\eta T)^\gamma n^{-1} +  (\eta T)^{1 - 2\beta}  n^{-1}\big) \log^4\big( {T}/{\delta}\big) + (\eta T)^{-2\beta} .
        \end{align*}
\end{lemma}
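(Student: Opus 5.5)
The plan is to split the error with the triangle inequality and bound the two pieces with the results just established,
\[
\|\bS g^{\text{GD}}_T - f_\rho\|_\rho^2 \le 2\|\bS g^{\text{GD}}_T - \bS h^{\text{GD}}_T\|_\rho^2 + 2\|\bS h^{\text{GD}}_T - f_\rho\|_\rho^2 .
\]
The second term is a pure bias term. By Lemma~\ref{lemma:hk-frho} (valid because $\eta\le\|K\|_\infty^{-1}$), it is at most $2B^2(\beta/(2\eta T))^{2\beta}\lesssim(\eta T)^{-2\beta}$, where $B$ and $\beta$ are treated as constants. This gives the final summand of the claimed bound.

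The first term is the sampling or variance part, and we control it with Lemma~\ref{lemma:gk-hk}. We take $\delta_1=\delta_2=\delta/2$, so both events hold with probability at least $1-\delta$. We then need the step-size and horizon hypotheses $\eta\le\|K\|_\infty^{-1}$ and $\eta T\le n(9\|K\|_\infty\log(2n/\delta))^{-1}$, which are exactly what is assumed. The proof splits into two cases on $\beta$.

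In the case $\beta\ge1/2$, we square the bound in part (a). Writing $(a+b)^2\le2a^2+2b^2$, the squared error is at most a multiple of
\[
(B\|K\|_\infty^\beta+1)^2(12+4\log T+\sqrt2)^2\Big(\frac{\|K\|_\infty\eta T}{n^2}+\frac{2c_\gamma(\eta T)^\gamma}{n}\Big)\log^2(8/\delta).
\]
Since $\|K\|_\infty\ge$ const can be arranged (or else use $\|K\|_\infty^{\beta\vee1}$ as an upper bound for both $\|K\|_\infty^{2\beta}$-type and $\|K\|_\infty$ factors up to constants), this is $\lesssim\|K\|_\infty^{\beta\vee1}(\eta Tn^{-2}+(\eta T)^\gamma n^{-1})\log^4(T/\delta)$. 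Here the factors $\log^2 T\cdot\log^2(1/\delta)$ are absorbed into $\log^4(T/\delta)$.

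In the case $\beta<1/2$, part (b) produces the same two terms with a constant prefactor $(6+B\sqrt{\|K\|_\infty})^2\lesssim\|K\|_\infty$. It also produces an extra term $\|K\|_\infty^2((\eta T)^{1-\beta}+1)^2n^{-2}$. We bound this using $\eta T\ge$ const together with $\eta T\lesssim n/\|K\|_\infty$: this gives $(\eta T)^{2-2\beta}n^{-2}\|K\|_\infty^2\lesssim\|K\|_\infty(\eta T)^{1-2\beta}n^{-1}$, which is the third summand. The log factor $\log^2(3T/\delta)$ multiplied by $(\log T)^2$ again fits inside $\log^4(T/\delta)$.

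The main obstacle is bookkeeping rather than new ideas. The first issue is matching the powers of $\|K\|_\infty$ to the single factor $\|K\|_\infty^{\beta\vee1}$, which requires $\|K\|_\infty\ge1$ (recall \eqref{eq:bound_K} gives $\|K\|_\infty\le C_\sigma$). If $\|K\|_\infty<1$, I would replace $\|K\|_\infty$ by $\max\{\|K\|_\infty,1\}$ throughout, which is legitimate because the lemma hypotheses only weaken. The second issue is absorbing the extra $n^{-2}$ term in case (b), which uses the horizon restriction $\eta T\lesssim n/\|K\|_\infty$.
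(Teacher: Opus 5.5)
Your proposal is the paper's proof. It uses the same split through the population iterate $h^{\text{GD}}_T$, Lemma~\ref{lemma:gk-hk} with $\delta_1=\delta_2=\delta/2$ for the sample error, and Lemma~\ref{lemma:hk-frho} for the bias $(\eta T)^{-2\beta}$. The paper simply asserts that these two lemmas combine, without the case-by-case bookkeeping you add.

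That extra bookkeeping has slips worth correcting, although the paper's one-line proof says nothing on these points either.

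\textbf{The power of $\|K\|_\infty$.} Squaring part (a) puts the prefactor $(B\|K\|_\infty^\beta+1)^2\asymp\|K\|_\infty^{2\beta}$ (for $\|K\|_\infty\ge1$) on the $(\eta T)^\gamma n^{-1}$ term. This exceeds $\|K\|_\infty^{\beta\vee1}$ whenever $\beta>1/2$ and $\|K\|_\infty>1$, so the power inequality you invoke is false. The stated exponent is reached only by treating $\|K\|_\infty\le c_\sigma B_{\sigma'}^2$ (see \eqref{eq:bound_K}) as a constant hidden in $\lesssim$. You should say that explicitly.

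\textbf{The replacement by $\max\{\|K\|_\infty,1\}$.} This fallback runs the wrong way. Enlarging the kernel bound makes the hypotheses $\eta\le\|K\|_\infty^{-1}$ and $\eta T\le n(9\|K\|_\infty\log(2n/\delta))^{-1}$ required by Lemma~\ref{lemma:gk-hk} stronger, not weaker. It also replaces the prefactor $\|K\|_\infty^{\beta\vee1}<1$ by $1$, which weakens the conclusion.

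\textbf{The lower bound on $\eta T$.} Absorbing the $+1$ in $((\eta T)^{1-\beta}+1)^2$ into $(\eta T)^{1-2\beta}n^{-1}$ uses $\eta T\gtrsim1$. This is not a hypothesis of the lemma. It does hold wherever the lemma is applied; for example, Theorem~\ref{thm:f-frho_gd} assumes $1\le\eta T$.
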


\begin{proof}
    Note that $\|\bS g^{\text{GD}}_T - f_\rho\|_\rho^2 \lesssim \|\bS g^{\text{GD}}_T - \bS h^{\text{GD}}_T\|_\rho^2 + \|\bS h^{\text{GD}}_T - f_\rho\|_\rho^2$.
    The desired results are obtained by combining Lemma \ref{lemma:gk-hk} with $\delta_1 = \delta_2 = \frac{\delta}{2}$ and Lemma \ref{lemma:hk-frho}.
\end{proof}

Now, we give the following excess risk bounds. 
\begin{theorem}\label{thm:f-frho_gd}
Let $\delta\in(0,1)$ and $T\in\mathbb{N}$.
Suppose Assumptions~\ref{ass:activation}, \ref{ass:effec_dim} and \ref{ass:frho_smth} hold.
Assume that $\eta \le C_{\sigma}^{-L}$, $1 \le \eta T \le n\big(C_\sigma^L\log(6n/\delta)\big)^{-1}$ and $m \gtrsim C_\sigma^L d(\eta T)^4\log^3({m}/{\delta})$.
    With probability at least $1-CL^2\exp(-C_\sigma \log^2(m)) - \delta$ over initialization $(\ba,\bW(0))$ and sampling, it holds that
    \begin{align*}
           \varepsilon_{risk}\big(f^{\text{GD}}_{\bW(T)}\big)  \lesssim  \frac{C_\sigma^L d(\eta T)^4\log^2(m)}{m} + C_\sigma^{L(\beta \vee 1)} \times 
       \big(   \frac{\eta T}{n^{2}}  + \frac{   (\eta T)^{\gamma \vee (1 - 2\beta)} }{n}  \big) \log^4 ( \frac{T}{\delta})  +  \frac{1}{(\eta T)^{ 2\beta}}.
    \end{align*} 
\end{theorem}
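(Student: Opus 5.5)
The plan is to combine the network--kernel approximation (Theorem~\ref{pro:f-g_T}) with the kernel GD statistical bound (Lemma~\ref{prop:gT-frho}), using the error decomposition stated at the start of this subsection. Since $\varepsilon_{risk}(f)=\frac12\|f-f_\rho\|_\rho^2$, the elementary inequality $(a+b)^2\le 2a^2+2b^2$ gives
\[
\varepsilon_{risk}\big(f^{\text{GD}}_{\bW(T)}\big)\le \big\|f^{\text{GD}}_{\bW(T)}-\bS g^{\text{GD}}_T\big\|_\rho^2+\big\|\bS g^{\text{GD}}_T-f_\rho\big\|_\rho^2 .
\]
The proof then amounts to bounding the two terms separately and taking a union bound over the corresponding good events.

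\textbf{First term.} I invoke Theorem~\ref{pro:f-g_T} with $\delta$ replaced by $\delta/2$ (or $\delta/3$). Its hypotheses are implied by the present ones:
\begin{itemize}
\item $\eta\le C_\sigma^{-L}$ holds directly.
\item $1\le\eta T\le n(C_\sigma^L\log(6n/\delta))^{-1}$ implies the horizon condition of that theorem.
\item $m\gtrsim C_\sigma^L d(\eta T)^4\log^3(m/\delta)$ dominates the width requirement $C_\sigma^Ld(\eta T)^4\log^2(m/\delta)$ there.
\end{itemize}
This yields the term $C_\sigma^Ld(\eta T)^4\log^2(m)/m$ on an event of probability at least $1-L^2\exp(-C_\sigma\log^2 m)-\delta/2$.

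\textbf{Second term.} I apply Lemma~\ref{prop:gT-frho} with confidence $\delta/2$. Its conditions are:
\begin{itemize}
\item $\eta\le\|K\|_\infty^{-1}$, which follows from $\eta\le C_\sigma^{-L}$ together with \eqref{eq:bound_K}, i.e.\ $\|K\|_\infty\le C_\sigma$, after enlarging the generic constant $C_\sigma$.
\item $\eta T\le n(9\|K\|_\infty\log(4n/\delta))^{-1}$, which follows from the assumed upper bound on $\eta T$, again absorbing $9\|K\|_\infty\le C_\sigma^L$ and $\log(4n/\delta)\le\log(6n/\delta)$.
\end{itemize}
The lemma then gives $\|K\|_\infty^{\beta\vee1}\big(\eta T n^{-2}+((\eta T)^\gamma+(\eta T)^{1-2\beta})n^{-1}\big)\log^4(T/\delta)+(\eta T)^{-2\beta}$. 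I replace $\|K\|_\infty^{\beta\vee1}$ by $C_\sigma^{L(\beta\vee1)}$. Since $\eta T\ge1$, I bound $(\eta T)^\gamma+(\eta T)^{1-2\beta}\le 2(\eta T)^{\gamma\vee(1-2\beta)}$.

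\textbf{Conclusion.} A union bound over the two events, which depend respectively on initialization and sampling and on sampling only, gives total failure probability at most $CL^2\exp(-C_\sigma\log^2m)+\delta$. Summing the two bounds gives the claim.

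The main obstacle is purely bookkeeping. The constants $C_\sigma$ in Lemma~\ref{prop:gT-frho} (through $\|K\|_\infty$) and in the step-size and horizon conditions must be reconciled, and the log factors for the split confidence levels ($\log(2n/\delta)$ versus $\log(6n/\delta)$, and $\log^2$ versus $\log^3$ in $m$) must be made consistent. I would handle this by allocating $\delta/3$ to each of the three sub-lemmas underlying Theorem~\ref{pro:f-g_T} and to the sampling event, which explains the $6n/\delta$.
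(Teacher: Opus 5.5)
Your proposal is correct and matches the paper's proof: it uses the same two-term decomposition and bounds the network--kernel gap via Theorem~\ref{pro:f-g_T} and the kernel GD error via Lemma~\ref{prop:gT-frho}, each at confidence $\delta/2$. It also uses \eqref{eq:bound_K} to absorb $\|K\|_\infty$ into $C_\sigma^L$ and finishes with a union bound; your closing $\delta/3$ remark is unnecessary, since the leftover log-constant mismatches ($\log(2n/\delta)$, $\log(12n/\delta)$ versus $\log(6n/\delta)$) are absorbed into the generic constants either way.
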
 
\begin{proof} 
Note
\begin{align*}
    \big\|f^{\text{GD}}_{\bW(T)} - f_\rho\big\|_\rho^2 \lesssim \big\|f^{\text{GD}}_{\bW(T)}  - \bS g^{\text{GD}}_T\big\|_\rho^2 + \big\|\bS g^{\text{GD}}_T- f_\rho\big\|_\rho^2.
\end{align*}
    Combining \eqref{eq:bound_K}, Theorem \ref{pro:f-g_T} with $\delta$ replaced by $\delta/2$ and Lemma \ref{prop:gT-frho} with $\delta$ replaced by $\delta/2$ yields the desired results.
\end{proof}
Based on the above theorem, we present the proof of Theorem \ref{cor:f-frho_gd}. 
\begin{proof}[Proof of Theorem \ref{cor:f-frho_gd}]
The proof follows from Theorem \ref{thm:f-frho_gd} with $\delta$ replaced by $\delta/2$.
    We first show that the condition $n \ge \frac{12}{\delta}\big(\frac{18(2\beta+ \gamma)}{2\beta+\gamma-1}\big)^{\frac{4\beta + 2\gamma}{2\beta + \gamma - 1}}$ implies $\eta T \le n\big(C_\sigma^L\log(12n/\delta)\big)^{-1}$.
    Since $\eta T \le 2\eta n^{\frac{1}{2\beta + \gamma}}$ and $\eta \asymp C_\sigma^{-L}$, it suffices to show that $n^{\frac{2\beta+\gamma-1}{2\beta+\gamma}} \ge 18\log(12n/\delta)$, which is equivalent to proving
    $\big(\frac{12n}{\delta}\big)^{\frac{2\beta + \gamma -1}{2\beta+\gamma}} \ge \frac{18(2\beta+\gamma)}{2\beta + \gamma - 1}\big(\frac{12}{\delta}\big)^{\frac{2\beta + \gamma - 1}{2\beta+\gamma}}\log\big(\frac{12n}{\delta}\big)^{\frac{2\beta + \gamma - 1}{2\beta+\gamma}}.$
From (9.17) and (9.18) in \cite{gyorfi2006distribution} we know $u > 2c\log(c)$ implies $u> c\log(u)$ for any $c\ge e$.
Setting $u = \big(\frac{12n}{\delta}\big)^{\frac{2\beta + \gamma -1}{2\beta+\gamma}}$ and $c = \frac{18(2\beta+\gamma)}{2\beta + \gamma - 1}\big(\frac{12}{\delta}\big)^{\frac{2\beta + \gamma - 1}{2\beta+\gamma}}$ and solving $u\ge c^2$, the desired result is obtained by noting $u \ge c^2 > 2c\log(c)$ for all $c\ge e$.
Combining this with $T=\lceil n^{\frac{1}{2\beta+\gamma}}\rceil$, we know $n \ge \max\big\{ \frac{12}{\delta}\big(\frac{18(2\beta+\gamma)}{2\beta+\gamma-1}\big)^{\frac{4\beta + 2\gamma}{2\beta + \gamma - 1}}, \eta^{-(2\beta + \gamma)}\big\}$ implies $1\le \eta T\le n\big(C_\sigma^L\log(12n/\delta)\big)^{-1}$.

Since $\eta T \asymp n^{\frac{1}{2\beta+\gamma}}C_\sigma^{-L}$, we can show that $m\gtrsim C_\sigma^Ldn^{\frac{2\beta+4}{2\beta+\gamma}}\log^3\big(\frac{dLn}{\delta}\big)$ implies the condition $m \gtrsim C_\sigma^L d(\eta T)^4\log^3(\frac{m}{\delta})$ and ensures $\frac{C_\sigma^Ld(\eta T)^4\log^3(\frac{m}{\delta})}{m} \lesssim C_\sigma^L n^{-\frac{2\beta}{2\beta+\gamma}}$ in a similar way.

By further noting that $m\gtrsim C_\sigma^Ldn^{\frac{2\beta+4}{2\beta+\gamma}}\log^3\big(\frac{dLn}{\delta}\big)$ and $n \gtrsim \delta^{-1}$ imply $L^2\exp(-C_\sigma \log^2(m)) \le \delta/2$. 
Then, from Theorem~\ref{thm:f-frho_gd} and \eqref{eq:bound_K} we know with probability at least $1-\delta$ over initialization $(\ba,\bW(0))$ and sampling, there holds
\begin{align*}
    \varepsilon_{risk}\big(f^{\text{GD}}_{\bW(T)}\big)  \lesssim    C_\sigma^L n^{-\frac{2\beta}{2\beta+\gamma}}+ C_\sigma^{L(\beta\vee1)}\Big(\frac{ {\eta T}}{n^2} + \frac{(\eta T)^\gamma + (\eta T)^{1 - 2\beta}}{n}  \Big)\log^4 \big(\frac{T}{\delta}\big)\nonumber  + (\eta T)^{-2\beta}.
\end{align*}
In addition, since $2\beta+\gamma > 1$ and $\eta T \ge 1$, it holds that $(\eta T)^{1-2\beta} \le (\eta T)^\gamma$.
Plugging the choice of $\eta T \asymp n^{\frac{1}{2\beta+\gamma}}C_\sigma^{-L}$ back into the above inequality, we get 
\begin{align*}
   \varepsilon_{risk}\big(f^{\text{GD}}_{\bW(T)}\big)  \lesssim  C_\sigma^{L(\beta\vee1)}n^{-\frac{2\beta}{2\beta+\gamma}} \log^4\big(\frac{n}{\delta}\big).
\end{align*}
The proof is complete. 
\end{proof}

\section{Proofs of Stochastic Gradient Descent}
In this section, we prove the main results of SGD.
\subsection{A Kernel-to-Network Transfer Framework of SGD}\label{app:transfer-SGD}
Recall that \(\bW(k)\) denote the \(k\)-th SGD iterate generated by \eqref{eq:sgd-update}, and let \(f_{\bW(k)}^{\text{SGD}}\) and $\nabla f_{\bW(k)}^{\text{SGD}}$ be the corresponding network predictor and the gradient at iteration $k$.
Note that the superscript $\mathrm{SGD}$ is used only to indicate the optimization trajectory, while the underlying network and gradient remain $f_{\bW(k)}$ and $\nabla f_{\bW(k)}$, respectively.
At initialization, we simply write $f_{\bW(0)}$ and $\nabla f_{\bW(0)}$ instead of $f^{\text{SGD}}_{\bW(0)}$ and $\nabla f^{\text{SGD}}_{\bW(0)}$.

Recall that $g^{m,\text{GD}}_k$ and $g_k^{\text{GD}}$ denote the iteration of GD in $\H_m$ and $\H_K$, respectively.
We also introduce the iteration of SGD for the empirical risk $\L_S(\cdot)$ in $\H_m$ by
\begin{align}\label{eq:update-kernelSGD}
f^m_{k+1} = f^m_k - \eta( f^m_k(\bx_{i_k}) - y_{i_k})K^m_{\bx_{i_k}} \text{ with } \ f^m_0 = 0, 
\end{align}
where $i_k\sim \mathrm{Unif}\{1,\ldots,n\}.$
With these intermediate objects, we can further decompose the error term $\ebb [\|f_{\bW(T)}-  \bS g_T\|_\rho^2]$ into \begin{align*}
        \ebb \big[\big\|f^{\text{SGD}}_{\bW(T)} - \bS g_T^{ \text{GD}}\big\|_\rho^2\big] \lesssim &\,\, \ebb \big[\big\|f^{ \text{SGD}}_{\bW(T)} - f^{\text{lin,SGD}}_{\bW(T)}\big\|_\rho^2 + \big\|f^{\text{lin,SGD}}_{\bW(T)} - \bS_m f^m_T\big\|_\rho^2 + \big\|\bS_m\big(f^m_T - g^{m,\text{GD}}_T\big)\big\|_\rho^2\big]\\& + \big\|\bS_mg^{m,\text{GD}}_T - \bS g^{\text{GD}}_T\big\|_\rho^2.
    \end{align*}
    Here,  the expectation is taken over the algorithmic randomness of SGD. 
We will bound these four terms in the sequel.

\subsubsection{(1) Control of $\big\|f^{ \text{SGD}}_{\bW(T)} - f^{\text{lin,SGD}}_{\bW(T)}\big\|_\rho^2$}\label{app:SGD-term1}

\begin{lemma}\label{lem:o-o0}
    Suppose $m \gtrsim \max\{R^2, d\}$ and \eqref{eq:optr_nrm_W} holds.
    For any $\ell\in[L]$, it holds that
    \begin{align*}
        \sup_{\bx\in\X}\sup_{\|\bW-\bW(0)\|_2 \le R}\big\|o^{\ell}(\bx) - o^{\ell}_0(\bx)\big\|_2 \le\frac{C_\sigma^LR}{\sqrt{m}}.
    \end{align*}
\end{lemma}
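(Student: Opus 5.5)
The plan is to deduce the claim from Lemma~\ref{lem:tildeo-o} with $\widetilde{\bW}=\bW$ and the reference parameter set to $\bW(0)$. The structure of that lemma is exactly right: it compares two parameters, both in an operator-norm $R$-ball around $\bW(0)$, and bounds the difference of the layer outputs by $C_\sigma^\ell m^{-1/2}\sum_{k=1}^\ell\|\widetilde{\bW}^k-\bW^k\|_{op}$. Since we are now assuming \eqref{eq:optr_nrm_W} deterministically rather than on a probability event, I would simply repeat its (deterministic) recursion. Only the bounds $\|\bW^\ell(0)\|_{op}\le c_0\sqrt m$ and $\sup_{\bx}\|o^{\ell}(\bx)\|_2\le\sqrt{c_\sigma}B_\sigma$ were used there, and the latter holds for any parameters because $|\sigma|\le B_\sigma$.

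\textbf{Step 1: reduce to the operator-norm ball.} If $\|\bW-\bW(0)\|_2\le R$ (Frobenius norm of the full stacked parameter), then for every layer
\[
\|\bW^\ell-\bW^\ell(0)\|_{op}\le\|\bW^\ell-\bW^\ell(0)\|_2\le R.
\]
Hence both $\bW$ and $\bW(0)$ lie in the set where the recursion applies.

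\textbf{Step 2: one-layer recursion.} Write $e_\ell=\sup_{\bx}\|o^\ell(\bx)-o^\ell_0(\bx)\|_2$, with $e_0=0$ since $o^0=o^0_0=\bx$. Using the Lipschitz bound $|\sigma'|\le B_{\sigma'}$, I split
\[
\bW^\ell o^{\ell-1}-\bW^\ell(0)o_0^{\ell-1}=\bW^\ell\bigl(o^{\ell-1}-o_0^{\ell-1}\bigr)+\bigl(\bW^\ell-\bW^\ell(0)\bigr)o_0^{\ell-1}.
\]
Bounding $\|\bW^\ell\|_{op}\le R+c_0\sqrt m\le C\sqrt m$ (using $m\gtrsim R^2$) gives
\[
e_\ell\le C_\sigma e_{\ell-1}+C_\sigma\frac{\|\bW^\ell-\bW^\ell(0)\|_{op}}{\sqrt m}.
\]

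\textbf{Step 3: unroll the recursion.} Iterating yields
\[
e_\ell\le C_\sigma^\ell m^{-1/2}\sum_{k\le\ell}\|\bW^k-\bW^k(0)\|_{op}\le C_\sigma^\ell\,\ell R/\sqrt m.
\]
The remaining point is to absorb $\ell\le L$ and the factor $C_\sigma^\ell$ into $C_\sigma^L$, using the paper's convention that $C_\sigma\ge1$ may change from line to line. A sharper constant is available via Cauchy--Schwarz, $\sum_k\|\cdot\|_{op}\le\sqrt L\,\|\bW-\bW(0)\|_2\le \sqrt L R$, but it is not needed.

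No step is a genuine obstacle. The only care needed is to make sure the supremum over $\bx$ and over the ball is harmless, which holds because every bound used is uniform in $\bx$ and depends on $\bW$ only through the operator-norm radius.
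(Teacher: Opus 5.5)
Your proposal is correct and follows the paper's proof essentially verbatim: the paper also applies Lemma~\ref{lem:tildeo-o} with one of the two parameters set to $\bW(0)$, bounds each $\|\bW^k-\bW^k(0)\|_{op}$ by the Frobenius radius $R$, and absorbs the resulting factor $\ell\le L$ into $C_\sigma^L$. Your remark that the recursion is deterministic once \eqref{eq:optr_nrm_W} holds is accurate, and it makes explicit what the paper uses implicitly.
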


\begin{proof}
    Applying Lemma~\ref{lem:tildeo-o} with $\tildeW = \bW(0)$, we know that for all $\ell\in[L]$, it holds that
    \begin{align*}
        \|o^\ell(\bx) - o^\ell_0(\bx)\|_2 \le   \frac{C_\sigma^\ell\sum_{k=1}^\ell\|\bW^k - \bW^k(0)\|_{op}}{\sqrt{m}} \le \frac{C_\sigma^L\ell R}{\sqrt{m}}\le \frac{C_\sigma^LR}{\sqrt{m}}.
    \end{align*}
    This completes the proof of the lemma.
\end{proof}
Recall that we only use the superscript $\mathrm{SGD}$ to indicate the optimization trajectory.

\begin{lemma}\label{lem:wk-wo_sgd}
Let $\delta\in(0,1).$
    Suppose Assumption~\ref{ass:activation} holds.
 Let $\{\bW(k)\}_{k=1}^T$ be produced by SGD iterations with   $T$ iterations based on $S$. 
 Assume $\eta \le C_\sigma^{-L}$, $\eta T\ge 1$ and $m\gtrsim C_\sigma^Ld(\eta T)^4\log(m/\delta)$. 
     Then, with probability at least $1 - CL\exp(-m/2) - \delta$ over initialization $(\ba, \bW(0))$, for any $k=0,\ldots,T$ it holds that \[\|\bW(k)-\bW(0) \|_2^2  \le 4\eta k \ \text{ and } \ \sup_{z\in\Z}|f^{\text{SGD}}_{\bW(k)}(\bx) - y| \le C_\sigma^L\sqrt{\eta k} + 1. \] 
\end{lemma}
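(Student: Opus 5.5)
We argue by induction on $k$, following the GD argument of Lemma~\ref{lem:wk-w0}, but with deterministic per-sample control instead of monotonicity of $\L_S$. SGD need not decrease the empirical risk, so we replace the invariant $\L_S(\bW(k))\le \L_S(\bW(0))$ by a uniform residual bound. This bound is essentially free once the trajectory stays in the ball.

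We work on the event of Lemma~\ref{lemma:oprt_norm}, Lemma~\ref{lem:Hessian} and Lemma~\ref{lem:almost-convex} with $R=2\sqrt{\eta T}$. Its probability is at least $1-CL\exp(-m/2)-\delta$. The assumption $m\gtrsim C_\sigma^L d(\eta T)^4\log(m/\delta)$ gives $m\gtrsim R^2$. It also makes the Hessian constant $C_\sigma^L(R+1)\sqrt{\log(m/\delta)/m}$ small.

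\textbf{Residual bound.} Suppose $\|\bW(t)-\bW(0)\|_2^2\le 4\eta t$ for $t\le k$. Lemma~\ref{lem:Hessian} gives $\sup_\bx\|\nabla f_{\bW}(\bx)\|_2\le C_\sigma^L$ along the segment from $\bW(0)$ to $\bW(k)$. Since $f_{\bW(0)}\equiv 0$, the mean value theorem yields $|f_{\bW(k)}(\bx)|\le C_\sigma^L\|\bW(k)-\bW(0)\|_2\le 2C_\sigma^L\sqrt{\eta k}$. With $|y|\le 1$ this gives the second claim, $\sup_z|f_{\bW(k)}(\bx)-y|\le C_\sigma^L\sqrt{\eta k}+1$. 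Alternatively, Lemma~\ref{lem:o-o0} with $\|\ba\|_2\le c_0\sqrt m$ gives the same bound.

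\textbf{Distance bound.} Expand
\[
\|\bW(k+1)-\bW(0)\|_2^2=\|\bW(k)-\bW(0)\|_2^2+\eta^2\|\nabla l(\bW(k);z_{i_k})\|_2^2+2\eta\langle \bW(0)-\bW(k),\nabla l(\bW(k);z_{i_k})\rangle .
\]
The middle term is at most $\eta^2 C_\sigma^L\cdot 2\,l(\bW(k);z_{i_k})$. For the cross term we apply the almost-convexity inequality \eqref{eq:almost_convex} to the single sample $z_{i_k}$ with $\widetilde\bW=\bW(0)$. This bounds the cross term by
\[
2\eta\big(l(\bW(0);z_{i_k})-l(\bW(k);z_{i_k})\big)+2\eta|f_{\bW(k)}(\bx_{i_k})-y_{i_k}|\,\epsilon ,
\]
with $\epsilon\le C_\sigma^L(R+1)\,4\eta k\sqrt{\log(m/\delta)/m}$. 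Using $\eta\le C_\sigma^{-L}$, the term $(C_\sigma^L\eta^2-2\eta)\,l(\bW(k);z_{i_k})$ is nonpositive. Also $2\eta\, l(\bW(0);z)=\eta y^2\le\eta$. It remains to show $2\eta\,|f_{\bW(k)}-y|\,\epsilon\le 2\eta$, i.e.
\[
(C_\sigma^L\sqrt{\eta T}+1)\cdot C_\sigma^L(\eta T)^{3/2}\sqrt{\log(m/\delta)/m}\le 1 .
\]
This requires $m\gtrsim C_\sigma^L(\eta T)^4\log(m/\delta)$, which is exactly the stated width condition. It explains the exponent $4$ here versus $3$ for GD, where the residual was controlled by $\sqrt{2\L_S}\le1$. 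Then $\|\bW(k+1)-\bW(0)\|_2^2\le 4\eta k+3\eta<4\eta(k+1)$, which closes the induction.

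\textbf{Main obstacle.} The delicate point is the circularity between the residual bound and the ball membership. Both must be established together at step $k$ before they are used at step $k+1$. Lemma~\ref{lem:almost-convex} needs both $\bW(k)$ and $\bW(0)$ in the $R$-ball, and the residual grows like $\sqrt{\eta k}$, unlike GD. All bounds hold deterministically on the fixed initialization event for every realization of $\{i_k\}$, so no expectation over SGD is needed.
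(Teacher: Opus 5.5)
Your proposal is correct and follows the paper's proof essentially step for step. Both arguments run an induction on $k$ carrying the distance bound and the uniform residual bound together, apply the per-sample almost-convexity inequality with $\widetilde{\bW}=\bW(0)$, and use the $(\eta T)^4$ width condition to force $(C_\sigma^L\sqrt{\eta T}+1)\epsilon\le 1$. The only cosmetic difference is that you get the residual bound from the mean value theorem and the gradient bound of Lemma~\ref{lem:Hessian}, whereas the paper uses Lemma~\ref{lem:o-o0} with $\|\ba\|_2\le c_0\sqrt{m}$, which you also note as an alternative.
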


\begin{proof}
    We prove this lemma by induction on $k$.
    It is evident that $\|\bW(k) - \bW(0)\|_2^2 \le 4\eta k$ and $\sup_{z\in\Z}|f^{\text{SGD}}_{\bW(k)}(\bx) - y| \le C_\sigma^L\sqrt{\eta k} + 1$ hold for $k=0$.
    We assume that $\|\bW(t) - \bW(0)\|_2^2 \le 4\eta t$ and $\sup_{z\in\Z}|f^{\text{SGD}}_{\bW(t)}(\bx) - y| \le C_\sigma^L\sqrt{\eta t} + 1$ hold for all $t \le k$ with some $k\in[T-1]$.
    We will prove that $\|\bW(k+1) - \bW(0)\|_2^2 \le 4\eta (k+1)$ and $\sup_{z\in\Z}|f^{\text{SGD}}_{\bW(k+1)}(\bx) - y| \le C_\sigma^L\sqrt{\eta (k+1)} + 1$.
    
    From the update rule \eqref{eq:sgd-update}, we have
    \begin{align*}
        &\big\|\bW(k+1) - \bW(0)  \big\|_2^2 = \big\|\bW(k) - \bW(0) - \bW(k) + \bW(k+1)  \big\|_2^2\nonumber\\
       &= \big\|\bW(k) - \bW(0)\big\|_2^2 + \big\|\bW(k+1) - \bW(k)\big\|_2^2 + 2\langle \bW(k) - \bW(0), \bW(k+1) - \bW(k) \big\rangle \nonumber\\
       &= \big\|\bW(k) - \bW(0)\big\|_2^2 + \eta^2|f^{\text{SGD}}_{\bW(k)}(\bx_{i_k}) - y_{i_k}|^2\big\|\nabla f^{\text{SGD}}_{\bW(k)}(\bx_{i_k})\big\|_2^2 + 2\eta \big\langle \bW(0) - \bW(k),  \nabla l(\bW(k);z_{i_k})  \big\rangle\\
       &= \big\|\bW(k) - \bW(0)\big\|_2^2 + 2\eta^2l(\bW(k);z_{i_k})\big\|\nabla f^{\text{SGD}}_{\bW(k)}(\bx_{i_k})\big\|_2^2 + 2\eta \big\langle \bW(0) - \bW(k),  \nabla l(\bW(k);z_{i_k})  \big\rangle.
    \end{align*}
    According to the induction assumption and \eqref{eq:almost_convex} with $R = 2\sqrt{\eta T}$, $\bW = \bW(k)$, $\widetilde{\bW} = \bW(0)$ and $\|\bW(k) - \bW(0)\|_2 \le 2\sqrt{\eta T}$, we know
\begin{align*}
    2\eta\big\langle \bW(0)- \bW(k),   \nabla l(\bW(k);z_{i_k}) \big\rangle 
    &\le 2\eta\big(l(\bW(0);z_{i_k}) - l(\bW(k);z_{i_k})\big) + 2\eta \epsilon|f^{\text{SGD}}_{\bW(k)}(\bx_{i_k}) - y_{i_k}|.
\end{align*}
with $\epsilon \le C_\sigma^L(R+1)\|\bW(k) - \bW(0)\|^2_2\sqrt{\log(m/\delta)/m} \le C_\sigma^L(\eta T)^{3/2}\sqrt{\log(m/\delta)/m}$.

Combining the above equality and the inequality, we have
\begin{align*}
    &\big\|\bW(k+1) - \bW(0)\big\|_2^2\\
    &\le \big\|\bW(k) - \bW(0)\big\|_2^2 +  2\eta^2l(\bW(k);z_{i_k})\big\|\nabla f^{\text{SGD}}_{\bW(k)}(\bx_{i_k})\big\|_2^2 + 2\eta\big(l(\bW(0);z_{i_k}) - l(\bW(k);z_{i_k})\big) \\& \quad + 2\eta \epsilon|f^{\text{SGD}}_{\bW(k)}(\bx_{i_k}) - y_{i_k}|\\
    &\le \big\|\bW(k) - \bW(0)\big\|_2^2 + C_\sigma^L\eta^2l(\bW(k);z_{i_k}) + 2\eta\big(l(\bW(0);z_{i_k}) - l(\bW(k);z_{i_k})\big) + 2\eta \epsilon|f^{\text{SGD}}_{\bW(k)}(\bx_{i_k}) - y_{i_k}|\\
    &\le \big\|\bW(k) - \bW(0)\big\|_2^2 + C_\sigma^L\eta^2l(\bW(k);z_{i_k}) + 2\eta\big(l(\bW(0);z_{i_k}) - l(\bW(k);z_{i_k})\big) + (C_\sigma^L\sqrt{\eta k}+1)\epsilon\eta\\
    &\le \big\|\bW(k) - \bW(0)\big\|_2^2 + (C_\sigma^L\eta^2 - 2\eta)l(\bW(k);z_{i_k}) + 2\eta l(\bW(0);z_{i_k}) + \eta\\
    &\le \big\|\bW(k) - \bW(0)\big\|_2^2 + \eta + \eta \le 4\eta k + 2\eta < 4\eta(k+1),
\end{align*}
where in the second inequality we have used Lemma~\ref{lem:Hessian} by noting $\|\bW(k) - \bW(0)\|_2 \le 2\sqrt{\eta k} \le R$ and the condition $m\gtrsim C_\sigma^L(\eta T)^3\log(m/\delta)$, and in the third inequality we have used the induction assumption $\sup_{z\in\Z}|f^{\text{SGD}}_{\bW(k)}(\bx) - y| \le C_\sigma^L \sqrt{\eta k}+1$, and in the fourth inequality we have used $(C_\sigma^L\sqrt{\eta k}+1) \epsilon \le C_\sigma^L(\eta T)^{2}\sqrt{\log(m/\delta)/m} \le 1$ by noting the condition $m \gtrsim C_\sigma^Ld(\eta T)^4\log(m/\delta)$, and in the third-to-last inequality we have used $\eta \le C_\sigma^{-L}$ and $2l(\bW(0);z_{i_k}) = |y_{i_k}|^2 \le 1$ and in the second-to-last inequality we have used induction assumption $\|\bW(k)-\bW(0)\|_2^2\le 4\eta k$.

Now, we turn to show that $|f^{\text{SGD}}_{\bW(k+1)}(\bx) - y| \le C_\sigma^L\sqrt{\eta (k+1)} + 1$ for any $z=(\bx,y)\in\Z$.
Combining Lemma~\ref{lemma:oprt_norm}, Lemma \ref{lem:o-o0} with $\bW = \bW(k+1)$ and the fact $\|\bW(k+1)- \bW(0)\|_2\le 2\sqrt{\eta (k+1)} \le R$, we know
\begin{align*}
    |f^{\text{SGD}}_{\bW(k+1)}(\bx) - y| &\le |f^{\text{SGD}}_{\bW(k+1)}(\bx) - f_{\bW(0)}(\bx)| + |f_{\bW(0)}(\bx) - y| \le \|\ba\|_2\|o^L_k(\bx) - o^L_0(\bx)\|_2 + 1\\
    &\le C_\sigma^L \sqrt{\eta (k+1)} + 1,
\end{align*}
which completes the second part of the lemma.
\end{proof}

Recall that $f^{\text{lin,SGD}}_{\bW(k)}(\bx) = f_{\bW(0)}(\bx) +  \langle \bW(k) - \bW(0), \nabla f_{\bW(0)}(\bx)\rangle$ denote  the first-order linear approximation around initialization $\bW(0)$ for SGD.
\begin{lemma}\label{prop:f-flin_sgd}
    Let $\delta\in(0,1).$
    Suppose Assumption~\ref{ass:activation} holds.
    Let $\{\bW(k)\}$ be produced by SGD iterations with $T$ iterations based on $S$. 
    Assume $\eta \le C_\sigma^{-L}$, $\eta T\ge1$ and $m\gtrsim C_\sigma^Ld(\eta T)^4\log(m/\delta)$. 
    Then, with probability at least $1 - CL\exp(-m/2) - \delta$ over initialization $(\ba, \bW(0))$, it holds that
     \begin{align*}
      \big\|f_{\bW(T)}^{\text{SGD}} - f^{\text{lin,SGD}}_{\bW(T)}\big\|_\rho^2 \le   \big\|f_{\bW(T)}^{\text{SGD}} - f^{\text{lin,SGD}}_{\bW(T)}\big\|_\infty^2 \le C_\sigma^L(\eta T)^3\frac{\log(m/\delta)}{m}. 
    \end{align*}
\end{lemma}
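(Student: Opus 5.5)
This follows the same route as Lemma~\ref{prop:f-flin_gd} in the GD case. We combine the trajectory bound for SGD in Lemma~\ref{lem:wk-wo_sgd} with the uniform second-order Taylor estimate \eqref{eq:almost_linear} in Lemma~\ref{lem:almost-convex}.

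\emph{Setting up the event.} Fix $R=2\sqrt{\eta T}$. The hypotheses $\eta T\ge 1$ and $m\gtrsim C_\sigma^L d(\eta T)^4\log(m/\delta)$ give $m\gtrsim\max\{R^2,d\}$, so Lemmas~\ref{lemma:oprt_norm}, \ref{lem:Hessian} and~\ref{lem:almost-convex} all apply. We work on the intersection of their high-probability events with the event of Lemma~\ref{lem:wk-wo_sgd}, adjusting $\delta$ by a constant factor. This intersection has probability at least $1-CL\exp(-m/2)-\delta$. It depends only on the initialization $(\ba,\bW(0))$, not on the SGD index sequence $\{i_k\}$. This matters because the bounds must hold for every realization of the algorithmic randomness, which makes the expectation in the later decomposition harmless.

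\emph{Locality of the trajectory.} On this event, Lemma~\ref{lem:wk-wo_sgd} gives $\|\bW(T)-\bW(0)\|_2^2\le 4\eta T$ for every realization of $\{i_k\}$. Since the operator norm is at most the Frobenius norm, $\|\bW^\ell(T)-\bW^\ell(0)\|_{op}\le\|\bW(T)-\bW(0)\|_2\le R$ for all $\ell\in[L]$. Hence both $\bW(T)$ and $\bW(0)$ lie in the neighbourhood where Lemma~\ref{lem:almost-convex} applies.

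\emph{Taylor bound.} Apply \eqref{eq:almost_linear} with $\widetilde{\bW}=\bW(T)$ and $\bW=\bW(0)$. By the definition of $f^{\text{lin,SGD}}_{\bW(T)}$ this gives, uniformly in $\bx\in\X$,
\[
|f^{\text{SGD}}_{\bW(T)}(\bx)-f^{\text{lin,SGD}}_{\bW(T)}(\bx)|\le C_\sigma^L(R+1)\sqrt{\tfrac{\log(m/\delta)}{m}}\,4\eta T .
\]
Because $R+1\lesssim\sqrt{\eta T}$ (using $\eta T\ge1$), the right-hand side is at most $C_\sigma^L(\eta T)^{3/2}\sqrt{\log(m/\delta)/m}$. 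Squaring this gives the $\|\cdot\|_\infty^2$ bound. The $\|\cdot\|_\rho^2$ bound then follows because $\rho_\bx$ is a probability measure, so $\|h\|_\rho\le\|h\|_\infty$.

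\emph{Main obstacle.} The only real difficulty is keeping the SGD iterates inside the radius-$R$ ball for every sample path. Without that, the Hessian bound of Lemma~\ref{lem:Hessian} cannot be used. This is exactly what Lemma~\ref{lem:wk-wo_sgd} supplies. Its proof needs the stronger width condition $(\eta T)^4$ rather than $(\eta T)^3$, because SGD cannot use monotonicity of the empirical loss. It instead controls the residual $|f^{\text{SGD}}_{\bW(k)}(\bx)-y|$ by $C_\sigma^L\sqrt{\eta k}+1$, and that residual multiplies the almost-convexity error $\epsilon$. With that lemma available, the present statement is a direct consequence.
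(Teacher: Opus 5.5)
Your proposal is correct and matches the paper's proof. Both use Lemma~\ref{lem:wk-wo_sgd} to keep $\|\bW(T)-\bW(0)\|_2\le 2\sqrt{\eta T}$, then apply \eqref{eq:almost_linear} with $\widetilde{\bW}=\bW(T)$, $\bW=\bW(0)$, $R=2\sqrt{\eta T}$ and $R+1\lesssim\sqrt{\eta T}$ to get the $(\eta T)^{3/2}$ sup-norm bound; your remark that the initialization event holds uniformly over all SGD index sequences is a correct justification, left implicit in the paper, for taking expectations later.
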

\begin{proof}
    From Lemma \ref{lem:wk-wo_sgd} we know $\|\bW(k) - \bW(0)\|_2 \le 2\sqrt{\eta k}$ for all $k\in[T]$.
    Setting $R = 2\sqrt{\eta T}$.
    The condition $m \gtrsim C_\sigma^L d(\eta T)^4\log(m/\delta)$ implies $m\gtrsim R^2$.
    Then, from the definition of $f^{\text{lin,SGD}}_\bW$ and \eqref{eq:almost_linear} in Lemma \ref{lem:almost-convex} with $R=2\sqrt{\eta T}$, $\widetilde{\bW} = \bW(k)$ and $\bW = \bW(0)$, it holds that
    \begin{align*}
        \big\|f_{\bW(k)}^{\text{SGD}} - f^{\text{lin,SGD}}_{\bW(k)}\big\|_\infty \le C_\sigma^L \sqrt{\frac{(\eta k)^3\log(m/\delta)}{m}}.
    \end{align*}
    Taking $k=T$ completes the proof of the lemma.
\end{proof}

\subsubsection{(2) Control of $\big\|f^{\text{lin,SGD}}_{\bW(T)} - \bS_m f^m_T\big\|_\rho^2$}\label{app:SGD-term2}

Recall that $f^m_k$  defined in \eqref{eq:update-kernelSGD} denotes the iteration of SGD in $\H_m$.

\begin{lemma}\label{prop:flin-kernel_sgd}
    Let $\delta\in(0,1).$
    Suppose Assumption \ref{ass:activation} holds.
    Assume $\eta \le C_\sigma^{-L}$, $\eta T\ge1$ and $m \gtrsim C_\sigma^Ld(\eta T)^4\log(m/\delta)$.
    With probability at least $1 - CL\exp(-m/2) - \delta$ over initialization $(\ba, \bW(0))$, it holds that
    \begin{align*}
    \big\|f^{\text{lin,SGD}}_{\bW(T) } - f_T^m\big\|_\rho^2 \le     \big\|f^{\text{lin,SGD}}_{\bW(T) } - f_T^m\big\|_\infty^2 \le C_\sigma^L  (\eta T)^{5}\frac{\log(m/\delta)}{m} 
    \end{align*}
\end{lemma}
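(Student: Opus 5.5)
The plan is to mirror the GD argument of Lemma~\ref{prop:flin-gkm_gd}, but to work pathwise along a single SGD realization $\{i_k\}$ and in the sup norm, so that no expectation or sampling concentration is needed. Set $F_k := f^{\text{lin,SGD}}_{\bW(k)} - f^m_k \in \H_m$. Because $f_{\bW(0)}=0$, the map $\bW\mapsto f^{\text{lin,SGD}}_{\bW}$ is linear in $\bW(k)-\bW(0)$, so $f^{\text{lin,SGD}}_{\bW(k)}\in\H_m$. Subtracting the kernel SGD recursion \eqref{eq:update-kernelSGD} from the linearized SGD increment gives
\[
F_{k+1} = F_k - \eta F_k(\bx_{i_k}) K^m_{\bx_{i_k}} + \epsilon_k = \big(\mathbf{I} - \eta K^m_{\bx_{i_k}}\otimes K^m_{\bx_{i_k}}\big)F_k + \epsilon_k .
\]
Here $\epsilon_k(\bx) = \langle \Delta(k), \nabla f_{\bW(0)}(\bx)\rangle$, with
\[
\Delta(k) = \eta\big[(y_{i_k}-f^{\text{SGD}}_{\bW(k)}(\bx_{i_k}))\big(\nabla f_{\bW(k)}(\bx_{i_k}) - \nabla f_{\bW(0)}(\bx_{i_k})\big) + \big(f^{\text{lin,SGD}}_{\bW(k)}(\bx_{i_k}) - f^{\text{SGD}}_{\bW(k)}(\bx_{i_k})\big)\nabla f_{\bW(0)}(\bx_{i_k})\big].
\]
This is the same computation as in the GD case, with the empirical average replaced by the single sampled index.

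To bound $\|\epsilon_k\|_{\H_m}\le\|\Delta(k)\|_2$, I will use three facts with $R=2\sqrt{\eta T}$, all on the event of Lemma~\ref{lem:wk-wo_sgd}:
\begin{itemize}
\item $|f^{\text{SGD}}_{\bW(k)}(\bx)-y|\le C_\sigma^L\sqrt{\eta T}+1\le C_\sigma^L\sqrt{\eta T}$, from Lemma~\ref{lem:wk-wo_sgd} together with $\eta T\ge1$;
\item $\sup_{\bx}\|\nabla f_{\bW(k)}(\bx)-\nabla f_{\bW(0)}(\bx)\|_2\le C_\sigma^L\eta T\sqrt{\log(m/\delta)/m}$, from Lemma~\ref{lem:Hessian} and $\|\bW(k)-\bW(0)\|_2\le R$;
\item $\|f^{\text{SGD}}_{\bW(k)}-f^{\text{lin,SGD}}_{\bW(k)}\|_\infty\le C_\sigma^L(\eta T)^{3/2}\sqrt{\log(m/\delta)/m}$, from Lemma~\ref{prop:f-flin_sgd}, which holds for every $k\le T$.
\end{itemize}
Together with $\sup_\bx\|\nabla f_{\bW(0)}(\bx)\|_2\le C_\sigma^L$, these give $\|\epsilon_k\|_{\H_m}\le C_\sigma^L\eta(\eta T)^{3/2}\sqrt{\log(m/\delta)/m}$. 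The first term is larger here than in GD, because the SGD residual grows like $\sqrt{\eta T}$ instead of being controlled by $\sqrt{2\L_S}\le1$. This is the source of the extra power of $\eta T$.

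For propagation, note that $\eta\|K^m_{\bx_{i_k}}\|_{\H_m}^2\le \eta\|K^m\|_\infty\le\eta C_\sigma^L\le1$, so each operator $\mathbf{I}-\eta K^m_{\bx_{i_k}}\otimes K^m_{\bx_{i_k}}$ is a contraction on $\H_m$. Unrolling the recursion then gives
\[
\|F_T\|_{\H_m}\le\sum_{k<T}\|\epsilon_k\|_{\H_m}\le C_\sigma^L(\eta T)^{5/2}\sqrt{\log(m/\delta)/m}.
\]
By the reproducing property, $\|F_T\|_\infty\le\sqrt{\|K^m\|_\infty}\,\|F_T\|_{\H_m}\le C_\sigma^L\|F_T\|_{\H_m}$. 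Squaring yields the claimed $(\eta T)^5\log(m/\delta)/m$ bound, and $\|\cdot\|_\rho\le\|\cdot\|_\infty$ gives the first inequality. Since every bound is deterministic given the initialization event, the result holds for every SGD path.

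The main obstacle is the crude step of summing $T$ contraction errors without using any smoothing from the operator products. In GD the empirical covariance $\widehat{\bSigma}_m$ provides a $\sqrt{T/\eta}$ factor, but random products of rank-one projections do not commute, so that trick is unavailable here. I would accept the loss: it gives exactly $(\eta T)^{5/2}$, which matches the target. I also need to check carefully that Lemma~\ref{lem:Hessian} and Lemma~\ref{prop:f-flin_sgd} apply under $m\gtrsim C_\sigma^L d(\eta T)^4\log(m/\delta)$, which guarantees $m\gtrsim R^2$ and that the Lemma~\ref{lem:wk-wo_sgd} induction holds.
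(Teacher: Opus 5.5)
Your proposal is correct and follows the paper's proof step for step. It uses the same pathwise recursion $F_{k+1}=(\mathbf{I}-\eta K^m_{\bx_{i_k}}\otimes K^m_{\bx_{i_k}})F_k+\epsilon_k$ with contractive factors, the same $\Delta(k)$ decomposition bounded via Lemmas~\ref{lem:wk-wo_sgd}, \ref{lem:Hessian} and \ref{prop:f-flin_sgd}, the same crude summation of the $T$ error terms giving $(\eta T)^{5/2}$, and the same reproducing-property passage to the sup norm.
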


\begin{proof}
    For all $k\in[T-1]$, we define $\epsilon_k = f_{\bW(k+1)}^{\text{lin,SGD}} - f_{\bW(k)}^{\text{lin,SGD}} + \eta(f_{\bW(k) }^{\text{lin,SGD}}(\bx_{i_k}) - y_{i_k})K^m_{\bx_{i_k}} \in\H_m$.
From the update rule of $ f_{k}^{m}$ (see \eqref{eq:update-kernelSGD}), it holds that
\begin{align}
    f^{\text{lin,SGD}}_{\bW(k+1)} - f^m_{k+1} &= \big(f^{\text{lin,SGD}}_{\bW(k)} -f^m_k\big) -\eta\big(f^{\text{lin,SGD}}_{\bW(k)}(\bx_{i_k})- f^m_k(\bx_{i_k})\big)K^m_{\bx_{i_k}} + \epsilon_k \nonumber\\
    &= \big(f^{\text{lin,SGD}}_{\bW(k)} -f^m_k\big) - \eta\big\langle f^{\text{lin,SGD}}_{\bW(k)} -f^m_k, K^m_{\bx_{i_k}}\big\rangle_{\H_m}K^m_{\bx_{i_k}} + \epsilon_k\nonumber\\
    &= \big(\bfI - \eta K^m_{\bx_{i_k}}\otimes K^m_{\bx_{i_k}}\big)\big(f^{\text{lin,SGD}}_{\bW(k)} -f^m_k\big) + \epsilon_k\nonumber, 
\end{align}
where in the second equality we have used $f^{\text{lin,SGD}}_{\bW(k) } - f_k^m \in \H_m$ and the reproducing kernel property $f_{\bW(k)}^{\text{lin,SGD}}(\bx_{i_k}) - f^m_k(\bx_{i_k}) = \langle f_{\bW(k)}^{\text{lin,SGD}} - f^m_k,K^m_{\bx_{i_k}}\rangle_{\H_m}.$

Applying the above equality recursively, we get
\begin{align*}
   f^{\text{lin,SGD}}_{\bW(k+1)} - f^m_{k+1}
    &= \sum_{s=0}^{k}\prod_{a=s+1}^k \big(\bfI - \eta K^m_{\bx_{i_a}}\otimes K^m_{\bx_{i_a}}\big)\epsilon_{s},
\end{align*}
where we used the conventional notation $\prod_{k+1}^k = \bfI$ for any $k\in\mathbb{N}$.
Note that for any $a\in[k]$ and $i_a$, the operator $\eta K^m_{\bx_{i_a}}\otimes K^m_{\bx_{i_a}}$ is self-adjoint and positive, and from Lemma~\ref{lem:Hessian} we know $\|K^m_{\bx_{i_a}}\otimes K^m_{\bx_{i_a}}\|_{op} = \|K^m_{\bx_{i_a}}\|_{\H_m}^2 = K^m(\bx_{i_a},\bx_{i_a}) \le \|K^m\|_\infty = \sup_{\bx\in\X}\|\nabla f_{\bW(0)}(\bx)\|_2^2 \le C_\sigma^L$.
Then, $\|\bfI - \eta K^m_{\bx_{i_a}}\otimes K^m_{\bx_{i_a}}\|_{op} \le 1$ due to $\eta \le C_\sigma^{-L}$.

According to the above inequality, we have
\begin{align}\label{eq:sgd-flin-fm-sum}
    &\big\|f^{\text{lin,SGD}}_{\bW(k+1)} - f^m_{k+1}\big\|_\infty = \sup_{\bx\in\X}\big|f^{\text{lin,SGD}}_{\bW(k+1)}(\bx) - f^m_{k+1}(\bx)\big| = \sup_{\bx\in\X} \big|\big\langle f^{\text{lin,SGD}}_{\bW(k+1)} - f^m_{k+1}, K^m_\bx\big\rangle_{\H_m}\big|\nonumber\\
    &\le \sup_{\bx\in\X} \big\|f^{\text{lin,SGD}}_{\bW(k+1)} - f^m_{k+1}\big\|_{\H_m} \|K^m_\bx\|_{\H_m}   \le  \big\|f^{\text{lin,SGD}}_{\bW(k+1)} - f^m_{k+1}\big\|_{\H_m} \sqrt{\|K^m\|_\infty}\nonumber\\
    &\le C_\sigma^L\sum_{s=0}^{k}\prod_{a=s+1}^k \big\|\bfI - \eta K^m_{\bx_{i_a}}\otimes K^m_{\bx_{i_a}}\big\|_{op} \big\|\epsilon_{s}\big\|_{\H_m} \le C_\sigma^L\sum_{s=0}^{k} \big\|\epsilon_{s}\big\|_{\H_m},
\end{align}
where in the second equality we have used the reproducing kernel property and the last second inequality follows from $\sqrt{\|K^m\|_\infty}\le \sup_{\bx\in\X}\|\nabla f_{\bW(0)}(\bx)\|_2 \le \sqrt{C_\sigma^L} \le C_\sigma^L$.

Now, we are in a position to estimate $\|\epsilon_k\|_{\H_m}$.
For any $k\in[T-1]$, from the definition of $f^{\text{lin,SGD}}_{\bW }$ and the update rule of SGD \eqref{eq:sgd-update}, there holds
    \begin{align}
        \epsilon_k(\bx) &= f^{\text{lin,SGD}}_{\bW(k+1) }(\bx)  - f^{\text{lin,SGD}}_{\bW(k)} (\bx) + \eta\big(f^{\text{lin,SGD}}_{\bW(k)} (\bx_{i_k}) - y_{i_k}\big)K^m_{\bx_{i_k}}(\bx)\nonumber\\
        &= \big\langle \nabla f_{\bW(0)}(\bx), \bW(k+1)- \bW(k) \big\rangle + \eta\big(f^{\text{lin,SGD}}_{\bW(k)} (\bx_{i_k}) - y_{i_k}\big)\big\langle \nabla f_{\bW(0)}(\bx_{i_k}), \nabla f_{\bW(0)}(\bx) \big\rangle \nonumber\\
        & = \eta\big[\big(y_{i_k} - f^{\text{SGD}}_{\bW(k)}(\bx_{i_k})\big)\big\langle \nabla f^{\text{SGD}}_{\bW(k)}(\bx_{i_k}), \nabla f_{\bW(0)}(\bx) \big\rangle  + \big(f^{\text{lin,SGD}}_{\bW(k)}(\bx_{i_k}) - y_{i_k}\big) \big\langle \nabla f_{\bW(0)}(\bx_{i_k}), \nabla f_{\bW(0)}(\bx) \big\rangle\big] \nonumber\\
        &=\Big\langle \eta  \big[\big(y_{i_k} - f^{\text{SGD}}_{\bW(k)}(\bx_{i_k})\big)\nabla f^{\text{SGD}}_{\bW(k)}(\bx_{i_k}) + \big(f^{\text{lin,SGD}}_{\bW(k)}(\bx_{i_k}) - y_{i_k}\big)\nabla f_{\bW(0)}(\bx_{i_k})\big] , \nabla f_{\bW(0)}(\bx)\Big\rangle \nonumber\\
        & =: \big\langle \Delta(k), \nabla f_{\bW(0)}(\bx) \big\rangle, \nonumber
    \end{align}
    where the second equality follows from $K^m_{\bx_{i_k}}(\bx) = K^m(\bx_{i_k},\bx) = \langle \nabla f_{\bW(0)}(\bx_{i_k}), \nabla f_{\bW(0)}(\bx) \rangle$, and the third equality is according to the update rule \eqref{eq:sgd-update}, and in the last equality we defined
    \begin{align*}
        \Delta(k) &:= \eta  \big[\big(y_{i_k} - f^{\text{SGD}}_{\bW(k)}(\bx_{i_k})\big)\nabla f^{\text{SGD}}_{\bW(k)}(\bx_{i_k}) + \big(f^{\text{lin,SGD}}_{\bW(k)}(\bx_{i_k}) - y_{i_k}\big)\nabla f_{\bW(0)}(\bx_{i_k})\big]\\
        &= \eta \big[\big(y_{i_k} - f^{\text{SGD}}_{\bW(k)}(\bx_{i_k})\big)\big(\nabla f^{\text{SGD}}_{\bW(k)}(\bx_{i_k}) - \nabla f_{\bW(0)}(\bx_{i_k})\big) + \big(f^{\text{lin,SGD}}_{\bW(k)}(\bx_{i_k}) - f^{\text{SGD}}_{\bW(k)}(\bx_{i_k})\big)\nabla f_{\bW(0)}(\bx_{i_k})\big].
    \end{align*}
    
    According to the definition  $\|f\|_{\H_m}  =  \inf  \{ \|\bW\|_2 :   \bW  \in \W  \text{ with } f(\bx) =  \langle \bW, \Phi_m(\bx) \rangle_2  \}$, we have
    \begin{align}\label{eq:epsilon2}
        &\|\epsilon_k\|_{H_m} \le \|\Delta(k)\|_2\nonumber\\
        &\le \eta \big[\big|y_{i_k} -  f^{\text{SGD}}_{\bW(k)}(\bx_{i_k})\big|\big\|\nabla f^{\text{SGD}}_{\bW(k)}(\bx_{i_k}) - \nabla f_{\bW(0)}(\bx_{i_k})\big\|_2  + \big|f^{\text{lin,SGD}}_{\bW(k)}(\bx_{i_k})  -  f^{\text{SGD}}_{\bW(k)}(\bx_{i_k})\big|  \big\|\nabla f_{\bW(0)}(\bx_{i_k})\big\|_2\big].
    \end{align}
    Setting $R = 2\sqrt{\eta T}$, from Lemma \ref{lem:wk-wo_sgd} we know $\sup_{z\in\Z}|f^{\text{SGD}}_{\bW(k)}(\bx) - y| \le C_\sigma^L\sqrt{\eta k} + 1$ and $\|\bW(k) -\bW(0)\|_2\le R = 2\sqrt{\eta T}$ for any $k\in[T]$.
    Combining this, Lemma \ref{lem:Hessian} and fundamental theorem of calculus, it holds that
    \begin{align*}
        &\sup_{z\in\Z}|f^{\text{SGD}}_{\bW(k)}(\bx)-y|\big\|\nabla f^{\text{SGD}}_{\bW(k)}(\bx)  -  \nabla f_{\bW(0)}(\bx)\big\|_2\\
        &\le C_\sigma^L\sqrt{\eta T}\sup_{\bx\in\X}\Big\|\int_0^1 \nabla^2 f_{\bW(0) + t(\bW(k) - \bW(0))}(\bx) \big(\bW(k) - \bW(0)\big) dt\Big\|_2\\
        &\le C_\sigma^L\sqrt{\eta T}\sup_{\|\bW-\bW(0)\|_2\le R}\sup_{\bx\in\X}\big\|\nabla^2 f_{\bW}(\bx)\big\|_{op}\big\|\bW(k)-\bW(0)\big\|_2\\
        &\le C_\sigma^L \eta T\big\|\bW(k)-\bW(0)\big\|_2\sqrt{\frac{\log(m/\delta)}{m}} \le C_\sigma^L (\eta T)^{\frac{3}{2}}\sqrt{\frac{\log(m/\delta)}{m}}.
    \end{align*}
    According to Lemma \ref{prop:f-flin_sgd}, we know $\|f^{\text{SGD}}_{\bW(k)} - f_{\bW(k)}^{\text{lin,SGD}}\|_\infty \le C_\sigma^L(\eta k)^{\frac{3}{2}}\sqrt{\log(m/\delta)/m}$.
    From Lemma~\ref{lem:Hessian} we know $\sup_{\bx\in\X} \|\nabla f_{\bW(0)}(\bx)\|_2 \le C_\sigma^L$.
    Then, $|f^{\text{lin,SGD}}_{\bW(k)}(\bx_{i_k})  -  f_{\bW(k)}(\bx_{i_k})|  \|\nabla f_{\bW(0)}(\bx_{i_k})\|_2 \le C_\sigma^L(\eta k)^{\frac{3}{2}}\sqrt{\log(m/\delta)/m}$.
    Plugging the above estimates into \eqref{eq:epsilon2}, we know with probability at least $1 - CL\exp(-m/2)-\delta$ over initialization $(\ba, \bW(0))$, it holds that
    \begin{align*}
        \|\epsilon_k\|_{\H_m}\le C_\sigma^L  \eta^{\frac{5}{2}} T^{\frac{3}{2}}\sqrt{\frac{\log(m/\delta)}{m}}.
    \end{align*}
Plugging the estimate of $\|\epsilon_s\|_{\H_m}$ back into \eqref{eq:sgd-flin-fm-sum}, with probability at least $1-CL\exp(-m/2) - \delta$ over initialization $(\ba, \bW(0))$, it holds that
\begin{align*}
    \|f^{\text{lin,SGD}}_{\bW(k) } - f_k^m\|_\infty \le C_\sigma^L\sum_{s=0}^k\|\epsilon_s\|_{\H_m} \le C_\sigma^L  (\eta T)^{\frac{5}{2}}\sqrt{\frac{\log(m/\delta)}{m}}.
\end{align*}
Taking $k=T$ completes the proof of the lemma.
\end{proof}

\subsubsection{(3) Control of $\E\big[\big\|\bS_m\big(f^m_T - g^{m,\text{GD}}_T\big)\big\|_\rho^2\big]$}\label{app:SGD-term3}

\begin{lemma}\label{lem:fmk-gmk}
    Suppose Assumption \ref{ass:activation} holds.
    Let $\delta \in (0,1)$ and $T\in\mathbb{N}$.
    Suppose 
    $
        0 < \eta \le \frac{1}{C_\sigma^L(\log (T) + 1)} \text{ and } \frac{1}{\eta T} \ge \frac{C_\sigma^L}{n}\log\big(\frac{n}{\delta}\big),
    $
    Then, with probability at least $1 - \delta$ over sampling, it holds that
    $$
        \ebb_\A\big[\big\|\bS_m(f^m_T - g^{m,\text{GD}}_T)\big\|^2_\rho\big] \le C_\sigma^L\eta(\log (T) \vee 1).
    $$
\end{lemma}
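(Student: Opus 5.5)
We compare the kernel SGD recursion \eqref{eq:update-kernelSGD} with the kernel GD recursion \eqref{eq:GD-Km} in $\H_m$. Set $\widehat{\bSigma}_m=\frac1n\sum_i K^m_{\bx_i}\otimes K^m_{\bx_i}$ and $\hat g_m=\frac1n\sum_i y_iK^m_{\bx_i}$. The stochastic gradient at step $k$ is $(f^m_k(\bx_{i_k})-y_{i_k})K^m_{\bx_{i_k}}$. Its conditional mean given $\F_k=\sigma(i_0,\dots,i_{k-1})$ is $\widehat{\bSigma}_m f^m_k-\hat g_m$, so we can write the noise as
\[
\xi_k := \big(\widehat{\bSigma}_m f^m_k-\hat g_m\big)-\big(f^m_k(\bx_{i_k})-y_{i_k}\big)K^m_{\bx_{i_k}},
\qquad \ebb[\xi_k\mid\F_k]=0 .
\]
With $D_k=f^m_k-g^{m,\text{GD}}_k$ this gives $D_{k+1}=(\bfI-\eta\widehat{\bSigma}_m)D_k+\eta\xi_k$ and $D_0=0$. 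Unrolling,
\[
D_T=\eta\sum_{s=0}^{T-1}(\bfI-\eta\widehat{\bSigma}_m)^{T-1-s}\xi_s .
\]
The $\xi_s$ form a martingale difference sequence in $\H_m$, and the operators $(\bfI-\eta\widehat{\bSigma}_m)^j$ are deterministic given the sample. Cross terms therefore vanish in expectation, and for any deterministic operator $A$,
\[
\ebb\|AD_T\|^2_{\H_m}=\eta^2\sum_s\ebb\|A(\bfI-\eta\widehat{\bSigma}_m)^{T-1-s}\xi_s\|^2_{\H_m}.
\]

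\textbf{Step 1 (moving from $\|\cdot\|_\rho$ to the empirical norm).} As in the proof of Lemma~\ref{prop:flin-gkm_gd}, we apply Lemma~\ref{lemma:<2} with $\lambda=(\eta T)^{-1}$. The hypothesis $\frac1{\eta T}\ge \frac{C_\sigma^L}{n}\log\frac n\delta$ ensures its condition, so on an event of probability $1-\delta$ over the sample,
\[
\|\bS_mD_T\|_\rho^2\le 4\|\widehat{\bSigma}_m^{1/2}D_T\|^2_{\H_m}+4\lambda\|D_T\|^2_{\H_m}.
\]

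\textbf{Step 2 (noise second moment).} We need $\ebb\|\xi_s\|^2_{\H_m}\le \ebb[(f^m_s(\bx_{i_s})-y_{i_s})^2]\,\|K^m\|_\infty\le C_\sigma^L\,\ebb[\mathcal{E}_s]$, where $\mathcal{E}_s:=\frac1n\sum_i(f^m_s(\bx_i)-y_i)^2=2\L_S(f^m_s)$. This is the main obstacle: we need a uniform bound $\ebb[\mathcal{E}_s]\lesssim 1$ along the kernel SGD trajectory, without the $\sqrt{\eta k}$ growth that appears in Lemma~\ref{lem:wk-wo_sgd}.

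I would prove it by the standard one-step descent argument for least squares SGD. Write $f^m_{s+1}-f^m_s=-\eta(f^m_s(\bx_{i_s})-y_{i_s})K^m_{\bx_{i_s}}$. Because $\L_S$ is a quadratic in $\H_m$ with smoothness $\|\widehat{\bSigma}_m\|_{op}\le C_\sigma^L$, taking expectation over $i_s$ gives
\[
\ebb[\mathcal{E}_{s+1}\mid\F_s]\le \mathcal{E}_s-2\eta\|\widehat{\bSigma}_m f^m_s-\hat g_m\|^2_{\H_m}+C_\sigma^L\eta^2\,\frac1n\sum_i(f^m_s(\bx_i)-y_i)^2\|K^m_{\bx_i}\|^2\|\widehat{\bSigma}_m\| .
\]
This yields only $\ebb\mathcal{E}_{s+1}\le(1+C_\sigma^L\eta^2)\ebb\mathcal{E}_s$, which is too weak on its own. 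The cleaner route is to bound $\ebb\mathcal{E}_s$ directly through the decomposition: $f^m_s=g^{m,\text{GD}}_s+D_s$, with $\mathcal{E}(g^{m,\text{GD}}_s)\le 1$ by the argument of \eqref{eq:tilde_g_norm}. Then
\[
\ebb\mathcal{E}_s\le 2+2\ebb\|\widehat{\bSigma}_m^{1/2}D_s\|^2_{\H_m},
\]
and we can run an induction on $\sup_{s\le k}\ebb\mathcal{E}_s\le 4$.

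\textbf{Step 3 (closing the induction and the bound).} Using $\sup_{t\in[0,1]}t(1-t)^{2j}\lesssim 1/(j+1)$, with $t$ an eigenvalue of $\eta\widehat{\bSigma}_m$ and $\eta\|\widehat{\bSigma}_m\|\le1$, we get
\[
\ebb\|\widehat{\bSigma}_m^{1/2}D_k\|^2_{\H_m}\le \eta\sum_{j<k}\frac{C}{j+1}\,\ebb\|\xi_{k-1-j}\|^2_{\H_m}\le C_\sigma^L\eta(\log k+1)\sup_{s<k}\ebb\mathcal{E}_s .
\]
The step size condition $\eta\le (C_\sigma^L(\log T+1))^{-1}$ makes the right side at most $1$, which closes the induction. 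For the $\lambda$ term,
\[
\lambda\,\ebb\|D_T\|^2_{\H_m}\le \frac{1}{\eta T}\,\eta^2\,T\,C_\sigma^L\sup_s\ebb\mathcal{E}_s=C_\sigma^L\eta .
\]
Combining Steps 1–3 gives $\ebb_\A\|\bS_m(f^m_T-g^{m,\text{GD}}_T)\|_\rho^2\le C_\sigma^L\eta(\log T\vee1)$.
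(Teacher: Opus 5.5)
Your argument is correct. The paper does not carry out this computation. It checks the hypotheses of Proposition~6 in \cite{lin2017optimal}, taking $b=1$, $\theta=0$, $M=v=1$ and $\lambda=(\eta T)^{-1}$, with $K^m_{\bx}$ and $\bS_m$ in place of that paper's features and inclusion map. It then supplies that paper's condition (47) through Lemma~\ref{lemma:<2}, exactly as in your Step~1, and cites the conclusion.

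What you wrote is essentially an unpacking of that proposition. It consists of three pieces:
\begin{itemize}
\item the martingale expansion $D_T=\eta\sum_s(\bfI-\eta\widehat{\bSigma}_m)^{T-1-s}\xi_s$;
\item the variance bound $\ebb\|\xi_s\|_{\H_m}^2\le\|K^m\|_\infty\,\ebb\,\mathcal{E}_s$;
\item the induction $\sup_s\ebb\,\mathcal{E}_s\le 4$, obtained by writing $f^m_s=g^{m,\text{GD}}_s+D_s$ and using that the GD iterate has empirical risk at most $1$.
\end{itemize}

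Your version makes visible where each hypothesis enters. The harmonic sum coming from $\sup_{t\in[0,1]}t(1-t)^{2j}\lesssim 1/(j+1)$ produces the $\log T$ factor. The step-size restriction $\eta\lesssim (C_\sigma^L(\log T+1))^{-1}$, after enlarging the generic constant, is exactly what closes the induction. The argument also shows directly that only $\|K^m\|_\infty\le C_\sigma^L$ and $|y|\le 1$ are used.

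The paper's route is shorter and inherits the generality of \cite{lin2017optimal} (mini-batches, decaying step sizes). The cost is translating notation and verifying that paper's assumptions.

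The abandoned one-step descent attempt in your Step~2 is not needed and can be deleted.
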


\begin{proof}
    We prove this lemma by applying Proposition 6 in \cite{lin2017optimal}, which provides upper bounds for $\ebb_\A[\|\S_\rho\omega_{T+1} - \S_\rho\nu_{T+1}\|_\rho^2]$.
    We first show that their assumptions are satisfied in our setting, and then apply their results with our Lemma \ref{lemma:<2} by showing that $\S_\rho\omega_{T+1} - \S_\rho\nu_{T+1}$ is equivalent to $\bS_m f^m_T - \bS_m g^m_T$.

    In the proof of Lemma \ref{lemma:gk-hk} we already showed that Assumption 1 in \cite{lin2017optimal} holds with $M=v=1$.
    Instead of using the notations $\bx, \langle \bx, \bx'\rangle_H $ and $ \S_\rho$ in \cite{lin2017optimal} for any $\bx,\bx'\in\X$, we use $K^m_\bx,$ $\langle K^m_\bx, K^m_{\bx'}\rangle_{\H_m} $ and $ \bS_m$ in our setting.
    Using Lemma \ref{lemma:<2} with $\kappa^2 = C_\sigma^L \ge \sup_{\bx\in\X}\|\nabla f_{\bW(0)}(\bx)\|_2^2 \ge \|Q\|_{op}$, $\Gamma = n$, $\zeta_i = K^m_{\bx_i}$, $Q = \int_\X K^m_\bx\otimes K^m_\bx d\rho_{\bx}$, and $\lambda = (\eta T)^{-1}$, we know their (47) holds with probability at least $1 - \delta$ over sampling
    Further, by taking $\eta_k = \eta$ for all $k\in[T]$, we know the term $\S_\rho\omega_{T+1} - \S_\rho\nu_{T+1}$ in \cite{lin2017optimal} is the same as our $\bS_m f^m_T - \bS_m g^m_T$.

    Then, applying Proposition 6 in \cite{lin2017optimal} with $\eta_k = \eta$, $b=1$, $M=v=1$, $\theta = 0$ and $\lambda = (\eta T)^{-1}$, we can obtain the desired results.
\end{proof}

\subsubsection{(4) Control of $\big\|\bS_mg^{m,\text{GD}}_T - \bS g^{\text{GD}}_T\big\|_\rho^2$ and the proof of Theorem \ref{pro:connection-sgd}}\label{app:SGD-term4}

The term $\big\|\bS_mg^{m,\text{GD}}_T - \bS g^{\text{GD}}_T\big\|_\rho^2$ is estimated in Lemma \ref{prop:gmk-gk}. 
Now, we give the proof of Theorem\ref{pro:connection-sgd}.
\begin{proof}[Proof of Theorem \ref{pro:connection-sgd}]
    Note
    \begin{align*}
        \ebb_\A\big[\big\|f_{\bW(T)}^{\text{SGD}} - \bS g_T^{\text{GD}}\big\|_\rho^2\big] \lesssim & \,\, \ebb_\A\big[\big\|f_{\bW(T)}^{\text{SGD}} - f^{\text{lin,SGD}}_{\bW(T)}\big\|_\rho^2 + \big\|f^{\text{lin,SGD}}_{\bW(T)} - \bS_mf^m_T\big\|_\rho^2  + \big\|\bS_m\big(f^m_T - g^{m,\text{GD}}_T\big)\big\|_\rho^2\big] \nonumber\\
        & + \big\|\bS_mg^{m,\text{GD}}_T - \bS g^{\text{GD}}_T\big\|_\rho^2.
    \end{align*}
    Combining Lemmas \ref{prop:f-flin_sgd}, \ref{prop:flin-kernel_sgd}, \ref{lem:fmk-gmk} with $\delta$ replaced by $\delta/4$ and Lemma \ref{prop:gmk-gk} with $\delta$ replaced by $\delta/4$ yields the desired results.
\end{proof}



\subsection{Minimax-Optimal Bounds of SGD}\label{app:minimax-sgd}
Finally, we give a general excess risk bound as follows. 
\begin{theorem}\label{thm:f-frho_sgd}
Let $\delta\in(0,1)$ and $T\in\mathbb{N}$.
Suppose Assumptions~\ref{ass:activation}, \ref{ass:effec_dim} and \ref{ass:frho_smth} hold.
Let $\bW(T)$ be the output of SGD with symmetric Gaussian initialization. 
Assume that $\eta \le 1/\big(C_\sigma^L(\log(T) + 1)\big)$, $1 \le \eta T \le n\big(C_\sigma^L\log(n/\delta)\big)^{-1}$ and $m \gtrsim C_\sigma^L d(\eta T)^5\log^2({m}/{\delta})$.
    With probability at least $1-L^2\exp(-C_\sigma d\log^2(m)) - \delta$ over initialization $(\ba,\bW(0))$ and sampling, it holds that 
    \begin{align*}
          \ebb_\A\big[\varepsilon_{risk}\big(f^{\text{SGD}}_{\bW(T)}\big)\big]  \lesssim \frac{C_\sigma^L d(\eta T)^5\log^2(m)}{m}  + C_\sigma^L\eta\log(T) 
       + C_\sigma^{L(\beta \vee 1)}  \Big(   \frac{\eta T}{n^{2}}   + \frac{(\eta T)^{\gamma\vee (1 - 2\beta)}}{n}    \Big) \log^4 ( \frac{T}{\delta})+ \frac{1}{(\eta T)^{ 2\beta}}.
    \end{align*}
\end{theorem}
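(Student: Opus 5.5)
The proof mirrors that of Theorem~\ref{thm:f-frho_gd}; the only change is that the network--kernel step now carries an expectation over the SGD randomness. I would start from the pointwise bound
\[
\|f^{\text{SGD}}_{\bW(T)}-f_\rho\|_\rho^2\le 2\|f^{\text{SGD}}_{\bW(T)}-\bS g^{\text{GD}}_T\|_\rho^2+2\|\bS g^{\text{GD}}_T-f_\rho\|_\rho^2 .
\]
Next I take $\ebb_\A$ of both sides. The second term does not depend on the SGD indices $\{i_k\}$, since $g^{\text{GD}}_T$ is the full-batch kernel GD iterate in $\H_K$. The expectation therefore acts only on the first term. Using $\varepsilon_{risk}(f)=\frac12\|f-f_\rho\|_\rho^2$ then gives
\[
\ebb_\A\big[\varepsilon_{risk}(f^{\text{SGD}}_{\bW(T)})\big]\lesssim \ebb_\A\big[\|f^{\text{SGD}}_{\bW(T)}-\bS g^{\text{GD}}_T\|_\rho^2\big]+\|\bS g^{\text{GD}}_T-f_\rho\|_\rho^2 .
\]

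For the first term I invoke Theorem~\ref{pro:connection-sgd} with $\delta$ replaced by $\delta/2$. Its hypotheses are exactly the ones assumed here, up to constants in $C_\sigma^L$ and in the logarithms: the step size $\eta\le 1/(C_\sigma^L(\log T+1))$, the horizon range $1\le\eta T\le n(C_\sigma^L\log(n/\delta))^{-1}$, and the width $m\gtrsim C_\sigma^Ld(\eta T)^5\log^2(m/\delta)$. It yields the bound $C_\sigma^Ld(\eta T)^5\log^2(m)/m+\eta\log T$. I write the second summand as $C_\sigma^L\eta\log T$, which is the form Lemma~\ref{lem:fmk-gmk} actually produces.

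For the second term I apply Lemma~\ref{prop:gT-frho} with $\delta/2$. It requires $\eta\le\|K\|_\infty^{-1}$ and $\eta T\le n(9\|K\|_\infty\log(4n/\delta))^{-1}$. Both follow from \eqref{eq:bound_K}, namely $\|K\|_\infty\le C_\sigma\le C_\sigma^L$, after absorbing the absolute constants into $C_\sigma^L$. Replacing $\|K\|_\infty^{\beta\vee1}$ by $C_\sigma^{L(\beta\vee1)}$ gives the last two groups of terms in the statement. The terms $(\eta T)^\gamma n^{-1}$ and $(\eta T)^{1-2\beta}n^{-1}$ combine into $(\eta T)^{\gamma\vee(1-2\beta)}n^{-1}$, using $\eta T\ge1$.

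A union bound over the two events finishes the proof. The initialization event (operator norms, $\|\ba\|_\infty$, and uniform NTK concentration under symmetric initialization) fails with probability at most $L^2\exp(-C_\sigma d\log^2 m)$ plus $\delta/2$. The sampling event of Lemma~\ref{prop:gT-frho} fails with probability at most $\delta/2$.

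The only real care point is the order of the probability quantifiers and the expectation. The high-probability statement is over $(\ba,\bW(0))$ and $S$, while the expectation is over $\A$. I must therefore check that every event used inside Theorem~\ref{pro:connection-sgd} is independent of the SGD indices. These are the Hessian bounds of Lemma~\ref{lem:Hessian}, Lemma~\ref{lemma:<2} applied to $\widehat\bSigma_m$, and the NTK concentration. On that fixed good event, the trajectory bounds of Lemma~\ref{lem:wk-wo_sgd} hold for every realization of $\{i_k\}$, so taking $\ebb_\A$ is legitimate. Everything else is bookkeeping of constants.
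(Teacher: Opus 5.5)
Your proposal is correct and follows the paper's proof: the same pointwise split through $\bS g^{\text{GD}}_T$, the observation that only the first term depends on the SGD indices, and then Theorem~\ref{pro:connection-sgd} and Lemma~\ref{prop:gT-frho}, each at confidence $\delta/2$, combined by a union bound. Your check that every good event is fixed before taking $\ebb_\A$ (so that Lemma~\ref{lem:wk-wo_sgd} holds for every index sequence) makes explicit a point the paper leaves implicit; one minor point, which the paper shares, is that Theorem~\ref{pro:connection-sgd} as stated delivers failure probability $L^2\exp(-C_\sigma\log^2 m)$, not the $L^2\exp(-C_\sigma d\log^2 m)$ written in the statement.
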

\begin{proof} 
Note
\begin{align*}
    \ebb_\A\big[\big\|f^{\text{SGD}}_{\bW(T)} - f_\rho\big\|_\rho^2\big] \lesssim \ebb_\A\big[\big\|f^{\text{SGD}}_{\bW(T)} - \bS g_T^{\text{GD}}\big\|_\rho^2\big] + \big\|\bS g_T^{\text{GD}} - f_\rho\big\|_\rho^2.
\end{align*}
    Combining Theorem \ref{pro:connection-sgd} with $\delta$ replaced by $\delta/2$ and  Lemma \ref{prop:gT-frho} with $\delta$ replaced by $\delta/2$ yields the desired results.
\end{proof}
Based on Theorem~\ref{thm:f-frho_sgd}, we give  the proof of Theorem \ref{cor:f-frho_sgd}.
\begin{proof}[Proof of Theorem \ref{cor:f-frho_sgd}]
    It is easy to show that $1 \le \eta T \le n\big(C_\sigma^L\log(n/\delta)\big)^{-1}$ holds for $\eta=(C_\sigma^L\log(n/\delta))^{-1} n^{-\frac{2\beta}{2\beta+\gamma}} $,  $ T = \lceil n^{\frac{2\beta+1}{2\beta+\gamma}}\rceil$ and $n \ge (C_\sigma^{L}\log(n/\delta))^{2\beta+\gamma}$.
    Note that the choices of $\eta$ and $T$ and the assumption $2\beta+\gamma>1$ imply $C_\sigma^L\eta\log(T)
       + C_\sigma^{L(\beta \vee 1)}  \Big(   \frac{\eta T}{n^{2}}   + \frac{(\eta T)^{\gamma\vee 1 - 2\beta}}{n}    \Big) \log^4 ( \frac{T}{\delta})+ \frac{1}{(\eta T)^{ 2\beta}} \lesssim C_\sigma^L n^{-\frac{2\beta}{2\beta+\gamma}}\log^{2+2\beta}(\frac{n}{\delta})$. 
       
    Further, similar to the proof of Theorem \ref{cor:f-frho_gd}, one can also prove that $m\gtrsim C_\sigma^Ldn^{\frac{2\beta+5}{2\beta+\gamma}}\log^3\big(\frac{dLn}{\delta}\big)$ indicates $m \gtrsim C_\sigma^L d(\eta T)^5\log^3(m/\delta)$ and $\frac{C_\sigma^Ld(\eta T)^5\log^3(m/\delta)}{m} \lesssim C_\sigma^Ln^{-\frac{2\beta}{2\beta+\gamma}}$.
    In addition, note that $m \gtrsim C_\sigma^L d(\eta T)^5\log^3(m/\delta)$ implies $L^2\exp(-C_\sigma \log^2(m)) \le \delta/2$.
    Combining the above observations with Theorem \ref{thm:f-frho_sgd} with $\delta$ replaced by $\delta/2$ yields the desired results.
\end{proof}

\section{Uniform Concentration of the NTK under General Gaussian Initialization} \label{app:NTK}
In this section, we consider the standard Gaussian initialization,
which differs from the symmetric initialization adopted in the main text.
We show that the corresponding finite-width NTK concentrates uniformly
around a deterministic infinite-width kernel.

More precisely, for every $\ell \in [L]$, the entries of
$\mathbf{W}^{\ell}(0)$ and the output layer $\ba$ are independently drawn from
$\mathcal{N}(0,1)$.
Under this initialization, the layerwise gradients $\nabla_{\ell}f_{\mathbf{W}(0)}(\mathbf{x})$
are generally nonzero also for $\ell\in[L-1]$.
We define the finite-width NTK at each layer $\ell\in[L]$ as
\begin{align}
    K^{m,\ell}(\bx, \bx') &= \big\langle \nabla_\ell f_{\bW(0)}(\bx), \nabla_\ell f_{\bW(0)}(\bx')\big\rangle   = \langle o^{\ell-1}_0(\bx),o^{\ell-1}_0(\bx')\rangle \, \ba^\top \bV^\ell_{L,0}(\bx)^\top\bV^\ell_{L,0}(\bx')\ba .\nonumber
\end{align}
In the following theorem, we show that $K^{m,\ell}$ converges to an infinite-width NTK $K^\ell:\X\times\X\to\R$ uniformly over $\X\times\X$, which is given by
\begin{align*}
    K^\ell(\bx,\bx') = c_\sigma\ebb[\sigma(U^{\ell-1}(\bx))\sigma(U^{\ell-1}(\bx'))] \prod_{h=\ell}^{L}q^h(\bx,\bx'),
\end{align*}
where the definitions of $U^\ell(\bx)$ and $q^h(\bx)$ can be found in Appendix~\ref{appde:problem}.


\subsection{Concentration Theorem}
\begin{theorem}[Restatement of Theorem~\ref{thm:concentration}]\label{thm:re-concen} 
    Let $\delta\in(0,1)$.  
 Suppose Assumption~\ref{ass:activation} holds, and $m \gtrsim C_\sigma^L d\log({m}/{\delta})$. 
    With probability at least $1-L^2\exp(-C_\sigma \log^2(m)) - \delta$ over initialization $(\ba,\bW(0))$, for all $\ell\in[L]$, it holds that 
    \begin{align*}
        \|K^{m,\ell} - K^\ell \|_\infty \lesssim C_\sigma^L\sqrt{\frac{d\log^2(m)}{m}}.
    \end{align*}
\end{theorem}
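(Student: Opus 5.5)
The kernel $K^{m,\ell}(\bx,\bx')$ factors as a product of a forward part $\langle o^{\ell-1}_0(\bx),o^{\ell-1}_0(\bx')\rangle$ and a backward part $\ba^\top \bV^\ell_{L,0}(\bx)^\top\bV^\ell_{L,0}(\bx')\ba$. The plan is to show that each factor concentrates uniformly around its deterministic limit, namely $c_\sigma\ebb[\sigma(U^{\ell-1}(\bx))\sigma(U^{\ell-1}(\bx'))]$ and $\prod_{h=\ell}^L q^h(\bx,\bx')$ respectively, with deviation $C_\sigma^L\sqrt{d\log^2(m)/m}$. Since both limits are bounded by $C_\sigma$, a product-of-perturbations argument then gives the claim.

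\textbf{Step 1: forward kernel.} By induction on the layer, prove
\[
\sup_{\bx,\bx'}\Big|\langle o^{\ell}_0(\bx),o^{\ell}_0(\bx')\rangle - c_\sigma\ebb[\sigma(U^\ell(\bx))\sigma(U^\ell(\bx'))]\Big|\lesssim C_\sigma^\ell\sqrt{d\log^2(m)/m}.
\]
Condition on layers $1,\ldots,\ell-1$. Then $\langle o^\ell_0(\bx),o^\ell_0(\bx')\rangle=\frac{c_\sigma}{m}\sum_r\sigma(\bw_r^\top o^{\ell-1}_0(\bx))\sigma(\bw_r^\top o^{\ell-1}_0(\bx'))$ is an average of $m$ i.i.d.\ bounded terms. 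Hoeffding gives deviation $\sqrt{\log(1/\delta')/m}$ from the Gaussian expectation with covariance equal to the empirical Gram matrix of $o^{\ell-1}_0$.
\begin{itemize}
\item To pass from that Gaussian expectation to the one under the limiting covariance $\Sigma^{\ell-1}$, use that $(a,b,c)\mapsto\ebb\,\sigma(U)\sigma(V)$ over a $2\times2$ covariance is Lipschitz near the diagonal $(1,1)$; this follows from smoothness of $\sigma$ via Gaussian interpolation or Price's theorem. The induction hypothesis, including the diagonal entries, then propagates the error with a $C_\sigma$ factor.
\item For uniformity over $\X\times\X$, take an $\epsilon$-net of $\mathbb S^{d-1}$ of size $(3/\epsilon)^d$ with $\epsilon=m^{-1}$, and union-bound over pairs. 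This produces the $d\log m$ inside the square root.
\item Off-net points are handled by Lemma~\ref{lem:tildeo-o}-type Lipschitz bounds in $\bx$: on the event of Lemma~\ref{lemma:oprt_norm}, $o^\ell_0$ is $C_\sigma^\ell$-Lipschitz in $\bx$, and the limit kernel is also Lipschitz.
\end{itemize}
Choosing $\delta'=\exp(-C\,d\log^2 m)$ per net point yields the stated failure probability $L^2\exp(-C_\sigma\log^2 m)$ after a union bound over layers.

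\textbf{Step 2: backward kernel.} Write $\bV^\ell_{L,0}(\bx)\ba$ through the backward vectors $\bb^{L}(\bx)=\sqrt{c_\sigma/m}\,\bD^L_0(\bx)\ba$ and $\bb^{h}(\bx)=\sqrt{c_\sigma/m}\,\bD^h_0(\bx)\bW^{h+1}(0)^\top\bb^{h+1}(\bx)$. The goal is
\[
\langle \bb^h(\bx),\bb^h(\bx')\rangle\approx\prod_{k=h}^L q^k(\bx,\bx').
\]
Proceed by backward induction on $h$.
\begin{itemize}
\item \emph{Last layer.} $\langle \bb^L(\bx),\bb^L(\bx')\rangle=\frac{c_\sigma}{m}\sum_r a_r^2\sigma'(\cdot)\sigma'(\cdot)$. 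Its concentration is handled like Step 1, using subexponential Bernstein for $a_r^2$ and the forward approximation of the preactivations.
\item \emph{Inductive step.} This is the main obstacle, because $\bW^{h+1}(0)$ appears both in the forward pass, through $\bD^{h+1}_0$ and $o^{h+1}_0$, and in the backward multiplication. I would first try a Gaussian conditioning trick (as in Yang's tensor programs or Arora et al.): condition on the forward values $\bW^{h+1}(0)o^{h}_0(\bx)$ for $\bx$ in the net. Then $\bW^{h+1}(0)^\top$ acting on $\bb^{h+1}$ decomposes into an independent Gaussian matrix projected orthogonally to the span of the $o^h_0$'s, plus a low-rank correction. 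Since the net is large, the correction should instead be taken per pair $(\bx,\bx')$, so it has rank $2$ and costs only $O(1/\sqrt m)$.
\item Given this decomposition, the inner product $\frac{c_\sigma}{m}\sum_r\sigma'_r(\bx)\sigma'_r(\bx')(\tilde\bW^\top\bb^{h+1}(\bx))_r(\tilde\bW^\top\bb^{h+1}(\bx'))_r$ concentrates around $q^h\cdot\langle\bb^{h+1}(\bx),\bb^{h+1}(\bx')\rangle$ by Hanson--Wright and sub-Gaussian arguments, with a $\log m$ loss from $\|\ba\|_\infty$ and from $\max_r$ of the Gaussian coordinates.
\end{itemize}
This step is where the $\log^2 m$ arises. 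The net and Lipschitz extension are as in Step 1, using Lemma~\ref{lem:tildeVa-Va} with $\tildeW=\bW$ but perturbed inputs, i.e.\ Lipschitzness of $\bV^\ell_{L,0}(\bx)\ba$ in $\bx$.

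\textbf{Step 3: combination.} Write $K^{m,\ell}-K^\ell$ as $(A_m-A)B_m+A(B_m-B)$, with $|A|,|B_m|\le C_\sigma^L$. Union-bound over the $L$ forward and $L$ backward inductions to obtain the stated probability. The requirement $m\gtrsim C_\sigma^L d\log(m/\delta)$ ensures that the errors at each layer stay below $1$, so that the Lipschitz expansions around the diagonal remain valid.
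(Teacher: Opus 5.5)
Your Step 1 (layerwise induction, Lipschitz dependence of the Gaussian expectation on the covariance, a net of $\mathbb S^{d-1}$ with Lipschitz extension) and your Step 3 (splitting $K^{m,\ell}-K^\ell$ into forward and backward deviations) are essentially what the paper does. The gap is the inductive step of Step 2.

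You assert that, after conditioning on $\bW^{h+1}(0)o^h_0(\bx)$ and $\bW^{h+1}(0)o^h_0(\bx')$, the correction $\Pi\,\bW^{h+1}(0)^\top\mathbf{b}^{h+1}(\bx)$ ``has rank $2$ and costs only $\mathcal{O}(1/\sqrt m)$''. Rank $2$ alone does not give this. The correction equals $\sum_{i=1}^2 u_i\langle \bW^{h+1}(0)u_i,\mathbf{b}^{h+1}(\bx)\rangle$, where $\{u_1,u_2\}$ is an orthonormal basis of $\mathrm{span}\{o^h_0(\bx),o^h_0(\bx')\}$. Since $\|\bW^{h+1}(0)u_i\|_2\asymp\sqrt m$ and $\|\mathbf{b}^{h+1}(\bx)\|_2=\mathcal{O}(1)$, the only a priori bound is $\mathcal{O}(\sqrt m)$. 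That is already the size of the Gaussian part $\|\Pi^\perp\widetilde{\bW}^\top\mathbf{b}^{h+1}(\bx)\|_2$. Inserted into $\frac{c_\sigma}{m}\sum_r(\cdots)$, it yields an $\mathcal{O}(1)$ error, not $\mathcal{O}(m^{-1/2})$.

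To close the step you would need $|\langle\bW^{h+1}(0)o^h_0(\bx),\mathbf{b}^{h+1}(\bx')\rangle|=\widetilde{\mathcal{O}}(1)$ uniformly over the net, even though $\mathbf{b}^{h+1}$ depends on these pre-activations through $\bD^{h+1}_0$. Proving that requires a further backward induction with conditioning at every deeper layer. This is the ``gradient independence'' machinery of Arora et al.\ and Yang, and your sketch takes it for granted. So your route could perhaps be completed, but its hardest estimate is missing. (Also, Lemma~\ref{lem:tildeVa-Va} perturbs weights, not inputs; Lipschitzness in $\bx$ is Lemma~\ref{lem:lip_V0}.)

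The paper avoids conditioning by ordering the computation differently. Set $\bA^\ell_{\ell_1}(\bx,\bx')=\bV^\ell_{\ell_1,0}(\bx)^\top\bV^\ell_{\ell_1,0}(\bx')$.

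First, $\ba$ is independent of $\bA^\ell_L$, with $\|\bA^\ell_L\|_{op}\lesssim C_\sigma^L/m$ and $\|\bA^\ell_L\|_2\lesssim C_\sigma^L/\sqrt m$. So Hanson--Wright on a net replaces $\ba^\top\bA^\ell_L\ba$ by $\mathrm{tr}(\bA^\ell_L)$ (Lemma~\ref{lem:quadratic-trace}).

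Second, the trace is built upward from layer $\ell$:
\begin{align*}
\mathrm{tr}\big(\bA^\ell_{\ell_1+1}(\bx,\bx')\big)&=\frac{c_\sigma}{m}\sum_{r=1}^m\sigma'\big(\bw^{\ell_1+1}_r(0)^\top o^{\ell_1}_0(\bx)\big)\sigma'\big(\bw^{\ell_1+1}_r(0)^\top o^{\ell_1}_0(\bx')\big)\\
&\qquad\times\bw^{\ell_1+1}_r(0)^\top\bA^\ell_{\ell_1}(\bx,\bx')\bw^{\ell_1+1}_r(0).
\end{align*}
Here $\bA^\ell_{\ell_1}$ depends only on $\bW^1(0),\dots,\bW^{\ell_1}(0)$, so it is independent of every row $\bw^{\ell_1+1}_r(0)$. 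The forward coupling enters only through the scalar factors $\sigma'(\cdot)$, which are bounded by $B_{\sigma'}$. A $\sup_r$ Hanson--Wright bound of order $\|\bA^\ell_{\ell_1}\|_2\lesssim C_\sigma^L m^{-1/2}$ therefore replaces each quadratic form by $\mathrm{tr}(\bA^\ell_{\ell_1})$. This leaves $\mathrm{tr}(\bA^{\ell_1+1}_{\ell_1+1})\,\mathrm{tr}(\bA^\ell_{\ell_1})\approx q^{\ell_1+1}\,\mathrm{tr}(\bA^\ell_{\ell_1})$ (Lemma~\ref{lem:trace-prod}).

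In your ordering, the object that depends on $\bW^{h+1}(0)$ is the vector $\mathbf{b}^{h+1}$ inside the bilinear form with matrix $\frac{c_\sigma}{m}\bW^{h+1}(0)\bD^h_0(\bx)\bD^h_0(\bx')\bW^{h+1}(0)^\top$. That matrix is not close to $q^h\bfI$ in operator norm (its spectrum spreads at order one), so the dependence cannot be bounded away crudely. Splitting off $\ba$ first and then recursing on the trace with fresh independent rows is the idea your proposal is missing.
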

The above theorem shows the uniform convergence of $K^{m,\ell}$ to $K^\ell$ as $m\to \infty$ for any layer $\ell\in[L]$,  and characterizes the rate of the convergence which is proportional to $1/\sqrt{m}$. This improvement helps us develop the desired excess risk rate.  


\smallskip

\noindent\textbf{Comparison with related works.} 
The initial work \cite{jacot2018neural} showed that the NTK converges in probability to an explicit deterministic limit, i.e., $K^{m} \rightarrow K$ in probability as $m\rightarrow \infty$.
The most related works are \cite{du2019gradient} and \cite{xu2024overparametrized}, both of which provided concentration of the NTK for $L$-layer fully connected neural networks. Specifically, \cite{du2019gradient} established  the concentration of the last layer $K^{m,L}$ for a bounded number of data points $S=\{\bz_i\}_{i=1}^n$ when the smooth and Lipschitz activation functions are considered.
They showed that $\|\bK^{m,L} -\bK^L\|_{op}\le \lambda_0/4$ with high probability if $m\gtrsim  {n^2}/{\lambda_0^2}$, where $\bK^{m,L}$ and $\bK^L$ are gram matrices with kernels $K^{m,L}$ and $K^L$, respectively, and $\lambda_0$ is the smallest eigenvalue of $\bK^L$. Instead, we provide an upper bound independent of $\lambda_0$ uniformly over any $\bx, \bx' \in\X$ and all $L$ layers under the condition $m\gtrsim  C_\sigma^L d$, which is a much stronger result than that of \cite{du2019gradient}.  Recently, \cite{xu2024overparametrized} focused on ReLU networks and proved that $\| K^{m,\ell}-K^{\ell} \|_\infty\lesssim \frac{C^{L}}{m^{1/6}}+ \sqrt{\frac{dL\log(m)}{m} }$ for all $\ell\in[L]$ if $m\gtrsim d^2 \exp(L^2)$. We significantly improve their convergence rate from being proportional to $m^{-1/6}$ to $m^{-1/2}$ by carefully using the smoothness property of the loss (see Lemma~\ref{lem:trace-prod}  for more details). 

\subsection{Proof of the Concentration Theorem}
Following the approach of \cite{xu2024overparametrized}, we bound $|K^{m,\ell}(\bx,\bx') - K^\ell(\bx,\bx')|$ by three core deviation terms $|\langle o^{\ell-1}_0(\bx),  o^{\ell-1}_0(\bx')\rangle - c_\sigma\ebb[\sigma(U^{\ell-1}(\bx))\sigma(U^{\ell-1}(\bx'))]|$, $|\ba^\top\bV^\ell_{L,0}(\bx)^\top\bV^\ell_{L,0}(\bx')\ba - tr(\bV^\ell_{L,0}(\bx)^\top\bV^\ell_{L,0}(\bx'))|$ and $|tr(\bV^\ell_{L,0}(\bx)^\top\bV^\ell_{L,0}(\bx')) - \prod_{h=\ell}^Lq^h(\bx,\bx')|$.
We will estimate them separately.

We first prove that $\langle o^{\ell-1}_0(\bx),o^{\ell-1}_0(\bx')\rangle$ converges to $c_\sigma\ebb[\sigma(U^{\ell-1}(\bx))\sigma(U^{\ell-1}(\bx'))]$ uniformly on $\X\times\X$.

The following lemma can be derived from Theorem E.1 in \cite{du2019gradient}.
Note that in the following lemma, the training dataset $S$ is replaced by any finite subset $\A$ of $\X$.
\begin{lemma}\label{lem:o-U_pointwise}
    Let $\A\subset \X$ with cardinality $|\A| = p$.
    With probability at least $1 - \delta$ over initialization $\bW(0)$, for all $\ell\in[L]$ and $\bx,\bx'\in\A$, it holds that
    \begin{align}
        \big|\langle &o^{\ell-1}_0(\bx),o^{\ell-1}_0(\bx')\rangle - c_\sigma\ebb[\sigma(U^{\ell-1}(\bx))\sigma(U^{\ell-1}(\bx'))]\big|\lesssim C^L\sqrt{\frac{\log(pL/\delta)}{m}}.
    \end{align}
\end{lemma}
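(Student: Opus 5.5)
We argue by induction on the layer $\ell$, conditioning on the previous layers. Write $\Sigma^{\ell-1}_m(\bx,\bx')$ for the empirical $2\times 2$ Gram matrix with entries $\langle o^{\ell-1}_0(\bu),o^{\ell-1}_0(\bv)\rangle$, $\bu,\bv\in\{\bx,\bx'\}$. Write $\Sigma^{\ell-1}(\bx,\bx')$ for its deterministic counterpart from Appendix~\ref{appde:problem}. For $\ell=1$ the two coincide exactly, since $o^0_0(\bx)=\bx$ and $\bx\in\mathbb S^{d-1}$.

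\textbf{Concentration within one layer.} Condition on $\bW^1(0),\ldots,\bW^{\ell-1}(0)$. The rows $\bw^\ell_r(0)$ are independent of this conditioning, so the pairs $(\bw^\ell_r(0)^\top o^{\ell-1}_0(\bx),\,\bw^\ell_r(0)^\top o^{\ell-1}_0(\bx'))$, $r\in[m]$, are i.i.d.\ centred Gaussian vectors with covariance $\Sigma^{\ell-1}_m(\bx,\bx')$. The inner product
\[
\langle o^{\ell}_0(\bx),o^{\ell}_0(\bx')\rangle=\frac{c_\sigma}{m}\sum_{r=1}^m\sigma(\cdot)\sigma(\cdot)
\]
is therefore an average of $m$ i.i.d.\ terms bounded by $c_\sigma B_\sigma^2$. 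Hoeffding's inequality gives deviation $C_\sigma\sqrt{\log(1/\delta')/m}$ from the conditional mean $c_\sigma\ebb_{(u,v)\sim\N(0,\Sigma^{\ell-1}_m)}[\sigma(u)\sigma(v)]$. A union bound over the $p^2$ pairs in $\A$ and the $L$ layers, with $\delta'=\delta/(p^2L)$, yields the factor $\log(pL/\delta)$.

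\textbf{Propagating covariance errors.} Define $G(\Sigma):=c_\sigma\ebb_{(u,v)\sim\N(0,\Sigma)}[\sigma(u)\sigma(v)]$ on $2\times2$ PSD matrices. We need $G$ to be Lipschitz in $\Sigma$, with a constant depending only on the bounds on $\sigma,\sigma',\sigma''$. To see this, write $(u,v)=\Sigma^{1/2}\xi$ with $\xi\sim\N(0,\bfI_2)$ and interpolate between two covariances. Gaussian integration by parts (Price's theorem) gives
\[
\partial_{\Sigma_{ij}}\ebb[\sigma(u)\sigma(v)]=\tfrac12\,\ebb[\partial_i\partial_j(\sigma(u)\sigma(v))],
\]
which is bounded by a constant $C_\sigma$ because $|\sigma|,|\sigma'|,|\sigma''|$ are bounded. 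Integrating along the segment gives $|G(\Sigma)-G(\Sigma')|\le C_\sigma\|\Sigma-\Sigma'\|_{\max}$. This works even for degenerate covariances, since the segment stays in the PSD cone.

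\textbf{Induction.} Let $e_\ell$ be the maximum over pairs in $\A$ of the entrywise error between $\Sigma^{\ell}_m$ and $\Sigma^\ell$. Combining the two previous steps,
\[
e_\ell\le C_\sigma e_{\ell-1}+C_\sigma\sqrt{\log(pL/\delta)/m},\qquad e_0=0,
\]
which gives $e_{\ell-1}\le C^L\sqrt{\log(pL/\delta)/m}$; the constant $C$ absorbs the activation-dependent constants, as in the statement. On the high-probability event, Lemma~\ref{lemma:oprt_norm} is not even needed, because boundedness of $\sigma$ keeps every summand bounded without any control of operator norms.

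\textbf{Main obstacle.} The delicate point is the Lipschitz estimate for $G$. It must hold uniformly, including near singular covariances such as $\bx=\pm\bx'$, and with constants independent of $m$. Price's formula, which relies on $\sigma\in C^2$, handles this, and is the step I would check carefully. This is also exactly Theorem E.1 of \cite{du2019gradient}, with the training sample replaced by the finite set $\A$.
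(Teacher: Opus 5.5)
Your proof is correct, but it is not what the paper does. The paper gives no argument of its own: it invokes Theorem E.1 of \cite{du2019gradient} with $\rho^{(h)}=\sqrt{c_\sigma}\sigma$, $p^{(h)}=1$, $\mathcal{D}^{(h)}=0$, $\mathcal{W}^{(h)}(t)=t$, replacing the training sample by $\A$. You instead reprove that result directly, specialized to Assumption~\ref{ass:activation}.

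\emph{What your route does.} Boundedness of $\sigma$ turns the layerwise step into a plain Hoeffding bound, conditional on the earlier layers. Gaussian integration by parts then gives a Lipschitz estimate for $\Sigma\mapsto c_\sigma\ebb_{\N(0,\Sigma)}[\sigma(u)\sigma(v)]$. Its constant is of order $c_\sigma(B_\sigma B_{\sigma''}+B_{\sigma'}^2)$ and it holds on the whole positive semidefinite cone.

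\emph{Why the global Lipschitz bound helps.} Elsewhere the paper uses the Lipschitz lemma of \cite{du2019gradient} (their Lemma G.4), and only after verifying that the variances lie in a fixed range. For example, in the proof of Lemma~\ref{lem:trace-prod}, $\|o^{L-1}_0(\bx)\|_2^2\ge 1/2$ is secured through a width condition. With your estimate, the recursion $e_\ell\le C_\sigma e_{\ell-1}+C_\sigma\sqrt{\log(pL/\delta)/m}$ holds on the Hoeffding event with no lower bound on the empirical variances. It therefore needs no width requirement, matching the statement exactly as written.

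\emph{What the citation buys.} It is brief, and it is general: Du et al.'s framework covers a broader activation class and more architectures.

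\emph{One technical point.} To make the degenerate-covariance case rigorous, do not differentiate $\Sigma^{1/2}$, which is not Lipschitz near singular matrices. Instead interpolate with $Z_t=\sqrt{t}X+\sqrt{1-t}Y$, where $X\sim\N(0,\Sigma)$ and $Y\sim\N(0,\Sigma')$ are independent. Stein's identity holds for singular covariances too, and gives $\frac{d}{dt}\ebb f(Z_t)=\frac12\sum_{i,j}(\Sigma-\Sigma')_{ij}\,\ebb[\partial_i\partial_j f(Z_t)]$. Integrating over $t\in[0,1]$ then yields the Lipschitz bound directly.
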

\begin{proof}
For completeness, we provide a brief proof here. By applying Theorem E.1 in \cite{du2019gradient} with $h=\ell$, $H = L$, $\rho^{(h)}(\cdot) = \sqrt{c_\sigma}\sigma(\cdot)$, $p^{(h)} = 1$, $\D^{(h)} = 0$, $\mathcal{W}^{(h)}(t) = t$, $(\bW^{(h),(\alpha)}_{r})_{r=1}^m = \bw^\ell_\alpha(0)$, $\frac{1}{m}\sum_{r=1}^m(\bX^{(h),(\alpha)}_i)^\top\bX^{(h),(\alpha)}_i = \langle o^{\ell-1}_0(\bx),o^{\ell-1}_0(\bx')\rangle$ and $\bK^{(h)}_{i,j} = c_\sigma\ebb[\sigma(U^{\ell-1}(\bx))\sigma(U^{\ell-1}(\bx'))]$, we can get the desired results.
\end{proof}
The above lemma applies only to a finite subset $\mathcal{A}$. In the following lemma, we strengthen this result by extending it to the entire space. 
\begin{lemma}\label{lem:o-U_uniform}
    With probability at least $1-L\exp(-Cm)-\delta$ over initialization $\bW(0)$, for all $\ell\in[L]$, it holds that
    \begin{align*}
        \sup_{\bx,\bx'\in\X}\big|& \langle o^{\ell-1}_0(\bx),o^{\ell-1}_0(\bx')\rangle - c_\sigma\ebb[\sigma(U^{\ell-1}(\bx))\sigma(U^{\ell-1}(\bx'))]\big|\lesssim C_\sigma^L\sqrt{\frac{d\log(mL/\delta)}{m}}.
    \end{align*}
\end{lemma}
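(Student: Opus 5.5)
The plan is a covering argument combined with Lipschitz continuity in the input. First, fix $\epsilon\in(0,1)$ and take an $\epsilon$-net $\A$ of $\X=\mathbb S^{d-1}$ with cardinality $p\le (3/\epsilon)^d$. Applying Lemma~\ref{lem:o-U_pointwise} to $\A$ gives, with probability at least $1-\delta$,
\[
\big|\langle o^{\ell-1}_0(\bx),o^{\ell-1}_0(\bx')\rangle - c_\sigma\ebb[\sigma(U^{\ell-1}(\bx))\sigma(U^{\ell-1}(\bx'))]\big|\lesssim C^L\sqrt{\frac{d\log(3/\epsilon)+\log(L/\delta)}{m}}
\]
simultaneously for all $\ell\in[L]$ and all $\bx,\bx'\in\A$.

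The second step is to show that both the random quantity and its deterministic limit are Lipschitz in $(\bx,\bx')$, uniformly, with constant $C_\sigma^L$.

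For the random part, intersect with the event of Lemma~\ref{lemma:oprt_norm}, which holds with probability at least $1-(L+1)\exp(-cm)$. On this event, the same recursion as in Lemma~\ref{lem:tildeo-o}, with the input perturbed instead of the weights, gives
\[
\|o^{\ell}_0(\bx)-o^{\ell}_0(\tilde\bx)\|_2\le \sqrt{c_\sigma}B_{\sigma'}c_0\,\|o^{\ell-1}_0(\bx)-o^{\ell-1}_0(\tilde\bx)\|_2,
\]
and hence $\|o^{\ell}_0(\bx)-o^{\ell}_0(\tilde\bx)\|_2\le C_\sigma^\ell\|\bx-\tilde\bx\|_2$. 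Since $\|o^{\ell}_0\|_2\le\sqrt{c_\sigma}B_\sigma$, the inner product changes by at most $C_\sigma^L(\|\bx-\tilde\bx\|_2+\|\bx'-\tilde\bx'\|_2)$.

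For the deterministic part, I would realize the Gaussian process through an infinite-width coupling: $U^\ell(\bx)=\langle \bw,\phi^{\ell-1}(\bx)\rangle$ with feature maps satisfying $\|\phi^{\ell-1}(\bx)-\phi^{\ell-1}(\tilde\bx)\|\le C_\sigma^{\ell}\|\bx-\tilde\bx\|$. Because $\sigma$ is bounded and Lipschitz, Cauchy--Schwarz then gives the same Lipschitz bound for $c_\sigma\ebb[\sigma(U^{\ell-1}(\bx))\sigma(U^{\ell-1}(\bx'))]$. Alternatively, this bound can be read off from the $m\to\infty$ limit of the finite-width bound, using the pointwise convergence in Lemma~\ref{lem:o-U_pointwise}; this is the cleanest route.

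Finally, for arbitrary $\bx,\bx'$ pick net points within distance $\epsilon$ and use the triangle inequality. This yields a uniform error of
\[
C_\sigma^L\epsilon + C^L\sqrt{\frac{d\log(1/\epsilon)+\log(L/\delta)}{m}}.
\]
Choosing $\epsilon=m^{-1/2}$ gives $C_\sigma^L\sqrt{d\log(mL/\delta)/m}$. The failure probability is at most $\delta+L\exp(-Cm)$.

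The main obstacle is the Lipschitz control of the deterministic limit kernel in the input. The probabilistic pieces are already provided by the earlier lemmas, but the limit-passing argument must be justified carefully, for instance by noting that the finite-width Lipschitz bound holds with probability tending to one while the finite-width quantities converge pointwise in probability to the limit.
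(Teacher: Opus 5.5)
Your proposal is correct. Its skeleton is exactly the paper's $I_1+I_2+I_3$ decomposition: an $m^{-1/2}$-net with $|\A|\le(3\sqrt m)^d$, Lemma~\ref{lem:o-U_pointwise} applied on the net, the random Lipschitz bound \eqref{eq:lip_o} on the event of Lemma~\ref{lemma:oprt_norm}, and a three-term triangle inequality.

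The one genuine difference is how you handle the deterministic term $I_3$.

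\textbf{The paper's route.} The paper applies Lemma~G.4 of \cite{du2019gradient}, which bounds the change in $\ebb[\sigma(u)\sigma(v)]$ by the change in the entries of the $2\times2$ covariance. It applies this recursively over layers, down to $|\langle\bx_0,\bx_0'\rangle-\langle\bx,\bx'\rangle|$. This needs the normalization \eqref{eq:EU=1} to keep the diagonal entries equal to $1$, and it yields the constant $(2c_\sigma)^\ell$.

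\textbf{Your feature-map coupling.} This is more elementary and self-contained. Take $\phi^0(\bx)=\bx$ and $\phi^\ell(\bx)=\sqrt{c_\sigma}\,\sigma(U^\ell(\bx))\in L^2$. Then
\[
\|\phi^\ell(\bx)-\phi^\ell(\tilde\bx)\|\le\sqrt{c_\sigma}B_{\sigma'}\,\|\phi^{\ell-1}(\bx)-\phi^{\ell-1}(\tilde\bx)\|
\]
follows directly. No non-degeneracy condition and no external lemma are needed.

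\textbf{Your limit argument.} This is also sound. The limiting kernel is deterministic and independent of $m$. So it suffices that, for each large $m$, the event of Lemma~\ref{lemma:oprt_norm} and the pointwise event of Lemma~\ref{lem:o-U_pointwise} for the four points have nonempty intersection. That gives the Lipschitz inequality up to an $O(m^{-1/2})$ slack, which vanishes as $m\to\infty$. The resulting constant carries $c_0$, which is harmless.

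\textbf{What the paper's choice buys.} It buys economy of tools. The same covariance-perturbation lemma is needed again in Lemma~\ref{lem:trace-prod}. There a random finite-width covariance must be compared with the limiting one, and no common feature map is available.
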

\begin{proof}
    Recall from the definitions that $\langle o^{0}_0(\bx),o^{0}_0(\bx')\rangle = c_\sigma\ebb[\sigma(U^{0}(\bx))\sigma(U^{0}(\bx'))] = \langle\bx, \bx'\rangle$ for all $\bx,\bx'\in\X$.
    Then, the case for $\ell=1$ is trivial.

    Now, we consider the case $\ell\ge 2$.
    Let $\A\subset\X$ be a $m^{-\frac{1}{2}}$-net of $\X$.
    We know for any $\bx,\bx'\in\X$, there exists $\bx_0,\bx_0'\in\A$ such that $\|\bx - \bx_0\|_2 \le \frac{1}{\sqrt{m}}$ and $\|\bx' - \bx_0'\|_2 \le \frac{1}{\sqrt{m}}$.
    Then, we have
    \begin{align}
        &\big|\langle o^{\ell-1}_0(\bx),o^{\ell-1}_0(\bx')\rangle - c_\sigma\ebb[\sigma(U^{\ell-1}(\bx))\sigma(U^{\ell-1}(\bx'))]\big|\nonumber\\
        &\le \big|\langle o^{\ell-1}_0(\bx),o^{\ell-1}_0(\bx')\rangle - \langle o^{\ell-1}_0(\bx_0),o^{\ell-1}_0(\bx_0')\rangle\big|  + \big|\langle o^{\ell-1}_0(\bx_0),o^{\ell-1}_0(\bx_0')\rangle - c_\sigma\ebb[\sigma(U^{\ell-1}(\bx_0))\sigma(U^{\ell-1}(\bx_0'))]\big|\nonumber\\
        &\quad + c_\sigma\big|\ebb[\sigma(U^{\ell-1}(\bx_0))\sigma(U^{\ell-1}(\bx_0'))]   - \ebb[\sigma(U^{\ell-1}(\bx))\sigma(U^{\ell-1}(\bx'))]\big|\nonumber\\
        &=: I_1 + I_2 + I_3.
    \end{align}
    The term $I_2$ can be directly  controlled by using Lemma \ref{lem:o-U_pointwise}. 
    To bound $I_1$, we first show that $o^\ell_0(\cdot)$ is Lipschitz continuous for all $\ell\in[L]$.
    For any $\bx, \bx'\in\X$, it holds that
    \begin{align*}
        &\|o^\ell_0(\bx) - o^\ell_0(\bx')\|_2 
         = \sqrt{\frac{c_\sigma}{m}}\big\|\sigma\big(\bW^\ell(0)o^{\ell-1}_0(\bx)\big) - \sigma\big(\bW^\ell(0)o^{\ell-1}_0(\bx')\big)\big\|_2\\
        &\le \sqrt{\frac{c_\sigma}{m}}B_{\sigma'}\big\|\bW^\ell(0)\big(o^{\ell-1}_0(\bx) - o^{\ell-1}_0(\bx')\big)\|_2 \le \sqrt{\frac{c_\sigma}{m}}B_{\sigma'}\big\|\bW^\ell(0)\big\|_{op}\|o^{\ell-1}_0(\bx) - o^{\ell-1}_0(\bx')\|_2\\
        &\le \sqrt{c_\sigma}B_{\sigma'}c_0\|o^{\ell-1}_0(\bx) - o^{\ell-1}_0(\bx')\|_2,
    \end{align*}
    where the first inequality used Assumption \ref{ass:activation} and the last inequality is according to Lemma \ref{lemma:oprt_norm}.
    Applying the above inequality recursively, we know
    \begin{align}\label{eq:lip_o}
        \|o^\ell_0(\bx) - o^\ell_0(\bx')\|_2 \le \big(\sqrt{c_\sigma}B_{\sigma'}c_0\big)^\ell \|\bx-\bx'\|_2.
    \end{align}
    Then, we can control $I_1$ as
    \begin{align*}
        I_1 &\le \big|\langle o^{\ell-1}_0(\bx),o^{\ell-1}_0(\bx')\rangle - \langle o^{\ell-1}_0(\bx_0),o^{\ell-1}_0(\bx')\rangle\Big|  + \big|\langle o^{\ell-1}_0(\bx_0),o^{\ell-1}_0(\bx')\rangle - \langle o^{\ell-1}_0(\bx_0),o^{\ell-1}_0(\bx_0')\rangle\big|\\
        &\le \|o^{\ell-1}_0(\bx) - o^{\ell-1}_0(\bx_0)\|_2\|o^{\ell-1}_0(\bx')\|_2 + \|o^{\ell-1}_0(\bx') - o^{\ell-1}_0(\bx_0')\|_2\|o^{\ell-1}_0(\bx_0)\|_2\\
        &\le \frac{\big(\sqrt{c_\sigma}B_{\sigma'}c_0\big)^{\ell-1}}{\sqrt{m}}\big(\|o^{\ell-1}_0(\bx')\|_2 + \|o^{\ell-1}_0(\bx_0)\|_2\big) \le \frac{2B_\sigma \sqrt{c_\sigma}\big(\sqrt{c_\sigma}B_{\sigma'}c_0\big)^{\ell-1}}{\sqrt{m}} \lesssim \frac{C_\sigma^L}{\sqrt{m}},
    \end{align*}
    where the third inequality used \eqref{eq:lip_o} and the fact $\|\bx-\bx_0\|_2, \|\bx'-\bx_0'\|_2 \le m^{-\frac{1}{2}}$, and the last-second inequality used the fact $\sup_{\bx\in\X}\|o^\ell_0(\bx)\|_2 \le \sqrt{c_\sigma} B_\sigma$ by noting $\sup_{a\in\R}|\sigma(a)| \le B_\sigma$ (see Assumption \ref{ass:activation}).

    Now, we turn to estimate $I_3$.
    Given \eqref{eq:EU=1}, applying Lemma G.4 in \cite{du2019gradient} with $a_1=b_1=a_2=b_2 = 1$, we have
    \begin{align*}
        I_3 &= c_\sigma\big|\ebb[\sigma(U^{\ell-1}(\bx_0))\sigma(U^{\ell-1}(\bx_0'))] - \ebb[\sigma(U^{\ell-1}(\bx))\sigma(U^{\ell-1}(\bx'))]\big|\\
        &\le 2c_\sigma^2\big|\ebb[\sigma(U^{\ell-2}(\bx_0))\sigma(U^{\ell-2}(\bx_0'))]  - \ebb[\sigma(U^{\ell-2}(\bx))\sigma(U^{\ell-2}(\bx'))]\big|.
    \end{align*}
    By applying the above inequality recursively, it holds that
    \begin{align*}
        I_3 &\le c_\sigma(2c_\sigma)^{\ell-2}\big|\ebb[\sigma(U^1(\bx_0))\sigma(U^1(\bx_0'))]  - \ebb[\sigma(U^1(\bx))\sigma(U^1(\bx'))]\big|\\
        &\le c_\sigma(2c_\sigma)^{\ell-1}\big|\langle \bx_0, \bx_0'\rangle - \langle \bx, \bx'\rangle\big|\le c_\sigma(2c_\sigma)^{\ell-1}\big( \|\bx_0 - \bx\|_2\|\bx_0'\|_2 + \|\bx_0' - \bx'\|_2\|\bx\|_2 \big) \le \frac{(2c_\sigma)^{\ell}}{\sqrt{m}} \lesssim \frac{C_\sigma^L}{\sqrt{m}}.
    \end{align*}

    Note that $\X = S^{d-1}$ is the unit sphere in $\R^d$ and $\A$ is a $m^{-\frac{1}{2}}$-net of $\X$.
    From Corollary 4.2.13 in \cite{vershynin2018high}, we know the covering number of $\X$ satisfies $|\A|\le (3\sqrt{m})^d.$
    Combining Lemma \ref{lem:o-U_pointwise} with $p = (3\sqrt{m})^d$ and the estimates for $I_1$ and $I_3$, we get the desired results.
\end{proof}

Recall that in the previous section we defined $q^\ell(\bx,\bx') =  c_\sigma\ebb[\sigma'(U^{\ell}(\bx))\sigma'(U^{\ell}(\bx'))]$ and \[\bV^\ell_{L,0}(\bx) = \big(\big[\prod_{h=\ell+1}^L\sqrt{\frac{c_\sigma}{m}}\bD^h_0(\bx)\bW^h(0)\big]\sqrt{\frac{c_\sigma}{m}}\bD^\ell_0(\bx)\big)^\top.\] 
Now, we are in a position to show that  $\ba^\top \bV^\ell_{L,0}(\bx)^\top\bV^\ell_{L,0}(\bx)\ba$ converges to $\prod_{h=\ell}^{L}q^h(\bx,\bx')$ uniformly.
First, we present the following lemma that provides an estimate of $ \|\bV^{\ell}_{\ell_1,0}(\bx)\|_{op}$.
\begin{lemma}\label{lem:bound_V0}
    With probability at least $1-L\exp(-Cm)$ over initialization $\bW(0)$, for any $1\le\ell \le \ell_1 \le L$, it holds that
    \begin{align*}       \sup_{\bx}\big\|\bV^{\ell}_{\ell_1,0}(\bx)\big\|_{op} \le \frac{(B_{\sigma'}\sqrt{c_\sigma})^{{\ell_1 - \ell}+1}c_0^{{\ell_1 - \ell}}}{\sqrt{m}} \lesssim \frac{C_\sigma^L}{\sqrt{m}}.
    \end{align*}
\end{lemma}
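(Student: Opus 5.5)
The bound follows from submultiplicativity of the operator norm applied to the product defining $\bV^{\ell}_{\ell_1,0}(\bx)$, together with the operator-norm bound on the Gaussian weight matrices from Lemma~\ref{lemma:oprt_norm}. By analogy with \eqref{eq:def_V0}, we read $\bV^{\ell}_{\ell_1,0}(\bx)$ as the transpose of $\big[\prod_{h=\ell+1}^{\ell_1}\sqrt{c_\sigma/m}\,\bD^h_0(\bx)\bW^h(0)\big]\sqrt{c_\sigma/m}\,\bD^\ell_0(\bx)$. Since transposition preserves the operator norm, it suffices to bound the norm of this product.

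First, we work on the event of Lemma~\ref{lemma:oprt_norm}, which has probability at least $1-(L+1)\exp(-cm)\ge 1-L\exp(-Cm)$ after adjusting constants and requires only $m\gtrsim d$. On this event $\|\bW^h(0)\|_{op}\le c_0\sqrt m$ for every $h\in[L]$. The event depends only on the initialization, not on $\bx$, so every subsequent bound holds uniformly over $\bx\in\X$ and over all pairs $\ell\le\ell_1$ simultaneously. Second, because $\bD^h_0(\bx)$ is diagonal with entries $\sigma'(\cdot)$, Assumption~\ref{ass:activation} gives $\|\bD^h_0(\bx)\|_{op}\le B_{\sigma'}$ deterministically, for all $\bx$.

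Third, we bound each factor. For $h\in\{\ell+1,\ldots,\ell_1\}$,
\[
\Big\|\sqrt{\tfrac{c_\sigma}{m}}\,\bD^h_0(\bx)\bW^h(0)\Big\|_{op}\le \sqrt{c_\sigma}\,B_{\sigma'}c_0 .
\]
There are $\ell_1-\ell$ such factors. The remaining factor satisfies $\|\sqrt{c_\sigma/m}\,\bD^\ell_0(\bx)\|_{op}\le \sqrt{c_\sigma}B_{\sigma'}/\sqrt m$. Multiplying these bounds gives
\[
\|\bV^{\ell}_{\ell_1,0}(\bx)\|_{op}\le \frac{(B_{\sigma'}\sqrt{c_\sigma})^{\ell_1-\ell+1}c_0^{\ell_1-\ell}}{\sqrt m}.
\]
When $\ell_1=\ell$ the product is empty and only the factor $\bD^\ell_0$ remains, which agrees with the formula. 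Finally, $\ell_1-\ell+1\le L$, and both $c_0$ and $B_{\sigma'}\sqrt{c_\sigma}$ can be absorbed into $C_\sigma\ge1$, so the right-hand side is $\lesssim C_\sigma^L/\sqrt m$.

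The argument is essentially routine. The only points to check are that the factor $1/\sqrt m$ survives exactly once, coming from the lone $\bD^\ell_0$ factor that has no accompanying weight matrix, and that the high-probability event is independent of $\bx$, so the supremum over $\bx$ costs nothing.
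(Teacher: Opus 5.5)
Your proposal is correct and follows the paper's proof: both apply submultiplicativity of the operator norm to the product defining $\bV^{\ell}_{\ell_1,0}(\bx)$, bound each $\|\bD^h_0(\bx)\|_{op}$ by $B_{\sigma'}$ via Assumption~\ref{ass:activation} and each $\|\bW^h(0)\|_{op}$ by $c_0\sqrt{m}$ via Lemma~\ref{lemma:oprt_norm}, and leave a single uncompensated factor $1/\sqrt{m}$ from $\bD^\ell_0$. You also correctly note that the high-probability event does not depend on $\bx$, which is why the supremum over $\bx$ is free.
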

\begin{proof}
    For all $\bx\in\X$, from the definition of $\bV^{\ell}_{\ell_1,0}(\bx)$ we have
    \begin{align*}
        &\big\|\bV^{\ell}_{\ell_1,0}(\bx)\big\|_{op} 
        \le \bigg[\prod_{h=\ell+1}^{\ell_1}\sqrt{\frac{c_\sigma}{m}}\big\|\bD^h_0(\bx)\big\|_{op}\big\|\bW^h(0)\big\|_{op}\bigg]\sqrt{\frac{c_\sigma}{m}}\big\|\bD^{\ell}_0(\bx)\big\|_{op}\\
        &\le \big(\sup_{a\in\R}|\sigma'(a)|\big)^{\ell_1-\ell+1}\frac{(\sqrt{c_\sigma})^{{\ell_1 - \ell}+1}}{\sqrt{m}}\prod_{h=\ell+1}^{\ell_1}\frac{1}{\sqrt{m}}\big\|\bW^h(0)\big\|_{op} \le \frac{(B_{\sigma'}\sqrt{c_\sigma})^{{\ell_1 - \ell}+1}c_0^{{\ell_1 - \ell}}}{\sqrt{m}} \lesssim \frac{C_\sigma^L}{\sqrt{m}},
    \end{align*}
    where the third inequality used Assumption \ref{ass:activation} and Lemma \ref{lemma:oprt_norm}. The proof is completed. 
\end{proof}

The following lemma shows that $\bV^{\ell}_{\ell_1,0}(\cdot)$ is Lipschitz continuous with respect to the operator norm for all $1\le\ell\le\ell_1\le L$.
\begin{lemma}\label{lem:lip_V0}
    With probability at least $1 - Lm\exp(-Cm)$ over initialization $\bW(0)$, for all $1\le\ell\le\ell_1\le L$ and $\bx,\bx'\in\X$, we have
    \begin{align*}
       \big\|\bV^{\ell}_{\ell_1,0}(\bx) - \bV^{\ell}_{\ell_1,0}(\bx')\big\|_{op} \lesssim C_\sigma^L\|\bx-\bx'\|_2.
    \end{align*}
\end{lemma}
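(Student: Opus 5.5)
Lemma~\ref{lem:lip_V0} follows by a telescoping argument over the product defining $\bV^{\ell}_{\ell_1,0}(\bx)$. Write $\bG^h_0(\bx):=\sqrt{c_\sigma/m}\,\bD^h_0(\bx)\bW^h(0)$, so that $\bV^{\ell}_{\ell_1,0}(\bx)^\top = \big[\prod_{h=\ell+1}^{\ell_1}\bG^h_0(\bx)\big]\sqrt{c_\sigma/m}\,\bD^{\ell}_0(\bx)$. All bounds will be proved on the event of Lemma~\ref{lemma:oprt_norm}, where $\|\bW^h(0)\|_{op}\le c_0\sqrt{m}$ for all $h$. On that event $\|\bG^h_0(\bx)\|_{op}\le \sqrt{c_\sigma}B_{\sigma'}c_0\le C_\sigma$ and $\sqrt{c_\sigma/m}\,\|\bD^\ell_0(\bx)\|_{op}\le \sqrt{c_\sigma}B_{\sigma'}/\sqrt{m}$. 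Since transposition preserves the operator norm, it suffices to bound the difference of the transposed products.

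\emph{Step 1 (Lipschitzness of $\bD^h_0$).} For each $h$ and $\bx,\bx'\in\X$, the Lipschitzness of $\sigma'$ (constant $B_{\sigma''}$) gives
\[
\|\bD^h_0(\bx)-\bD^h_0(\bx')\|_{op}\le \|\bD^h_0(\bx)-\bD^h_0(\bx')\|_{2}\le B_{\sigma''}\|\bW^h(0)\|_{op}\|o^{h-1}_0(\bx)-o^{h-1}_0(\bx')\|_2 .
\]
Combining this with \eqref{eq:lip_o} yields $\|\bD^h_0(\bx)-\bD^h_0(\bx')\|_{op}\le C_\sigma^L\sqrt{m}\,\|\bx-\bx'\|_2$. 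Multiplying by $\sqrt{c_\sigma/m}\,\|\bW^h(0)\|_{op}$, the factor $\sqrt{m}$ is cancelled only partially: $\|\bG^h_0(\bx)-\bG^h_0(\bx')\|_{op}\le C_\sigma^L\sqrt{m}\|\bx-\bx'\|_2$ by this crude bound. This is the main obstacle, because operator norms of diagonal matrices should be controlled by the sup of the entries rather than by the Frobenius norm. I would therefore bound entrywise instead:
\[
|\sigma'(\bw_r^h(0)^\top o^{h-1}_0(\bx))-\sigma'(\bw_r^h(0)^\top o^{h-1}_0(\bx'))|\le B_{\sigma''}\|\bw^h_r(0)\|_2\,\|o^{h-1}_0(\bx)-o^{h-1}_0(\bx')\|_2 .
\]
Each row satisfies $\|\bw^h_r(0)\|_2\lesssim\sqrt{m}$ with probability $1-\exp(-Cm)$ by chi-square concentration, and a union bound over $r\in[m]$ and $h\in[L]$ explains the $Lm\exp(-Cm)$ in the statement. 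This still gives $\|\bD^h_0(\bx)-\bD^h_0(\bx')\|_{op}\lesssim C_\sigma^L\sqrt{m}\|\bx-\bx'\|_2$. Hence $\|\bG^h_0(\bx)-\bG^h_0(\bx')\|_{op}\lesssim C_\sigma^L\sqrt m\,\|\bx-\bx'\|_2$, and likewise $\sqrt{c_\sigma/m}\,\|\bD^\ell_0(\bx)-\bD^\ell_0(\bx')\|_{op}\lesssim C_\sigma^L\|\bx-\bx'\|_2$.

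\emph{Step 2 (telescoping).} Expand the difference of products as a sum over the position $j$ at which the factor is switched from $\bx$ to $\bx'$. There are two kinds of terms.
\begin{itemize}[leftmargin=*]
\item The term in which the last factor $\sqrt{c_\sigma/m}\,\bD^\ell_0$ is switched is bounded by $C_\sigma^{L}\cdot C_\sigma^L\|\bx-\bx'\|_2$.
\item A term in which some $\bG^j_0$ is switched still contains the trailing factor $\sqrt{c_\sigma/m}\,\bD^\ell_0$, which has norm $\le C_\sigma/\sqrt m$. This factor cancels the $\sqrt{m}$ from Step 1, so each such term is at most $C_\sigma^L\cdot C_\sigma^L\sqrt m\|\bx-\bx'\|_2\cdot C_\sigma/\sqrt m$.
\end{itemize}
Summing at most $L$ terms and absorbing $L$ and the powers into $C_\sigma^L$ gives $\|\bV^{\ell}_{\ell_1,0}(\bx)-\bV^{\ell}_{\ell_1,0}(\bx')\|_{op}\lesssim C_\sigma^L\|\bx-\bx'\|_2$, uniformly in $\bx,\bx'$ and in $1\le\ell\le\ell_1\le L$. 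The case $\ell=\ell_1$ is just the single diagonal factor and is already covered by Step 1.
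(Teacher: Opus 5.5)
Your proof is correct and is essentially the paper's argument. The paper also bounds $\|\bD^h_0(\bx)-\bD^h_0(\bx')\|_{op}$ entrywise via $\|\bw^h_r(0)\|_2\le 2\sqrt{m}$, which is where the $Lm\exp(-Cm)$ comes from. It then cancels the resulting $\sqrt{m}$ with the $1/\sqrt{m}$ size of the remaining partial product (Lemma~\ref{lem:bound_V0}), in a recursion over $\ell_1$ that is exactly your telescoping sum unrolled.

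One small remark: the ``obstacle'' you flag in Step 1 is not real. Your Frobenius bound already gives the same $\sqrt{m}$ order on the event of Lemma~\ref{lemma:oprt_norm}, so the row-norm detour is unnecessary; the cancellation that actually does the work is the one in Step 2.
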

\begin{proof}
    We first show that $\bD^\ell_0(\cdot)$ is Lipschitz continuous for all $\ell\in[L]$.
    From Bernstein inequality (Eq (3.1) in \cite{vershynin2018high} with $X = \bw^{\ell_1+1}_r(0)$, $n=m$, $u=1$), we know with probability at least $1-Lm\exp(-Cm)$ over initialization $\bW(0)$, it holds that $\|\bw^{\ell}_r(0)\|_2 \le 2\sqrt{m}$ for all $r\in[m]$ and $\ell\in[L]$.
    Then, with this high probability, for any $\bx,\bx'\in\X$, we have
    \begin{align}\label{eq:lip_D0}
        &\big\|\bD^\ell_0(\bx) - \bD^\ell_0(\bx')\big\|_{op}  = \max_{r\in[m]}\big|\sigma'(\bw^\ell_r(0)^\top o^{\ell-1}_0(\bx)) - \sigma'(\bw^\ell_r(0)^\top o^{\ell-1}_0(\bx'))\big|\nonumber\\
        &\le \sup_{t\in\R,r\in[m]}|\sigma''(t)|\big|\bw^\ell_r(0)^\top \big(o^{\ell-1}_0(\bx) - o^{\ell-1}_0(\bx')\big)\big| \le B_{\sigma''}\max_{r\in[m]}\|\bw^\ell_r(0)\|_2\big\|o^{\ell-1}_0(\bx) - o^{\ell-1}_0(\bx')\big\|_2\nonumber\\
        &\le 2B_{\sigma''} \big(\sqrt{c_\sigma}B_{\sigma'}c_0\big)^{\ell-1}\sqrt{m}\|\bx-\bx'\|_2,
    \end{align}
    where in the last inequality we have used \eqref{eq:lip_o}.
    Then, for any $\bx,\bx'\in\X$, it holds that
    \begin{align*}
        &\big\|\bV^{\ell}_{\ell_1,0}(\bx) - \bV^{\ell}_{\ell_1,0}(\bx')\big\|_{op}\\
        &\le \Big\|\bV^{\ell}_{\ell_1,0}(\bx) - \sqrt{\frac{c_\sigma}{m}}\bD^{\ell_1}_0(\bx')\bW^{\ell_1}(0)\bV^{\ell}_{{\ell_1}-1,0}(\bx)\Big\|_{op} + \Big\|\sqrt{\frac{c_\sigma}{m}}\bD^{\ell_1}_0(\bx')\bW^{\ell_1}(0)\bV^{\ell}_{{\ell_1}-1,0}(\bx) - \bV^{\ell}_{{\ell_1},0}(\bx')\Big\|_{op}\\
        &= \Big\|\sqrt{\frac{c_\sigma}{m}}\big(\bD^{\ell_1}_0(\bx) - \bD^{\ell_1}_0(\bx')\big)\bW^{\ell_1}(0)\bV^{\ell}_{{\ell_1}-1,0}(\bx)\Big\|_{op} + \Big\|\sqrt{\frac{c_\sigma}{m}}\bD^{\ell_1}_0(\bx')\bW^{\ell_1}(0) \big(\bV^{\ell}_{{\ell_1}-1,0}(\bx) - \bV^{\ell}_{{\ell_1}-1,0}(\bx')\big)\Big\|_{op}\\
        &\le \sqrt{\frac{c_\sigma}{m}}\big\|\bD^{\ell_1}_0(\bx) \!-\! \bD^{\ell_1}_0(\bx')\big\|_{op}\big\|\bW^{\ell_1}(0)\big\|_{op}\big\|\bV^{\ell}_{{\ell_1}-1,0}(\bx)\big\|_{op} \!+\! \sqrt{\frac{c_\sigma}{m}}\big\|\bD^{\ell_1}_0(\bx')\big\|_{op}\big\|\bW^{\ell_1}(0)\big\|_{op}\big\|\bV^{\ell}_{{\ell_1}-1,0}(\bx) \\& \quad - \bV^{\ell}_{{\ell_1}-1,0}(\bx')\big\|_{op}\\
        &\le 2\sqrt{c_\sigma}c_0B_{\sigma''}\big(\sqrt{c_\sigma}B_{\sigma'}c_0\big)^{\ell_1-1}\sqrt{m}\|\bx-\bx'\|_2\big\|\bV^{\ell}_{{\ell_1}-1,0}(\bx)\big\|_{op}  + \sqrt{c_\sigma}c_0B_{\sigma'}\big\|\bV^{\ell}_{{\ell_1}-1,0}(\bx) - \bV^{\ell}_{{\ell_1-1},0}(\bx')\big\|_{op}\\
        &\le (\sqrt{c_\sigma})^{2\ell_1 - \ell+1}c_0^{2\ell_1 - \ell}B_{\sigma'}^{2{\ell_1}-\ell}B_{\sigma''}\|\bx - \bx'\|_2  + \sqrt{c_\sigma}c_0B_{\sigma'}\big\|\bV^{\ell}_{{\ell_1}-1,0}(\bx) - \bV^{\ell}_{{\ell_1}-1,0}(\bx')\big\|_{op}\\
        &=: A_{\ell_1}\|\bx-\bx'\|_2 + \Tilde{c}\big\|\bV^{\ell}_{{\ell_1}-1,0}(\bx) - \bV^{\ell}_{{\ell_1}-1,0}(\bx')\big\|_{op},
    \end{align*}
    where $A_{\ell_1} = (\sqrt{c_\sigma})^{2\ell_1-\ell+1}c_0^{2\ell_1-\ell}B_{\sigma'}^{2{\ell_1} - \ell}B_{\sigma''}$, $\Tilde{c} = \sqrt{c_\sigma}c_0B_{\sigma'}$, and in the third inequality we have used \eqref{eq:lip_D0}, Assumption \ref{ass:activation} and Lemma \ref{lemma:oprt_norm}, and in the last inequality we have used Lemma \ref{lem:bound_V0} and \eqref{eq:lip_o}.

    By applying the above inequality recursively, we get
    \begin{align*}
         \big\|\bV^{\ell}_{{\ell_1},0}(\bx) - \bV^{\ell}_{{\ell_1},0}(\bx')\big\|_{op}  
        & \le \Big(\sum_{h=0}^{\ell_1-\ell-1} \Tilde{c}^hA_{\ell_1-h} + \Tilde{c}^{\ell_1-\ell}\sqrt{c_\sigma}B_{\sigma''}\Big)\|\bx-\bx'\|_2 \lesssim LC_\sigma^L\|\bx-\bx'\|_2 \lesssim C_\sigma^L\|\bx-\bx'\|_2,
    \end{align*}
    where the last inequality used $L\lesssim C_{\sigma}^L$.
    This completes the proof of the lemma. 
\end{proof}

We introduce the Hanson-Wright inequality, which can be found in Theorem 6.21 of \cite{vershynin2018high}.
\begin{lemma}[Hanson-Wright inequality]\label{lem:hanson-wright}
    Let $\bA\in\R^{m\times m}$ be a matrix.
    Suppose $\bu\sim\N(0,\bfI_m)$.
    Then, with probability at least $1 - 2\exp\big(-C\frac{t^2}{\|\bA\|^2_2}\wedge \frac{t}{\|\bA\|_{op}}\big)$, it holds that
    \begin{align*}
        |\bu^\top \bA\bu - tr(\bA)| \le t.
    \end{align*}
\end{lemma}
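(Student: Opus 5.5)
The plan is to reduce the quadratic form to a weighted sum of independent centered $\chi^2_1$ variables and then apply a Bernstein-type (Chernoff) bound. Recall that in the paper $\|\cdot\|_2$ denotes the Frobenius norm for matrices.

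\emph{Reduction to a diagonal form.} Set $\mathbf{B}=\frac12(\bA+\bA^\top)$. Then $\bu^\top\bA\bu=\bu^\top\mathbf{B}\bu$ and $tr(\bA)=tr(\mathbf{B})$. By the triangle inequality, $\|\mathbf{B}\|_2\le\|\bA\|_2$ and $\|\mathbf{B}\|_{op}\le\|\bA\|_{op}$, so it suffices to prove the claim for symmetric $\mathbf{B}$. Write $\mathbf{B}=\mathbf{Q}\Lambda\mathbf{Q}^\top$ with $\mathbf{Q}$ orthogonal and $\Lambda=\mathrm{diag}(\lambda_1,\ldots,\lambda_m)$. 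By rotational invariance of the standard Gaussian, $\mathbf{g}=\mathbf{Q}^\top\bu\sim\N(0,\bfI_m)$. Hence
\[
\bu^\top\mathbf{B}\bu-tr(\mathbf{B})=\sum_{i=1}^m\lambda_i(g_i^2-1),
\]
with $\sum_i\lambda_i^2=\|\mathbf{B}\|_2^2$ and $\max_i|\lambda_i|=\|\mathbf{B}\|_{op}$.

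\emph{Moment generating function.} For a single term, $\ebb\exp\big(s\lambda(g^2-1)\big)=e^{-s\lambda}(1-2s\lambda)^{-1/2}$ whenever $2s\lambda<1$. Expanding $-\frac12\log(1-x)-\frac{x}{2}=\sum_{k\ge2}\frac{x^k}{2k}$ with $x=2s\lambda$ gives $\ebb\exp(s\lambda(g^2-1))\le\exp(2s^2\lambda^2)$ for $|s\lambda|\le1/4$. By independence, for $|s|\le 1/(4\|\mathbf{B}\|_{op})$,
\[
\ebb\exp\Big(s\sum_i\lambda_i(g_i^2-1)\Big)\le\exp\big(2s^2\|\mathbf{B}\|_2^2\big).
\]

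\emph{Chernoff step.} Markov's inequality gives $\mathbb P(\sum_i\lambda_i(g_i^2-1)>t)\le\exp(-st+2s^2\|\mathbf{B}\|_2^2)$. Choose $s=\min\{t/(4\|\mathbf{B}\|_2^2),\,1/(4\|\mathbf{B}\|_{op})\}$.
In the first regime the exponent is at most $-t^2/(8\|\mathbf{B}\|_2^2)$. In the second regime, using $2s\|\mathbf{B}\|_2^2\le t/2$, the exponent is at most $-t/(8\|\mathbf{B}\|_{op})$. Applying the same argument to $-\mathbf{B}$ gives the lower tail, and a union bound yields
\[
\mathbb P\big(|\bu^\top\bA\bu-tr(\bA)|>t\big)\le2\exp\Big(-C\Big(\tfrac{t^2}{\|\bA\|_2^2}\wedge\tfrac{t}{\|\bA\|_{op}}\Big)\Big),
\]
with $C=1/8$, after replacing the norms of $\mathbf{B}$ by the larger norms of $\bA$.

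The only step requiring care is the scalar mgf bound $e^{-x/2}(1-x)^{-1/2}\le e^{x^2/2}$ for $|x|\le1/2$, including negative $x$. This follows from the series, since $\sum_{k\ge2}\frac{|x|^k}{2k}\le\frac{x^2}{4}\cdot\frac{1}{1-|x|}\le\frac{x^2}{2}$. Everything else is routine. Alternatively, one can simply cite Bernstein's inequality for sub-exponential variables, since $g_i^2-1$ has $\psi_1$-norm bounded by an absolute constant.
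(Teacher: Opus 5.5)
Your proof is correct, and it takes a different route from the paper. The paper does not prove this lemma: it imports the general Hanson--Wright inequality from \cite{vershynin2018high}, which holds for any random vector with independent, mean-zero, sub-Gaussian coordinates. That theorem's proof handles the diagonal part $\sum_i A_{ii}(u_i^2-1)$ by Bernstein's inequality. The off-diagonal chaos $\sum_{i\neq j}A_{ij}u_iu_j$ needs more: decoupling, then comparison with a Gaussian chaos.

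You instead use rotation invariance of $\N(0,\bfI_m)$ to diagonalize the symmetrized matrix $\mathbf{B}$. The whole quadratic form then collapses to the purely ``diagonal'' sum $\sum_i\lambda_i(g_i^2-1)$, and a direct Chernoff bound with the explicit $\chi^2_1$ moment generating function finishes the argument. I checked each step and they hold:
\begin{itemize}
\item the scalar bound $e^{-x/2}(1-x)^{-1/2}\le e^{x^2/2}$ for $|x|\le 1/2$;
\item the choice of $s$ and the exponent estimates in the two regimes;
\item the passage from the norms of $\mathbf{B}$ back to those of $\bA$.
\end{itemize}
One small point: if $\mathbf{B}=0$ (for instance $\bA$ antisymmetric), your $s$ is undefined. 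That case is trivial, since then $\bu^\top\bA\bu-tr(\bA)\equiv 0$, but you should say so.

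What each route buys: yours is short and self-contained and gives the explicit constant $C=1/8$, but it relies on Gaussian (rotation-invariant) $\bu$. The cited theorem also covers, e.g., Rademacher or bounded coordinates, at the cost of an unspecified constant and a much heavier proof. In Lemma~\ref{lem:quadratic-trace}, the paper applies this lemma to $\bu=\bw^{\ell_1+1}_r(0)\sim\N(0,\bfI_m)$ and to $\bu=\ba\sim\N(0,\bfI_m)$. So your Gaussian version is all the paper actually uses.
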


\begin{lemma}\label{lem:quadratic-trace}
    With probability at least $1- L^2\exp\big(-C_\sigma d\log^2(m)\big)$ over initialization $(\ba, \bW(0))$, the following statements hold for all $1\le\ell\le\ell_1\le L-1$
    \[\sup_{r\in[m]}\sup_{\bx,\bx'\in\X}\big|\bw^{\ell_1+1}_r(0)^\top \bV^{\ell}_{\ell_1,0}(\bx)^\top\bV^{\ell}_{\ell_1,0}(\bx')\bw^{\ell_1+1}_r(0) -tr\big(\bV^{\ell}_{\ell_1,0}(\bx)^\top\bV^{\ell}_{\ell_1,0}(\bx')\big)\big| \lesssim C_\sigma^L\sqrt{\frac{d\log^2(m)}{m}},\]
        \[\sup_{\bx,\bx'\in\X}\big|\ba^\top \bV^{\ell}_{L,0}(\bx)^\top\bV^{\ell}_{L,0}(\bx')\ba -tr\big(\bV^{\ell}_{L,0}(\bx)^\top\bV^{\ell}_{L,0}(\bx')\big)\big| \lesssim C_\sigma^L\sqrt{\frac{d\log^2(m)}{m}}.\]
\end{lemma}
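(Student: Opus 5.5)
The plan is a pointwise Hanson--Wright bound, conditional on the lower layers, followed by a union bound over a net of $\X\times\X$ and a Lipschitz extension to all pairs. Throughout we work on the event of Lemma~\ref{lemma:oprt_norm} and Lemma~\ref{lem:lip_V0}, which holds with probability $1-Lm\exp(-Cm)$.

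Fix layers $1\le\ell\le\ell_1\le L-1$, an index $r\in[m]$, and a pair $\bx,\bx'$. The matrix $\bA:=\bV^{\ell}_{\ell_1,0}(\bx)^\top\bV^{\ell}_{\ell_1,0}(\bx')$ depends only on $\bW^1(0),\ldots,\bW^{\ell_1}(0)$. It is therefore independent of $\bw^{\ell_1+1}_r(0)\sim\N(0,\bfI_m)$. Likewise $\bV^\ell_{L,0}$ is independent of $\ba$ under the non-symmetric initialization of this section.

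Conditioning on the lower layers, Lemma~\ref{lem:bound_V0} gives $\|\bA\|_{op}\lesssim C_\sigma^L/m$. Since $\bV$ has rank at most $m$, we also get $\|\bA\|_2\le\sqrt m\|\bA\|_{op}\lesssim C_\sigma^L/\sqrt m$. Apply Lemma~\ref{lem:hanson-wright} with $t=C_\sigma^L\sqrt{d}\log(m)/\sqrt m$. Then $t^2/\|\bA\|_2^2\gtrsim d\log^2 m$, and $t/\|\bA\|_{op}\gtrsim\sqrt{dm}\log m\ge d\log^2 m$ once $m\gtrsim d\log^2 m$. The pointwise failure probability is thus at most $2\exp(-C_\sigma d\log^2 m)$.

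Next take a $1/m$-net $\A$ of $\X$. By Corollary 4.2.13 of \cite{vershynin2018high} it has $|\A|\le(3m)^d=\exp(d\log(3m))$ points. Union bound over pairs in $\A\times\A$, over $r\in[m]$, and over the at most $L^2$ layer pairs. This multiplies the failure probability by $L^2 m\exp(2d\log(3m))$, which is absorbed into $\exp(-C_\sigma d\log^2 m)$ after adjusting the constant, since $d\log^2 m$ dominates $d\log m+\log m$.

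The Lipschitz extension is the step I expect to be the main obstacle, because the quadratic form involves an unbounded Gaussian vector. For a general pair $\bx,\bx'$, choose net points $\bx_0,\bx_0'$ within $1/m$. By Lemma~\ref{lem:lip_V0} and Lemma~\ref{lem:bound_V0}, the matrix $\bA$ changes in operator norm by at most $C_\sigma^L m^{-1/2}\cdot m^{-1}$. For the quadratic form, multiply by $\|\bw_r^{\ell_1+1}(0)\|_2^2\le4m$ (the row-norm event already used in Lemma~\ref{lem:lip_V0}). For the $\ba$ case use $\|\ba\|_2^2\le c_0^2m$ from Lemma~\ref{lemma:oprt_norm}. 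The trace changes by at most $m$ times the operator-norm change. Both perturbations are therefore $\lesssim C_\sigma^L m^{-1/2}$, which is dominated by the target rate.

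The case involving $\ba$ and $\bV^\ell_{L,0}$ is identical, with $\ell_1=L$ and no union over $r$. Collecting the failure probabilities gives the stated bound $1-L^2\exp(-C_\sigma d\log^2 m)$.

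The one delicate point in this argument is keeping $\|\bA\|_{op}$ at order $1/m$ rather than $1/\sqrt m$: Lemma~\ref{lem:bound_V0} bounds $\|\bV\|_{op}\lesssim C_\sigma^L/\sqrt m$ for each of the two factors, so their product is $\lesssim C_\sigma^L/m$. This is exactly what makes the Hanson--Wright exponent $d\log^2 m$ rather than something weaker.
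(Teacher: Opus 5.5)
Your proposal is correct and follows essentially the same route as the paper. Both arguments apply Hanson--Wright at the points of a $1/m$-net of $\X$, using $\|\bA\|_{op}\lesssim C_\sigma^L/m$, $\|\bA\|_2\lesssim C_\sigma^L/\sqrt m$ and $t=C_\sigma^L\log(m)\sqrt{d/m}$. They then take a union bound over $(3m)^{2d}\cdot m\cdot L^2$ events, and extend to all of $\X\times\X$ via Lemmas~\ref{lem:bound_V0} and~\ref{lem:lip_V0} together with $\|\bw^{\ell_1+1}_r(0)\|_2\le 2\sqrt m$, which gives $O(C_\sigma^L m^{-1/2})$ perturbation errors. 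You also make explicit two points the paper leaves implicit: the conditioning on the lower layers, and the check that the $t/\|\bA\|_{op}$ branch of Hanson--Wright is harmless once $m\gtrsim d\log^2 m$.
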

\begin{proof}
    We only prove the first inequality, the second inequality can be proved in a similar way.   Let $\A\subset\X$ be a $m^{-1}$-net of $\X$.
    Similarly to the proof of Lemma \ref{lem:o-U_uniform}, we can show that $|\A|\le (3m)^d.$
    Define $\bA^{\ell}_{\ell_1}(\bx,\bx'):= \bV^{\ell}_{{\ell_1},0}(\bx)^\top\bV^{\ell}_{{\ell_1},0}(\bx')$ for $\bx,\bx'\in\X$.
    We first estimate $\|\bA^{\ell}_{\ell_1}(\bx,\bx')\|_{op}$ and $\|\bA^{\ell}_{\ell_1}(\bx,\bx')\|_{2}$.
    From Lemma \ref{lem:bound_V0} we know for any $\bx,\bx'\in\X$, it holds that $\|\bA^{\ell}_{\ell_1}(\bx,\bx')\|_{op}\le \|\bV^{\ell}_{{\ell_1},0}(\bx)\|_{op}\|\bV^{\ell}_{{\ell_1},0}(\bx')\|_{op} \lesssim C_\sigma^Lm^{-1}$.
    Then, we have $\|\bA^{\ell}_{\ell_1}(\bx,\bx')\|_{2} \le \sqrt{m}\|\bA^{\ell}_{\ell_1}(\bx,\bx')\|_{op} \lesssim C_\sigma^Lm^{-\frac{1}{2}}.$

    Applying Lemma \ref{lem:hanson-wright} with $\bu = \bw^{\ell_1+1}_r(0)$, $\bA = \bA^{\ell}_{\ell_1}(\bx_0,\bx_0')$, and taking union bound over all $\bx_0,\bx_0'\in\A$ and all $r\in[m]$, $1\le\ell\le\ell_1\le L-1$ (note the number of all the probability events is $|\A|^2\cdot m \cdot (L-1)^2$ which can be bounded by $3^{2d}L^2m^{2d+1}$) and set $t = C_\sigma^L\log(m)\sqrt{d/m}$, we know with probability at least $1-3^{2d}L^2m^{2d+1}\exp(-C_\sigma d\log^2(m))$ over initialization $(\ba, \bW(0))$, it holds that for all $1\le\ell\le\ell_1\le L -1$
    \begin{align*}
        \sup_{r\in[m]}\sup_{\bx_0,\bx_0'\in\A}  \big|\bw^{\ell_1+1}_r(0)^\top \bA^{\ell}_{\ell_1}(\bx_0,\bx_0')\bw^{\ell_1+1}_r(0) - tr\big(\bA^{\ell}_{\ell_1}(\bx_0,\bx_0')\big)\big| \lesssim C_\sigma^L\sqrt{\frac{d\log^2(m)}{m}}.
    \end{align*}
    For any $\bx,\bx'\in\X$, $r\in[m]$ and $1\le\ell\le\ell_1\le L-1$, from the above inequality we have
    \begin{align*}
        &\big|\bw^{\ell_1+1}_r(0)^\top \bA^{\ell}_{\ell_1}(\bx,\bx')\bw^{\ell_1+1}_r(0) - tr\big(\bA^{\ell}_{\ell_1}(\bx,\bx')\big)\big|\\
        &\le \big|\bw^{\ell_1+1}_r(0)^\top \bA^{\ell}_{\ell_1}(\bx,\bx')\bw^{\ell_1+1}_r(0) - \bw^{\ell_1+1}_r(0)^\top \bA^{\ell}_{\ell_1}(\bx_0,\bx_0')\bw^{\ell_1+1}_r(0)\big|\\
        &\quad + \big|\bw^{\ell_1+1}_r(0)^\top \bA^{\ell}_{\ell_1}(\bx_0,\bx_0')\bw^{\ell_1+1}_r(0) -  tr\big(\bA^{\ell}_{\ell_1}(\bx_0,\bx_0')\big)\big|  + \big|tr\big(\bA^{\ell}_{\ell_1}(\bx_0,\bx_0')\big) - tr\big(\bA^{\ell}_{\ell_1}(\bx,\bx')\big)\big|\\
        &\lesssim I_1 + C_\sigma^L\sqrt{\frac{d\log^2(m)}{m}} + I_3,
    \end{align*}
    where we define $I_1:= |\bw^{\ell_1+1}_r(0)^\top\! \bA^{\ell}_{\ell_1}(\bx,\bx')\bw^{\ell_1+1}_r(0) - \bw^{\ell_1+1}_r(0)^\top\! \bA^{\ell}_{\ell_1}(\bx_0,\bx_0')\bw^{\ell_1+1}_r(0)|$ and $I_3:= |tr\big(\bA^{\ell}_{\ell_1}(\bx_0,\bx_0')\big) $ $- tr\big(\bA^{\ell}_{\ell_1}(\bx,\bx')\big)|$.
    Now, we turn to estimate $I_1$ and $I_3$, separately.
    According to Bernstein inequality (Eq (3.1) in \cite{vershynin2018high} with $X = \bw^{\ell_1+1}_r(0)$, $n=m$, $u=1$), we know with probability at least $1-Lm\exp(-Cm)$ over initialization $\bW(0)$, it holds that $\|\bw^{\ell_1+1}_r(0)\|_2 \le 2\sqrt{m}$ for all $r\in[m]$ and $\ell_1\in[L-1]$.
    Then, we have
    \begin{align*}
        I_1 &\le \|\bw^{\ell_1+1}_r(0)\|_2\big\|\bA^{\ell}_{\ell_1}(\bx,\bx') - \bA^{\ell}_{\ell_1}(\bx_0,\bx_0')\big\|_{op}\|\bw^{\ell_1+1}_r(0)\|_2 \le 4m \big\|\bA^{\ell}_{\ell_1}(\bx,\bx') - \bA^{\ell}_{\ell_1}(\bx_0,\bx_0')\big\|_{op}\\
        &\le 4m \big(\big\|\bA^{\ell}_{\ell_1}(\bx,\bx') - \bV^{\ell}_{{\ell_1},0}(\bx_0)^\top\bV^{\ell}_{{\ell_1},0}(\bx')\big\|_{op} + \big\|\bV^{\ell}_{{\ell_1},0}(\bx_0)^\top\bV^{\ell}_{{\ell_1},0}(\bx') - \bA^{\ell}_{\ell_1}(\bx_0,\bx_0')\big\|_{op}\big)\\
        &= 4m \big(\big\|\big(\bV^{\ell}_{{\ell_1},0}(\bx)^\top - \bV^{\ell}_{{\ell_1},0}(\bx_0)^\top\big)\bV^{\ell}_{{\ell_1},0}(\bx')\big\|_{op} + \big\|\bV^{\ell}_{{\ell_1},0}(\bx_0)^\top \big(\bV^{\ell}_{{\ell_1},0}(\bx') - \bV^{\ell}_{{\ell_1},0}(\bx_0')\big)\big\|_{op}\big)\\
        &\le 4m \big(\big\|\bV^{\ell}_{{\ell_1},0}(\bx)^\top - \bV^{\ell}_{{\ell_1},0}(\bx_0)^\top\big\|_{op}\big\|\bV^{\ell}_{{\ell_1},0}(\bx')\big\|_{op} + \big\|\bV^{\ell}_{{\ell_1},0}(\bx_0) \big\|_{op}\big\|\bV^{\ell}_{{\ell_1},0}(\bx') - \bV^{\ell}_{{\ell_1},0}(\bx_0')\big)\big\|_{op}\big)\\
        &\lesssim C_\sigma^L\sqrt{m}(\|\bx - \bx_0\|_2 + \|\bx' - \bx_0'\|_2)\lesssim \frac{C_\sigma^L}{\sqrt{m}},
    \end{align*}
    where the last second inequality used Lemmas \ref{lem:bound_V0} and \ref{lem:lip_V0} and the last inequality used $\|\bx - \bx_0\|_2, \|\bx' - \bx_0'\|_2 \le m^{-1}$ since $\A$ is an $m^{-1}$-net of $\bX$.

    Note that $\ebb_{\bW^{\ell_1+1}(0)}[\bw^{\ell_1+1}_r(0)^\top\bA^{\ell}_{\ell_1}(\bx,\bx')\bw^{\ell_1+1}_r(0)] = tr(\bA^{\ell}_{\ell_1}(\bx,\bx'))$.
    Then, we can show $I_3\lesssim C_\sigma^Lm^{-\frac{1}{2}}$ in a similar way.

    We know the failure probabilities satisfy $3^{2d}L^2m^{2d+1}\!\exp(-C_\sigma d\log^2(m)) + Lm\exp(-Cm) \asymp L^2\exp(-C_\sigma d\log^2(m))$.
    Combining the above estimates yields the desired results.
\end{proof}

Recall for any $\ell\in[L]$, $tr(\bV^{\ell}_{\ell,0}(\bx)^\top\bV^{\ell}_{\ell,0}(\bx')) = \frac{c_\sigma}{m}\sum_{r=1}^m\sigma'(\bw^{\ell}_r(0)^\top o^{\ell-1}_0(\bx))\sigma'(\bw^{\ell}_r(0)^\top o^{\ell-1}_0(\bx'))$.
The following lemma shows that $tr(\bV^{\ell}_{\ell,0}(\bx)^\top\bV^{\ell}_{\ell,0}(\bx'))$ converges to its expectation with respect to $\bW^{\ell}(0)$ uniformly on $\X\times\X$.
\begin{lemma}\label{lem:sigma'-E[sigma']}
    With probability at least $1 - Lm\exp(-C_\sigma\log^2(m))$ over initialization $\bW(0)$, we have for all $\ell\in[L]$,
    \begin{align*}
        \sup_{\bx,\bx'\in\X}&c_\sigma\Big|\frac{1}{m}\sum_{r=1}^m\sigma'\big(\bw^{\ell}_r(0)^\top o^{\ell-1}_0(\bx)\big)\sigma'\big(\bw^{\ell}_r(0)^\top o^{\ell-1}_0(\bx')\big) - \ebb_{\bw\sim\N(0,\bfI)}\big[\sigma'\big(\bw^\top o^{\ell-1}_0(\bx)\big)\sigma'\big(\bw^\top o^{\ell-1}_0(\bx')\big)\big]\Big|\\
        &\lesssim C_\sigma \sqrt{\frac{dL\log^2(m)}{m}}.
    \end{align*}
\end{lemma}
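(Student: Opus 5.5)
Fix a layer $\ell\in[L]$ and condition on the first $\ell-1$ layers $\bW^1(0),\ldots,\bW^{\ell-1}(0)$, so that the features $o^{\ell-1}_0(\cdot)$ are deterministic and $\|o^{\ell-1}_0(\bx)\|_2\le\sqrt{c_\sigma}B_\sigma$. Since $\bW^\ell(0)$ is independent of these layers, for a fixed pair $(\bx,\bx')$ the summands
\[
\xi_r(\bx,\bx'):=\sigma'\big(\bw^\ell_r(0)^\top o^{\ell-1}_0(\bx)\big)\sigma'\big(\bw^\ell_r(0)^\top o^{\ell-1}_0(\bx')\big),\qquad r\in[m],
\]
are i.i.d.\ and bounded by $B_{\sigma'}^2$. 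Their common mean is exactly $\ebb_{\bw\sim\N(0,\bfI)}[\sigma'(\bw^\top o^{\ell-1}_0(\bx))\sigma'(\bw^\top o^{\ell-1}_0(\bx'))]$. I will use Hoeffding's inequality with deviation $t=C_\sigma\sqrt{dL\log^2(m)/m}$, which gives a failure probability of $2\exp(-C_\sigma dL\log^2(m))$ per pair. I then take a union bound over an $\epsilon$-net $\A$ of $\X$ with $\epsilon=m^{-1}$, which has size $|\A|\le(3m)^d$ by Corollary~4.2.13 of \cite{vershynin2018high}, and over all $\ell\in[L]$. The total count is $L(3m)^{2d}$, and this is dominated by the exponent once $C_\sigma$ is large enough. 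The result is the stated probability $1-Lm\exp(-C_\sigma\log^2 m)$, possibly after merging with the other high-probability events below.

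Next I extend from the net to all of $\X\times\X$. For $\bx,\bx'$ I pick $\bx_0,\bx_0'\in\A$ within distance $m^{-1}$ and split the deviation into three terms: the change of the empirical average, the net deviation already controlled, and the change of the Gaussian expectation.

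\textbf{Empirical average.} I use $|\sigma''|\le B_{\sigma''}$ together with $\|\bw^\ell_r(0)\|_2\le 2\sqrt m$ for all $r,\ell$, which holds with probability $1-Lm\exp(-Cm)$ by Bernstein's inequality as in the proof of Lemma~\ref{lem:lip_V0}. Combined with the feature Lipschitz bound \eqref{eq:lip_o}, each summand changes by at most $C_\sigma^L\sqrt m\,(\|\bx-\bx_0\|_2+\|\bx'-\bx_0'\|_2)\lesssim C_\sigma^L m^{-1/2}$. This uses the same estimate as \eqref{eq:lip_D0}, and it needs Lemma~\ref{lemma:oprt_norm} for \eqref{eq:lip_o}.

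\textbf{Gaussian expectation.} The map $(\mathbf u,\mathbf u')\mapsto\ebb_\bw[\sigma'(\bw^\top\mathbf u)\sigma'(\bw^\top\mathbf u')]$ depends on $\mathbf u,\mathbf u'$ only through the Gaussian pair $(\bw^\top\mathbf u,\bw^\top\mathbf u')$. Using the coupling with the same $\bw$, its change is bounded by
\[
B_{\sigma'}B_{\sigma''}\,\ebb|\bw^\top(\mathbf u-\mathbf u_0)|\le B_{\sigma'}B_{\sigma''}\|\mathbf u-\mathbf u_0\|_2,
\]
plus the symmetric term in $\mathbf u'$. Applying \eqref{eq:lip_o} again bounds this by $C_\sigma^L m^{-1}$.

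The main subtlety is the conditioning, and I expect it to be the main obstacle. The union bound over the net must be taken conditionally on $\bW^1(0),\ldots,\bW^{\ell-1}(0)$, because the net points are fixed while $o^{\ell-1}_0$ is random. This is legitimate since $\bW^\ell(0)$ is independent of the earlier layers. The Lipschitz constant of $o^{\ell-1}_0$, however, requires the operator-norm event of Lemma~\ref{lemma:oprt_norm}, which concerns the earlier layers. I handle this by intersecting that event with the conditional concentration event and then integrating over the conditioning.

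There is also a bookkeeping issue: the discretization errors carry a $C_\sigma^L$ factor, while the statement has only $C_\sigma$. I will try to show that the stated form absorbs them, since $m^{-1/2}\ll\sqrt{dL\log^2 m/m}$ only up to constants. If that fails, I would state the bound with $C_\sigma^L$, which suffices downstream. Finally, I multiply by $c_\sigma$ to obtain the claimed inequality.
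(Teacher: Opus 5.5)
Your route differs from the paper's. The paper applies McDiarmid's inequality directly to the supremum over $(\mathbf{x},\mathbf{x}')$, with bounded differences $2c_\sigma B_{\sigma'}^2/m$. It then symmetrizes and bounds the Rademacher complexity of the class $\{\mathbf{w}\mapsto c_\sigma\sigma'(\mathbf{w}^\top o^{\ell-1}_0(\mathbf{x}))\sigma'(\mathbf{w}^\top o^{\ell-1}_0(\mathbf{x}'))\}$ by Dudley's entropy integral. The empirical $L^2$ covering number is controlled through the same Lipschitz estimate you use (row norms $\le 2\sqrt m$ together with \eqref{eq:lip_o}), giving $\log\mathcal{N}\lesssim d\log(C_\sigma^L\sqrt m/t)$.

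You instead combine pointwise Hoeffding, a union bound over a net, and a deterministic Lipschitz extension. This is the template the paper itself uses for Lemma~\ref{lem:o-U_uniform} and Lemma~\ref{lem:quadratic-trace}. Both routes give a deviation of order $\sqrt{(\log^2 m+d(L+\log m))/m}$.

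Yours is more elementary and makes the conditioning on $\mathbf{W}^1(0),\dots,\mathbf{W}^{\ell-1}(0)$ explicit, whereas the paper leaves it implicit. That conditioning is not really an obstacle: the conditional failure bound is uniform in the earlier layers, and the extension is deterministic on the intersection with the operator-norm and row-norm events. The chaining route buys you not having to discretize the Gaussian expectation separately, since symmetrization handles it. Your coupling bound for that term is correct.

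One step does not go through as written: absorbing the discretization error. With net radius $m^{-1}$, the empirical-average term is of order $B_{\sigma'}B_{\sigma''}(\sqrt{c_\sigma}B_{\sigma'}c_0)^{\ell-1}m^{-1/2}$. This is not $\lesssim C_\sigma\sqrt{dL\log^2 m/m}$ unless $m$ is exponentially large in $L$. Your fallback with $C_\sigma^L$ would therefore prove a weaker statement than the lemma.

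The fix is to shrink the net to radius $\epsilon=(\sqrt{c_\sigma}B_{\sigma'}c_0)^{-(\ell-1)}m^{-1}$. Then the empirical-average term becomes $O(B_{\sigma'}B_{\sigma''}m^{-1/2})$ and the expectation term $O(B_{\sigma'}B_{\sigma''}m^{-1})$. The net satisfies $\log|\mathcal{A}|\lesssim d(L\log C_\sigma+\log m)$, which the Hoeffding exponent $\asymp dL\log^2 m$ still dominates once the constant in $t$ is chosen depending only on the activation constants. This is exactly where the factor $L$ under the square root comes from; in the paper it enters the same way, through the logarithm of the covering number.
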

\begin{proof}
    Let $\F = \{f(\cdot) = c_\sigma\sigma'((\cdot)^\top o^{\ell-1}_0(\bx))\sigma'((\cdot)^\top o^{\ell-1}_0(\bx')) : \bx, \bx'\in\X\}$.
    Then, the error term to be bounded can be written as $\sup_{f\in\F}|\frac{1}{m}\sum_{r=1}^mf(\bw^\ell_r(0)) - \ebb[f(\bw)]|$.
    Note this term can be regarded as a function on $(\bw^\ell_1(0),\ldots,\bw^\ell_m(0))$.
    According to Assumption~\ref{ass:activation}, one can check that the value of this function can change by at most $2B_{\sigma'}^2/m$ under an arbitrary change of the $r$-th coordinate.
    Then, by McDiarmid's inequality, we know with probability at least $1-\exp(-C_\sigma\log^2(m))$ over $\bW^\ell(0)$, there holds
    \begin{align*}
        \sup_{f\in\F}\Big|\frac{1}{m}\sum_{r=1}^mf(\bw^\ell_r(0)) - \ebb_{\bw\sim\N(0,\bfI)}\big[f(\bw)\big]\Big| &\!\le \!C_\sigma\!\sqrt{\frac{\log^2(m)}{m}} \!+\! \ebb_{\bW^\ell(0)}\Big[\sup_{f\in\F}\Big|\frac{1}{m}\sum_{r=1}^mf(\bw^\ell_r(0)) \!-\! \ebb_{\bw\sim\N(0,\bfI)}\big[f(\bw)\big]\Big|\Big] \\
        &\le C_\sigma\sqrt{\frac{\log^2(m)}{m}} + 2\ebb_{\bW^\ell(0), \boldsymbol{\epsilon}}\Big[\sup_{f\in\F}\Big|\frac{1}{m}\sum_{r=1}^m\epsilon_rf(\bw^\ell_r(0))\Big|\Big],
    \end{align*}
    where $\boldsymbol{\epsilon}$ is the Rademacher vector independent of $\bW^\ell(0)$ and the last inequality used the standard symmetrization methods in statistical learning theory (see \cite{vershynin2018high,wainwright2019high}).

    According to (5.48) in \cite{wainwright2019high} with $n = m$, $x_i = \bw^\ell_r(0)$ and $b = B_{\sigma'}^2$, the second term in the right hand side of the above inequality can be bounded as
    \begin{align*}
        2\ebb_{\bW^\ell(0), \boldsymbol{\epsilon}}\Big[\sup_{f\in\F}\Big|\frac{1}{m}\sum_{r=1}^m\epsilon_rf(\bw^\ell_r(0))\Big|\Big] \le \ebb_{\bW^\ell(0)}\Big[\frac{48}{\sqrt{m}}\int_0^{2B_{\sigma'}^2}\sqrt{\log(\N(\F,\|\cdot\|_n, t))}dt\Big],
    \end{align*}
    where the norm $\|\cdot\|_n$ is defined by $\|f\|_n^2 = \frac{1}{m}\sum_{r=1}^mf(\bw^\ell_r(0))^2$ for $f\in\F$ and $\N(\F,\|\cdot\|_n, t)$ is the covering number of $\F$ with norm $\|\cdot\|_n$ and radius $t$.
    
    It remains to estimate $\N(\F,\|\cdot\|_n, t)$.
    According to Bernstein inequality (Eq (3.1) in \cite{vershynin2018high} with $X = \bw^\ell_r(0)$, $n=m$, $u=1$), we know with probability at least $1-Lm\exp(-Cm)$ over initialization $\bW(0)$, it holds that $\|\bw^\ell_r(0)\|_2 \le 2\sqrt{m}$ for all $r\in[m]$ and $\ell\in[L]$.
    For any $f, g\in\F$ indexed by $(\bx,\bx')$ and $(\bx_0,\bx_0')$, respectively,
    it holds that
    \begin{align*}
        \|f - g\|_n^2 &= \frac{1}{m}\sum_{r=1}^m\big(f(\bw^\ell_r(0)) - g(\bw^\ell_r(0))\big)^2 \\
        &= \frac{1}{m}\sum_{r=1}^m\big|\sigma'\big(\bw^\ell_r(0)^\top o^{\ell-1}_0(\bx)\big)\sigma'\big(\bw^\ell_r(0)^\top o^{\ell-1}_0(\bx')\big) - \sigma'\big(\bw^\ell_r(0)^\top o^{\ell-1}_0(\bx_0)\big)\sigma'\big(\bw^\ell_r(0)^\top o^{\ell-1}_0(\bx_0')\big)\big|^2\\
        &\le \frac{1}{m}\sum_{r=1}^m \Big(\big|\sigma'\big(\bw^\ell_r(0)^\top o^{\ell-1}_0(\bx)\big) - \sigma'\big(\bw^\ell_r(0)^\top o^{\ell-1}_0(\bx_0)\big)\big|\cdot\big|\sigma'\big(\bw^\ell_r(0)^\top o^{\ell-1}_0(\bx')\big)\big|\\
        &\qquad \ \ + \big|\sigma'\big(\bw^\ell_r(0)^\top o^{\ell-1}_0(\bx_0)\big)\big|\cdot\big|\sigma'\big(\bw^\ell_r(0)^\top o^{\ell-1}_0(\bx')\big) - \sigma'\big(\bw^\ell_r(0)^\top o^{\ell-1}_0(\bx_0')\big)\big|\Big)^2\\
        &\le B_{\sigma'}^2B_{\sigma''}^2 \frac{1}{m}\sum_{r=1}^m \big(\big|\bw^\ell_r(0)^\top\big(o^{\ell-1}_0(\bx) - o^{\ell-1}_0(\bx_0)\big)\big| + \big|\bw^\ell_r(0)^\top\big(o^{\ell-1}_0(\bx') - o^{\ell-1}_0(\bx_0')\big)\big|\big)^2\\
        &\le B_{\sigma'}^2B_{\sigma''}m \big(\|o^{\ell-1}_0(\bx) - o^{\ell-1}_0(\bx_0)\|_2 + \|o^{\ell-1}_0(\bx') - o^{\ell-1}_0(\bx'_0)\|_2\big)^2\\
        &\le B_{\sigma'}^2B_{\sigma''}^2\big(\sqrt{c_\sigma}B_{\sigma'}c_0\big)^{2\ell-2}m\big(\|\bx - \bx_0\|_2 + \|\bx' - \bx_0'\|_2\big)^2,
    \end{align*}
    where the second inequality used Assumption~\ref{ass:activation} and mean value theorem, the last second inequality used $\|\bw^\ell_r(0)\|_2\le\sqrt{m}$ and the last inequality used \eqref{eq:lip_o}.
    Define the norm $\|\cdot\|_\otimes$ on $\X\times\X$ by $\|(\bx,\bx') - (\bx_0,\bx_0')\|_\otimes =C_\sigma^L\sqrt{m}(\|\bx - \bx_0\|_2 + \|\bx' - \bx_0'\|_2)$.
    The above inequality implies that the norm $\|\cdot\|_n$ on $\F$ is dominated by the norm $\|\cdot\|_\otimes$ on $\X\times\X$.
    Further, suppose $\A_1$ and $\A_2$ be $t/2$-net of a set $A$ equipped with norm $\|\cdot\|$, one can easily verify that $\A_1\times\A_2$ is a $t$-net of the product set $A\times A$ equipped with the norm $\|(a_1,a_2)\|:=\|a_1\|+\|a_2\|$.
    Then, it holds that
    \begin{align*}
        \N(\F,\|\cdot\|_n, t) \le \N(\X\times\X, \|\cdot\|_\otimes, t) \le \big(\N(\X, C_\sigma^L\sqrt{m}\|\cdot\|_2, t/2)\big)^2 \le \Big(\frac{C_\sigma^L\sqrt{m}}{t}\Big)^{2d},
    \end{align*}
    where the last inequality used Corollary 4.2.13 in \cite{vershynin2018high}.
    Combining the above estimates yields that
    \begin{align*}
       & \sup_{f\in\F}\Big|\frac{1}{m}\sum_{r=1}^mf(\bw^\ell_r(0)) - \ebb_{\bw\sim\N(0,\bfI)}\big[f(\bw)\big]\Big|  \le C_\sigma\sqrt{\frac{\log^2(m)}{m}} + \ebb_{\bW^\ell(0)}\Big[\frac{48}{\sqrt{m}}\int_0^{2B_{\sigma'}^2}\sqrt{\log(\N(\F,\|\cdot\|_n, t))}dt\Big]\\
        &\lesssim C_\sigma \Big(\sqrt{\frac{\log^2(m)}{m}} + \frac{1}{\sqrt{m}}\int_0^{2B_{\sigma'}^2} \sqrt{dL\log\Big(\frac{m}{t}\Big)}dt\Big) \lesssim C_\sigma \sqrt{\frac{dL\log^2(m)}{m}}.
    \end{align*}
    This completes the proof of the lemma.
\end{proof}

\begin{lemma}\label{lem:trace-prod}
    Suppose $m \gtrsim C_\sigma^L dL\log(m)$.
    With probability at least $1 - L^2\exp(-C_\sigma \log^2(m))$ over initialization $\bW(0)$, for all $\bx, \bx'\in\X$ and $\ell\in[L]$, we have
    \begin{align*}
        \Big|tr\big(\bV^\ell_{L,0}(\bx)^\top\bV^\ell_{L,0}(\bx')\big) - \prod_{h=\ell}^Lq^h(\bx,\bx')\Big| \lesssim C_\sigma^L\sqrt{\frac{d\log^2(m)}{m}}.
    \end{align*}
\end{lemma}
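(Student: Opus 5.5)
We argue by backward induction on $\ell$, from $\ell=L$ down to $\ell=1$, peeling off one layer at a time with a telescoping recursion. The trace obeys
\[
tr\big(\bV^\ell_{L,0}(\bx)^\top\bV^\ell_{L,0}(\bx')\big) = \frac{c_\sigma}{m}\sum_{r=1}^m \sigma'\big(\bw^\ell_r(0)^\top o^{\ell-1}_0(\bx)\big)\sigma'\big(\bw^\ell_r(0)^\top o^{\ell-1}_0(\bx')\big)\,\big[\bW^{\ell+1}(0)^\top \bA\, \bW^{\ell+1}(0)\big]_{rr}\cdot\frac{c_\sigma}{m},
\]
where $\bA=\bV^{\ell+1}_{L,0}(\bx)^\top\bV^{\ell+1}_{L,0}(\bx')$. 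The structure is cleaner if we reverse the order and peel from the bottom. Write $\bV^\ell_{\ell_1,0}(\bx) = \sqrt{c_\sigma/m}\,\bD^{\ell_1}_0(\bx)\bW^{\ell_1}(0)\bV^\ell_{\ell_1-1,0}(\bx)$. Then
\[
tr\big(\bV^{\ell}_{\ell_1,0}(\bx)^\top\bV^{\ell}_{\ell_1,0}(\bx')\big)=\frac{c_\sigma}{m}\sum_{r=1}^m \sigma'_r(\bx)\sigma'_r(\bx')\,\bw^{\ell_1}_r(0)^\top \bV^\ell_{\ell_1-1,0}(\bx)^\top\bV^\ell_{\ell_1-1,0}(\bx')\bw^{\ell_1}_r(0),
\]
with $\sigma'_r(\bx)=\sigma'(\bw^{\ell_1}_r(0)^\top o^{\ell_1-1}_0(\bx))$. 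The plan is to show, by forward induction on $\ell_1$ from $\ell$ to $L$, that $T_{\ell_1}:=tr(\bV^{\ell}_{\ell_1,0}(\bx)^\top\bV^{\ell}_{\ell_1,0}(\bx'))$ is within $C_\sigma^{\ell_1}\sqrt{d\log^2(m)/m}$ of $\prod_{h=\ell}^{\ell_1}q^h(\bx,\bx')$, uniformly in $\bx,\bx'$.

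For the base case $\ell_1=\ell$, $T_\ell=\frac{c_\sigma}{m}\sum_r\sigma'_r(\bx)\sigma'_r(\bx')$. We first apply Lemma~\ref{lem:sigma'-E[sigma']} to replace this sum by the Gaussian expectation $c_\sigma\E_{\bw}[\sigma'(\bw^\top o^{\ell-1}_0(\bx))\sigma'(\bw^\top o^{\ell-1}_0(\bx'))]$. That expectation is a bivariate Gaussian expectation with covariance given by the Gram matrix of $o^{\ell-1}_0(\bx),o^{\ell-1}_0(\bx')$. Because $\sigma'$ is bounded with bounded derivative, this expectation is Lipschitz in the covariance entries, and the Lemma G.4-type argument from \cite{du2019gradient} gives this with constant $C_\sigma$. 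Combined with Lemma~\ref{lem:o-U_uniform}, this replaces it by $q^\ell(\bx,\bx')$ with error $C_\sigma^L\sqrt{d\log(mL/\delta)/m}$.

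For the inductive step, we apply the first statement of Lemma~\ref{lem:quadratic-trace} to each quadratic form $\bw^{\ell_1}_r(0)^\top \bA^{\ell}_{\ell_1-1}\bw^{\ell_1}_r(0)$, replacing it by $T_{\ell_1-1}$ uniformly over $r$ and $\bx,\bx'$ at cost $C_\sigma^L\sqrt{d\log^2 m/m}$. This is the key point: the error is additive, not multiplied by $m$, since the weights carry $c_\sigma/m$ and the sum has $m$ bounded terms. We obtain
\[
T_{\ell_1}= T_{\ell_1-1}\cdot \frac{c_\sigma}{m}\sum_r\sigma'_r(\bx)\sigma'_r(\bx') + O\Big(C_\sigma^L\sqrt{\tfrac{d\log^2 m}{m}}\Big).
\]
The second factor is handled exactly as in the base case and is close to $q^{\ell_1}$. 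Since $|T_{\ell_1-1}|\le C_\sigma^L$ (by Lemma~\ref{lem:bound_V0}, $T\le m\|\bV\|_{op}^2$) and $|q^{\ell_1}|\le c_\sigma B_{\sigma'}^2$, the errors compose with a multiplicative factor $C_\sigma$ per layer. Summing over at most $L$ layers gives total error $LC_\sigma^L\sqrt{d\log^2 m/m}\lesssim C_\sigma^L\sqrt{d\log^2m/m}$. Taking $\ell_1=L$ yields the claim, and a union bound over $\ell$ and all applications gives failure probability $L^2\exp(-C_\sigma\log^2 m)$. The condition $m\gtrsim C_\sigma^LdL\log m$ makes Lemma~\ref{lemma:oprt_norm} and the net arguments valid.

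The main obstacle is the inductive step. There, the uniform control of the quadratic forms has to be simultaneous over all $r$, and the factor $\sigma'_r$ depends on the same $\bw^{\ell_1}_r(0)$ as the quadratic form. Because Lemma~\ref{lem:quadratic-trace} is a uniform, sup-over-$r$ statement, though, the dependence is harmless: we bound $|\sigma'_r\sigma'_r|\le B_{\sigma'}^2$ and pull the sup out. We should also check that $\bA^{\ell}_{\ell_1-1}$ is measurable with respect to layers $\le \ell_1-1$, so it is independent of $\bW^{\ell_1}(0)$, which is what Lemma~\ref{lem:quadratic-trace} requires.
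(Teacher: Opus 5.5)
Your proposal is essentially the paper's proof: the base case is the same (Lemma~\ref{lem:sigma'-E[sigma']}, then a Lemma G.4 comparison of bivariate Gaussian expectations combined with Lemma~\ref{lem:o-U_uniform}), and so is the top-layer peeling recursion, whose three error contributions match the paper's $I_1$ (Lemma~\ref{lem:quadratic-trace} with $|\sigma'_r(\bx)\sigma'_r(\bx')|\le B_{\sigma'}^2$ pulled out), $I_2$ (the base-case estimate times $|T_{\ell_1-1}|\lesssim C_\sigma^L$ from Lemma~\ref{lem:bound_V0}) and $I_3$ ($|q^{\ell_1}|\le c_\sigma B_{\sigma'}^2$ times the previous error), unrolled over the layers. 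Make explicit, as the paper does, the variance control $\|o^{\ell-1}_0(\bx)\|_2^2\ge 1/2$ (from Lemma~\ref{lem:o-U_uniform}, \eqref{eq:EU=1} and the width condition) that licenses the Lemma G.4 step, because bounded $\sigma'$ and $\sigma''$ alone do not make the Gaussian expectation Lipschitz in the covariance uniformly near degenerate variances.
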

\begin{proof}
    Define $\bA^\ell_L(\bx,\bx'):= \bV^\ell_{L,0}(\bx)^\top\bV^\ell_{L,0}(\bx')$ for all $\ell\in[L]$ and $\bx,\bx'\in\X$.
    We first consider the case $\ell=L$.
    For any $\bx,\bx'\in\X$, from Lemma \ref{lem:sigma'-E[sigma']} we have
    \begin{align*}
        \big|&tr\big(\bA^L_L(\bx,\bx')\big) - q^L(\bx,\bx')\big| \!=\! c_\sigma\Big|\frac{1}{m}\sum_{r=1}^m\sigma'\big(\bw^L_r(0)^\top o^{L-1}_0(\bx)\big)\sigma'\big(\bw^L_r(0)^\top o^{L-1}_0(\bx')\big) - \ebb\big[\sigma'(U^{L}(\bx))\sigma'(U^{L}(\bx'))\big]\Big|\\
        &\le c_\sigma\Big|\frac{1}{m}\sum_{r=1}^m\sigma'\big(\bw^L_r(0)^\top o^{L-1}_0(\bx)\big)\sigma'\big(\bw^L_r(0)^\top o^{L-1}_0(\bx')\big) - \ebb_{\bw\sim\N(0,\bfI)}\big[\sigma'\big(\bw^\top o^{L-1}_0(\bx)\big)\sigma'\big(\bw^\top o^{L-1}_0(\bx')\big)\big]\Big|\\
        &\ \ \ + c_\sigma\Big|\ebb_{\bw\sim\N(0,\bfI)}\big[\sigma'\big(\bw^\top o^{L-1}_0(\bx)\big)\sigma'\big(\bw^\top o^{L-1}_0(\bx')\big)\big] - \ebb\big[\sigma'(U^{L}(\bx))\sigma'(U^{L}(\bx'))\big]\Big|\\
        &\lesssim C_\sigma \sqrt{\frac{dL\log^2(m)}{m}} + c_\sigma\Big|\ebb_{\bw\sim\N(0,\bfI)}\big[\sigma'\big(\bw^\top o^{L-1}_0(\bx)\big)\sigma'\big(\bw^\top o^{L-1}_0(\bx')\big)\big] - \ebb\big[\sigma'(U^{L}(\bx))\sigma'(U^{L}(\bx'))\big]\Big|.
    \end{align*}
    We turn to estimate the last term on the right hand side of the above inequality.
    Note $(\bw^\top o^{L-1}_0(\bx), \bw^\top o^{L-1}_0(\bx'))$ and $(U^{L}(\bx), U^{L}(\bx'))$ are bivariate normal variables with covariance matrices
    $$\begin{pmatrix}
        \|o^{L-1}_0(\bx)\|_2^2 & \langle o^{L-1}_0(\bx), o^{L-1}_0(\bx')\rangle\\
        \langle o^{L-1}_0(\bx), o^{L-1}_0(\bx')\rangle & \|o^{L-1}_0(\bx')\|_2^2
    \end{pmatrix}, c_\sigma\!\begin{pmatrix}
        \ebb[\sigma^2(U^{L-1}(\bx))] & \ebb[\sigma(U^{L-1}(\bx))\sigma(U^{L-1}(\bx'))]\\
        \ebb[\sigma(U^{L-1}(\bx))\sigma(U^{L-1}(\bx'))] & \ebb[\sigma^2(U^{L-1}(\bx'))]
    \end{pmatrix}.$$
    Lemma \ref{lem:o-U_uniform}, \eqref{eq:EU=1} and the condition $m\gtrsim dL\log(m)$ together imply that $\|o^{L-1}_0(\bx)\|_2^2 \ge c_\sigma\ebb[\sigma^2(U^{L-1}(\bx))] - |\|o^{L-1}_0(\bx)\|_2^2 - c_\sigma\ebb[\sigma^2(U^{L-1}(\bx))]| \ge 1 - 1/2 = 1/2$.
    Then, applying Lemma G.4 in \cite{du2019gradient} with $\sigma$ replaced by $\sigma'$ and $a_1=b_1 = 1/2$ and $a_2=b_2=1$, we have
    \begin{align*}
        &c_\sigma\Big|\ebb_{\bw\sim\N(0,\bfI)}\big[\sigma'\big(\bw^\top o^{L-1}_0(\bx)\big)\sigma'\big(\bw^\top o^{L-1}_0(\bx')\big)\big] - \ebb\big[\sigma'(U^{L}(\bx))\sigma'(U^{L}(\bx'))\big]\Big|\\
        &\le C_\sigma\sup_{\bx,\bx'\in\X}\big|\langle o^{L-1}_0(\bx), o^{L-1}_0(\bx')\rangle - \ebb[\sigma(U^{L-1}(\bx))\sigma(U^{L-1}(\bx'))]\big|\lesssim C_\sigma^L\sqrt{\frac{d\log^2(m)}{m}},
    \end{align*}
    where the last inequality used Lemma \ref{lem:o-U_uniform} with $\delta = \exp(-C_\sigma\log^2(m))$.
    Combining the above estimates yields
    \begin{align}\label{eq:trace-qL}
        \big|&tr\big(\bA^L_L(\bx,\bx')\big) - q^L(\bx,\bx')\big| \lesssim C_\sigma^L\sqrt{\frac{d\log^2(m)}{m}}.
    \end{align}
    Note that this inequality holds true when $L$ is replaced by $\ell$ for any $\ell\in[L]$.

    Now, we consider the case $\ell<L$.
    From  definitions we know
    \begin{align}\label{eq:trace-prod}
        &\Big|tr\big(\bA^\ell_L(\bx,\bx')\big) - \prod_{h=\ell}^Lq^h(\bx,\bx')\Big|\nonumber\\
        &= \Big|\frac{c_\sigma}{m}\sum_{r=1}^m\sigma'\big(\bw^L_r(0)^\top o^{L-1}_0(\bx)\big)\sigma'\big(\bw^L_r(0)^\top o^{L-1}_0(\bx')\big)\bw^L_r(0)^\top \bA^\ell_{L-1}(\bx,\bx')\bw^L_r(0) - \prod_{h=\ell}^Lq^h(\bx,\bx')\Big|\nonumber\\
        &\le \Big|\frac{c_\sigma}{m}\sum_{r=1}^m \sigma'\big(\bw^L_r(0)^\top o^{L-1}_0(\bx)\big)\sigma'\big(\bw^L_r(0)^\top o^{L-1}_0(\bx')\big)\Big(\bw^L_r(0)^\top \bA^\ell_{L-1}(\bx,\bx')\bw^L_r(0) - tr\big(\bA^\ell_{L-1}(\bx,\bx')\big)\Big)\Big|\nonumber\\
        &+ \Big|\Big(\frac{c_\sigma}{m}\sum_{r=1}^m \sigma'\big(\bw^L_r(0)^\top o^{L-1}_0(\bx)\big)\sigma'\big(\bw^L_r(0)^\top o^{L-1}_0(\bx')\big) - q^L(\bx,\bx')\Big)tr\big(\bA^\ell_{L-1}(\bx,\bx')\big)\Big|\nonumber\\
        & + \Big|q^L(\bx,\bx')\Big(tr\big(\bA^\ell_{L-1}(\bx,\bx')\big) - \prod_{h=\ell}^{L-1}q^h(\bx,\bx')\Big)\Big| \nonumber\\
        &=: I_1 + I_2 + I_3, 
    \end{align}
    where we denote $I_1,I_2,I_3$ as the three terms on the right hand side of the above inequality.

    According to Assumption~\ref{ass:activation} and Lemma \ref{lem:quadratic-trace}, it holds that
    \begin{align*}
        I_1 \le c_\sigma B_{\sigma'}^2\sup_{r\in[m]}\Big|\bw^L_r(0)^\top \bA^\ell_{L-1}(\bx,\bx')\bw^L_r(0) - tr\big(\bA^\ell_{L-1}(\bx,\bx')\big)\Big| \lesssim C_\sigma^L\sqrt{\frac{d\log^2(m)}{m}}.
    \end{align*}
    
    From Lemma \ref{lem:bound_V0} and \eqref{eq:trace-qL} we know
    \begin{align*}
        I_2 &= \big|tr\big(\bA^L_L(\bx,\bx')\big) - q^L(\bx,\bx')\big|\cdot\big|tr\big(\bA^\ell_{L-1}(\bx,\bx')\big)\big|\\& = \big|tr\big(\bA^L_L(\bx,\bx')\big) - q^L(\bx,\bx')\big|\cdot\big|\ebb_{\bw\sim\N(0,\bfI)}\big[\bw^\top\bA^\ell_{L-1}(\bx,\bx')\bw\big]\big|\\
        &\lesssim C_\sigma^L\sqrt{\frac{d\log^2(m)}{m}}\big\|\bA^\ell_{L-1}(\bx,\bx')\big\|_{op}\ebb_{\bw\sim\N(0,\bfI)}\big[\|\bw\|^2_2\big] \le  C_\sigma^L\sqrt{\frac{d\log^2(m)}{m}}\big\|\bV^{\ell}_{L-1,0}(\bx)\big\|_{op}\big\|\bV^{\ell}_{L-1,0}(\bx')\big\|_{op}m\\
        &\lesssim C_\sigma^L\sqrt{\frac{d\log^2(m)}{m}}.
    \end{align*}
    According to Assumption~\ref{ass:activation} and the definition of $q^L(\bx,\bx')$ (see Appendix~\ref{appde:problem}), we know $I_3\le c_\sigma B_{\sigma'}^2|tr(\bA^\ell_{L-1}(\bx,\bx')) - \prod_{h=\ell}^{L-1}q^h(\bx,\bx')|$.
    Plugging the estimates of $I_1,I_2,I_3$ back into \eqref{eq:trace-prod}, it holds that
    \begin{align*}
        \Big|tr\big(\bA^\ell_L(\bx,\bx')\big) - \prod_{h=\ell}^Lq^h(\bx,\bx')\Big| \lesssim C_\sigma^L\sqrt{\frac{d\log^2(m)}{m}} + c_\sigma B_{\sigma'}^2\Big|tr\big(\bA^\ell_{L-1}(\bx,\bx')\big) - \prod_{h=\ell}^{L-1}q^h(\bx,\bx')\Big|.
    \end{align*}
    Applying the above inequality recursively, we know
    \begin{align*}
        \Big|tr\big(\bA^\ell_L(\bx,\bx')\big) - \prod_{h=\ell}^Lq^h(\bx,\bx')\Big| &\lesssim \Big[\sum_{k=0}^{L-\ell-1}\big(c_\sigma B_{\sigma'}^2\big)^k\Big]C_{\sigma}^L\sqrt{\frac{d\log^2(m)}{m}} + \big(c_\sigma B_{\sigma'}^2\big)^{L-\ell}\big|tr\big(\bA^\ell_\ell(\bx,\bx')\big) - q^\ell(\bx,\bx')\big|\\
        &\lesssim C_{\sigma}^L\sqrt{\frac{d\log^2(m)}{m}},
    \end{align*}
    where the last inequality used \eqref{eq:trace-qL} with $L$ replaced by $\ell$.
    This completes the proof of the lemma.
\end{proof}

Now, we give the proof of Theorem~\ref{thm:re-concen} (i.e., Theorem~\ref{thm:concentration}) as follows.

\begin{proof}[Proof of Theorem~\ref{thm:re-concen}]
For all $\ell\in[L]$ and $\bx,\bx'\in\X$, from Assumption~\ref{ass:activation} we know \[|\langle o^{\ell-1}_0(\bx),  o^{\ell-1}_0(\bx')\rangle| \le c_\sigma\sup_{t\in\R}\sigma^2(t) \le c_\sigma B_\sigma^2.\]
According to Lemma \ref{lem:bound_V0}, we know \[tr(\bV^\ell_{L,0}(\bx)^\top\bV^\ell_{L,0}(\bx')) = \ebb_\ba[\ba^\top\bV^\ell_{L,0}(\bx)^\top\bV^\ell_{L,0}(\bx')\ba] \le \|\bV^{\ell}_{L,0}(\bx)\|_{op}\|\bV^{\ell}_{L,0}(\bx')\|_{op}\ebb_\ba[\|\ba\|_2^2] \le C_\sigma^L.\]
Applying \eqref{eq:EU=1} and Cauchy-Schwarz inequality, it holds that \[c_\sigma\ebb[\sigma(U^{\ell-1}(\bx))\sigma(U^{\ell-1}(\bx'))] \le \sqrt{c_\sigma\ebb[\sigma^2(U^{\ell-1}(\bx))}\sqrt{c_\sigma\ebb[\sigma^2(U^{\ell-1}(\bx'))} = 1.\]

Combining the above estimates, \eqref{eq:EU=1}, and Lemmas \ref{lem:o-U_uniform}, \ref{lem:quadratic-trace}, \ref{lem:trace-prod} yields that
\begin{align*}
    &\big\|K^{m,\ell} - K^\ell\big\|_\infty\\
    &= \sup_{\bx,\bx'\in\X}\Big|\langle o^{\ell-1}_0(\bx),  o^{\ell-1}_0(\bx')\rangle\ba^\top\bV^\ell_{L,0}(\bx)^\top\bV^\ell_{L,0}(\bx')\ba - c_\sigma\ebb[\sigma(U^{\ell-1}(\bx))\sigma(U^{\ell-1}(\bx'))] \prod_{h=\ell}^Lq^h(\bx,\bx') \Big| \nonumber\\
    &\le \sup_{\bx,\bx'\in\X}\big|\langle o^{\ell-1}_0(\bx), o^{\ell-1}_0(\bx')\rangle\big| \cdot \Big|\ba^\top\bV^\ell_{L,0}(\bx)^\top\bV^\ell_{L,0}(\bx')\ba - tr\big(\bV^\ell_{L,0}(\bx)^\top\bV^\ell_{L,0}(\bx')\big)\Big|\nonumber\\
    & + \sup_{\bx,\bx'\in\X}\big|\langle o^{\ell-1}_0(\bx),  o^{\ell-1}_0(\bx')\rangle - c_\sigma\ebb[\sigma(U^{\ell-1}(\bx))\sigma(U^{\ell-1}(\bx'))]\big| \cdot \big|tr\big(\bV^\ell_{L,0}(\bx)^\top\bV^\ell_{L,0}(\bx')\big)\big|\nonumber\\
    &  + \sup_{\bx,\bx'\in\X}\big|c_\sigma\ebb[\sigma(U^{\ell-1}(\bx))\sigma(U^{\ell-1}(\bx'))]\big| \cdot \Big|tr\big(\bV^\ell_{L,0}(\bx)^\top\bV^\ell_{L,0}(\bx')\big) - \prod_{h=\ell}^Lq^h(\bx,\bx')\Big|\nonumber\\
    &\lesssim   C_\sigma\sup_{\bx,\bx'\in\X} \Big|\ba^\top\bV^\ell_{L,0}(\bx)^\top\bV^\ell_{L,0}(\bx')\ba - tr\big(\bV^\ell_{L,0}(\bx)^\top\bV^\ell_{L,0}(\bx')\big)\Big|\nonumber\\
    &  +  C_\sigma^L\!\!\sup_{\bx,\bx'\in\X}\big|\langle o^{\ell-1}_0(\bx), o^{\ell-1}_0(\bx')\rangle - \ebb[\sigma(U^{\ell-1}(\bx))\sigma(U^{\ell-1}(\bx'))]\big| + \!\!\sup_{\bx,\bx'\in\X} \Big|tr\big(\bV^\ell_{L,0}(\bx)^\top\bV^\ell_{L,0}(\bx')\big) - \prod_{h=\ell}^Lq^h(\bx,\bx')\Big|\nonumber\\
    &\lesssim  C_\sigma^L\sqrt{\frac{d\log^2(m)}{m}}.
\end{align*}
This completes the proof of this theorem. 
\end{proof}

\end{document}